\documentclass{article}

\usepackage{microtype}
\usepackage{graphicx}
\usepackage{subcaption}
\usepackage{booktabs} 

\usepackage{silence}
\usepackage{xurl}
\usepackage{hyperref}
\usepackage{svg}

\usepackage[accepted]{icml2026}

\usepackage{amsmath}
\usepackage{amssymb}
\usepackage{mathtools}
\usepackage{amsthm}
\usepackage{aliascnt}
\usepackage{tikz}
\usepackage{makecell}
\usetikzlibrary{matrix,fit,positioning,arrows.meta,calc,decorations.pathreplacing,backgrounds}

\usepackage[capitalize,noabbrev]{cleveref}

\theoremstyle{plain}
\newtheorem{theorem}{Theorem}[section]
\newaliascnt{lemma}{theorem}
\newtheorem{lemma}[lemma]{Lemma}
\aliascntresetthe{lemma}
\newaliascnt{proposition}{theorem}
\newtheorem{proposition}[proposition]{Proposition}
\aliascntresetthe{proposition}
\newaliascnt{corollary}{theorem}
\newtheorem{corollary}[corollary]{Corollary}
\aliascntresetthe{corollary}
\theoremstyle{definition}
\newaliascnt{definition}{theorem}
\newtheorem{definition}[definition]{Definition}
\aliascntresetthe{definition}

\crefname{assumption}{Assumption}{Assumptions}
\Crefname{assumption}{Assumption}{Assumptions}
\crefname{appendix}{Appendix}{Appendices}
\Crefname{appendix}{Appendix}{Appendices}
\theoremstyle{remark}
\newaliascnt{remark}{theorem}
\newtheorem{remark}[remark]{Remark}
\aliascntresetthe{remark}
\theoremstyle{plain}
\newtheorem*{theorem*}{Theorem}

\usepackage[textsize=tiny, disable]{todonotes}

\newcommand{\N}{\mathbb{N}}
\newcommand{\bos}{\texttt{<bos>}}
\newcommand{\inp}{\texttt{<inp>}}
\newcommand{\einp}{\texttt{</inp>}}
\newcommand{\outp}{\texttt{<outp>}}
\newcommand{\eoutp}{\texttt{</outp>}}
\newcommand{\summ}{\texttt{<summ>}}
\newcommand{\esumm}{\texttt{</summ>}}
\newcommand{\pos}{\texttt{<p>}}
\newcommand{\epos}{\texttt{</p>}}
\newcommand{\R}{\mathbb{R}}
\newcommand{\bigo}{\mathcal{O}}
\newcommand{\relu}{\operatorname{ReLU}}
\newcommand{\vocab}{\mathcal{V}}
\newcommand{\ts}{\mathcal{T}}
\newcommand{\x}{\mathcal{X}}

\newcommand{\bin}{\operatorname{bin}}
\newcommand{\enc}{\operatorname{enc}}
\newcommand{\true}{\text{True}}
\newcommand{\false}{\text{False}}
\newcommand{\CoT}{\text{CoT}}
\newcommand{\SCoT}{\text{SCoT}}

\newcommand{\Z}{\mathbb{Z}}

\newcommand{\hardmax}{\operatorname{hardmax}}
\newcommand{\softmax}{\operatorname{softmax}}
\newcommand{\hard}{\operatorname{hard}}
\newcommand{\soft}{\operatorname{soft}}
\newcommand{\float}{\mathbb{F}}

\newcommand{\dff}{d_{\text{ff}}}

\newcommand{\ipos}{I_{\text{pos}}}
\newcommand{\ienc}{I_{\text{enc}}}
\newcommand{\iencex}{I_{\text{enc,ex}}}
\newcommand{\isym}{I_{\text{sym}}}
\newcommand{\ibos}{I_{\text{bos}}}
\newcommand{\iposex}{I_{\text{pos,ex}}}

\newcommand{\fact}{\float_{\text{act}}}
\newcommand{\fatt}{\float_{\text{att}}}

\newcommand{\qinit}{q_{\text{init}}}
\newcommand{\qhalt}{q_{\text{halt}}}
\newcommand{\unemb}{\text{unemb}}
\newcommand{\tok}{\tau}
\newcommand{\run}{\text{run}}

\newcommand{\vdelim}{\vocab_{\text{delim}}}
\newcommand{\vrun}{\vocab_{\text{run}}}
\newcommand{\vpos}{\vocab_{\text{pos}}}
\newcommand{\vtape}{\vocab_{\text{tape}}}

\newcommand{\finp}{F_{\inp}}
\newcommand{\fnotinp}{F_{\text{not}\inp}}
\newcommand{\feinp}{F_{\einp}}
\newcommand{\foutp}{F_{\outp}}
\newcommand{\feoutp}{F_{\eoutp}}
\newcommand{\fpos}{F_{\pos}}
\newcommand{\fepos}{F_{\epos}}
\newcommand{\frun}{F_{\text{run}}}
\newcommand{\fposbits}{F_{\text{pos}}}
\newcommand{\fsigma}{F_{\Sigma}}
\newcommand{\fsumm}{F_{\summ}}
\newcommand{\fesumm}{F_{\esumm}}
\newcommand{\fstate}{F_Q}
\newcommand{\ftape}{F_{\text{tape}}}
\newcommand{\ftapeinit}{F_{\text{tape,init}}}
\newcommand{\ftapefin}{F_{\text{tape,fin}}}

\newcommand{\fheadk}{F_{\text{head}}^k}
\newcommand{\fnextheadk}{F_{\text{head,next}}^k}
\newcommand{\fnexttape}{F_{\text{next tape}}}
\newcommand{\inextsymk}{I_{\text{sym,next}}^k}
\newcommand{\inextheadk}{I_{\text{head,next}}^k}

\newcommand{\ibitk}{I^k_{\text{posbit}}}
\newcommand{\isymbolk}{I^k_{\text{sym}}}
\newcommand{\istate}{I_{\text{state}}}
\newcommand{\imovek}{I^k_{\text{move}}}
\newcommand{\iconst}{I_{\text{const}}}
\newcommand{\iqposk}{I_{\text{searchpos}}^k}
\newcommand{\iqposone}{I_{\text{searchpos}}^1}
\newcommand{\isposk}{I_{\text{spos}}^k}
\newcommand{\isposone}{I_{\text{spos}}^1}
\newcommand{\iposoutp}{I_{\text{pos},\outp}}
\newcommand{\ipospromptend}{I_{\text{pos,promptend}}}
\newcommand{\ipossumm}{I_{\text{pos},\summ}}
\newcommand{\fbitequal}{F_{\text{bit equal}}}
\newcommand{\flengthcap}{F_{\text{length cap}}}
\newcommand{\fwritesumm}{F_{\to \summ}}

\newcommand{\emb}{\text{emb}}
\newcommand{\finput}{F_{\text{input}}}
\newcommand{\foutput}{F_{\text{output}}}
\newcommand{\fexistsoutp}{F_{\exists \outp}}
\newcommand{\fexistseinp}{F_{\exists \einp}}
\newcommand{\fexistsesumm}{F_{\exists \esumm}}

\newcommand{\ffinalsumm}{F_{\text{final} \summ}}

\newcommand{\ibackpos}{I_{\text{pos}_1}}
\newcommand{\ibackposs}{I_{\text{pos}_2}}
\newcommand{\ibitsexk}{I_{\text{bits,ex1}}^k}
\newcommand{\ibitsexxk}{I_{\text{bits,ex2}}^k}
\newcommand{\iposback}{I_{\text{pos}_-}}
\newcommand{\iqposbackk}{I_{\text{hpos}_-}^k}
\newcommand{\fextok}{F_{\text{exist}}^k}
\newcommand{\isymexk}{I_{\text{sym,ex}}^k}
\newcommand{\ipossymk}{I_{\text{pos,sym}}^k}
\newcommand{\imaxposk}{I_{\text{pos,max}}^k}
\newcommand{\fexistshighk}{F_{\exists \text{high}}^k}

\newcommand{\iqposposk}{I_{\text{hpos},\pos}^k}
\newcommand{\iposbacktwo}{I_{\text{pos}_-'}}
\newcommand{\inextposbitk}{I_{\text{nextbit}}^k}
\newcommand{\fwriteepos}{F_{\to \epos}}
\newcommand{\istateex}{I_{\text{state,ex}}}
\newcommand{\ihposfinalk}{I_{\text{hpos,fin}}^k}
\newcommand{\ifinalstate}{I_{\text{state,fin}}}
\newcommand{\ifinalstateout}{I_{\text{state,fin,out}}}

\newcommand{\inewstate}{I_{\text{state,new}}}
\newcommand{\inewsymk}{I_{\text{sym,new}}^k}
\newcommand{\inewmovek}{I_{\text{move,new}}^k}
\newcommand{\flastrun}{F_{\text{last run}}}
\newcommand{\fwritepos}{F_{\to\pos}}
\newcommand{\fhalt}{F_{\text{halt}}}

\newcommand{\fexblank}{F_{\text{blank}}}
\newcommand{\fwriteeoutp}{F_{\to\eoutp}}
\newcommand{\fwritesigma}{F_{\to\Sigma}}
\newcommand{\inewsymsigma}{I_{\text{newsym}\Sigma}}
\newcommand{\isymexone}{I_{\text{sym,ex}}^1}

\newcommand{\rot}{\operatorname{rot}}
\newcommand{\sep}{\operatorname{sep}}

\icmltitlerunning{The Expressive Power of Low Precision Softmax Transformers with (Summarized) Chain-of-Thought}

\begin{document}

\twocolumn[
\icmltitle{The Expressive Power of Low Precision Softmax Transformers with (Summarized) Chain-of-Thought}



\icmlsetsymbol{equal}{*}

\begin{icmlauthorlist}
\icmlauthor{Moritz Brösamle}{yyy}
\icmlauthor{Stephan Eckstein}{yyy}
\end{icmlauthorlist}

\icmlaffiliation{yyy}{Department of Mathematics, University of Tübingen, Germany}
\icmlcorrespondingauthor{Moritz Brösamle}{moritzbroesamle@gmail.com}

\icmlkeywords{Machine Learning, ICML}

\vskip 0.3in
]



\printAffiliationsAndNotice{}  

\begin{abstract}
Existing expressivity results for transformers typically rely on hardmax attention, high precision, and other architectural modifications that disconnect them from the models used in practice. We bridge this gap by analyzing standard transformer decoders with softmax attention and rounding of activations and attention weights, while allowing depth and width to grow logarithmically with the context length. As an intermediate step, we construct hardmax transformers with ternary activations and well-separated attention scores that simulate Turing machines using Chain-of-Thought (CoT). This lets us convert the constructions to equivalent softmax transformers without the unrealistic parameter magnitudes or activation precision that prior approaches would require. Using the same technique, we analyze a recently proposed summarized CoT paradigm and show that it simulates Turing machines more efficiently, with model size scaling logarithmically in a space bound rather than a time bound. We empirically test predictions made by our results on a Sudoku reasoning task and find better alignment with learnability than for prior high-precision results. Our code is available at \url{https://github.com/moritzbroe/transformer-expressivity}.
\end{abstract}


\section{Introduction}
Theoretical works on the capabilities of transformers typically use models that deviate substantially from those used in practice. In particular, to achieve fixed width and depth, they often require activation precision growing at least logarithmically with the context length, use hardmax attention, and sometimes add further architectural modifications not seen in practice \cite{strobl2024formal,perez_attention_is_turing_complete,merrill2024expressivepowertransformerschain,yang2025pencil}. By contrast, practical transformers use softmax attention and round activations to 16 or even 8-bit precision, often with little impact on performance \cite{xiao2024smoothquantaccurateefficientposttraining,dettmers2022llmint8}. This suggests a fundamental mismatch between the models analyzed in expressivity results and those used in practice, necessitating the analysis of the theoretical expressivity of practical models. 

We prove such expressivity results for transformers with softmax attention and low-precision activations, while allowing depth and width to grow logarithmically with the context length. This leads to models more in line with practical ones, which typically use large depth and width for increased capacity while activation precision stays small \cite{brown2020languagemodelsfewshotlearners,touvron2023llama}. We focus on showing Turing completeness with Chain-of-Thought (CoT) \cite{wei2023chainofthoughtpromptingelicitsreasoning,nye2021workscratchpadsintermediatecomputation} and more efficiently with summarized Chain-of-Thought (SCoT) \cite{inftythink,aghajohari2025markovianthinkerarchitectureagnosticlinear,yang2025pencil}, where the model can write multiple token segments with intermediate summaries before giving its answer. This paradigm has been proposed previously in various forms; see \cref{appsubsec:memeff}. We use SCoT as a generic name and do not claim novelty.

\begin{table*}[t]
\centering
\small
\begin{tabular}{ l|c|c|c|c|c|c|c}
Work & Depth & Width & Prec. & Steps & Context & Parameters & Att.\\
\hline
\rule{0pt}{2.5ex}\cite{merrill2024expressivepowertransformerschain} & $\bigo(1)$ & $\bigo(1)$ & $\bigo(\log \hat t)$ & $\bigo(t)$ & $\bigo(t)$ & $\bigo(1)$ & Hard\\
\cite{yang2025pencil} (SCoT) & $\bigo(1)$ & $\bigo(1)$ & $\bigo(\log \hat s)$ & $\bigo(t)$ & $\bigo(s)$ & $\bigo(1)$ & Hard\\
\cite{li2024chainthoughtempowerstransformers} & $\bigo(1)$ & $\bigo(\log \hat t)$ & $\bigo(1)$ & $\bigo(t\log t)$ & $\bigo(t\log t)$ & $\bigo(\hat t(\log\hat t)^2)$ & Soft\\
\cite{jiang2025softmaxtransformersturingcomplete} & $\bigo(1)$ & $\bigo(1)$ & $\bigo(\hat t)$ & $\bigo(\exp(t))$ & $\bigo(\exp(t))$ & $\bigo(1)$ & Soft\\
Ours (CoT) & $\bigo(\log \hat t)$ & $\bigo(\log \hat t)$ & $\bigo(\log\log \hat t)$ & $\bigo(t)$ & $\bigo(t)$ & $\bigo((\log \hat t)^3)$ & Soft\\
Ours (SCoT) & $\bigo(\log \hat s)$ & $\bigo(\log \hat s)$ & $\bigo(\log\log \hat s)$ & $\bigo(t)$ & $\bigo(s)$ & $\bigo((\log \hat s)^3)$ & Soft\\
\end{tabular}
\vspace{5px}
\caption{Comparison of asymptotic scalings of transformer decoder constructions for simulating a Turing-machine running on input $w$ using $t = t_M(w) \leq \hat t$ steps and $s = s_M(w) \leq \hat s$ space.}
\label{tab:related_work_scaling_main}
\end{table*}

\paragraph{Main results.} We show that a softmax transformer with activation and attention weight rounding can 
\begin{itemize}
    \item (warm-up) recognize a regular language up to input length $\hat n$ without CoT (\cref{thm:reglang} + \cref{thm:softmaxthm2}),
    \item simulate a Turing machine $M$ with CoT for all inputs $w$ on which $M$ requires $t_M(w) \leq \hat t$ steps and $|w|\leq \hat t$ for a given \emph{time bound} $\hat t$, using $\bigo(t_M(w) + |w|)$ decoding steps (\cref{thm:cot} + \cref{thm:softmaxthm2}),
    \item simulate a Turing machine $M$ with SCoT for all inputs $w$ on which $M$ requires $s_M(w) \leq \hat s$ space for a given \emph{space bound} $\hat s$, with $\bigo(t_M(w) + |w|)$ decoding steps, and maximum segment length (hence context size) $\bigo(s_M(w))$ (\cref{thm:scot} + \cref{thm:softmaxthm2}),
\end{itemize}
with depth and width logarithmic and parameter magnitudes sub-logarithmic in $\hat n / \hat t / \hat s$. Constant mantissa bits suffice everywhere, while the required exponent bits are double-logarithmic for attention weights and triple-logarithmic for activations. All results yield explicit bounds on depth, width and precision.
We use these in \cref{app:explicitexamples} to argue that rounding activations and attention weights to the commonly used bfloat16 precision suffices in our constructions up to context sizes of around $10^{38}$, while such rounding can break prior log-precision constructions at small context lengths even at $32$-bit precision.

\paragraph{Construction Outline.} Directly constructing softmax transformers is difficult due to their continuous nature, and the difficulty is worsened by activation and attention-weight rounding. Hence, a natural route to obtain such transformers is by converting hardmax constructions to softmax attention, while keeping the error from softmax attention and rounding small enough to ensure the softmax variant generates the same tokens as its hardmax counterpart. In general, any hardmax transformer can in principle be converted to softmax attention by scaling the query and key projections by a sufficiently large constant $c$ to sharpen the softmax attention weights towards their hardmax counterparts \cite{liu2023transformerslearnshortcutsautomata,yang2025simulatinghardattentionusing,sanford2024transformersparallelcomputationlogarithmic}. While one could use this technique to convert existing hardmax transformer constructions \cite{merrill2024expressivepowertransformerschain,yang2025pencil} to softmax attention, this would, in addition to their need for logarithmic activation precision, require unrealistic weight magnitudes: the gaps between attention scores in these constructions shrink polynomially with sequence length and hence the scaling factor $c$ required to sufficiently sharpen attention weights would need to grow polynomially with context length.

Hence, in \cref{sec:hardmax}, we construct hardmax transformers with ternary activations---in particular, $\{-1,0,1\}$-valued residual streams---and well-separated attention scores for recognizing regular languages (\cref{thm:reglang}) and simulating Turing machines with CoT and SCoT (\cref{thm:cot,thm:scot}). In \cref{sec:softmax}, we then show how to convert these constructions to softmax attention with rounding of activations (\cref{thm:softmaxthm1}) and attention weights (\cref{thm:softmaxthm2}).

\paragraph{Expressivity and Learnability.} Our results only yield expressivity bounds, not learnability
guarantees. We therefore empirically investigate in \cref{sec:sudoku} whether expressivity results for softmax transformers with rounding give better qualitative predictions about learnability than results for transformer models that deviate further from those actually trained, e.g. by using hardmax attention and high precision. We find evidence for this in a specific setting by training small transformers to imitate a standard depth-first search algorithm on Sudokus. Concretely, as long as SCoT summaries keep the context length bounded, small models reliably solve very hard Sudokus, while CoT models of the same size fail beyond a
certain context length. While this pattern is consistent with both our results and prior log-precision results on CoT and SCoT \cite{merrill2024expressivepowertransformerschain,yang2025pencil}, the two types of results suggest
different remedies for improving long-context performance of the CoT model. Our results suggest increasing model size, which cleanly improves performance for longer contexts. Prior log-precision CoT results
\cite{merrill2024expressivepowertransformerschain}, on the other hand, suggest increasing activation precision, but moving from $16$-bit to $32$-bit has negligible effect.

Unfortunately, even under our more realistic assumptions, expressivity does not generally imply
learnability. For instance, transformers are hard to train on automata tasks
\cite{liu2023transformerslearnshortcutsautomata}, so our expressivity result for regular language recognition
still overestimates learnability.

\paragraph{Prior Work.} A significant number of prior works showed that transformers are capable of simulating Turing machines. The asymptotic scalings of the most directly comparable ones are shown in \cref{tab:related_work_scaling_main}, while \cref{app:relatedwork} presents an extended version of the table including constructions with non-standard positional encodings and encoder-decoder transformers and discusses each work. Many prior works aim to construct a transformer with fixed depth, width and parameters, a property referred to as \textit{uniformity}, usually at the cost of using hardmax attention and logarithmically growing activation precision. Among prior works, \cite{merrill2024expressivepowertransformerschain} and \cite{yang2025pencil} are the ones that most directly simulate Turing machines by generating tokens encoding Turing-machine steps. As stated previously, converting them to softmax attention would require scaling parameters by a context-dependent factor, hence breaking uniformity. \cite{jiang2025softmaxtransformersturingcomplete} offers the only result we are aware of about truly uniform softmax transformers being Turing complete, albeit at the cost of exponential slowdown and precision linear in the simulated Turing-machine time. \cite{li2024chainthoughtempowerstransformers} (and, in an encoder-decoder setup with hardmax attention, \cite{wei2022statisticallymeaningful}) drop uniformity and use logarithmic width to encode positions, similar to our constructions. However, \cite{li2024chainthoughtempowerstransformers} require problem-specific positional encodings, hence yielding super-linear growth of the number of parameters.


\section{Preliminaries}\label{sec:preliminaries}
\subsection{Finite Automata and Turing Machines}
A \emph{deterministic finite automaton} (DFA) is a tuple $M = (Q,\Sigma,\delta,\qinit,F)$ where $Q$ is a finite state set, $\Sigma$ a finite alphabet, $\delta:Q\times\Sigma\to Q$ the transition function, $\qinit\in Q$ the initial state and $F\subseteq Q$ the accepting states.
We denote by $L(M)$ the language \emph{recognized by} $M$, i.e., the set of words $w \in \Sigma^*$ such that running $M$ on $w$ ends with a state in $F$.
See \cref{def:dfa} for details.

A (multi-tape) \emph{Turing machine} is a tuple $M = (K,Q,\Sigma,\Gamma,\delta,\sqcup,\qinit,\qhalt)$ where $K\in\N$ is the number of tapes, $Q$ is a finite state set with initial state $\qinit$ and halting state $\qhalt$, $\Sigma$ is the input alphabet, $\Gamma\supseteq\Sigma$ is the tape alphabet with blank symbol $\sqcup \in \Gamma \setminus \Sigma$ and $\delta:Q\times\Gamma^K\to Q\times\Gamma^K\times\{L,S,R\}^K$ is the transition function. Tapes are infinite to the right and the first tape is used for both input and output.
For input $w \in \Sigma^*$ on which $M$ halts with output $f_M(w) \in \Sigma^*$, we denote by $t_M(w)$ the number of steps and by $s_M(w)$ the largest number of tape cells accessed on any tape during the computation, including usage by the input (i.e., $s_M(w) \geq |w|$).
See \cref{def:tm,def:tm_run} for details.
Importantly, single-tape Turing machines can have a quadratic slowdown compared to multi-tape machines \cite{Hennie1965OneTapeOT}. Hence, for proving efficiency results, multi-tape machines should always be considered. 

\subsection{Transformers}
Throughout this work we will use two types of transformer decoders, one with hardmax attention\footnote{By hardmax attention we always refer to \emph{averaging hardmax attention} defined below as opposed to \emph{unique hard attention} variants treated in some prior works \cite{hao2022formal_language_recognition_hard_attention_transformers,jerad2025uniquehardattentiontale} which don't have a direct connection to softmax attention.}
and one with softmax attention. 

We omit normalization like LayerNorm \cite{ba2016layernormalization} or RMSNorm \cite{zhang2019rootmeansquarelayer} but note that the constructions could be modified to ignore such normalizations with the same argument as in \cite{li2024chainthoughtempowerstransformers}.
Precise definitions for transformers can be found in \cref{app:transformers}.

Let $\vocab$ be a finite vocabulary, whose elements are called \emph{tokens}. A transformer decoder with depth $L\in\N$, dimension $d\in \N$, key-query and value dimensions $d_k, d_v \in \N$, feed-forward dimension $\dff\in\N$, $H$ heads and vocabulary $\vocab$ consists of parameters $\theta$ as specified below and computes a map
\[
T_\theta : \vocab^+ \to \vocab,
\]
where $\vocab^+ := \bigcup_{n \geq 1} \vocab^n$. For $\tok_0,\dots, \tok_{n-1} \in \vocab$, the map is defined as follows:

\paragraph{Embeddings.}
The initial embeddings are the sum of \emph{token embeddings} and \emph{positional embeddings}
\[
x_i^{(0)} := \emb_{\tok_i} + \operatorname{posemb}(i)
\]
for token embeddings $\emb_\tok \in \R^d$ and a fixed positional embedding function $\operatorname{posemb}: \N_0 \to \R^d$. In order to keep the residual stream ternary, we use \emph{binary positional embeddings}, i.e., $\operatorname{posemb}(i)$ consists of the binary representation of $i$ in some $d_{\text{pos}} \leq d$ coordinates and zeros elsewhere. This can loosely be seen as a discrete analogue of the sinusoidal positional embeddings used in the original transformer \cite{vaswani2017attentionneed}, as each bit of the binary representation is periodic with exponentially decreasing frequencies for higher bits. However, as many modern transformers use rotary positional encodings (RoPE) \cite{su2023roformerenhancedtransformerrotary,touvron2023llama}, we analyze how the results transfer to using RoPE in \cref{app:rope}. In particular, we show that RoPE can be used at double-logarithmic activation precision to obtain these binary positional embeddings (\cref{lem:ropebin}).

\paragraph{Self-Attention.}
A single self-attention head is specified by matrices
\[
W_Q, W_K \in \R^{d_k \times d}, \;W_V \in \R^{d_v \times d}, \; W_O \in \R^{d \times d_v}\;.
\]
For inputs $(x_0,\dots,x_{n-1}) \in (\R^d)^n$, define \emph{queries, keys} and \emph{values} as
\[
q_i := W_Q x_i,\quad
k_j := W_K x_j,\quad
v_j := W_V x_j
\]
and \emph{attention scores} as
\[
s_{ij} := \frac{1}{\sqrt{d_k}}\langle q_i, k_j \rangle
\]
for $0 \leq i \leq n-1$ and $0 \leq j \leq i$. Define \emph{attention weights} for softmax attention as 
\[
(\alpha_{ij})_{j=0}^i = \softmax\big((s_{ij})_{j=0}^i\big) = \Big(\frac{e^{s_{ij}}}{\sum_{l=0}^i e^{s_{il}}}\Big)_{j=0}^i
\]
 and for hardmax attention \cite{hao2022formal_language_recognition_hard_attention_transformers} as
\[
(\alpha_{ij})_{j=0}^i = \hardmax\big((s_{ij})_{j=0}^i\big)
\]
with 
\[
    \alpha_{ij} =
    \begin{cases}
    \frac{1}{|J_i|}, & j \in J_i,\\[2pt]
    0, & \text{otherwise},
    \end{cases}\;\;
    \text{ where }\;\;
    J_i := \arg\max_{0 \le j \le i} s_{ij}\;.
\]
The output of the attention head is then defined as
\[
z_i := W_O \sum_{j=0}^i \alpha_{ij} v_j\;.
\]

\paragraph{Feedforward and residual connections.}
For every $\ell \in \{1,\dots,L\}$ the transformer applies a transformer layer as
\begin{align*}
x_i^{(\ell-0.5)} &:= x_i^{(\ell-1)} + \sum_{h=1}^H A_{\ell,h}(x_0^{(\ell-1)},\dots,x_i^{(\ell-1)})\\
x_i^{(\ell)} &:= x_i^{(\ell-0.5)} + f_{\ell}(x_i^{(\ell-0.5)})
\end{align*}
where $A_{\ell,h}$ are attention heads and $f_{\ell}$ are standard single hidden layer ReLU MLPs with width $\dff$.

\paragraph{Output projection.}
Finally the output is defined as
\[
T_\theta(\tok_0,\dots,\tok_{n-1}) := \arg\max_{\tok\in\vocab} \langle\unemb_\tok, x_{n-1}^{(L)}\rangle
\]
for unembedding vectors $\unemb_\tok \in \R^d$.

The parameters $\theta$ then consist of embeddings, attention projections, MLP weights and biases and unembeddings, while the positional embeddings are not viewed as parameters.

\subsection{(Summarized) Chain-of-Thought Generation}

When simulating a Turing machine $M$, we want the transformer $T$ to compute $u = f_M(w) \in \Sigma^m$ from $w \in \Sigma^n$ by receiving $w$ as input, generating intermediate tokens (the Chain-of-Thought), and then generating the output $u$. For standard Chain-of-Thought we simply place the input between special delimiter tokens and expect the output to be placed between other delimiter tokens:
\begin{definition}[Chain-of-Thought]\label{def:cot}
    Let $T$ be a transformer with vocabulary $\vocab$ such that $$\{\inp,\einp,\outp,\eoutp\} \cup \Sigma \subset \vocab\;$$
    and let $w \in \Sigma^n, u \in \Sigma^m$. We say that $T$ computes $u$ from $w$ with CoT if running $T$ autoregressively on input
    $$\inp, w_0,\dots,w_{n-1},\einp$$
    generates a token sequence ending in
    $$\outp, u_0,\dots,u_{m-1},\eoutp\;.$$
    See \cref{appdef:cot} for details.
\end{definition}
For summarized Chain-of-Thought, $T$ may generate a summary between other delimiter tokens instead of the output, and can then continue generating from this summary:
\begin{definition}[Summarized Chain-of-Thought]\label{def:scot}
    Consider the setting of \cref{def:cot}, where additionally $\summ, \esumm \in \vocab$. Let $T$ generate from input
    $$\inp, w_0,\dots,w_{n-1},\einp\;$$
    until writing $\esumm$ or $\eoutp$. If the generated sequence ends with $\summ \dots \esumm$, $T$ continues generating from just this summary block. This process is repeated until one of the segments ends with $\eoutp$. We say that $T$ computes $u$ from $w$ with SCoT if this format is adhered to and the final generated segment ends with
    $$\outp, u_0,\dots,u_{m-1},\eoutp\;.$$
    See \cref{appdef:scot} for details.
\end{definition}
The SCoT process is schematically illustrated in \cref{fig:scot_generation}. CoT simply corresponds to a single input-to-output segment.
\begin{figure}[t]
    \centering
    \begin{tikzpicture}[
        font=\scriptsize,
        tok/.style={draw, thin, inner xsep=0.2pt, inner ysep=0.4pt, minimum height=1.35em},
        tag/.style={tok, minimum width=3.25em},
        star/.style={tok, minimum width=0.65em},
        inp/.style={fill=green!30},
        mid/.style={fill=black!20},
        sone/.style={fill=blue!25},
        stwo/.style={fill=cyan!25},
        outblk/.style={fill=red!30},
        >=Latex
    ]
    \matrix (m) [matrix of nodes, nodes in empty cells, row sep=0.6mm, column sep=0.5mm] {
        |[tag, inp]|{\inp} &
        |[star, inp]|{\texttt{*}} & |[star, inp]|{\texttt{*}} & |[star, inp]|{\texttt{*}} &
        |[tag, inp]|{\einp} &
        |[star, mid]|{\texttt{*}} & |[star, mid]|{\texttt{*}} & |[star, mid]|{\texttt{*}} & |[star, mid]|{\texttt{*}} & |[star, mid]|{\texttt{*}} &
        |[tag, sone]|{\summ} &
        |[star, sone]|{\texttt{*}} & |[star, sone]|{\texttt{*}} & |[star, sone]|{\texttt{*}} &
        |[tag, sone]|{\esumm} \\
        |[tag, sone]|{\summ} &
        |[star, sone]|{\texttt{*}} & |[star, sone]|{\texttt{*}} & |[star, sone]|{\texttt{*}} &
        |[tag, sone]|{\esumm} &
        |[star, mid]|{\texttt{*}} & |[star, mid]|{\texttt{*}} & |[star, mid]|{\texttt{*}} & |[star, mid]|{\texttt{*}} & |[star, mid]|{\texttt{*}} &
        |[tag, stwo]|{\summ} &
        |[star, stwo]|{\texttt{*}} & |[star, stwo]|{\texttt{*}} & |[star, stwo]|{\texttt{*}} &
        |[tag, stwo]|{\esumm} \\
        |[tag, stwo]|{\summ} &
        |[star, stwo]|{\texttt{*}} & |[star, stwo]|{\texttt{*}} & |[star, stwo]|{\texttt{*}} &
        |[tag, stwo]|{\esumm} &
        |[star, mid]|{\texttt{*}} & |[star, mid]|{\texttt{*}} & |[star, mid]|{\texttt{*}} & |[star, mid]|{\texttt{*}} & |[star, mid]|{\texttt{*}} &
        |[tag, outblk]|{\outp} &
        |[star, outblk]|{\texttt{*}} & |[star, outblk]|{\texttt{*}} & |[star, outblk]|{\texttt{*}} &
        |[tag, outblk]|{\eoutp} \\
    };
    \end{tikzpicture}
    \caption{SCoT generation schematic: a token segment can be finished by a summary, which is then used to prompt the next segment until an output block is generated.}
    \label{fig:scot_generation}
\end{figure}

\subsection{Precision and Rounding}\label{subsec:precrounding}
We say that a transformer has \emph{ternary weights} if all weight matrices including embeddings and unembeddings have entries in $\{-1,0,1\}$. Bias precision is discussed in \cref{rem:biases}. We say that the transformer has \emph{ternary activations} on some input if all intermediate representations ($x^{(\ell-0.5)}, x^{(\ell)}$), queries, keys, values, attention-head outputs, multihead-attention outputs and MLP hidden activations have entries in $\{-1,0,1\}$, while raw MLP outputs may additionally equal $\pm2$; see \cref{rem:binaryencodings}. When running a transformer with \emph{activation rounding} to a floating-point format, standard rounding (see \cref{def:floats}) is applied to each of these activations immediately after computation. Similarly, \emph{attention weight rounding} refers to rounding the attention weights $\alpha_{ij}$ after computing them. See \cref{sec:softmax} and \cref{rem:roundingpoints} for further discussion on this choice.


\section{Hardmax Transformer Constructions}\label{sec:hardmax}
\begin{figure*}[t]
  \centering
  \begin{tikzpicture}[
      font=\scriptsize,
      >=Latex,
      tok/.style={draw, thin, inner xsep=0.6pt, inner ysep=0.35pt, minimum height=2em, align=center},
      delim/.style={tok, fill=green!30},
      symin/.style={tok, fill=green!30},
      run/.style={tok, fill=black!20, minimum width=2.0em},
      posdelim/.style={tok, fill=orange!25},
      posbit/.style={tok, fill=orange!25},
      outdelim/.style={tok, fill=red!30},
      symout/.style={tok, fill=red!30}
    ]
    \newcommand{\runtok}[3]{{\tiny$\begin{smallmatrix}#1\\#2\\#3\end{smallmatrix}$}}
    \matrix (m) [matrix of nodes, nodes in empty cells, row sep=0pt, column sep=0.22mm, nodes={anchor=center}] {
      |[delim]|{\inp} &
      |[symin]|{$a$} &
      |[symin]|{$a$} &
      |[symin]|{$b$} &
      |[delim]|{\einp} &
      |[run]|{\runtok{q_a}{a}{R}} &
      |[run]|{\runtok{\qinit}{a}{S}} &
      |[posdelim]|{\pos} &
      |[posbit]|{$1$} &
      |[posbit]|{$0$} &
      |[posdelim]|{\epos} &
      |[run]|{\runtok{q_a}{a}{R}} &
      |[run]|{\runtok{q_{ab}}{b}{L}} &
      |[posdelim]|{\pos} &
      |[posbit]|{$1$} &
      |[posbit]|{$0$} &
      |[posdelim]|{\epos} &
      |[run]|{\runtok{\qinit}{c}{R}} &
      |[run]|{\runtok{\qinit}{b}{R}} &
      |[posdelim]|{\pos} &
      |[posbit]|{$1$} &
      |[posbit]|{$1$} &
      |[posdelim]|{\epos} &
      |[run]|{\runtok{\qhalt}{\sqcup}{S}} &
      |[outdelim]|{\outp} &
      |[symout]|{$a$} &
      |[symout]|{$c$} &
      |[symout]|{$b$} &
      |[outdelim]|{\eoutp} \\
    };

    \draw[decorate, decoration={brace, mirror, amplitude=3.5pt}]
      (m-1-1.south west) -- (m-1-5.south east)
      node[midway, below=4.5pt]{input $w=(a,a,b)$};

    \draw[decorate, decoration={brace, mirror, amplitude=3.5pt}]
      (m-1-6.south west) -- (m-1-6.south east)
      node[midway, below=4.5pt]{step 1};
    \draw[decorate, decoration={brace, mirror, amplitude=3.5pt}]
      (m-1-7.south west) -- (m-1-7.south east)
      node[midway, below=4.5pt]{step 2};

    \draw[decorate, decoration={brace, mirror, amplitude=3.5pt}]
      (m-1-8.south west) -- (m-1-11.south east)
      node[midway, below=4.5pt, align=center]{head position\\after 2 steps: 1};

    \draw[decorate, decoration={brace, mirror, amplitude=3.5pt}]
      (m-1-12.south west) -- (m-1-12.south east)
      node[midway, below=4.5pt]{step 3};
    \draw[decorate, decoration={brace, mirror, amplitude=3.5pt}]
      (m-1-13.south west) -- (m-1-13.south east)
      node[midway, below=4.5pt]{step 4};

    \draw[decorate, decoration={brace, mirror, amplitude=3.5pt}]
      (m-1-14.south west) -- (m-1-17.south east)
      node[midway, below=4.5pt, align=center]{head position\\after 4 steps: 1};

    \draw[decorate, decoration={brace, mirror, amplitude=3.5pt}]
      (m-1-18.south west) -- (m-1-18.south east)
      node[midway, below=4.5pt]{step 5};
    \draw[decorate, decoration={brace, mirror, amplitude=3.5pt}]
      (m-1-19.south west) -- (m-1-19.south east)
      node[midway, below=4.5pt]{step 6};

    \draw[decorate, decoration={brace, mirror, amplitude=3.5pt}]
      (m-1-20.south west) -- (m-1-23.south east)
      node[midway, below=4.5pt, align=center]{head position\\after 6 steps: 3};

    \draw[decorate, decoration={brace, mirror, amplitude=3.5pt}]
      (m-1-24.south west) -- (m-1-24.south east)
      node[midway, below=4.5pt]{step 7};

    \draw[decorate, decoration={brace, mirror, amplitude=3.5pt}]
      (m-1-25.south west) -- (m-1-29.south east)
      node[midway, below=4.5pt]{output $u=(a,c,b)$};
  \end{tikzpicture}
  \caption{Example CoT token sequence for a $1$-tape Turing machine that replaces $ab$ with $cb$, run on input $aab$ using $r=2$ head-position bits. Head positions are written with least significant bit first. Each grey token encodes one Turing machine step consisting of the new state, the written symbol and the head movement (where $L/S/R$ denote left/stay/right).}
  \label{fig:cot_token_sequence_example}
\end{figure*}
We present theorems about the existence of hardmax transformers with ternary weights and activations for recognizing regular languages without CoT and simulating Turing machines with CoT and SCoT, where width and depth grow logarithmically with context size.
In the constructive proofs of these statements in \cref{app:hardmax}, the coordinates $\{1,\dots,d\}$ of the transformer's representations are conceptually divided into disjoint \emph{registers}, holding binary values which encode positions (absolute token positions and Turing machine head positions), states, symbols, etc., and \emph{flags} holding truth values, e.g. for marking different token types. Attention heads are used to extract registers and flags from certain previous tokens, while MLPs are used e.g. for copying, zeroing, incrementing and decrementing binary values stored in registers, often gated by flags. This conceptual abstraction is also used to implement the Turing machine constructions in Python, which allows us to compile Turing machines into PyTorch transformers. The correctness is then checked on random Turing machines as discussed in \cref{appsec:compilation}.

\paragraph{Notation.} In the following, $\bigo_M(\cdot)$ allows the hidden constant to depend on the simulated automaton or Turing machine $M$, while for $\bigo(\cdot)$ it is uniform in $M$.

\subsection{Warmup: Regular Language Recognition}\label{sec:reglang}
Existing results for regular language recognition with transformers \cite{liu2023transformerslearnshortcutsautomata, merrill2025littledepthgoeslong} already use logarithmic depth and are straightforward to adapt to our needs, i.e., ternary activations and logarithmic width. These constructions use a binary-tree style aggregation to compute the automaton's output and we mainly need to encode positions in binary using logarithmic width instead of using logarithmic precision and constant width. This leads to the following statement: 
\begin{proposition}\label{thm:reglang}
Let $M$ be a DFA and let $\hat n \in \N$. Then there exists a hardmax transformer $T_\theta = T_\theta(M,\hat n)$ with ternary weights, vocabulary 
$$\vocab = \Sigma \cup \{\bos, \true, \false\}\;,$$ 
$H=1$ head per layer, depth $L = \bigo(\log \hat n)$ and dimensions 
$$d, \dff = \bigo_M(\log \hat n),\;\; d_k=\bigo(\log \hat n), \;\; d_v = \bigo_M(1)\;,$$
 such that for all $w=(w_0,\dots,w_{n-1})\in\Sigma^*$ with $n \leq \hat n$,
\[
  T_\theta(\bos,w_0,\dots,w_{n-1}) =
  \begin{cases}
  \true, & w \in L(M),\\
  \false, & w \notin L(M)
  \end{cases}
\]
and activations are ternary on these inputs.
\end{proposition}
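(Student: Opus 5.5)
The plan is to implement the standard binary-tree (prefix-doubling) evaluation of the DFA. Each token carries the transition function $f:Q\to Q$ of a block of the input, stored as a one-hot $|Q|\times|Q|$ matrix in a register of width $|Q|^2=\bigo_M(1)$. These functions are composed over $\bigo(\log\hat n)$ layers. Let $r=\lceil\log_2(\hat n+1)\rceil$ and index positions $0,\dots,n$, with $\bos$ at position $0$. The embedding of $\sigma\in\Sigma$ is the one-hot matrix of $\delta(\cdot,\sigma)$, and the embedding of $\bos$ is the identity matrix. The binary positional embedding supplies the bits $b_0(i),\dots,b_{r-1}(i)$.

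\textbf{Invariant.} After stage $k$, position $i$ holds the composition of the letter-maps over the interval $[\lfloor i/2^k\rfloor 2^k,\, i]$. This interval consists of the positions sharing bits $\ge k$ with $i$ and lying no later than $i$; the $\bos$ entry contributes the identity. Stage $k\to k+1$ works as follows. If $b_k(i)=0$, nothing changes. If $b_k(i)=1$, we must prepend the function stored at position $j=\lfloor i/2^{k+1}\rfloor 2^{k+1}+2^k-1$. The key observation is that $j$ needs no arithmetic: $j$ is obtained from $i$ by keeping bits $\ge k+1$, setting bit $k$ to $0$, and setting bits $<k$ to $1$. Hence the query is a fixed coordinatewise pattern, and no borrow propagation is needed. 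At position $j$ the invariant for level $k$ gives exactly the block $[\lfloor i/2^{k+1}\rfloor 2^{k+1}, j]$. After stage $r$, the last position $n$ holds $f_{w}$ composed with the identity.

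\textbf{Attention.} One head per stage is used. An MLP first writes a $\pm1$ version $2b-1$ of the key bits of the current position; this is obtained with ternary weights as $b+(b-1)$ using a bias. It also writes the target pattern of $j$ in $\pm1$ form, using fixed constants for bits $<k$ and $k$ and copying bits $\ge k+1$. The score $\langle q_i,k_j\rangle$ then equals $r-2\cdot\mathrm{Hamming}$, so the unique maximiser is the exact target $j\le i$, which always exists when $b_k(i)=1$. The gap to every other score is at least $2$, which is the well-separatedness needed later. This gives $d_k=\bigo(\log\hat n)$, and $d_v=|Q|^2$ copies the stored matrix $A$ into a fresh register. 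When $b_k(i)=0$ the retrieved value is irrelevant and is gated away.

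\textbf{MLP composition.} With $A$ the earlier map and $B$ the current one, the new map is $C_{q,q''}=\sum_{q'}\relu(A_{q,q'}+B_{q',q''}+g-2)+\relu(B_{q,q''}-g)$, where $g=b_k(i)$. Exactly one summand is $1$ when the entry should be $1$, and all summands are $0$ otherwise. This needs hidden width $\bigo(|Q|^3)$ with ternary weights and biases in $\{-1,-2\}$, and all hidden activations lie in $\{0,1\}$. The result is written into a fresh register per stage, or equivalently by outputting $C-B\in\{-1,0,1\}$. Either way $d=\bigo_M(\log\hat n)$ and every layer stays ternary. The final layer evaluates $f_w(\qinit)$ by reading row $\qinit$ of the matrix. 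The unembedding of $\true$ sums the coordinates for $q\in F$, and that of $\false$ sums the complementary coordinates; one of them is $1$ and the other $0$, which fixes the argmax.

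\textbf{Main obstacle.} The delicate part is bookkeeping rather than ideas. Every intermediate activation must stay in $\{-1,0,1\}$: the queries in $\pm1$ form, the gated composition, and the residual updates that must not accumulate to $\pm2$. This is why each stage should use fresh registers (or carefully cancelled updates). The attention pattern must also be exact under averaging hardmax, which requires a unique argmax. I would verify this by checking that the computed target $j$ always lies in $[0,i]$ whenever $b_k(i)=1$, and that no other position can tie, since distinct positions have distinct binary codes.
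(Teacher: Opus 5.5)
\textbf{Gap: hardmax ties when $b_k(i)=0$.} Your aligned-block prefix scheme computes the right map. However, the step ``when $b_k(i)=0$ the retrieved value is irrelevant and is gated away'' breaks the ternary-activation claim, which is part of the statement. When $b_k(i)=0$, your target $\lfloor i/2^{k+1}\rfloor 2^{k+1}+2^k-1$ is $\ge i$, and it is strictly larger than $i$ unless all bits of $i$ below $k$ are $1$. So the target usually lies outside the causal window. Averaging hardmax then spreads over all positions $\le i$ at minimal Hamming distance, and these need not be unique.

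Concretely, take $r\ge 3$, $i=2$, $k=2$. The query is $\bin_r(3)$. Positions $1$ and $2$ both score $r-2$, while position $0$ scores $r-4$. At level $2$ these two positions hold the maps of $[0,1]$ and $[0,2]$, i.e.\ $\delta(\cdot,w_0)$ and $\delta(\cdot,w_1)\circ\delta(\cdot,w_0)$, which generally differ. The head output, and hence $x^{(\ell-0.5)}$, therefore contains entries $\tfrac12$.

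Your MLP gate makes the composed map correct, but it cannot undo non-ternary values already sitting in the head output and the residual stream. The differing values at tied maximizers also violate the tie-invariance property that the later softmax conversion relies on. Your verification plan only checks the case $b_k(i)=1$.

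\textbf{Repair.} The fix is cheap and keeps your no-arithmetic feature. Set the query bits below $k$ equal to bit $k$ of $i$ rather than the constant $+1$, set bit $k$ to $-1$, and copy the bits above $k$. For $b_k(i)=1$ this is exactly your $j$. For $b_k(i)=0$ it is $\lfloor i/2^{k+1}\rfloor 2^{k+1}\le i$, which exists. So every head has a unique existing maximizer with gap $2$, and the discarded value is still a valid one-hot matrix.

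This query is a selector-plus-constant function of the $\pm1$ position bits, so $W_Q$ can produce it directly. That also avoids allocating an $r$-dimensional query register per stage, which would make $d=\Theta((\log\hat n)^2)$ unless you reuse and zero a single register.

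\textbf{Comparison with the paper.} The paper uses the sliding-window variant of prefix doubling. At stage $k$ every position attends to $(i-2^k)^+$, computed with the power-of-two subtraction gadget, and composes unconditionally. Saturation lands on \bos, whose identity map makes the composition harmless, so the target always exists, is unique, and no gating is needed. The paper encodes maps in $|Q|\lceil\log_2|Q|\rceil$ bits with composition width $2|Q|^2\lceil\log_2|Q|\rceil$, versus your one-hot $|Q|^2$ matrices with width $\bigo(|Q|^3)$; both are $\bigo_M(1)$. With the repair, your route trades the subtraction gadget for a per-entry gate in the composition MLP.
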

The precise statement with explicit constants and its proof are deferred to \cref{app:reglang}.

\subsection{Simulating Turing Machines with Chain-of-Thought}\label{sec:tm_cot}
Next, we show that producing $\bigo(t)$ intermediate tokens allows transformers in our setting to simulate a Turing machine running for $t$ steps:
\begin{theorem}[Turing Machine Simulation with CoT]\label{thm:cot}
    Let $M$ be a multi-tape Turing machine with input alphabet $\Sigma$ and let $\hat t \in \N$. Then there exists a hardmax transformer $T_\theta = T_\theta(M, \hat t)$ with ternary weights, vocabulary $\vocab \supset \Sigma$, $H = \bigo_M(1)$ heads per layer, depth $L = \bigo(\log \hat t)$ and dimensions 
    $$d, d_v, \dff = \bigo_M(\log \hat t),\;\;d_k = \bigo(\log \hat t)$$
    such that the following holds: for all inputs $w \in \Sigma^*$ where $M$ uses at most $\hat t$ steps to output $f_M(w) \in \Sigma^*$ and the input has length bounded by $\hat t$ as well, the transformer $T$ computes $f_M(w)$ from $w$ with CoT (\cref{def:cot}) with $\bigo(t_M(w)+|w|)$ tokens in the generated sequence with ternary activations.
\end{theorem}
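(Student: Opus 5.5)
We construct the transformer explicitly, following the token format of \cref{fig:cot_token_sequence_example}. After the prompt $\inp, w, \einp$, the transformer emits one \emph{run token} per Turing-machine step. Each run token encodes the new state, the $K$ written symbols and the $K$ head moves; there are $\bigo_M(1)$ such tokens. Interleaved with these, it writes \emph{position blocks} $\pos, b_0,\dots,b_{r-1}, \epos$ containing the binary head positions, with $r=\lceil\log_2(\hat t+1)\rceil+1$ bits. Once $\qhalt$ is reached, it emits $\outp$, then the first-tape contents cell by cell, then $\eoutp$.

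The residual stream is split into registers. There is an absolute-position register fed by the binary positional embedding. There are flags for the token type, registers for the state and symbols (one-hot), and, for each tape $k$, a current head-position register of $r$ bits. All entries lie in $\{-1,0,1\}$, and the MLPs implement ternary Boolean gadgets: copying, gated zeroing, increment/decrement by $\pm1$, and bitwise equality.

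\textbf{Reconstructing the current configuration at a run token.}
\begin{enumerate}
  \item[(i)] Recover the last written head position of each tape. One attention head attends to the most recent $\pos$ token, using a score of the form ``is $\pos$'' times a large constant plus a binary comparison on positions that favours later indices. A constant-depth chain of heads then reads the $r$ bits. This can also be done by attending, for each bit $j$, to the token at position (last $\pos$)$+j+1$, using an equality-of-binary-positions query: keys are $\pm1$ position bits and queries are $\pm1$ target bits, so matching gives score $r$ and any mismatch gives at most $r-2$, a gap of $2$.
  \item[(ii)] Add the head moves of the run tokens issued since that block. There are at most $c$ of them, where $c$ is the block period, which we can take to be $\bigo(\log\hat t)$ so that the blocks add only a constant-factor overhead. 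This requires a sum of up to $\bigo(r)$ signed moves. We compute it by a binary-tree aggregation as in \cref{thm:reglang}, or more simply by $\bigo(\log \hat t)$ sequential layers, each attending to the next earlier run token and incrementing or decrementing the register. This is where depth $\bigo(\log\hat t)$ arises.
  \item[(iii)] Read the symbol under each head. Attend to the most recent run token whose stored head position (written there in a register) equals the current position and which wrote on tape $k$, falling back to the input token at that position or to blank. Doing this requires the most-recent tie-breaking in (iv).
  \item[(iv)] Apply $\delta$ via an MLP lookup over $\bigo_M(1)$ combinations and unembed to the next token. A length counter decides when to emit a position block.
\end{enumerate}

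\textbf{Main obstacle: well-separated recency selection.} ``Most recent matching token'' must be selected with score gaps $\Omega(1)$, not the $1/n$ gaps used in prior work. My plan is to realise it with depth rather than precision. First, a head selects matches via an equality score (gap $\geq 1$) and averages their positions; this is not ternary. So instead, I would determine the maximum matching position bit by bit over $\bigo(\log\hat t)$ layers, from the most significant bit down. At each layer, one head asks whether any match exists agreeing with the already-fixed prefix and having the current bit equal to $1$. Existence is detected by attending with a score ``matches prefix and bit $=1$'' versus a default $\bos$-like token with an intermediate score, and reading a ternary flag from the value. The selected bit is appended and the process continues. After $r$ layers the exact position is known, and a final equality head retrieves that token's content. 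All scores are integer-valued inner products of ternary vectors, so the gaps are at least $1$ and hardmax ties only occur among identical values, keeping averages ternary.

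\textbf{Output phase and bookkeeping.} For the output, each output token determines its target cell index by incrementing the previous index. It then reads the symbol with the same most-recent-write search, and emits $\eoutp$ on the first blank. Counting tokens gives $\bigo(t_M(w)+|w|)$ tokens, since position blocks add a constant factor and the output length is at most $t+|w|$. Widths are $\bigo_M(\log\hat t)$, with $d_k=\bigo(\log\hat t)$ coming from the position comparisons; heads per layer are $\bigo_M(1)$, one per tape; and depth is $\bigo(\log \hat t)$. Finally, we verify by induction over generated tokens that every activation stays in $\{-1,0,1\}$.
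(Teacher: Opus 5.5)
Your architecture is essentially the paper's: position blocks every $\Theta(r)$ run tokens to bound propagation chains, a most-significant-bit-first binary search for the latest matching write with an $\inp$-type fallback key at an intermediate score, and a one-neuron-per-transition lookup for $\delta$. The output phase, however, fails as you describe it. Letting each output token ``increment the previous index'' is a recurrence across positions. The register at position $i$ after layer $\ell$ can only depend on layer-$(\ell-1)$ values at positions $\le i$, and output symbols carry no index in their embeddings, so carrying the index through $m$ output tokens costs $m$ layers. The output can have length $\Theta(|w|+t_M(w))$, and unlike the trace you may not insert position blocks between $\outp$ and $\eoutp$ (\cref{def:cot}). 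This would therefore need depth linear in $\hat t$, which is the very obstruction you circumvented for head positions. The fix, which the paper uses, is to compute the cell index directly from absolute positions as $i-i_{\outp}$. One head broadcasts the position of the unique $\outp$ token, which is well separated since only that token carries the flag. Then $r$ layers of gated in-place subtractions of $2^s$ (\cref{lem:fullsubtraction}) produce $\bin_r(i-i_{\outp})$, which serves as the query of the same latest-write search.

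A second, smaller gap is in steps (i)--(ii). The head moves do not form a sum, because for $L$ the machine updates $n\mapsto\max(0,n-1)$. For example, from cell $0$ the moves $L,R$ end at cell $1$, while the signed sum gives $0$. For the same reason, applying moves starting from the most recent run token and going backwards is wrong; they must be applied in chronological order. The paper does this by forward propagation. The $\epos$ token gathers the $r$ bits at the fixed offsets $i-1,\dots,i-r$, using two heads per tape per layer over $r/2$ layers, which keeps $H=\bigo_M(1)$; your ``constant-depth'' reading of $r$ bits would need $\Theta(r)$ heads in one layer. Then, for $r+1$ layers, every token attends to position $i-1$, takes that token's head-position register, and applies its own move with saturation (\cref{lem:addheadmovement}). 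Tokens not yet reached hold a zero register, so the gadget is inert on them. This also replaces your step (i), which is not realizable as written. A score that ``favours later indices'' with constant gaps is exactly the recency problem and cannot come from one ternary inner product. Attending to position (last $\pos$)$+j+1$ presupposes you already know where the last $\pos$ is, which would need your binary search. As a by-product, each run token keeps its pre-move position $n_{t-1}^k$, which is the write position your step (iii) needs as a key.
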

The precise formulation with explicit constants and the full proof are stated in the appendix as \cref{thm:cotapp}. 
The result is directly comparable to \cite{merrill2024expressivepowertransformerschain}, but replaces the logarithmic precision and constant depth with logarithmic depth/width and ternary activations requiring different construction techniques. 

The construction uses a constant $r$ that determines the bit-width of position encodings, hence context lengths and head positions must stay below $2^r$. 
An example of the token sequence produced by the constructed $T_\theta$ for a single-tape Turing machine is shown in \cref{fig:cot_token_sequence_example}, where the $\inp \dots \einp$ and $\outp \dots \eoutp$ blocks are as required by \cref{def:cot}.
In between, the transformer produces one \emph{run token} for each step of the simulated Turing machine encoding the newly entered state, symbols written at the previous head positions and head movements for all tapes, which is in line with \cite{merrill2024expressivepowertransformerschain}. However, after each chunk of $r$ such tokens, we additionally insert blocks explicitly writing out the head positions for all tapes in binary to simplify the construction. The main computations the transformer performs to produce this token sequence are:
\begin{itemize}
  \item The head positions are recovered by collecting the bits from the $\pos\dots \epos$ block into the $\epos$ token and then sequentially adding the head movements of the run tokens in sequential layers.
  \item In order to read the symbol at the current head positions, tokens must search for the closest previous token which wrote a symbol to this position. This is done using a binary search for this latest matching position requiring depth $r$ (i.e., logarithmic in $\hat t$).
  \item After extracting the symbols at the current head positions, the transition function can be applied to compute the next token.
\end{itemize}
The binary search technique has been used before in \cite{wei2022statisticallymeaningful}, while the head position reconstruction technique apparently hasn't been used before---high-precision works like \cite{merrill2024expressivepowertransformerschain,yang2025pencil} instead use uniform attention over prior tokens to recover the head position in a single layer.

\subsection{Simulating Turing Machines with Summarized Chain-of-Thought}\label{sec:scot}
\begin{figure*}[t]
  \centering
  \begin{tikzpicture}[
      font=\scriptsize,
      >=Latex,
      x=1.25em,
      y=1.25em
    ]
    \tikzset{
      tok/.style={draw, thin, inner xsep=0.6pt, inner ysep=0.2pt, minimum height=1.6em, align=center},
      sumdelim/.style={tok, fill=blue!25},
      gamtok/.style={tok, fill=blue!25},
      statetok/.style={tok, fill=blue!25, minimum width=1.9em},
      posdelim/.style={tok, fill=orange!25},
      posbit/.style={tok, fill=orange!25},
      tape/.style={line width=0.6pt, line cap=rect, line join=miter},
      tapecell/.style={minimum height=1.6em, minimum width=1.25em},
      state/.style={draw, circle, thin, inner sep=0pt, minimum size=1.8em, fill=white},
      headarrow/.style={line width=0.5pt, shorten >=0.8pt, -{Latex[length=5pt, width=4pt]}}
    }
    \newcommand{\symvec}[2]{{\tiny$\begin{smallmatrix}#1\\#2\end{smallmatrix}$}}

    \pgfmathsetmacro{\gap}{1.1}
    \pgfmathsetmacro{\ext}{0.7}
    \pgfmathsetmacro{\xend}{7+\ext}
    \coordinate (tapeEast) at (\xend, {-1-\gap/2});

    \draw[tape] (0,0) -- (\xend,0);
    \draw[tape] (0,-1) -- (\xend,-1);
    \foreach \i in {0,...,7} { \draw[tape] (\i,0) -- (\i,-1); }
    \foreach \i/\sym in {0/a,1/b,2/c,3/a,4/\sqcup,5/\sqcup,6/\sqcup} {
      \node[anchor=center] at ({\i+0.5},-0.5) {$\sym$};
    }
    \node[anchor=center] at (7.55,-0.5) {$\cdots$};

    \draw[tape] (0,{-1-\gap}) -- (\xend,{-1-\gap});
    \draw[tape] (0,{-2-\gap}) -- (\xend,{-2-\gap});
    \foreach \i in {0,...,7} { \draw[tape] (\i,{-1-\gap}) -- (\i,{-2-\gap}); }
    \foreach \i/\sym in {0/b,1/c,2/a,3/b,4/c,5/a,6/\sqcup} {
      \node[anchor=center] at ({\i+0.5},{-1.5-\gap}) {$\sym$};
    }
    \node[anchor=center] at (7.55,{-1.5-\gap}) {$\cdots$};

    \node[state] (q) at (4.5,1.6) {$q$};
    \draw[headarrow] (q.210) to[out=210, in=90] (3.5,0);
    \draw[headarrow] (q) to[out=-60, in=90] (5.5,{-1-\gap});

    \matrix (sum) [right=22mm of tapeEast, matrix of nodes, nodes in empty cells, row sep=0pt, column sep=0.35mm, nodes={anchor=center}] {
      |[sumdelim]|{\summ} &
      |[gamtok]|{\symvec{a}{b}} &
      |[gamtok]|{\symvec{b}{c}} &
      |[gamtok]|{\symvec{c}{a}} &
      |[gamtok]|{\symvec{\hat{a}}{b}} &
      |[gamtok]|{\symvec{\sqcup}{c}} &
      |[gamtok]|{\symvec{\sqcup}{\hat{a}}} &
      |[gamtok]|{\symvec{\sqcup}{\sqcup}} &
      |[statetok]|{$q$} &
      |[sumdelim]|{\esumm} \\
    };

    \draw[->, line width=0.8pt, shorten <=3mm, shorten >=3mm] (tapeEast) -- (sum.west);

    \draw[decorate, decoration={brace, mirror, amplitude=3.5pt}]
      (sum-1-2.south west) -- (sum-1-8.south east)
      node[midway, below=4.5pt]{tape contents};

    \draw[decorate, decoration={brace, mirror, amplitude=3.5pt}]
      (sum-1-9.south west) -- (sum-1-9.south east)
      node[midway, below=4.5pt]{state};
  \end{tikzpicture}
  \caption{Left: Configuration of a $2$-tape Turing machine. Right: Encoding of this configuration into a summary for SCoT. Hats over symbols indicate the head position for each tape.}
  \label{fig:scot_summary_example}
\end{figure*}

With summarized CoT, a very similar theorem can be shown. Note that all the scalings now depend on a space bound $\hat s$ for the simulated Turing machine instead of a time bound $\hat t$, while the segment lengths correspond to the actual space usage of the simulated Turing machine. 
\begin{theorem}[Turing Machine Simulation with SCoT]\label{thm:scot}
    Let $M$ be a Turing machine with input alphabet $\Sigma$ and let $\hat s \in \N$. Then there exists a hardmax transformer $T_\theta = T_\theta(M, \hat s)$ with ternary weights, vocabulary $\vocab \supset \Sigma$, $H = \bigo_M(1)$ heads per layer, depth $L = \bigo(\log \hat s)$ and dimensions 
    $$d, d_v, \dff = \bigo_M(\log \hat s),\;\; d_k = \bigo(\log \hat s)$$
    such that the following holds: for all inputs $w \in \Sigma^*$ where $M$ uses $t_M(w)$ steps and $s_M(w) \leq \hat s$ space to output $f_M(w) \in \Sigma^*$, the transformer $T$ computes $f_M(w)$ from $w$ with SCoT (\cref{def:scot}) with each produced token segment having length $\bigo(s_M(w))$ and all segments together having length $\bigo(t_M(w) + |w|)$ with ternary activations.
\end{theorem}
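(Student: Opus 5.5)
The plan is to reuse the machinery behind \cref{thm:cot} almost verbatim inside each segment, and to add two new pieces: a way to \emph{start} a segment from a summary, and a way to \emph{end} it by writing a new summary. Each summary encodes the full configuration as in \cref{fig:scot_summary_example}. It consists of $\summ$, then one token per tape cell $0,\dots,s'-1$ carrying the $K$-tuple of symbols with hat marks at the head positions, then a state token, then $\esumm$. Here $s'=\bigo(s_M(w))$ is the length of the used tape prefix plus a constant padding. The first segment starts from $\inp\, w\, \einp$, which we treat as a degenerate summary: the state is $\qinit$, tape $1$ holds $w$ with the head at $0$, and the other tapes are blank.

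Within a segment we generate run tokens and periodic $\pos\dots\epos$ head-position blocks exactly as in the CoT construction. The head position bits are reconstructed by collecting the last position block and adding the movements in sequential layers. The symbol under each head is read by the depth-$r$ binary search for the latest token that wrote to that position.

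The only change is the fallback when no run token has written to the position yet. In that case the binary search must instead retrieve the summary token whose absolute index is (head position $+1$). The absolute position is available in binary through the positional embedding, so this is an equality-match attention with well-separated scores (ternary keys and queries, with score gap at least $1$). The hat marks at the start of a segment give the initial head positions, again by one attention that matches the hatted token and copies its position bits. We now choose $r=\bigo(\log \hat s)$ bits, which suffices for absolute positions, head positions, and the step counter. For that we must ensure every segment has length $\bigo(s')$.

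To keep segments short, each segment simulates a fixed number of steps, about $c\cdot s'$ with $c$ a constant, or stops earlier on halting. A counter register tracks the number of run tokens since the segment start. Since $s'$ is known (it is the position of the $\esumm$/$\einp$ token), the stopping test is a bitwise comparison against a shifted copy of $s'$. Once the test fires, the model writes $\summ$ and then the new summary cell by cell. At the token for cell $i$, it uses the same binary-search read as above to recover the current symbol at cell $i$ of every tape, and compares $i$ with the final head positions (precomputed at the $\summ$ token and carried forward by attention to it) to place hats. The cell index comes from the absolute position minus the position of $\summ$, which is a subtraction MLP of depth $\bigo(r)$ or a carry chain across layers.

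The summary length is the maximum of the old length and $1+\max$ head position, plus padding, so it stays $\bigo(s_M(w))\le\bigo(\hat s)$. On halting, the model instead writes $\outp$ followed by the tape-$1$ contents, using the same readout, and then $\eoutp$. Accounting: each segment has length $\bigo(s')$, consisting of the summary, $\Theta(s')$ run tokens (plus position blocks, which add a $1+\bigo(K/r)$ factor), and the new summary. Each non-final segment advances $\Theta(s')$ steps, so the total length is $\bigo(t_M(w)+|w|)$. The $|w|$ term covers the first segment and a possibly short final one.

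The main obstacle is the summary-writing phase with ternary activations and separated scores. Each summary token must do a binary search over a context mixing an old summary (indexed by absolute position offset) with run tokens (indexed by head position), and break ties toward the latest write. I would first try to give summary tokens a head-position register equal to their cell index and a write-time bit below all run tokens. This would let the existing latest-match binary search from \cref{thm:cot} handle both cases uniformly, with no new attention pattern.
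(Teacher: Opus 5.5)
This is essentially the paper's proof. The paper also keeps the CoT machinery, gives the prompt-summary tape tokens symbol-position registers equal to their cell index (absolute position minus one), so the unchanged latest-write binary search treats them as the oldest writes; their absolute positions precede all run tokens, so no separate write-time bit is needed. Likewise it reads initial head positions from the hatted tokens at $\esumm$, triggers $\summ$ by matching the absolute position against a shifted copy of the prompt-end position ($i=4j$, then broadcast forward), and writes the new summary by offset-from-$\summ$ reads, with hats placed by comparison to the final head positions.

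The one detail to fix is your constant padding. The paper uses none: it stops emitting tape tokens exactly when no tape has a recorded write at the current cell and no head is there, which gives length $s_t$. Padded blank cells would instead be re-read as used cells in the next segment, so summaries would grow by a constant per segment and the $\bigo(s_M(w))$ bound would fail.
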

The precise formulation with explicit constants is stated in the appendix as \cref{thm:scotapp}, where the proof can be found as well. The construction is largely similar to the one without summaries, except for writing summaries at appropriate points. Summaries encode the Turing machine's configuration, i.e., its tape contents, state and head positions, as exemplified in \cref{fig:scot_summary_example}.
Importantly, the number of tokens in a segment before the final summary is proportional to the length of the initial $\inp \dots \einp$ or $\summ \dots \esumm$ block in the segment, which ensures that the total length of all segments is $\bigo(t_M(w)+|w|)$ and at the same time each segment length is $\bigo(s_M(w))$. Writing summaries too frequently would mean losing the former property, while writing them too infrequently means losing the latter property.

\section{Softmax Transformers and Rounding}\label{sec:softmax}
Here we present two methods for converting the hardmax transformers from the previous section to softmax transformers with rounding. We use standard floating-point formats with mantissa and exponent bits and rounding to nearest \cite{ieee7542019}; see \cref{def:floats} for details. The exact points where rounding is applied can be found in \cref{def:setting} and correspond closely, to the best of our knowledge, to how rounding is applied in practice as briefly discussed in \cref{rem:roundingpoints}.

The error made when replacing hardmax by softmax attention can be bounded with the \emph{separation} of attention scores. In particular, for attention scores $s_j$ (corresponding to the scores $s_{ij}$ in an attention head) it holds that (\cref{lem:hardvssoft})
\[
\|\softmax((s_j)_{j=0}^{n-1}) - \hardmax((s_j)_{j=0}^{n-1})\|_1 \leq 2ne^{-\beta}\;,
\]
where $\beta$ is the \emph{separation} of the scores defined as 
\[
\beta = s^* - \max\{s_j \mid s_j < s^*\}\;,\quad s^* = \max_j s_j\;.
\]
Replacing query and key projections $W_Q, W_K$ with $cW_Q, cW_K$ for a constant $c$ scales attention scores and hence $\beta$ by $c^2$. The initial separation in our hardmax constructions is at least $\frac{1}{\sqrt{d_k}}$, as queries and keys are ternary and hence have inner products in $\Z$ which are divided by $\sqrt{d_k}$ to obtain the attention scores. This allows for moderate scalings of $c$ to suffice in order to make softmax and hardmax attention weights and hence head outputs similar. Both settings use this to control the error from replacing hardmax with softmax attention.

\subsection{Exact Attention}
We first consider the simplified setting where activations are rounded, but attention weights are computed in full precision (i.e.\ no rounding occurs inside attention). All corresponding activations in the scaled hardmax transformer are exactly representable in the chosen floating-point format. Therefore, perturbing and rounding such an activation back to the same floating-point format can at most double the perturbation (\cref{lem:roundingdoublesperturbation}). Choosing $c$ sufficiently large then allows controlling the error from both softmax conversion and rounding end-to-end as stated in \cref{thm:hardtosoft}. Applied to our constructions this results in the following statement:
\begin{theorem}\label{thm:softmaxthm1}
Fix a hardmax transformer $T$ from one of the hardmax constructions from the previous section for an input/time/space bound $\hat l$, i.e., $\hat l = \hat n$ for \cref{thm:reglang}, $\hat l = \hat t$ for \cref{thm:cot} and $\hat l = \hat s$ for \cref{thm:scot}. Then there exists 
\[
c_0(\hat l) = \bigo_M\bigl((\log \hat l)^{\frac{3}{4}}(\log\log \hat l)^{\frac{1}{2}}\bigr)
\]
such that for all $c\ge c_0(\hat l)$, the softmax transformer $\tilde T_c$ obtained from $T$ by scaling all query and key projections by $c$ outputs the same tokens as $T$ for all valid inputs. Further, this still holds when rounding all activations (as in \cref{subsec:precrounding}) in $\tilde T_c$ to a floating-point format containing $c$.
\end{theorem}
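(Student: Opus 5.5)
We will prove this by a layer-by-layer perturbation argument. The argument compares the rounded softmax transformer $\tilde T_c$ with the scaled hardmax transformer $T_c$, which is $T$ with $W_Q, W_K$ replaced by $cW_Q, cW_K$. Scaling queries and keys does not change the argmax sets $J_i$, so $T_c$ produces exactly the same activations as $T$ except that queries and keys are multiplied by $c$. All other activations stay in $\{-1,0,\pm1,\pm2\}$, and the queries and keys have entries in $\{-c,0,c\}$. These values are exactly representable in any floating-point format containing $c$ (and $\pm1,\pm2$, which we assume any reasonable format contains). So rounding changes nothing in $T_c$ itself, and the task reduces to showing that the deviation of $\tilde T_c$ from $T_c$ stays below a fixed threshold, say $\tfrac14$, in every coordinate. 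We proceed by induction on the position $i$ and the layer $\ell$. At the end, the unembedding is ternary and the hardmax logits are integers with a gap of at least $1$ (this needs checking in the constructions). A perturbation of size $\varepsilon$ per coordinate moves each logit by at most $d\varepsilon$, so we need $\varepsilon < 1/(2d)$ at the last layer.

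For one layer, suppose the input deviation is at most $\varepsilon$ in $\ell_\infty$. Let $d$ bound $d_k$, $d_v$ and $\dff$, and note that all weights are ternary. Then the computation goes as follows.
\begin{itemize}
\item \emph{Queries and keys.} These have deviation at most $c\,d\,\varepsilon$, doubled by rounding via \cref{lem:roundingdoublesperturbation}. The scores therefore deviate by $\bigo(c^2 d^{3/2}\varepsilon)$, since both the query and the key norm factors are of order $c\sqrt{d}$ and we divide by $\sqrt{d_k}$.
\item \emph{Separation.} The exact scores have separation $\beta \ge c^2/\sqrt{d_k}$. If the score perturbation is below $\beta/4$, the perturbed scores still have separation at least $\beta/2$ from the argmax set. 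The argmax entries themselves are tied in the hardmax scores but may be perturbed slightly, so the softmax over them deviates from uniform by a factor $e^{\mathcal O(\text{pert})}$.
\item \emph{Attention weights.} Combining \cref{lem:hardvssoft} with this tie-perturbation bound gives an $\ell_1$ error on the attention weights of at most $2\hat l e^{-\beta/2} + \bigo(c^2 d^{3/2}\varepsilon)$. Here we use that the context length is $\bigo(\hat l)$ in all three constructions.
\item \emph{Head outputs.} The values are bounded by $1$ and perturbed by $\bigo(d\varepsilon)$, so each head output deviates by $\bigo(d)$ times the attention error plus $\bigo(d^2\varepsilon)$.
\item \emph{MLP.} ReLU is 1-Lipschitz, so the MLP with ternary weights multiplies the error by $\bigo(d^2)$.
\end{itemize}
Each rounding at most doubles the error. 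Altogether we get a recursion of the form
\[
\varepsilon_{\ell} \le K\bigl(c^2 d^{C}\varepsilon_{\ell-1} + d\,\hat l\,e^{-c^2/(2\sqrt{d_k})}\bigr),
\]
with $H=\bigo_M(1)$ absorbed into $K$.

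The main obstacle is that the multiplicative factor $c^2 d^C$ per layer compounds over $L=\bigo(\log\hat l)$ layers. Naively this gives an error $(c^2d^C)^L\cdot \hat l e^{-c^2/\sqrt{d_k}}$, which would force $c^2 \gtrsim \sqrt{d_k}\,L\log(c^2d)$, i.e. $c \gtrsim (\log\hat l)^{3/4}(\log\log\hat l)^{1/2}$. This matches the claimed $c_0$, since $d_k = \bigo(\log\hat l)$ and $L=\bigo(\log\hat l)$. So the plan is to accept the exponential compounding and solve for $c$. Unrolling the recursion gives
\[
\varepsilon_L \le (Kc^2d^C)^L \cdot L\,d\,\hat l\, e^{-c^2/(2\sqrt{d_k})}.
\]
We need this to be at most $1/(2d)$, and also need each intermediate $\varepsilon_\ell$ to keep the score perturbation below $\beta/4$. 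Taking logarithms, it suffices that
\[
\frac{c^2}{2\sqrt{d_k}} \ge L\log(Kc^2d^C)+\log(2Ld^2\hat l).
\]
With $L,d=\bigo_M(\log\hat l)$, the right-hand side is $\bigo_M(\log\hat l\,(\log c+\log\log\hat l))$. Plugging in $c=A(\log\hat l)^{3/4}(\log\log\hat l)^{1/2}$ with $A$ large enough (depending on $M$) satisfies the inequality, because $\log c=\bigo(\log\log\hat l)$. The inequality is also monotone in $c$ for $c\ge c_0$, since the left side grows like $c^2$ and the right side only like $\log c$. This gives the statement for all $c\ge c_0$. A final detail is that the bound holds for every position simultaneously, since the induction over positions uses only the deviations at earlier tokens. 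In the CoT and SCoT settings it also applies to every generated prefix, so by induction over decoding steps the generated tokens coincide with those of $T$.
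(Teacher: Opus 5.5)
Your proposal is correct and follows essentially the paper's route (the proof of \cref{thm:hardtosoft}): compare $\tilde T_c$ with the scaled hardmax $T_c$, use that all hardmax activations lie in $\{0,\pm1,\pm2,\pm c\}$ so each rounding at most doubles the error, use the separation $c^2/\sqrt{d_k}$ to make the off-argmax mass exponentially small, then unroll a linear per-layer recursion over $L$ layers and solve for $c$. The one difference is how tied maximizers are handled: the paper uses tie-invariance of the values (\cref{lem:headoutputerror}), so score perturbations only shrink the effective separation to $\beta/2$, the per-layer amplification $6(8d\dff+2)$ is independent of $c$, and the threshold \eqref{eq:hardtosoftc} is explicit; your direct bound on the reweighting among tied maximizers costs an extra factor $c^2$ per layer and makes the condition implicit in $c$, which is harmless because $\log c=\bigo(\log\log\hat l)$ is of the same order as $\log(d\dff)$, so $c_0$ has the same asymptotic form.
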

The activation precision only needs to be able to represent a sufficiently large $c$, which requires $\bigo(\log\log\log \hat l)$ exponent bits.
The proof is given in \cref{appsubsec:softmaxthm1proof}.

\subsection{Rounding Attention Weights}
In real transformers, attention weights $\alpha_{ij}$ are rounded to low precision as well before being multiplied by the values $v_j$. In our hardmax transformer constructions, attention can generally select multiple tokens, leading to attention weights of the form $\frac{1}{k}$. Such weights need not be representable in any floating-point format, hence rounding them introduces an error that cannot be controlled by increasing $c$ and applying \cref{lem:roundingdoublesperturbation}. One way to control the error after all without having to use unrealistically large attention weight precision is to use an additional MLP after each attention layer, which implements coordinatewise the function shown in \cref{fig:denoising_mlp}. When added to the residual stream, this MLP maps activation entries in $[-\frac54,-\frac34], [-\frac14,\frac14],[\frac34,\frac54]$ to the ternary activations $-1,0,1$ respectively. Hence, if the error introduced in the attention layer is coordinatewise bounded by $\frac{1}{4}$, the MLP removes this error by mapping activations back to the correct ternary values. This technique has been used before, e.g. in \cite{wei2022statisticallymeaningful,yang2025simulatinghardattentionusing}. The existence of this MLP and its correctness when rounding hidden activations and outputs to a floating-point format with at least $3$ exponent bits and $1$ mantissa bit is shown in \cref{lem:denoising_mlp}.

\begin{figure}[t]
    \centering
    \includegraphics[width=0.7\columnwidth]{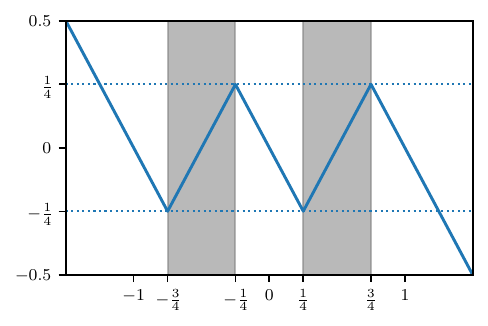}
    \caption{Coordinatewise denoising function $f$ implemented by the additional MLPs. For inputs inside the three white blocks, adding this function onto the residual stream (i.e., $x \mapsto x + f(x)$) maps values deviating from $-1, 0, 1$ by at most $\frac{1}{4}$ back to $-1,0,1$.}
    \label{fig:denoising_mlp}
\end{figure}

To add these MLPs in a standard architecture requires each layer of the original hardmax transformer to be replaced by 2 layers in its rounded softmax variant: the first one performs the attention and removes the error with the MLP, while the second one contains the MLP from the original layer and does not use the attention. The more general statement is \cref{thm:hardtosoftdenoising}, applied to our constructions, it yields the following statement:
\begin{theorem}\label{thm:softmaxthm2}
Consider one of the hardmax results from the previous section and denote $\hat l = \hat n$ for \cref{thm:reglang}, $\hat l = \hat t$ for \cref{thm:cot} and $\hat l = \hat s$ for \cref{thm:scot}. Then there exists 
\[
c_0(\hat l) = \bigo\bigl((\log \hat l)^{\frac{3}{4}}\bigr)
\]
such that for each $c \geq c_0(\hat l)$ the following holds: in the respective result, the hardmax transformer can be replaced by a softmax transformer with $\{0,\pm 1,\pm 2,\pm c\}$-valued weights satisfying the same asymptotic bounds as the hardmax transformer. Further, this still holds when rounding all activations to a floating-point format containing $c$ and having at least $1$ mantissa bit and $3$ exponent bits, and rounding attention weights to a floating-point format with at least $4$ mantissa and at least 
$b_0(\hat l) = \bigo(\log\log \hat l)$
exponent bits.
\end{theorem}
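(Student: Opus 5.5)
Starting from the hardmax construction $T$ of \cref{thm:reglang}, \cref{thm:cot} or \cref{thm:scot}, I would build the softmax transformer $\tilde T_c$ layer by layer and then prove, by induction over layers and over decoding steps, that its residual stream stays within coordinatewise distance $\frac14$ of the ternary stream of $T$. \textbf{Construction.} Each layer $\ell$ of $T$ becomes two layers of $\tilde T_c$:
\begin{itemize}
\item the first layer has the attention heads of layer $\ell$, with $W_Q,W_K$ replaced by $cW_Q,cW_K$, followed by the denoising MLP of \cref{lem:denoising_mlp} applied coordinatewise;
\item the second layer has zero attention and the original MLP $f_\ell$.
\end{itemize}
All other weights of $T$ are ternary. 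The denoising MLP needs only weights and biases in $\{0,\pm1,\pm2\}$ (after rescaling, see \cref{rem:biases}). Hence every weight lies in $\{0,\pm1,\pm2,\pm c\}$. Depth doubles and width grows by at most a constant factor, so the asymptotic bounds of $T$ are preserved.

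\textbf{Inductive step.} Assume the input to a layer is exactly ternary. Then queries, keys and values are ternary and therefore exactly representable; queries and keys scaled by $c$ are also exact, because the format contains $c$. The unscaled scores lie in $\frac{1}{\sqrt{d_k}}\Z$, so after scaling the separation is $\beta\ge c^2/\sqrt{d_k}$.

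Two things need care here. First, the score $c^2\langle q,k\rangle/\sqrt{d_k}$ is computed in floating point. Its relative rounding error is small, but its absolute error is proportional to the score, which may be of order $c^2\sqrt{d_k}$. With only 1 mantissa bit this seems problematic. I would therefore follow \cref{def:setting}: scores are computed exactly inside attention and only the weights $\alpha_{ij}$ are rounded. This matches the "attention weight rounding" setting, so score rounding should not arise.

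By \cref{lem:hardvssoft}, $\|\alpha^{\soft}-\alpha^{\hard}\|_1\le 2ne^{-\beta}$, with $n\le\bigo(\hat l)$ context length. Rounding the weights to a format with $4$ mantissa bits changes each nonzero weight by relative error at most $2^{-5}$. Weights below the smallest normal (subnormal) number, which is $2^{-2^{b_0-1}}$, can incur an absolute error up to that number per token. Taking $b_0=\bigo(\log\log\hat l)$ makes this total error at most $\bigo(\hat l)\cdot 2^{-2^{b_0}}\ll 1$.

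The head output is $W_O\sum_j\alpha_{ij}v_j$ with $\|v_j\|_\infty\le1$. So each coordinate of a single head's output errs by at most $2^{-5}(1+\text{small})+2ne^{-\beta}+\text{underflow}$. The products and sums of the weighted value computation are themselves rounded, which adds only relative errors of the same order. Summing over the $H=\bigo_M(1)$ heads, and through the ternary $W_O$ with $\bigo(1)$ nonzeros per row (which I would verify from the construction), the accumulated error must stay at most $\frac14$.

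The $\frac1k$ weight error is the main obstacle. A relative error of $2^{-5}$ on each weight, accumulated over a sum of many entries, could be as large as about $2^{-5}$ times $\sum_j|\alpha_{ij}|\le 1+\epsilon$. It is therefore bounded by about $2^{-5}$ per coordinate, independently of $k$. For this to hold I need value vectors of bounded norm and $W_O$ rows of bounded fan-in. If rows of $W_O$ can have large fan-in, I would restructure so that each output coordinate reads a single value coordinate.

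\textbf{Parameter choice and closing the induction.} Choose $c$ so that $2\hat l\,e^{-c^2/\sqrt{d_k}}\le\frac1{64}$, i.e. $c^2\ge\sqrt{d_k}\,\log(128\hat l)$. With $d_k=\bigo(\log\hat l)$ this gives $c_0=\bigo((\log\hat l)^{1/4}(\log\hat l)^{1/2})=\bigo((\log\hat l)^{3/4})$. Then the pre-denoising stream is within $\frac14$ of ternary, and \cref{lem:denoising_mlp}, which tolerates rounding with 3 exponent and 1 mantissa bits, restores it exactly. The second layer then runs $f_\ell$ on exact ternary inputs with ternary hidden values and outputs in $\{0,\pm1,\pm2\}$. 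These values are exactly representable, so rounding does nothing and the stream after the layer is exactly ternary. The unembedding therefore receives exactly the hardmax representation, so the argmax tokens agree. Induction over the decoding steps then gives identical generated sequences, and hence the same CoT and SCoT behaviour.

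This is the content of \cref{thm:hardtosoftdenoising}, and I would invoke it once its hypotheses are checked: ternary activations, separation at least $1/\sqrt{d_k}$, and bounded $\ell_1$ norm of the rows of $W_O$ times the $\ell_\infty$ norm of the values.
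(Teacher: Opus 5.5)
Your plan is essentially the paper's proof, which just invokes \cref{thm:hardtosoftdenoising} with the uniform context bound $N=\bigo(\hat l)$ (e.g.\ $N=4+6\hat t$ or $N=8(\hat s+3)$, so use $N$ rather than $\hat l$ in the logarithm), giving $c_0=d_k^{1/4}(\log(96N))^{1/2}=\bigo((\log\hat l)^{3/4})$ and $b_0=\bigo(\log\log\hat l)$ from the condition $\frac1N\ge 2^{E_{\min}}$ (attended weights $\ge\frac1n$ are then normal, and unattended ones are handled by their size, $\le ne^{-\beta}$ in total, which replaces your underflow estimate; both work). One correction to your own error accounting: in \cref{def:setting} the weighted value sum is formed exactly, after which $\bar o$, $\bar y$ and the residual stream are rounded to $\fact$ (possibly with a single mantissa bit), and these roundings are controlled not by small relative errors but by \cref{lem:roundingdoublesperturbation}, since the hardmax values lie in $\{-1,0,1\}\subset\fact$; moreover nothing is summed over heads, because each output coordinate copies exactly one value coordinate of one head (\cref{it:copy,it:disj} of \cref{lem:assumptions}), and the resulting overall factor $4$ is precisely why $b_m^{\mathrm{att}}\ge4$ suffices, as $4\cdot2^{-5}=\frac18$.
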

Again, the activation precision requires $\bigo(\log\log\log \hat l)$ exponent bits to represent a sufficiently large $c$.
The proof is given in \cref{appsubsec:softmaxthm2proof}.

Note that using MLPs for removing errors relies heavily on the ternary residual-stream activations: if instead these activations have entries in some arbitrary set, the size of the MLP mapping perturbed entries back to this set would need to grow proportionally with the size of this set. This prevents such techniques from being properly applicable e.g. in \cite{merrill2024expressivepowertransformerschain,yang2025pencil} where activations have entries in a set growing linearly in the context size \cite{yang2025simulatinghardattentionusing}.

Combining \cref{thm:softmaxthm2} with \cref{thm:cot,thm:scot} and \cref{thm:reglang} yields the statements from the introduction, concluding the theoretical part of this work. 

\subsection{Sparse Attention}
An interesting property of our constructions is an \emph{effective} attention sparsity: for every query position $i$, either the position $j$ maximizing the attention score $s_{ij}$ is unique, or all maximizers carry the same value $v_j$. Therefore, the head output is unchanged if we replace hardmax attention by a $1$-sparse variant that selects any one maximizer $j^\star$ and returns $v_{j^\star}$. Computing the attention output for a given position is then a maximum-inner-product search (MIPS) problem. This connects our setting to practical sparse-attention approximations, which speed up attention by solving these MIPS problems approximately \cite{liu2024retrievalattention,kitaev2020reformerefficienttransformer,unlimiformer}. Our results show that restricting to $1$ retrieved token per query does not weaken expressivity for these constructions. See \cref{app:sparsity} for further discussion.



\section{Sudoku Experiments}\label{sec:sudoku}
To investigate the extent to which the predictions of our expressivity results align with learnability, we train small transformers to imitate a standard depth-first search algorithm on Sudokus, a common task for analyzing algorithmic reasoning capabilities of neural networks \cite{10.5555/3327144.3327256,shah2024causal,giannoulis2025teachingtransformerssolvecombinatorial,ye2025autoregressiondiscretediffusioncomplex,wang2025hierarchicalreasoningmodel}. We first identify a small model size for which SCoT learns the algorithm well and observe that CoT fails at long contexts with this same model size (\cref{subsec:sudoku_small_models}), then show that scaling the model size cleanly extends the CoT context length while scaling activation precision does not (\cref{subsec:sudoku_size_vs_precision}). As a bonus, we observe that SCoT models generalize from easy to hard Sudokus that require significantly longer reasoning (\cref{subsec:sudoku_generalization}).

\subsection{Method}
We use the sudoku-extreme dataset introduced in \cite{wang2025hierarchicalreasoningmodel} consisting of around $4$ million particularly difficult Sudokus and generate training data with a deterministic depth-first MRV backtracking solver with lexicographic tie-breaks. For SCoT, we split token sequences into segments with $512$ non-summary tokens per segment, while summaries encode the solver's state. For training we use only puzzles where the solver CoT length (i.e., the number of tokens in the solver's CoT token sequence) is at most $N$, to keep CoT training computationally feasible. Throughout this section, models are trained for $20$ billion tokens (excluding padding) with mixed-precision bfloat16 activations unless stated otherwise. For evaluation, models generate deterministically (temperature $0$) with CoT or SCoT and the generation is seen as correct if the final output is the correct solution of the input Sudoku. See \cref{app:sudoku} for details.

\subsection{Small Models with SCoT and CoT}\label{subsec:sudoku_small_models}
To compare the capabilities of small models with CoT and SCoT, we determine a minimal model size for which SCoT learns the algorithm. In particular, for dimension $512$ and $8$ heads, SCoT models with depths $L \in \{4,5,6,7,8\}$ achieve the following accuracies on a random subsample of 10000 test Sudokus:
$$\begin{array}{c|ccccc}
L & 4 & 5 & 6 & 7 & 8\\
\hline
\rule{0pt}{2.5ex}\text{Acc}(\%) & 16.3 & 97.2 & 99.5 & 99.7 & 99.6
\end{array}$$
The performance mostly saturates at depth $L=6$ and this is robust across runs (\cref{par:sudoku_scot_success_l6}). Next, we train CoT models with this size on Sudokus with CoT length at most $N$ for $N \in \{2^{10}, 2^{11}, 2^{12}, 2^{13}, 2^{14}\}$. The models are evaluated on test puzzles grouped by solver CoT length, as detailed in \cref{appsec:sudoku_evaluation}. The resulting CoT length vs.\ accuracy curves are shown in \cref{fig:cot_l6}. As is to be expected without context extension methods, CoT models trained on puzzles requiring at most $N$ tokens fail on puzzles requiring significantly more tokens. However, the observation to make here is that they already underperform badly well before that boundary for $N=2^{14}$. This indicates that this model size, which suffices for SCoT, is insufficient for learning to perform the algorithm at such long context lengths with CoT. Longer training of the $N=2^{14}$ run only partially mitigates this failure (see \cref{fig:cot_n16k_comparison} in the Appendix).

\begin{figure}[t]
    \centering
    \includegraphics[width=\linewidth]{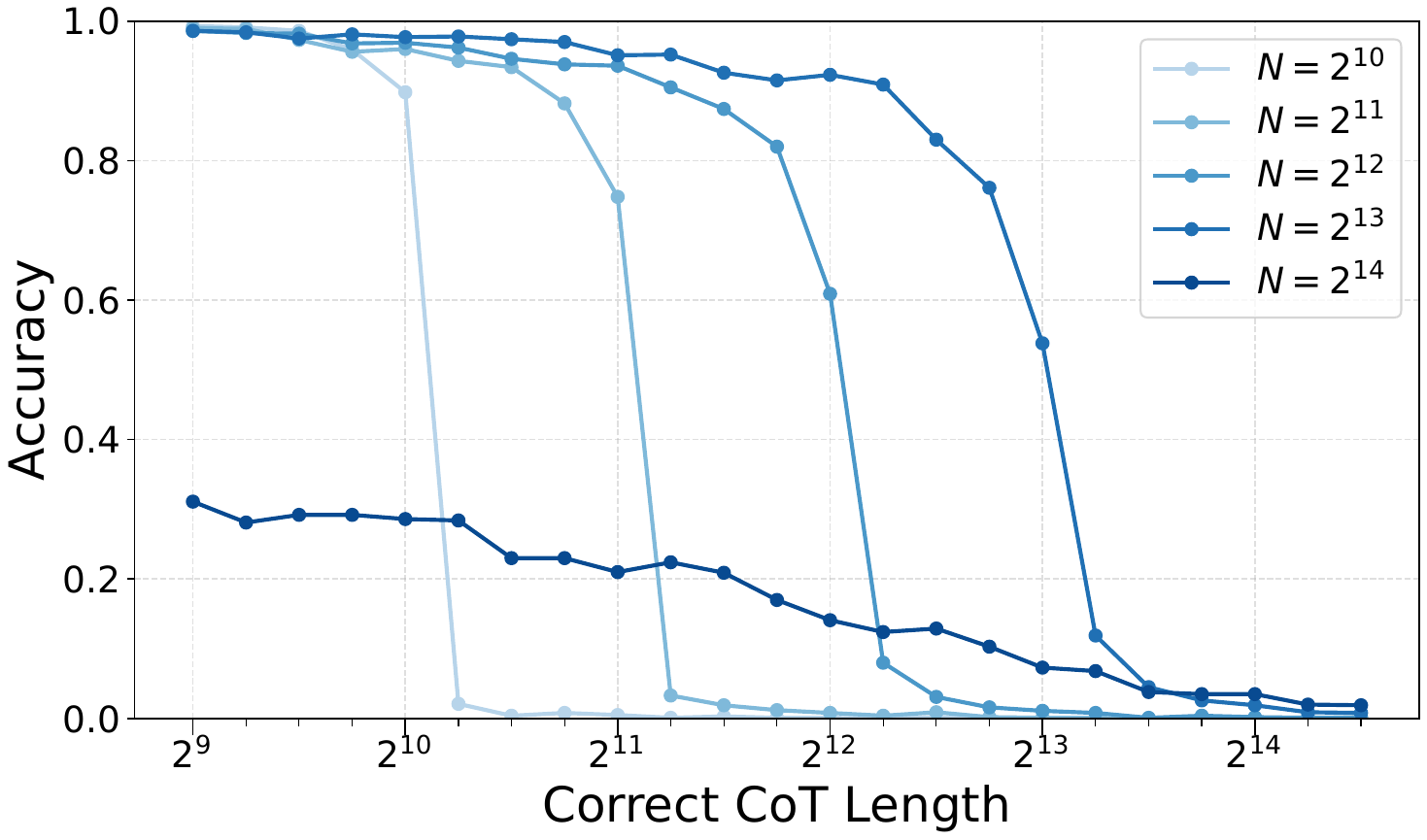}
    \caption{Accuracy of $L=6$ CoT models with dimension $512$ and $8$ heads trained on puzzles requiring at most $N$ CoT tokens and evaluated on puzzles of varying CoT length.}
    \label{fig:cot_l6}
\end{figure}

\begin{figure*}[!t]
    \centering
    \begin{subfigure}[b]{0.48\linewidth}
        \centering
        \includegraphics[width=\linewidth]{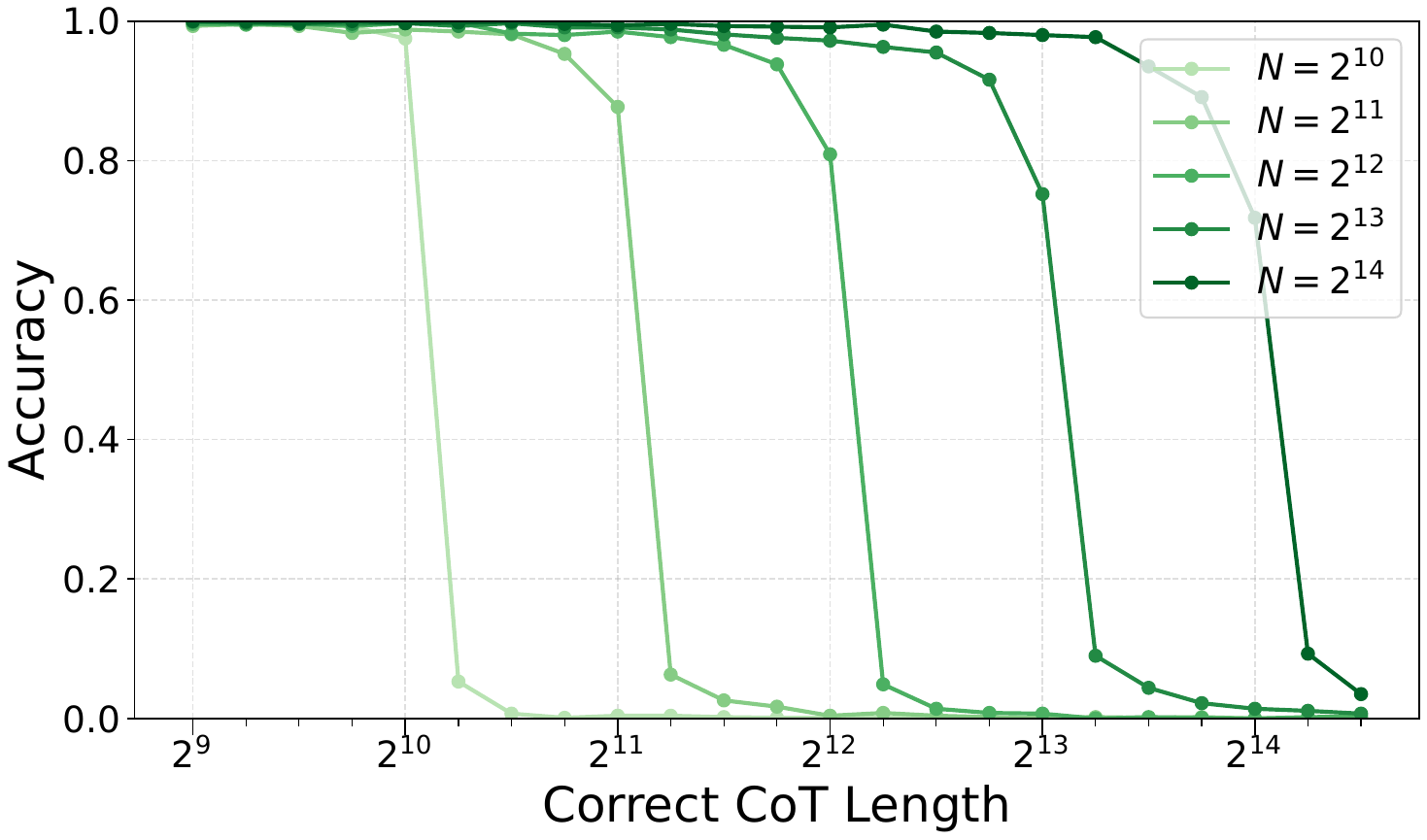}
        \caption{Same as \cref{fig:cot_l6} but with depth $L=8$.}
        \label{fig:cot_l8}
    \end{subfigure}
    \hfill
    \begin{subfigure}[b]{0.48\linewidth}
        \centering
        \includegraphics[width=\linewidth]{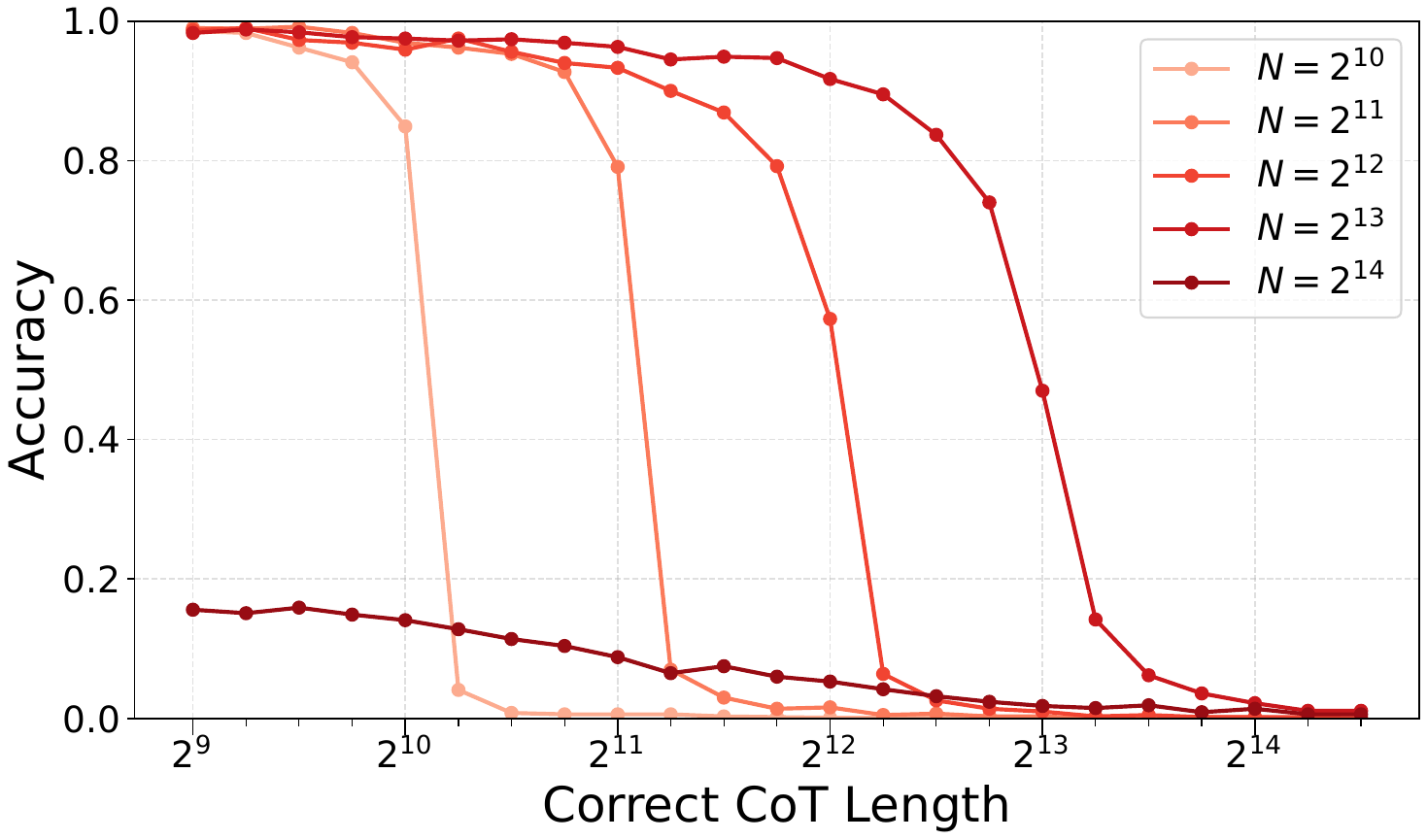}
        \caption{Same as \cref{fig:cot_l6} but with fp32 instead of bfloat16 activations.}
        \label{fig:cot_l6_fp32}
    \end{subfigure}
\end{figure*}

\subsection{Increasing Model Size vs.\ Activation Precision}\label{subsec:sudoku_size_vs_precision}
To address the CoT failure at long contexts, our results in \cref{sec:hardmax,sec:softmax} suggest scaling model size at fixed precision, while prior log-precision results \cite{merrill2024expressivepowertransformerschain} suggest scaling activation precision instead. We test both at the same training protocol as in \cref{subsec:sudoku_small_models}.

\paragraph{Model size.} Repeating the CoT training and evaluation runs with the depth increased from $L=6$ to $L=8$ yields the accuracy curves shown in \cref{fig:cot_l8}. The depth-$8$ CoT models clearly perform well and do not fail at the longest context length $N=2^{14}$, hence the increased capacity actually leads to improved learnability in this case.

\paragraph{Activation precision.} Training and evaluating the depth $L=6$ CoT runs with full fp32 activations instead of mixed-precision bfloat16 yields the accuracy curves shown in \cref{fig:cot_l6_fp32}. These are qualitatively identical to the bfloat16 ones; in particular, the model trained on $N=2^{14}$ still fails to reach good accuracy. This is consistent with the common modern opinion that mixed precision mostly does not impact performance \cite{micikevicius2017mixedprecision}.

\subsection{Generalization and Ultra-Long Reasoning}\label{subsec:sudoku_generalization}
As a bonus, we observe extreme generalization from easy to hard Sudokus for SCoT in this setting: a larger SCoT model $(d=768, L=12, H=12)$ trained on Sudokus with CoT length at most $N=2^{12}$ achieves near-perfect $99.96\%$ accuracy on a random 10k subsample of the test set and solves $92$ out of the $100$ hardest (by CoT trace length) test puzzles, each of which requires between $1$ and $9$ million CoT tokens. See \cref{appsec:scot_generalization} for details. 

This generalization ability relies on the fact that the algorithm solves Sudokus of different difficulty with the same space requirements, so the same segment length can be used regardless of the total reasoning length required. We leave it to future work to further investigate under which circumstances SCoT enables this \emph{constant-space generalization}.
\section{Conclusion and Future Work}
We show that standard transformer decoders with softmax attention, rounding of activations and attention weights and moderate parameter magnitudes can simulate Turing machines using CoT and more efficiently using SCoT, thereby narrowing the gap between the models used in theoretical expressivity results and practical ones. An obvious future avenue is to investigate, both theoretically and empirically, whether the logarithmic scaling of model size in these results is actually necessary. 

Apart from that, our theoretical results only investigated \emph{expressivity} and not \emph{learnability}, i.e., whether such transformers can be found through training methods based on gradient descent. In this regard, our experiments merely provide a first step by showing that, in the setting of Sudoku reasoning, our expressivity results make better predictions about learnability than prior high-precision ones.

\section*{Acknowledgements}
We thank the reviewers for their constructive feedback, which in particular helped us improve the general clarity of our presentation as well as the framing of the experiments with respect to the theory. Further, the authors are grateful for support by the German Research Foundation through Project 553088969 as well as the Cluster of Excellence “Machine Learning --- New Perspectives for Science” (EXC 2064/1 number 390727645).

\section*{Impact Statement}
This paper presents work whose goal is to advance the field of machine learning. There are many potential societal consequences of our work, none of which we feel must be specifically highlighted here.

\bibliography{references}
\bibliographystyle{icml2026}
\appendix
\crefalias{section}{appendix}
\crefalias{subsection}{appendix}
\crefalias{subsubsection}{appendix}
\onecolumn

\section{Related Work}\label{app:relatedwork}
Here we discuss related works including but not limited to the ones mentioned in the main part already.
\subsection{Transformer Turing Completeness}

\begin{table}[ht]
\centering
\small
\begin{tabular}{ l|c|c|c|c|c|c|c}
Work & Depth & Width & Prec. & Steps & Context & Parameters & Arch.\\
\hline
\rule{0pt}{2.5ex}\cite{perez_attention_is_turing_complete} & $\bigo(1)$ & $\bigo(1)$ & $\bigo(\log \hat t)$ & $\bigo(t)$ & $\bigo(t)$ & $\bigo(1)$ & ED-CoT-H\\
\cite{bhattamishra2020computationalpowertransformersimplications} & $\bigo(1)$ & $\bigo(1)$ & $\bigo(\hat s)$ & $\bigo(t)$ & $\bigo(t)$ & $\bigo(1)$ & ED-Con-H\\
\cite{merrill2024expressivepowertransformerschain} & $\bigo(1)$ & $\bigo(1)$ & $\bigo(\log \hat t)$ & $\bigo(t)$ & $\bigo(t)$ & $\bigo(1)$ & D-CoT-H\\
\cite{yang2025pencil} & $\bigo(1)$ & $\bigo(1)$ & $\bigo(\log \hat s)$ & $\bigo(t)$ & $\bigo(s)$ & $\bigo(1)$ & D-SCoT-H\\
\cite{wei2022statisticallymeaningful} & $\bigo(\log \hat t)$ & $\bigo(\log \hat t)$ & $\bigo(\log\log \hat t)$ & $\bigo(\hat t)$ & $\bigo(\hat t)$ & $\bigo((\log \hat t)^3)$ & ED-Con-H\\
\cite{li2024chainthoughtempowerstransformers} & $\bigo(1)$ & $\bigo(\log \hat t)$ & $\bigo(1)$ & $\bigo(t \log t)$ & $\bigo(t \log t)$ & $\bigo(\hat t (\log \hat t)^2)$ & D-CoT-Sfp\\
\cite{li2025efficientturingmachinesimulation} & $\bigo(1)$ & $\bigo(1)$ & $\bigo(1)$ & $\bigo(t \hat s^\varepsilon)$ & $\bigo(\hat s)$ &  $\bigo(1)$ & D-CoT-H\\
\cite{jiang2025softmaxtransformersturingcomplete} & $\bigo(1)$ & $\bigo(1)$ & $\bigo(\hat t)$ & $\bigo(\exp(t))$ & $\bigo(\exp(t))$ & $\bigo(1)$ & D-CoT-S\\
Ours (CoT) & $\bigo(\log \hat t)$ & $\bigo(\log \hat t)$ &  $\bigo(\log \log \hat t)$ & $\bigo(t)$ & $\bigo(t)$ & $\bigo((\log \hat t)^3)$ & D-CoT-S\\
Ours (SCoT) & $\bigo(\log \hat s)$ & $\bigo(\log \hat s)$ &  $\bigo(\log \log \hat s)$ & $\bigo(t)$ & $\bigo(s)$ & $\bigo((\log \hat s)^3)$ & D-SCoT-S\\
\end{tabular}
\vspace{5px}
\caption{Asymptotic scalings of prior results showing existence of a transformer simulating a given Turing machine $M$ on all inputs $w$ with $t = t_M(w) \leq \hat t$ and $s = s_M(w) \leq \hat s$ for time bound $\hat t$ and space bound $\hat s$. Arch. describes the transformer architecture as D/ED--CoT/SCoT/Con--H/S/Sfp (decoder-only / encoder--decoder; token CoT / summarized token CoT / continuous CoT; hardmax / softmax / softmax relying on finite-precision).}
\label{tab:related_work_scaling}
\end{table}

Many prior works show Turing completeness in some sense for some type of transformer, often but not always by simulating Turing machines directly. While these works differ substantially in the exact kind of result, we try to frame them to be comparable to our results. In particular, we try to phrase them as constructing a transformer simulating a Turing machine $M$ for all inputs $w$ with $t_M(w) \leq \hat t$ and $s_M(w) \leq \hat s$. Model depth, width, precision, decoding steps and context size for a given input $w$ are then bounded by $t_M(w), s_M(w), \hat t$ and $\hat s$. This distinction between \emph{known} upper bounds on space and time usage of $M$ and the input-dependent actual space and time usages $s_M(w), t_M(w)$ has sometimes been overlooked in prior works, but leads to important differences when trying to choose $\hat t$ and $\hat s$ very large in order to cover many inputs with the same model. The results are summarized in \cref{tab:related_work_scaling}. For simplicity we assume $t_M(w) \geq |w|$ so that e.g. $\bigo(t_M(w) + |w|) = \bigo(t_M(w))$, which is not a real restriction as it is already satisfied if the Turing machine always reads its input fully. Note that most works don't explicitly round activations, and we made our best effort to determine the precision required by them. In particular, if a construction uses activations of the form $\frac{1}{i}$ at position $i$, we write its precision as logarithmic in the context size even if they don't explicitly bound the error from rounding such numbers to floating-point formats. Note that our constructions require the $\bigo(\log\log \hat t)$/$\bigo(\log\log\hat s)$ precision only for attention weights, while the number of exponent bits for activations is triple logarithmic and the number of mantissa bits is constant for both. 

\cite{perez_attention_is_turing_complete} is the fundamental work on Turing completeness of transformers, and many later works reuse techniques from it. They use an encoder-decoder transformer to simulate a Turing machine with one decoding step per Turing machine step encoding state, written symbols and head movements. 

\cite{bhattamishra2020computationalpowertransformersimplications} prove Turing completeness by showing that a transformer can simulate an RNN and then invoking the Turing completeness of RNNs \cite{siegelmannsontag}. Since the RNN construction encodes the Turing machine configuration in the precision of its hidden state, this requires $\bigo(\hat s)$ precision bits (and additionally $\bigo(\log \hat t)$ bits for positional information, typically negligible), and hence the same is true for this result.

\cite{merrill2024expressivepowertransformerschain} adapts techniques from \cite{perez_attention_is_turing_complete} to prove Turing completeness for decoder-only transformers, using some architectural modifications like strict causal masking (instead of the standard non-strict masking) and projected pre-norm. As they use a decoder with CoT and produce a comparable token sequence to our CoT construction, we chose them as the main prior comparison point for our CoT results in the main part.

\cite{yang2025pencil} introduce a memory-efficient alternative Chain-of-Thought strategy which has Summarized CoT as a special case. For this special case they prove memory-efficient Turing completeness by using similar construction methods as \cite{merrill2024expressivepowertransformerschain} to generate token sequences alternating between chunks encoding Turing machine steps and chunks encoding the Turing machine's configuration, much like our SCoT construction (although the exact encodings differ in some non-important ways). In particular, they use the same idea of aligning the number of simulated steps to the size of the initial summary in each segment in order to have decoding steps and context size aligned with the Turing machine's time and space usage respectively. Hence, we chose them as the main prior comparison for our SCoT results.

\cite{wei2022statisticallymeaningful} has the most technical overlap with our hardmax CoT constructions, even though their main goal in the paper is to prove finite sample bounds. In particular, their construction also uses ternary activations\footnote{They frame it as binary activations as hidden representations are binary, but queries and keys are ternary in their construction.} with logarithmic width to represent positions in binary and logarithmic depth for performing a binary search for the latest write at a head position. Also they use the same kind of denoising MLP, but do this for proving finite sample bounds instead of removing actual errors from softmax and rounding. Nonetheless, their work would be very much suited for softmax attention and rounding to low-precision for the same reasons that motivated our hardmax constructions, namely separation of attention scores and ternary activations.
However, their work also differs in several key aspects from our hardmax constructions. On the one hand, they use an encoder-decoder architecture as \cite{perez_attention_is_turing_complete}. On the other hand, they do not generate a token sequence, but instead feed the final hidden representation at each position in the decoder directly into the next position skipping unembedding and embedding, i.e. $x_{i}^{(0)} = x_{i-1}^{(L)} + \text{posemb}_i$. This is akin to modern \emph{latent reasoning} or \emph{continuous Chain-of-Thought} methods \cite{hao2024coconut} and allows for dropping a lot of the complexity in our constructions, as the prior token's head position can just be fed through to the next token instead of having to reconstruct it at every step. However, this also forces them to use a fixed number of decoding steps until stopping. Note that one could add unembeddings and embeddings to map the hidden representations to a finite vocabulary, but this vocabulary would have size proportional to $\hat t$ instead of being fixed for $M$. The continuous CoT method also makes it difficult to write actual outputs in $\Sigma^*$ (or summaries for SCoT) and hence they only treat sequence classification, i.e. simulating Turing machines for language recognition.

\cite{li2024chainthoughtempowerstransformers} simulate boolean circuits with transformers, where the number of decoding steps corresponds to the size of the boolean circuit. Boolean circuits can simulate Turing machines with a logarithmic overhead as noted by \cite{li2025efficientturingmachinesimulation}. \cite{li2024chainthoughtempowerstransformers} also use a logarithmic embedding dimension and low precision activations and consider softmax attention with an explicit rounding model. They however use rounding explicitly to remove errors from softmax conversion, i.e. for a fixed chosen precision they choose the scaling $c$ sufficiently large so that softmax weights are rounded back to the correct hardmax weights (which are always $0$ or $1$ in their construction). Oddly, when using this technique, one fixed transformer can fail when the activation precision is too high. Furthermore, the boolean circuit is directly encoded in the positional encodings, which requires $\bigo(nd) = \bigo(n \log n)$ parameters for context size $n$. 

\cite{li2025efficientturingmachinesimulation}, building on \cite{li2025constantbitsizetransformersturing}, show space-efficient Turing completeness via simulating \emph{multi-queue machines}. Notably, they appear to be the only work which achieves constant width, depth and precision at the same time. However, their construction is also not fully uniform: they rely on non-standard positional encodings to select tokens at specific offsets depending on the known space bound $\bigo(\hat s)$. Unfortunately this raises the number of generated tokens per Turing machine step to $\bigo(\hat s^\varepsilon)$, where $\varepsilon$ can be made arbitrarily small by increasing the number of attention heads. Furthermore, context size is not adaptive as in \cite{yang2025pencil} and our SCoT results, but fixed to $\hat s$.

\cite{jiang2025softmaxtransformersturingcomplete} show Turing completeness for softmax transformers with fixed model size and precision logarithmic in context size. Interestingly they do not approximate hardmax transformers with softmax transformers as we do, but use softmax transformers directly to simulate counter machines, which are Turing complete \cite{Minsky1967Computation}. Unfortunately, counter machines can have exponential overhead when simulating Turing machines \cite{Fischer1968CounterMA}, hence the number of steps and context size scale poorly when simulating Turing machines. This also implies that the required precision in their construction is actually linear in the number of simulated Turing machine steps as $\bigo(\log(\exp(\hat t))) = \bigo(\hat t)$.

Two other works on universal computation with transformers and CoT that don't quite fit into the comparison above are \cite{qiu2025promptingturingcomplete} and \cite{nowak2024representationalcapacitycot}. The former one analyzes the aspect of uniformity over Turing machines $M$, i.e. constructing one transformer which can perform any task when given the right prompt. The latter one extends the relationship between standard (i.e. deterministic) Turing machines and transformers evaluated with temperature $0$ to a relationship between probabilistic Turing machines and transformers where the next token is actually sampled from the predicted next-token distribution.

\subsection{Regular Language Recognition with Transformers}
Our approach to recognizing regular languages with log-depth transformers closely follows that of \cite{liu2023transformerslearnshortcutsautomata} except for using logarithmic width instead of logarithmic precision to represent positions. Interestingly, they even convert their hardmax transformer to softmax as well by scaling query and key projections. However, due to constant embedding size, separation shrinks polynomially with input length and $W_Q, W_K$ need to grow as $\bigo(\hat n \log \hat n)$ in their construction for a length bound $\hat n$. Moreover, they neither control the error from using softmax attention end-to-end like in \cref{thm:softmaxthm1} nor by inserting additional MLPs as in \cref{thm:softmaxthm2}, but instead construct their MLPs to be invariant under small perturbations and hence remove the errors in parallel to their computation. This elegantly avoids the doubling of depth incurred by our MLP technique, but is unfortunately not directly applicable to our constructions without raising the MLP width to at least linear in context size.

Another work on regular language recognition with log-depth transformers is \cite{merrill2025littledepthgoeslong}. While their theoretical advantage over \cite{liu2023transformerslearnshortcutsautomata} lies mainly in \emph{looping} the same fixed transformer layers instead of each layer being different, which our results do not consider, they offer interesting empirical results. In particular, their experiments indicate that logarithmic scaling of depth might actually be necessary for training transformers to simulate certain automata. 

\subsection{Other Transformers Expressivity Works}
Two other works on transformers have significant technical overlap with our softmax and low-precision analyses and hence should be discussed.
\cite{sanford2023representational,sanford2024transformersparallelcomputationlogarithmic} analyze standard softmax self-attention and use scaling to sharpen scores, and \cite{sanford2024transformersparallelcomputationlogarithmic} additionally relies on finite-precision rounding to exactly recover hardmax outputs (similar to \cite{li2024chainthoughtempowerstransformers}). Importantly, as pointed out by \cite{yang2025simulatinghardattentionusing}, due to the logarithmic activation precision used in \cite{sanford2024transformersparallelcomputationlogarithmic}, replacing the reliance on rounding with a denoising MLP similar to our constructions would require MLP width linear in the context size. This is in contrast to \cite{li2024chainthoughtempowerstransformers}, where the denoising MLP technique could be applied to replace the reliance on rounding without large MLP widths.

Furthermore, there is a substantial number of works showing relatively poor representational ability of transformers under different assumptions when fixing model size \cite{Hahn_2020,hao2022formal_language_recognition_hard_attention_transformers,merrill2022saturated_transformers_constant_depth_threshold_circuits,yang2024masked_hard_attention_transformers_star_free,chiang2023tighter_bounds_expressivity_transformer_encoders,merrill2023parallelism}. Relatedly, \cite{amiri2025lowerboundscot} proves lower bounds on the required CoT length for certain problems. These are orthogonal to our results, as we don't investigate the ability of a fixed size transformer to handle arbitrarily long contexts and instead let depth and width grow logarithmically with context size.

\subsection{Memory-Efficient Long Reasoning}\label{appsubsec:memeff}
The idea of decoupling total reasoning length and context size in CoT transformers has appeared in several forms recently.

\cite{inftythink} (InftyThink) is almost the same as our SCoT, as stated in the introduction. The main difference is that for InftyThink, the input always remains in context, i.e. each segment is prompted with the concatenation of the input and the previous summary instead of just the previous summary in our considerations. This is however not a significant difference, as with our SCoT the model could carry along the input in each summary if needed, while for InftyThink the input could simply be ignored. We made this choice for simplicity and better compatibility with Turing machine simulation.

\cite{aghajohari2025markovianthinkerarchitectureagnosticlinear} (MarkovianThinker) is very close in spirit, but with a subtle difference: there are no explicit summary delimiters. Instead, decoding proceeds in chunks of at most $N$ tokens and each segment is prompted with the original input together with the final chunk of the previous segment consisting of the last $n < N$ tokens. Since $N, n$ are fixed hyperparameters for a run, the effective context size is not input-adaptive, which would make the context scale as $\bigo(\hat s)$ rather than  $\bigo(s_M(w))$ when aiming for an analogue of \cref{thm:scot}.

\cite{yang2025pencil} (PENCIL) introduce a different memory-efficient Chain-of-Thought variant which has SCoT as a special case. Out of these three works, \cite{yang2025pencil} is the only one showing theoretical results and they show them for the SCoT special case as discussed above. 

Lastly, \cite{xiao2024efficient} do not quite fit into this list, but are still worth mentioning. They propose a method for streaming generation with a fixed-size context. While their goal is not to extend reasoning length with limited context size, their method and findings could still be applied to decouple reasoning length and context size by writing summaries frequently. Similar to \cite{aghajohari2025markovianthinkerarchitectureagnosticlinear}, however, this does not directly allow the model to adapt the context size to the problem in the same way that the summary trigger tokens do.

\subsection{Generalization}
Length generalization, i.e.\ training on shorter/smaller inputs and generalizing to longer/larger ones, is a well-studied challenge for transformers and other architectures \cite{jelassi2023length,zhou2024transformersachievelengthgeneralization,schwarzschild2021learn}. Even with careful tuning of positional encodings and training procedures, large generalization factors remain difficult to achieve reliably, although apparently possible in some cases with recurrent networks \cite{schwarzschild2}. The constant-space generalization observed in our Sudoku experiments (\cref{sec:sudoku}) is a fundamentally different notion: the segment lengths remain similar between training and test while the total reasoning length varies by orders of magnitude. While this is significantly simpler than length generalization, the large factors achieved nonetheless do not seem to be found in prior works for tasks of comparable complexity. Note that constant-space generalization only becomes meaningful with SCoT or similar schemes that decouple reasoning length from context size.


\section{Definitions}\label{app:definitions}
Here we aim to refine the definitions from \cref{sec:preliminaries}, so that the statements and proofs in the rest of the appendix leave no room for interpretation.

\subsection{General Notation}
First we define the binary representation of a number using $r$ bits. We always write the least significant bit first and use the alphabet $\{-1,1\}$ instead of the standard $\{0,1\}$ to encode bits. The reason for this becomes clear in \cref{lem:bin_dotprod_gap} below.
\begin{definition}[Binary Representation]\label{def:binrep}
Let $r \in \N$ and $i \in \N_0$ with $0 \le i < 2^r$.
Define
\[
\bin_r^s(i) :=
\begin{cases}
  1,  &  \left\lfloor \dfrac{i}{2^{s}} \right\rfloor \equiv 1 \pmod{2}, \\[6pt]
 -1,  & \left\lfloor \dfrac{i}{2^{s}} \right\rfloor \equiv 0 \pmod{2}.
\end{cases} \qquad \text{ for } s \in \{0,1,\dots,r-1\}
\]
and $$\bin_r(i) = (\bin_r^0(i),\dots,\bin_r^{r-1}(i))\;.$$
For example, $\bin_4(11) = (1, 1, -1, 1)$, which is obtained by taking the standard binary representation of $11$, i.e.\ $(1,0,1,1)$, reversing it and replacing $0$'s with $-1$'s.
\end{definition}

\paragraph{Encodings.}
For a finite set $S$ we write $\enc_S : S \to \{-1,1\}^{d_S}$ for an arbitrary fixed injective encoding (e.g. $\enc_Q$, $\enc_\Sigma$, $\enc_\Gamma$), the exact choice does not matter.
For numerical values $i \in \{0,\dots,2^r-1\}$, however, we always use the standard binary encoding $\bin_r(i)$ (instead of an arbitrary encoding $\enc_{\{0,\dots,2^r-1\}})$, since MLPs need to efficiently implement arithmetic operations such as increments and decrements on these representations.

The reason we use $\{-1,1\}$ rather than $\{0,1\}$ is that it works better as keys/queries in attention via the following simple separation property.
\begin{lemma}\label{lem:bin_dotprod_gap}
Let $q, k \in \{-1,1\}^r$ for some $r\in\N$ with $q \neq k$. Then 
\[ 
  \langle q, q \rangle \geq \langle q, k\rangle + 2
\]
In particular, this holds for $q=\bin_r(i)$ and $k=\bin_r(j)$ with $i\neq j$.
\end{lemma}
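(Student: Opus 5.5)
The argument is a direct computation with the inner product on $\{-1,1\}^r$. For any $q \in \{-1,1\}^r$ we have $\langle q, q\rangle = \sum_{s} q_s^2 = r$, because every entry squares to $1$. For $k \in \{-1,1\}^r$, let $D := \{s : q_s \neq k_s\}$ be the set of coordinates where the two vectors differ. On $D$ the product $q_s k_s$ equals $-1$, and off $D$ it equals $1$. Hence
\[
\langle q, k\rangle = (r - |D|) - |D| = r - 2|D|.
\]

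The assumption $q \neq k$ gives $|D| \geq 1$. Therefore $\langle q, k\rangle \leq r - 2 = \langle q,q\rangle - 2$, which is exactly the claimed inequality. Equivalently, $\langle q,q\rangle - \langle q,k\rangle = 2|D|$ is twice the Hamming distance between $q$ and $k$, so the gap is at least $2$.

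For the ``in particular'' clause, take $0 \le i, j < 2^r$ with $i \neq j$. The binary representations in the standard $\{0,1\}$ alphabet are distinct, so some bit index $s$ satisfies $\lfloor i/2^s\rfloor \not\equiv \lfloor j/2^s\rfloor \pmod 2$. By \cref{def:binrep}, this gives $\bin_r^s(i) \neq \bin_r^s(j)$, so $\bin_r(i) \neq \bin_r(j)$, and the first part applies.

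There is no real obstacle here. The only point needing care is that $\bin_r$ is injective on $\{0,\dots,2^r-1\}$, and this follows immediately from the uniqueness of binary expansions.
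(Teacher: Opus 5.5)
Your proposal is correct and follows the paper's proof: the paper observes that each coordinate contributes $+1$ to $\langle q,k\rangle$ when the entries agree and $-1$ when they differ, so $\langle q,q\rangle - \langle q,k\rangle$ is twice the number of disagreeing coordinates. Your explicit argument that $\bin_r$ is injective on $\{0,\dots,2^r-1\}$ is a detail the paper leaves implicit.
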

\begin{proof}
Straightforward: each coordinate contributes $+1$ if the bits match and $-1$ if they differ, so a single differing bit reduces the inner product by $2$.
\end{proof}

\subsection{Deterministic Finite Automata}
This is standard and repeated from the preliminaries section in the main part for completeness here:
\begin{definition}[Deterministic Finite Automaton (DFA)]\label{def:dfa}
    A deterministic finite automaton (DFA) $M$ is a 5-tuple
    \[
        M = (Q, \Sigma, \delta, \qinit, F)
    \]
    where $Q$ is a finite set of \emph{states}, $\Sigma$ a finite \emph{alphabet}, $\delta: Q \times \Sigma \to Q$ the \emph{transition function}, $\qinit \in Q$ the \emph{initial state} and $F$ a set of \emph{accepting states}. For $w \in \Sigma^*$ with $|w| = n$ we write $w = (w_0,\dots,w_{n-1})$ and define the sequence of states of $M$ on $w$ recursively as $q_0 = \qinit$ and $q_{t+1} = \delta(q_t, w_t)$ for $t = 0,\dots,n-1$. We define $L(M)$ as the set of $w\in\Sigma^*$ such that the final state $q_{|w|}$ is in $F$.
\end{definition}

\subsection{Turing Machines}
We formally fix the conventions and notation for the Turing machines being simulated by transformers in our constructions. We work with multi-tape Turing machines, each infinite to the right.
On input $w = w_0 \dots w_{n-1} \in \Sigma^*$, the first tape contains $w_0,\dots,w_{n-1}$ in cells $0,\dots,n-1$ and $\sqcup$ elsewhere, all other tapes contain only $\sqcup$, and all heads start on cell $0$.

\begin{definition}[Multi-tape Turing Machine]\label{def:tm}
    A multi-tape Turing machine $M$ is an 8-tuple
    \[
        M = (K, Q, \Sigma, \Gamma, \delta, \sqcup, \qinit, \qhalt)
    \]
    where $K\in\N$ is the number of tapes, $Q$ is a finite set of \emph{states}, $\Sigma$ is a finite \emph{input alphabet}, $\Gamma \supset \Sigma$ is a finite \emph{tape alphabet}, $\sqcup \in \Gamma \setminus \Sigma$ is the \emph{blank symbol}, $\qinit \in Q$ is the \emph{initial state}, $\qhalt \in Q \setminus \{\qinit\}$ is the \emph{halting state}, and
    \[
        \delta : Q \times \Gamma^K \to Q \times \Gamma^K \times \{L,S,R\}^K
    \]
    is the \emph{transition function}.
\end{definition}

A configuration of a Turing machine $M$ consists of the state, the tape contents and the head positions.
\begin{definition}[Configuration space]
    Let $M = (K, Q, \Sigma, \Gamma, \delta, \sqcup, \qinit, \qhalt)$ be a multi-tape Turing machine. Then we define 
    \[
        \operatorname{Config}(M) := Q \times (\Gamma^{\N_0})^K \times \N_0^K\;.
    \]
\end{definition}

\begin{definition}[Run, time, space usage and output]\label{def:tm_run}
    Let $M$ be a Turing machine as defined above and let $w = w_0 \dots w_{n-1} \in \Sigma^*$.

    \textbf{Run on input.}
    The \emph{run} of $M$ on input $w$ is the sequence of configurations
    \[
        C_t = \bigl(q_t, (x_t^k)_{k=1}^K, (n_t^k)_{k=1}^K\bigr) \in \operatorname{Config}(M)
        \qquad (t = 0,1,2,\dots)
    \]
    where $q_t$ is the state after $t$ steps, $x_t^k \in \Gamma^{\N_0}$ the contents of the $k$-th tape and $n_t^k \in \N_0$ the position of the head on tape $k$.
    The initial configuration is
    \[
        q_0 = \qinit, \qquad
        x_0^1 = (w_0,\dots,w_{n-1},\sqcup,\sqcup,\dots), \qquad
        x_0^k = \sqcup^{\N_0} \text{ for } k \ge 2,
    \]
    and
    \[
        n_0^k = 0 \quad \text{for all } k = 1,\dots,K.
    \]
    For $t = 1,2,\dots$ we set
    \[
        \bigl(q_t, (y_t^k)_{k=1}^K, (\Delta_t^k)_{k=1}^K\bigr)
        = \delta\Bigl(q_{t-1}, \bigl((x_{t-1}^k)_{n_{t-1}^k}\bigr)_{k=1}^K\Bigr),
    \]
    and define
    \[
        (x_t^k)_i =
        \begin{cases}
          y_t^k,            & \text{if } i = n_{t-1}^k,\\
          (x_{t-1}^k)_i,    & \text{otherwise,}
        \end{cases}
        \qquad
        n_t^k =
        \begin{cases}
          n_{t-1}^k + 1,              & \text{if } \Delta_t^k = R,\\
          \max(0,\, n_{t-1}^k - 1),   & \text{if } \Delta_t^k = L,\\
          n_{t-1}^k,                  & \text{if } \Delta_t^k = S.
        \end{cases}
    \]

    \medskip
    \textbf{Halting and running time.}
    We say that $M$ \emph{halts} on input $w$ if there exists $t \ge 0$ such that $q_t = \qhalt$. In this case we set
    \[
        t_M(w) := \min\{t \ge 0 \mid q_t = \qhalt\},
    \]
    and if no such $t$ exists we set $t_M(w) := \infty$.

    \medskip
    \textbf{Output.}
    If $M$ halts on $w$ and at time $t_M(w)$ the contents of tape $1$ are of the form
    \[
        x_{t_M(w)}^1 = (y_0,\dots,y_{m-1},\sqcup,\sqcup,\dots)
    \]
    for some $y = (y_0,\dots,y_{m-1}) \in \Sigma^*$, we define the \emph{output} of $M$ on $w$ as $f_M(w) := y$.

    \medskip
    \textbf{Space usage.}
    The \emph{space usage} of $M$ on input $w$ is defined as
    \[
        s_M(w) :=
        \begin{cases}
          \displaystyle
          \max\Bigl(\{|w|\} \cup \{1 + n_t^k \,\big|\, 0 \le t \le t_M(w),\, k \in \{1,\dots,K\}\}\Bigr),
              & \text{if } t_M(w) < \infty,\\[8pt]
          \infty, & \text{if } t_M(w) = \infty.
        \end{cases}
    \]
\end{definition}

\subsection{Transformers}\label{app:transformers}
The aim here is to define precise function classes for the MLPs, Attention layers, transformer layers, transformer stacks (sequence of transformer layers) and full transformers (transformer stack with added embedding and unembeddings) for both hardmax and softmax attention. We only define and use standard transformer decoders as used in LLMs, hence attention is always causal (i.e. position $i$ can attend to positions $j \leq i$).

For a set $X$ we write $X^n$ for the set of length-$n$ sequences $(x_0,\dots,x_{n-1})$ with $x_i \in X$ and
\[
  X^+ := \bigcup_{n \ge 1} X^n
\]
for the set of all non-empty finite sequences over $X$. We write
\[
  f : X^+ \overset{\mathrm{lp}}{\to} X^+
\]
if $f$ is \emph{length-preserving}, i.e., if $f(x_0,\dots,x_{n-1}) \in X^n$ for all $n \ge 1$.

\begin{definition}[MLP]
    Let $d, \dff \in \N$ (\emph{input} and \emph{hidden dimension}). Define the parameter space
    \[
      \Theta^{\operatorname{MLP}}(d,\dff)
      := \R^{\dff \times d} \times \R^{\dff} \times \R^{d \times \dff}.
    \]
    For $\theta = (W_1,b,W_2) \in \Theta^{\operatorname{MLP}}(d,\dff)$ referred to as \emph{up projection, bias} and \emph{down projection}, we define $F_\theta: \R^d \to \R^d$ as
    \[
      F_\theta(x) = W_2 \relu(W_1x + b),
    \]
    where $\relu(t) = \max(0,t)$ is applied component-wise. We define the function class of all such MLPs as
    \[
      \mathcal{F}(d,\dff)
      := \{F_\theta \mid \theta \in \Theta^{\operatorname{MLP}}(d,\dff)\}\;.
    \]
\end{definition}

\begin{definition}[Softmax]
    Let $(s_0,\dots,s_{n-1}) \in \R^n$. Define
    \[
      \softmax\bigl((s_j)_{j=0}^{n-1}\bigr)
      := \left(\frac{e^{s_0}}{\sum_{j=0}^{n-1} e^{s_j}}, \dots, \frac{e^{s_{n-1}}}{\sum_{j=0}^{n-1} e^{s_j}}\right)\;.
    \]
    Note that $\sum_{j=0}^{n-1} \softmax\bigl((s_j)_{j=0}^{n-1}\bigr)_j = 1.$
\end{definition}

\begin{definition}[Hardmax]
    Let $s_0, \dots, s_{n-1} \in \R$. Define
    \[
      J := \arg\max_j s_j
      = \{j \in \{0,\dots,n-1\} \mid s_j = \max\{s_0,\dots,s_{n-1}\}\}
    \]
    and define $\hardmax(s_0,\dots,s_{n-1}) = (y_0,\dots,y_{n-1})$ where
    \[
      y_j = \begin{cases}
        \dfrac{1}{|J|}\;,\;j \in J\\[4pt]
        0\;\;\;,\;j \not\in J
      \end{cases}
    \]
    Again note that $\sum_j y_j = 1$.
\end{definition}

\begin{definition}[Single Attention Head]\label{appdef:atthead}
    Let $d, d_k, d_v \in \N$ (\emph{input dimension, key-query dimension} and \emph{value dimension}). Define the parameter space
    \[
      \Theta^{\operatorname{att}}(d,d_k,d_v)
      := \R^{d_k \times d} \times \R^{d_k \times d} \times \R^{d_v \times d} \times \R^{d \times d_v}.
    \]
    For $\theta = (W_Q,W_K,W_V,W_O) \in \Theta^{\operatorname{att}}(d,d_k,d_v)$ referred to as \emph{query, key, value} and \emph{output projection}, we define
    \[
      A^{\hard}_\theta: (\R^d)^+ \overset{\mathrm{lp}}{\to} (\R^d)^+
    \]
    as mapping $(x_0,\dots,x_{n-1}) \in (\R^d)^n$ to $(y_0,\dots,y_{n-1}) \in (\R^d)^n$ defined as follows.
    Define \emph{queries, keys} and \emph{values}
    \[
    q_i = W_Q x_i,\qquad k_i = W_K x_i,\qquad v_i = W_V x_i,
    \]
    and for $j\leq i$ define \emph{attention scores}
    \[
    s_{ij} = \frac{1}{\sqrt{d_k}}\langle q_i, k_j\rangle
    \]
    and \emph{attention weights}
    \[
    (\alpha_{ij})_{j=0}^i = \hardmax((s_{ij})_{j=0}^i)\;.
    \]
    Then the output at position $i$ is defined as 
    \[
      y_i = W_O \sum_{j=0}^i \alpha_{ij}v_j\;.
    \]
    Analogously we define $A^{\soft}_\theta$ by replacing hardmax with softmax in the definition of the attention weights.
    The corresponding function classes are defined as 
    \[
      \mathcal{A}^{\hard}(d,d_k,d_v)
      := \{A^{\hard}_\theta \mid \theta \in \Theta^{\operatorname{att}}(d,d_k,d_v)\}
      \]
      and analogously for $\mathcal{A}^{\soft}(d,d_k,d_v)$.
      Note that the scaling by $\frac{1}{\sqrt{d_k}}$ in the definition of the scores $s_{ij}$ does not influence the output of the hardmax (as hardmax is invariant under scaling scores by a positive constant) and does not influence the set of representable functions in both cases (hardmax and softmax) as it can be absorbed into the query or key projections. However, it is standard practice and influences the considerations in \cref{app:softmax}, in particular the parameter scales for softmax conversion.
\end{definition}

\begin{definition}[Transformer Layer]\label{def:transformerlayer}
    Let $d, \dff, d_k, d_v \in \N$ and $H \in \N$ (the \emph{number of attention heads}). Define the parameter space
    \[
      \Theta^{\operatorname{TL}}(d,\dff,d_k,d_v,H)
      := \left(\Theta^{\operatorname{att}}(d,d_k,d_v)\right)^H \times \Theta^{\operatorname{MLP}}(d, \dff).
    \]
    For
    \[
      \theta = \bigl((\theta^h)_{h=1}^H, \theta^{\operatorname{MLP}}\bigr)
      \in \Theta^{\operatorname{TL}}(d,\dff,d_k,d_v,H),
    \]
    where $\theta^{\operatorname{MLP}} = (W_1,b,W_2) \in \Theta^{\operatorname{MLP}}(d,\dff)$ and $\theta^h \in \Theta^{\operatorname{att}}(d,d_k,d_v)$ for $h=1,\dots,H$, we define the transformer layer
    \[
      TL_\theta^{\hard}: (\R^d)^+ \overset{\mathrm{lp}}{\to} (\R^d)^+
    \]
    as mapping $(x_0,\dots,x_{n-1}) \in (\R^d)^n$ to $(y_0,\dots,y_{n-1}) \in (\R^d)^n$ where 
    \[
      y_i = \tilde y_i + F_{\theta^{\operatorname{MLP}}}(\tilde y_i)
    \]
    and 
    \[
      \tilde y_i = x_i + \sum_{h=1}^H A^{\hard}_{\theta^h}(x_0,\dots,x_{n-1})_i\;.
    \]
    The class of such transformer layers is defined as
    \[
      \mathcal{TL}^{\hard}(d, \dff, d_k, d_v, H)
      := \{TL_\theta^{\hard} \mid \theta \in \Theta^{\operatorname{TL}}(d,\dff,d_k,d_v,H)\}.
    \]
    Everything is defined analogously for softmax. 
\end{definition}

\begin{definition}[Transformer Stack]
    Let $d, \dff, d_k, d_v, H \in \N$ and $L \in \N$ (the \emph{depth}). Define the parameter space
    \[
      \Theta^{\operatorname{TS}}(d,\dff,d_k,d_v,H,L)
      := \left(\Theta^{\operatorname{TL}}(d,\dff,d_k,d_v,H)\right)^L.
    \]
    For
    \[
      \theta = (\theta^1,\dots,\theta^L) \in \Theta^{\operatorname{TS}}(d,\dff,d_k,d_v,H,L)
    \]
    we define the transformer stack
    \[
      TS_\theta^{\hard} : (\R^d)^+ \overset{\mathrm{lp}}{\to} (\R^d)^+
    \]
    as follows. For $x = (x_0,\dots,x_{n-1}) \in (\R^d)^n$ set $x^{(0)} := x$ and recursively
    \[
      x^{(\ell)} := TL^{\hard}_{\theta^\ell}(x^{(\ell-1)}) \qquad \text{for } \ell = 1,\dots,L.
    \]
    Then $TS_\theta^{\hard}(x) := x^{(L)}$. The corresponding function class is
    \[
      \mathcal{TS}^{\hard}(d,\dff,d_k,d_v,H,L)
      := \{TS_\theta^{\hard} \mid \theta \in \Theta^{\operatorname{TS}}(d,\dff,d_k,d_v,H,L)\}.
    \]
    Everything is defined analogously for softmax.

    For later reference, given an input sequence $x = (x_0,\dots,x_{n-1})$ we write $x_i^{(0)}$ for the input embedding (including the positional embedding) at position $i$ and $x_i^{(\ell)}$ for the representation at position $i$ after the $\ell$-th layer. Within a layer, we additionally use
    \[
      x_i^{(\ell-0.5)}
    \]
    to denote the representation at position $i$ after applying attention (including the residual addition) but before the MLP block of layer $\ell$.
\end{definition}

\begin{definition}[Transformer]
    Let $d, \dff, d_k, d_v, H, L \in \N$ and let $\vocab$ be a finite set (the \emph{vocabulary}). Define the parameter space
    \[
      \Theta^{\operatorname{T}}(d,\dff,d_k,d_v,H,L,\vocab)
      := \Theta^{\operatorname{TS}}(d,\dff,d_k,d_v,H,L)
         \times (\R^d)^{\vocab} \times (\R^d)^{\vocab}\;.
    \]
    where $(\R^d)^{\vocab}$ denotes the set of families $(e_v)_{v \in \vocab}$ with $e_v \in \R^d$.
    For
    \[
      \theta = \bigl(\theta^{\operatorname{TS}}, (\emb_\tok)_{\tok \in \vocab}, (\unemb_\tok)_{\tok \in \vocab}\bigr)
      \in \Theta^{\operatorname{T}}(d,\dff,d_k,d_v,H,L,\vocab)
    \]
    referred to as \emph{transformer stack parameters, embedding vectors} and \emph{unembedding vectors} and a \emph{positional embedding function}\footnote{The positional embedding function is not listed in the parameters, as we only consider highly regular positional embeddings. See the remark below for details.} 
    \[
      \operatorname{posemb}: \N_0 \to \R^d
    \]
    we define the (hardmax) transformer
    \[
      T_\theta^{\hard} : \vocab^+ \to \vocab
    \]
    as follows. For a sequence $(\tok_0,\dots,\tok_{n-1}) \in \vocab^n$ with $n \ge 1$ we index tokens from $0$ to $n-1$ and set
    \[
      x_i^{(0)} := \emb_{\tok_i} + \operatorname{posemb}(i) \in \R^d
      \qquad (i = 0,\dots,n-1),
    \]
    and write $x^{(0)} := (x_0^{(0)},\dots,x_{n-1}^{(0)}) \in (\R^d)^n$. Define
    \[
      x^{(L)} := TS_{\theta^{\operatorname{TS}}}^{\hard}(x^{(0)}) = (x_0^{(L)},\dots,x_{n-1}^{(L)}) \in (\R^d)^n.
    \]
    Using only the last representation $x_{n-1}^{(L)}$, define scores
    \[
      s(\tok) := \langle \unemb_\tok, x_{n-1}^{(L)} \rangle \qquad \text{for } \tok \in \vocab
    \]
    and choose
    \[
      T_\theta^{\hard}(\tok_0,\dots,\tok_{n-1})
      := \arg\max_{\tok \in \vocab} s(\tok),
    \]
    where ties may be broken arbitrarily (and do not occur in our constructions).
    The corresponding function class is
    \[
      \ts^{\hard}(d,\dff,d_k,d_v,H,L,\vocab)
      := \{T_\theta^{\hard} \mid \theta \in \Theta^{\operatorname{T}}(d,\dff,d_k,d_v,H,L,\vocab)\}.
    \]
    Everything is defined analogously for softmax.
\end{definition}

\begin{remark}[Positional Embeddings]
    The positional embedding function $\operatorname{posemb}$ belongs to the transformer, but this is implicit in the definition above. We will always use a fixed, highly regular positional embedding which consists of the binary representation $\bin_{d_\text{pos}}(i)$ of the position $i$ being written to some $d_{\text{pos}} < d$ coordinates; see \cref{def:binrep}. Importantly, the positional embeddings are not treated as parameters, otherwise the positional embeddings would add $Nd$ parameters for a context size limit $N$, which is much larger than the polylogarithmic (in $N$) number of parameters in our constructions. 
\end{remark}

\subsection{Autoregressive Generation and (Summarized) Chain-of-Thought}
\begin{definition}[Autoregressive generation]
    Let $T$ be a transformer (hardmax or softmax) with vocabulary $\vocab$ and let $F \subset \vocab$ be a non-empty set of stopping symbols. For a sequence $(\tok_0,\dots,\tok_{n-1}) \in \vocab^n$ with $n \ge 1$ we define the generation
    \[
      \operatorname{gen}(T,(\tok_0,\dots,\tok_{n-1}),F) \in \vocab^+
    \]
    as follows. For $k = n,n+1,\dots$ set
    \[
      \tok_k := T(\tok_0,\dots,\tok_{k-1}).
    \]
    If the set
    \[
      \{k \ge n \mid \tok_k \in F\}
    \]
    is non-empty, let $m := \min\{k \ge n \mid \tok_k \in F\}$ and define
    \[
      \operatorname{gen}(T,(\tok_0,\dots,\tok_{n-1}),F) := (\tok_0,\dots,\tok_m).
    \]
    If this set is empty, the generation is undefined. 
\end{definition}

We will use transformers to compute functions $\Sigma^* \to \Sigma^*$ using (Summarized) Chain-of-Thought. We formalize how a transformer $T$ with vocabulary $\vocab$ maps an input word $w = (w_0,\dots,w_{n-1}) \in \Sigma^*$ to an output word in $\Sigma^*$ via special delimiter tokens.

\begin{definition}[Chain-of-Thought]\label{appdef:cot}
    Let $T$ be a transformer with vocabulary $\vocab$ where
    $\{\inp,\einp,\outp,\eoutp\} \subset \vocab$, $\Sigma \subset \vocab$, and these sets are disjoint.
    For $w = (w_0,\dots,w_{n-1}) \in \Sigma^*$ consider
    \[
      \tau
      := \operatorname{gen}\bigl(T,(\inp,w_0,\dots,w_{n-1},\einp),\{\eoutp\}\bigr)
    \]
    whenever the generation is defined. If $\tau$ is of the form
    \[
      \tau = (\inp,w_0,\dots,w_{n-1},\einp,\; \ast,\; \outp,\; u_0,\dots,u_{m-1},\eoutp)
    \]
    where $\ast$ denotes an arbitrary (possibly empty) sequence of tokens from $\vocab$, $m \ge 0$, $u_0,\dots,u_{m-1} \in \Sigma$ and $\outp$ occurs exactly once in $\tau$, then we define
    \[
      T_{\CoT}(w_0,\dots,w_{n-1}) := (u_0,\dots,u_{m-1}) \in \Sigma^*.
    \]
    and $$t_T(w) = |\tau|$$
    is the total number of tokens in the generated sequence (including the input tokens).
    Otherwise (in particular if the generation does not finish or does not have this form), $T_{\CoT}(w_0,\dots,w_{n-1})$ and $t_T(w)$ are undefined.
\end{definition}

\begin{definition}[Summarized Chain-of-Thought]\label{appdef:scot}
    Let $T$ be a transformer with vocabulary $\vocab$ where
    $\{\inp,\einp,\summ,\esumm,\outp,\eoutp\} \subset \vocab$, $\Sigma \subset \vocab$, and these sets are disjoint.
    For $w = (w_0,\dots,w_{n-1}) \in \Sigma^*$ set
    \[
      \operatorname{prompt}_0 := (\inp,w_0,\dots,w_{n-1},\einp).
    \]
    Then for $t = 0,1,2,\dots$: 
    
    \begin{itemize}
      \item Define
      \[
        \operatorname{gen}_t
        := \operatorname{gen}\bigl(T,\operatorname{prompt}_t,\{\eoutp,\esumm\}\bigr)
      \]
      whenever this generation is defined; if it is undefined for some $t$, we say
      $T_{\SCoT}(w_0,\dots,w_{n-1})$ is undefined.
      \item If $\operatorname{gen}_t$ ends in $\eoutp$ and can be written as
      \[
        \operatorname{gen}_t
        = (\operatorname{prompt}_t,\; \ast,\; \outp,\; u_0,\dots,u_{m-1},\eoutp)
      \]
      with $m \ge 0$, $u_0,\dots,u_{m-1} \in \Sigma$ and $\outp$ occurring exactly once after $\operatorname{prompt}_t$, then we set
      \[
        T_{\SCoT}(w_0,\dots,w_{n-1}) := (u_0,\dots,u_{m-1}) \in \Sigma^*
      \]
      and stop the process. We then define the total generation length as  
      $$t_T(w) = \sum_t |\operatorname{gen}_t|$$
      i.e. the total number of tokens produced across all segments (counting the prompt tokens repeated in each segment),
      and maximum segment length as  
      $$s_T(w) = \max_t |\operatorname{gen}_t|\;.$$

      \item If $\operatorname{gen}_t$ ends in $\esumm$ and can be written as
      \[
        \operatorname{gen}_t
        = (\operatorname{prompt}_t,\; \ast,\; \summ,\; s_0,\dots,s_{r-1},\esumm)
      \]
      with $r \ge 1$, $s_0,\dots,s_{r-1} \in \vocab$ and $\summ$ occurring exactly once after $\operatorname{prompt}_t$, then we set
      \[
        \operatorname{prompt}_{t+1} := (\summ,s_0,\dots,s_{r-1},\esumm)
      \]
      and continue with the next iteration.
    \end{itemize}

If in some iteration neither of the above cases applies, or if the process
    never reaches the first case (termination with $\eoutp$ and a valid output as above), then $T_{\SCoT}(w_0,\dots,w_{n-1})$ is undefined.
\end{definition}

\subsection{Precision and Rounding}
Here we introduce some notation for finite precision numbers and rounding. The definitions for floating point numbers and rounding are standard \cite{ieee7542019} except that we ignore infinite and NaN values.
We will often work with very low precision values in $\{-1,0,1\}$, and we also consider standard floating point numbers.
\begin{definition}[Floating Point Numbers]\label{def:floats}
  We define the set of floating point numbers with $b_m \in \N$ \emph{mantissa bits} and $b_e \in \N_{\geq 2}$ \emph{exponent bits} as 
    \[
      \begin{aligned}
        \float_{b_m,b_e} = {} &
        \underbrace{\{\pm (1 + t 2^{-b_m})2^\kappa \mid t \in \{0,\dots,2^{b_m}-1\}, \kappa \in \{E_{\min}, \dots, E_{\max}\}\}}_{\text{normals}}\\
        &\cup \underbrace{\{\pm (t 2^{-b_m})2^{E_{\min}} \mid t \in \{0,\dots,2^{b_m}-1\}\}}_{\text{subnormals and }0}\;,
      \end{aligned}
    \]
    where $E_{\min} = 2-2^{b_e-1}$ and $E_{\max} = 2^{b_e-1}-1$ are the smallest and largest exponent values.
\end{definition}
Hence the smallest positive subnormal and smallest positive normal numbers are $2^{E_{\min}-b_m}$ and $2^{E_{\min}}$, respectively, and the largest positive number is $(2-2^{-b_m})2^{E_{\max}}$.
We refer to numbers $x$ with
\[
  2^{E_{\min}} \le |x| \le (2-2^{-b_m})2^{E_{\max}}
\]
as being in the \emph{normal range}.

Next, we define for a finite set $F \subset \R$ the rounding operation $[\cdot]_F: \R \to F$ as 
\[
  [x]_F := \arg\min_{y\in F} |x-y|,
\]
i.e. rounding $x$ to the nearest number in $F$, where ties can be broken arbitrarily and don't occur in our considerations.
The relative rounding error for floating point numbers in the normal range can be bounded as follows:
\begin{lemma}\label{lem:float_relative_rounding}
  Let $b_m,b_e \in \N$ with $b_e \ge 2$ and let $\float_{b_m,b_e}$ be as above. For $x \in \R$ in the normal range, i.e.
  \[
    2^{E_{\min}} \le |x| \le (2-2^{-b_m})2^{E_{\max}},
  \]
  set $y := [x]_{\float_{b_m,b_e}}$. Then
  \[
    |y-x| \le 2^{-b_m-1}|x|.
  \]
\end{lemma}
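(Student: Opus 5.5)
By symmetry of $[\cdot]_F$ under $x\mapsto -x$ (the set $\float_{b_m,b_e}$ is symmetric), I would assume $x>0$ throughout. Since $x$ lies in the normal range, there is a unique exponent $\kappa\in\{E_{\min},\dots,E_{\max}\}$ with $2^\kappa \le x < 2^{\kappa+1}$, except in the top case $\kappa=E_{\max}$, where $x$ is capped by $(2-2^{-b_m})2^{E_{\max}}$. The strategy is to show that some element of $\float_{b_m,b_e}$ lies within $2^{\kappa-b_m-1}$ of $x$. Since nearest rounding can only do better, this gives $|y-x|\le 2^{\kappa-b_m-1}\le 2^{-b_m-1}x$, using $2^\kappa\le x$.

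For the key step, I would look at the binade. The normals with exponent $\kappa$ are the points $(1+t2^{-b_m})2^\kappa$ for $t=0,\dots,2^{b_m}-1$. They are equally spaced with gap $2^{\kappa-b_m}$ and start at $2^\kappa$. I add the endpoint $2^{\kappa+1}$ whenever it is representable, which holds when $\kappa<E_{\max}$ because then $2^{\kappa+1}=(1+0)2^{\kappa+1}$ is a normal. With this, $[2^\kappa,2^{\kappa+1}]$ is covered by a grid of mesh $2^{\kappa-b_m}$. So any $x$ in this interval is within half the mesh, namely $2^{\kappa-b_m-1}$, of a grid point, and that grid point lies in $\float_{b_m,b_e}$.

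The one obstacle is the top binade $\kappa=E_{\max}$, where $2^{E_{\max}+1}$ is not representable. Here the normal-range hypothesis does the work: $x\le(2-2^{-b_m})2^{E_{\max}}$, and this upper bound is the largest grid point, $t=2^{b_m}-1$. So $x$ lies in the interval spanned by the grid points of that binade, and the same half-mesh bound applies. The subnormals are never needed, since $x\ge 2^{E_{\min}}$. Combining the cases gives $|y-x|\le 2^{-b_m-1}|x|$.
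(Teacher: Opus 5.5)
Your proposal is correct and follows the same argument as the paper. You pick the binade $[2^\kappa,2^{\kappa+1})$ containing $x$, note that the representable grid there has mesh $2^{\kappa-b_m}$, and conclude that nearest rounding errs by at most $2^{\kappa-b_m-1}\le 2^{-b_m-1}|x|$. You also treat the top binade explicitly: there $2^{E_{\max}+1}$ is not representable, but the normal-range cap puts $x$ at or below the largest grid point. The paper's proof passes over this case, so your treatment is a welcome piece of added care.
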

\begin{proof}
  By symmetry it suffices to consider $x\ge 0$.
  Choose $k \in \Z$ such that $2^k \le x < 2^{k+1}$. As $x$ is in the normal range, we have $k \in \{E_{\min},\dots,E_{\max}\}$.
  In the interval $[2^k,2^{k+1}]$, the representable positive floats form a grid with step size at most $2^{k-b_m}$.
  Therefore rounding to the nearest representable float yields an absolute error at most $\tfrac12 \cdot 2^{k-b_m} = 2^{k-b_m-1}$.
  Since $x \ge 2^k$, we obtain $|y-x| \le 2^{k-b_m-1} \le 2^{-b_m-1}x$.
\end{proof}

\begin{definition}[Ternary Weights and Activations]\label{def:ternary}
  We say that a transformer $T = T_\theta$ (or transformer layer, MLP, etc.) specified by parameters $\theta$ has \emph{ternary weights} if all weights in $\theta$ (i.e. all parameters except for MLP biases) have entries in $\{-1,0,1\}$.
  For a set of input sequences $\x \subset \vocab^+$ we say that a (hardmax) transformer $T$ has \emph{ternary activations} on $\x$ if for every $(\tok_0,\dots,\tok_{n-1}) \in \x$ and for all layers $\ell$, heads $h$, and positions $i$, all hidden representations $x_i^{(\ell)}$ and $x_i^{(\ell-0.5)}$, all queries $q_i^{(\ell,h)}$, keys $k_i^{(\ell,h)}$ and values $v_i^{(\ell,h)}$, all attention-head and multihead-attention outputs, and all hidden MLP activations after applying $\relu$ have entries in $\{-1,0,1\}$, while raw MLP outputs may additionally equal $\pm2$. Note that this definition makes no assumption on attention scores $s_{ij}^{(\ell,h)}$ and weights $\alpha_{ij}^{(\ell,h)}$.
\end{definition}

\begin{remark}[Biases]\label{rem:biases}
  Biases occur only in the hidden layer of the MLP in our constructions. In the hardmax constructions, biases always lie in $\{-d,-d+1,\dots,d-1,d\}$, while for the denoising MLP (see \cref{appsubsec:denoising}) they can additionally be $\pm\frac{1}{4}$ and $\pm\frac{3}{4}$. Representing them would hence require $\bigo(\log d)$ mantissa (and $\bigo(\log\log d)$ exponent) bits, i.e. double-logarithmic in the context length. Alternatively, one could get completely rid of biases without changing any of the theorems in the main part by adding $d$ constant $1$ dimensions (set at embedding) to the representations connected with weights $0$ or $\pm 1$ to each MLP neuron to replace biases in $\{-d,\dots,d\}$. For the denoising MLP technique (\cref{thm:softmaxthm2}) this would increase the set of possible weights to $\{0,\pm \frac14,\pm \frac34, \pm 1, \pm 2, \pm c\}$. In both cases, activation precision stays exactly the same.
\end{remark}

\begin{remark}[Binary Encodings and MLP Outputs]\label{rem:binaryencodings}
  Our arithmetic gadgets store numerical values in residual-stream registers using signed binary encodings $\bin_r(i) \in \{-1,1\}^r$. This makes equality testing by dot-product attention direct (\cref{lem:bin_dotprod_gap}), but changing a bit between $-1$ and $1$ requires a raw MLP residual update of $\pm2$, even though the residual stream before and after the MLP is ternary. Such outputs could be avoided by splitting these updates over two MLP layers, at most doubling the depth, but this would not improve our final softmax results.
\end{remark}

\section{Additional Material for \cref{sec:hardmax}}\label{app:hardmax}
\subsection{General Construction Methods for Hardmax Transformers}\label{generalconstructionmethods}
Here we introduce the general methods we use to construct hardmax transformers without having to manually specify parameters. Many of these considerations are inspired by \cite{elhage2021mathematical}. We also fix some notation used throughout the section and prove some basic lemmas.

\paragraph{Flags and Registers.}\label{def:regflag}
We split the $d$ coordinates of the residual stream into disjoint \emph{registers} and \emph{flags}. A flag is an index $F\in\{1,\ldots,d\}$. A register is a tuple
\[
I=(i_0,\dots,i_{m-1})\qquad\text{with } i_0,\dots,i_{m-1}\in\{1,\ldots,d\}\text{ pairwise distinct,}
\]
and we write $|I|:=m$. For $x\in\R^d$ we define
\[
x[I]:=(x_{i_0},\dots,x_{i_{m-1}})\in\R^{|I|},\qquad x[F]:=x_F\in\R.
\]
For registers and flags $R_1,\dots,R_k$ we use the shorthand $x[R_1,\dots,R_k]:=(x[R_1],\dots,x[R_k])$. For each token and layer, each register stores either zeros or a $\{-1,1\}$-binary number\footnote{Note that to encode the number $0$, the register will have all entries $-1$. If all entries are zero, it does not encode anything.} in the sense of \cref{def:binrep}, while flags store bits in $\{0,1\}$. Unless specified otherwise, registers and flags introduced later in a construction are zero-initialized in the embedding for all tokens, and we always take them to be pairwise disjoint, often without further mention. We use $0$-based indexing and half-open slices for registers: $I[k]:=i_k$ and $I[a:b]:=(i_a,\dots,i_{b-1})$. Lastly, for a register $I$ we write $\bar I$ for the indices in $\{1,\ldots,d\}$ not occurring in $I$. 

\paragraph{Building a transformer.}
Note that if an attention/MLP module in a transformer does not write anything to a coordinate (i.e. outputs $0$ at this coordinate), the coordinate retains its value from before the attention/MLP module due to the residual connection.
To construct a hardmax transformer $T\in\ts^{\hard}(d,\dff,d_k,d_v,H,L,\vocab)$ we conceptually start with an empty transformer with embeddings all zero and no attention heads or mlp neurons to modify the residual stream. Some registers of the embedding or positional embedding are defined, and then gradually attention heads and mlp neurons modifying different parts of the residual stream are added.  Equivalently, one can start with all attention and MLP parameters being zero, which means all multihead attention and MLP modules output all zeros, hence not modifying the residual stream. "Adding" an attention head or MLP neurons to a layer $\ell$ then corresponds to setting the parameters of the next all-zero attention head or the next all-zero MLP neurons to perform some operation. All such updates are additive. If we say that some module "writes" its output to a register $I$, then either the register will be zero before the operation or explicitly zeroed out. 

\paragraph{Parallelism.} In the constructions for \cref{thm:cotapp,thm:scotapp}, some multi-layer operations overlap. Importantly, such overlapping operations and in general different operations scheduled in the same layer always either modify disjoint registers and flags or are gated by different token types and hence do not interfere with each other.

\paragraph{Positional Embeddings.} In all of our hardmax transformer constructions we will use the following positional embeddings. For a constant $R \in \N$ determining the maximum context size $N = 2^R$, we choose a $R$-dimensional register $\ipos$ and set 
\begin{equation*}
  \operatorname{posemb}(i)[\ipos] = \bin_R(i)
\end{equation*}
with other coordinates (those not in $\ipos$) initialized to zero. In words, the positional embedding simply writes the binary encoding of $i$ to some $R$ coordinates.

\paragraph{Attention Heads.}
When we say that we add an attention head (\cref{appdef:atthead}) to layer $\ell$ which attends with queries $I_Q$, keys $I_K$, values $I_V$ and writes to $I_O$, where $I_Q, I_K, I_V, I_O$ are (concatenations of) registers and flags with $|I_Q| = |I_K|$ and $|I_V| = |I_O|$, we mean: take the next unused head index $h\in\{1,\ldots,H\}$ in that layer and set its projections so that
\[
q_i^{(\ell,h)} = x_i^{(\ell-1)}[I_Q],\qquad k_j^{(\ell,h)} = x_j^{(\ell-1)}[I_K],\qquad v_j^{(\ell,h)} = x_j^{(\ell-1)}[I_V],
\]
and we often suppress the layer/head superscripts when clear.
The output projection injects the head output into coordinates $I_O$ (so it is added to the residual stream on $I_O$). Concretely, we choose $W_Q,W_K,W_V,W_O$ as selector matrices: writing $I_Q=(i_1,\dots,i_{|I_Q|})$, we set $(W_Q)_{j,i_j}=1$ for $j=1,\dots,|I_Q|$ and zero elsewhere (and analogously for $W_K$ and $W_V$). Writing $I_O=(o_1,\dots,o_{|I_O|})$, we set $(W_O)_{o_j,j}=1$ for $j=1,\dots,|I_O|$ and zero elsewhere. Note that for these attention heads, projections always have ternary entries and the same activation precision as the $x_i^{(l-1)}$. If $|I_Q| = |I_K|$ is smaller than the key-query dimension $d_k$, we can simply set the remaining coordinates of queries and keys to zero without mentioning, similarly for the values. If not all heads in a layer are used, we set the remaining heads to zero (e.g. by setting $W_V = 0$).

\paragraph{MLPs.}
The MLP in layer $\ell$ is applied pointwise and added to the residual stream. Since its output is added, we sometimes say that an MLP \emph{writes} a value into a register when that register is zero before the MLP, so that the post-MLP contents equal the written value. We describe MLPs in terms of register operations (e.g. zeroing a register, decrementing a binary representation, setting flags), often conditional on certain flags. Each operation uses a certain number of hidden neurons; if the total number of used neurons in a layer is smaller than $\dff$, the remaining neurons simply do nothing (e.g. have zero output weights). Multiple operations can be implemented in parallel by combining their hidden neurons, as formalized by \cref{lem:mlpsum}.

The following two generic lemmas for MLPs will be used repeatedly. The first constructs a single-neuron MLP that fires precisely when certain registers are set to certain binary numbers and certain flags are set to certain truth values and outputs some ternary vector in this case and zero otherwise.

\begin{lemma}[Single Neuron MLP]\label{lem:singleneuronmlp}
  Let $d \in \N$ and let $I_1, \dots, I_n$ be registers and $F_1, \dots, F_m$ be flags, all disjoint.
  Let $x^*_i \in \{-1,1\}^{|I_i|}$ for $i=1,\dots,n$ and let $b^*_j \in \{0,1\}$ for $j=1,\dots,m$. Let $y \in \{-1,0,1\}^d$. Define the \emph{admissible inputs} as 
  $$\x = \{x \in \{-1,0,1\}^d \mid x[F_j] \in \{0,1\} \text{ for } j\in\{1,\dots,m\}\}\;.$$
  Then there exists a single-neuron MLP $f \in \mathcal{F}(d,1)$ such that
    \[
      f(x) =
      \begin{cases}
        y, & \text{ if } x[I_i] = x^*_i \text{ for } i\in\{1,\dots,n\} \text{ and } x[F_j] = b^*_j \text{ for } j\in\{1,\dots,m\},\\
        0, & \text{ if } x \in \x \text{ but the above condition is not met.}
      \end{cases}
    \]
    Furthermore, $f$ has ternary weights and activations in the sense of \cref{def:ternary} on these inputs.
\end{lemma}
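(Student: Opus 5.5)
We construct the single neuron directly as $f(x) = y\,\relu(w^\top x + b)$, with the first-layer weights $w$ in $\{-1,0,1\}^d$, down projection $W_2 = y$, and an integer bias $b$. The weights are chosen so that the pre-activation $w^\top x$ is maximised exactly on the target pattern:
\begin{itemize}
\item on each register coordinate in $I_i$, set $w$ equal to the corresponding entry of $x^*_i$;
\item on each flag $F_j$, set $w_{F_j} = 1$ if $b^*_j = 1$ and $w_{F_j} = -1$ if $b^*_j = 0$;
\item set all other coordinates of $w$ to zero.
\end{itemize}
Let $P = \sum_i |I_i| + |\{j : b^*_j = 1\}|$ and take $b = 1 - P$, which lies in $\{-d,\dots,d\}$ in line with \cref{rem:biases}.

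The key estimate is as follows. For $x \in \x$, each register coordinate contributes $w_k x_k \le 1$, with equality iff $x_k$ equals the prescribed $\pm1$; the admissible values are $x_k \in \{-1,0,1\}$, so a mismatch contributes $0$ or $-1$. Each flag contributes $x_{F_j}$ if $b^*_j=1$, which is at most $1$ with equality iff $x_{F_j}=1$. If $b^*_j=0$ it contributes $-x_{F_j} \le 0$, with equality iff $x_{F_j}=0$; here we use $x[F_j]\in\{0,1\}$ from the definition of $\x$. All contributions are integers, so
\[
w^\top x \le P,
\]
with equality iff the firing condition holds, and otherwise $w^\top x \le P-1$. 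Hence the hidden activation $\relu(w^\top x + 1 - P)$ equals $1$ on the target pattern and $0$ on every other admissible input. The output is therefore $y$ or $0$, as claimed.

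It remains to check ternarity in the sense of \cref{def:ternary}. The weights $w$ and $y$ are in $\{-1,0,1\}$, the bias is not counted among the weights, the hidden post-ReLU activation is in $\{0,1\}$, and the output lies in $\{-1,0,1\}^d$. The only point needing care is the flag coordinates with $b^*_j = 0$, where admissibility must be used to exclude $x_{F_j} = -1$, which would otherwise raise the sum. Beyond that, the argument is the integrality gap, so I expect no real obstacle.
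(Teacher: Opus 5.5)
Your proposal is correct and follows the paper's proof essentially verbatim: the same first-layer weights ($x^*_i$ on the registers, $\pm1$ on the flags according to $b^*_j$, zero elsewhere), the same bias $1-P$, down projection $W_2=y$, and the same integrality-gap argument that the pre-activation equals $P$ on the target pattern and is at most $P-1$ on every other admissible input. You also note explicitly that admissibility is needed to exclude $x_{F_j}=-1$ on flags with $b^*_j=0$, which the paper leaves implicit.
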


\begin{proof}
  Set $F^+ := \{F_j \mid b^*_j = 1\}$.
  We define the input weight matrix $W_1 \in \R^{1\times d}$ by 
    \[
      (W_1)_{1,s} :=
      \begin{cases}
        (x^*_i)_k, & s = I_i[k] \text{ for some } i \in \{1,\dots,n\} \text{ and } k \in \{0,\dots,|I_i|-1\},\\
        1, & s \in F^+,\\
       -1, & s \in \{F_1, \dots, F_m\} \setminus F^+,\\
        0, & \text{otherwise,}
      \end{cases}
    \]
    and set $b := -(\sum_i |I_i| + |F^+|) + 1$. For $x\in\x$ with the target pattern $x[I_i] = x_i^*, x[F_j] = b^*_j$ we obtain
    $W_1 x = \sum_i |I_i| + |F^+|$ and hence $\relu(W_1 x + b) = 1$. If $x\in\x$ differs from the target pattern in at least one coordinate in one of 
    the registers or flags, then at that coordinate the contribution drops by at least $1$, so $W_1 x \le \sum_i |I_i| + |F^+| - 1$ and $\relu(W_1 x + b) = 0$.

    Now choose $W_2 \in \R^{d \times 1}$ as $W_2 = y$ and the MLP implements the desired function.
    All entries of $W_1$ and $W_2$ lie in $\{-1,0,1\}$, so $f$ has ternary weights. 
    On admissible inputs, the single hidden activation is always in $\{0,1\}$, and the output is in $\{-1,0,1\}^d$, so $f$ has ternary activations in the sense of \cref{def:ternary}.
\end{proof}

The next lemma shows how to merge two MLPs into one that computes their sum by simply combining their hidden neurons.
\begin{lemma}\label{lem:mlpsum}
    Let $f_1 \in \mathcal{F}(d,d_1)$ and $f_2 \in \mathcal{F}(d,d_2)$. Then there exists $f \in \mathcal{F}(d,d_1 + d_2)$ such that
    \[
      f(x) = f_1(x) + f_2(x)
      \qquad \text{for all } x \in \R^d.
    \]
    Furthermore, if $f_1$ and $f_2$ have ternary weights, then so does $f$. If, in addition, both $f_1$ and $f_2$ have ternary hidden activations on admissible inputs in the sense of \cref{def:ternary}, then $f$ has ternary hidden activations as well. Hence, if the output of $f$ has entries in $\{-2,-1,0,1,2\}$ on admissible inputs, $f$ has ternary activations.
\end{lemma}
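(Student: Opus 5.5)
The construction stacks the hidden neurons of the two MLPs side by side. Write $f_1 = F_{\theta_1}$ with $\theta_1 = (W_1^{(1)}, b^{(1)}, W_2^{(1)})$, where $W_1^{(1)} \in \R^{d_1\times d}$, $b^{(1)} \in \R^{d_1}$ and $W_2^{(1)} \in \R^{d\times d_1}$. Write $f_2 = F_{\theta_2}$ analogously with $d_2$ in place of $d_1$. Define $\theta = (W_1, b, W_2) \in \Theta^{\operatorname{MLP}}(d, d_1+d_2)$ by
\[
W_1 := \begin{pmatrix} W_1^{(1)} \\ W_1^{(2)} \end{pmatrix},\qquad
b := \begin{pmatrix} b^{(1)} \\ b^{(2)} \end{pmatrix},\qquad
W_2 := \begin{pmatrix} W_2^{(1)} & W_2^{(2)} \end{pmatrix}.
\]

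Since $\relu$ acts coordinatewise, the hidden activation of $f := F_\theta$ at input $x$ is the concatenation of the hidden activations of $f_1$ and $f_2$:
\[
\relu(W_1x+b) = \begin{pmatrix}\relu(W_1^{(1)}x+b^{(1)})\\ \relu(W_1^{(2)}x+b^{(2)})\end{pmatrix}.
\]
Multiplying by the block row $W_2$ then gives
\[
W_2\relu(W_1x+b) = W_2^{(1)}\relu(W_1^{(1)}x+b^{(1)}) + W_2^{(2)}\relu(W_1^{(2)}x+b^{(2)}) = f_1(x)+f_2(x)
\]
for every $x \in \R^d$.

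The remaining claims follow directly from this block structure. Every entry of $W_1$ and $W_2$ is an entry of one of the original weight matrices, so ternary weights carry over; the biases are not weights in the sense of \cref{def:ternary}. The hidden activations of $f$ on an admissible input are exactly those of $f_1$ and of $f_2$, so they lie in $\{-1,0,1\}$ whenever theirs do. The only activations of an MLP covered by \cref{def:ternary} are its hidden activations and its raw output, and raw outputs may take values in $\{-2,\dots,2\}$. So if the output of $f$ has entries in this set on admissible inputs, $f$ has ternary activations.

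There is no real obstacle here. The only point needing care is that ternary activations of $f$ are not implied by those of $f_1$ and $f_2$ alone, since their outputs could add up to $\pm3$ or more. This is exactly why the final claim of the lemma is stated conditionally on the output of $f$.
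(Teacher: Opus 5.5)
Your proposal is correct and follows the paper's proof: you stack the up-projections and biases vertically and the down-projections horizontally, so the hidden layer of $f$ is the concatenation of those of $f_1$ and $f_2$, and the sum identity, ternary weights and ternary hidden activations all follow directly. Your remark that the output condition is genuinely needed (the outputs of $f_1$ and $f_2$ could sum to $\pm 3$) correctly explains why the lemma states that last claim conditionally.
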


\begin{proof}
    Writing $f_i(x) = W_2^{(i)} \relu(W_1^{(i)} x + b^{(i)})$ for $i=1,2$, define
    \[
      W_1 := \begin{pmatrix} W_1^{(1)} \\ W_1^{(2)} \end{pmatrix},\quad
      b := \begin{pmatrix} b^{(1)} \\ b^{(2)} \end{pmatrix},\quad
      W_2 := \begin{pmatrix} W_2^{(1)} & W_2^{(2)} \end{pmatrix}.
    \]
    Then $f(x) := W_2 \relu(W_1 x + b) = f_1(x) + f_2(x)$ for all $x$. Concatenation preserves ternary weights, and on admissible inputs the hidden activations of $f$ are just the concatenation of those of $f_1$ and $f_2$, hence ternary as well.
\end{proof}

Next, we want to introduce some basic MLP operations used in the hardmax transformer constructions in this section.
First, we want to construct an MLP which zeros a register $I$ if some flags have certain values.
Since MLP outputs are added to the residual stream, to zero a register $I$ we need to construct an MLP that outputs $-x[I]$ on these coordinates and $0$ elsewhere, 
so that $x[I]$ is zeroed out and other coordinates remain unchanged when added to the residual stream. 
\begin{lemma}[Zeroing]\label{lem:zeroing}
    Let $d \in \N$ and consider a register $I$, flags $F_1,\dots,F_m$ and values $b_1^*, \dots, b_m^* \in \{0,1\}$ where $I, F_1,\dots, F_m$ are pairwise disjoint (i.e. no flag $F_i$ appears as an index in the register $I$ or equals another flag).
    Then there exists a ternary MLP $f \in \mathcal{F}(d, \dff)$ with $\dff = 2 |I|$ such that for all $x \in \{-1,0,1\}^d$ with $x[F_j] \in \{0,1\}$ for $j=1,\dots,m$,
    \[
      f(x)[I] = 
      \begin{cases}
        -x[I],&\text{ if } x[F_j] = b_j^* \text{ for } j\in \{1,\dots,m\}\\
        0,&\text{ otherwise}.
      \end{cases}
    \]
    and other output components (indices not in $I$) being zero.
\end{lemma}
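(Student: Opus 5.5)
The plan is to build the MLP coordinatewise: two hidden neurons per register index, one handling positive entries and one handling negative entries, so that the ReLU outputs recover $x[I]$ exactly and only when the gating pattern holds. Write $I=(i_0,\dots,i_{|I|-1})$ and $F^+:=\{F_j \mid b_j^*=1\}$. Define the gating functional
\[
g(x) := \sum_{F_j\in F^+} x[F_j] - \sum_{F_j\notin F^+} x[F_j] - |F^+|,
\]
which, on admissible inputs, equals $0$ exactly when the flags match the target values and is at most $-1$ otherwise. This is the same counting argument as in the proof of \cref{lem:singleneuronmlp}.

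For each $k$ we add two neurons with pre-activations
\[
h_k^+ = x[I[k]] + g(x), \qquad h_k^- = -x[I[k]] + g(x),
\]
that is, weight $\pm1$ on coordinate $I[k]$, weights $\pm1$ on the flags, and bias $-|F^+|$. All of these weights are ternary. The output weights map neuron $h_k^+$ to coordinate $I[k]$ with weight $-1$ and neuron $h_k^-$ to coordinate $I[k]$ with weight $+1$. Every other output coordinate receives zero, so the contribution at $I[k]$ is $-\relu(h_k^+)+\relu(h_k^-)$. The two neurons for a given $k$ form a small MLP, and \cref{lem:mlpsum} merges all $|I|$ of them, giving $\dff=2|I|$.

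Next I verify the two cases. If the flags match, then $g=0$, so the contribution is $-\relu(x_{I[k]})+\relu(-x_{I[k]}) = -x_{I[k]}$ for $x_{I[k]}\in\{-1,0,1\}$. If the flags do not match, then $g\le -1$ and $|x_{I[k]}|\le 1$, so both pre-activations are $\le 0$ and the output is $0$. Disjointness of $I$ from the flags ensures that the flag coordinates do not interfere with the register coordinate in the pre-activations.

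The only delicate point is the boundary case, where we must check that a mismatch drives the pre-activation to at most $0$ rather than merely making it small. This follows because $g\le -1$ exactly offsets the largest possible $|x_{I[k]}|=1$. The same observation shows that hidden activations lie in $\{0,1\}$ and outputs in $\{-1,0,1\}$, so $f$ is ternary in the sense of \cref{def:ternary}.
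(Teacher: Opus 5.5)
Your construction is correct and is the paper's own: for each index of $I$ the paper adds the two single-neuron gadgets from \cref{lem:singleneuronmlp} and merges them via \cref{lem:mlpsum}, and your neurons $h_k^\pm$ with bias $-|F^+|$ are exactly those gadgets written out explicitly. (These are the gadgets firing on $x[i]=1$ resp.\ $x[i]=-1$ under the flag pattern and outputting $\mp e_i$.) Your check of the mismatch case, where $g\le -1$ offsets $|x_{I[k]}|\le 1$, simply unpacks the counting argument in that lemma's proof.
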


\begin{proof}
  For each $i \in I$ we add two hidden neurons, one activates if all flags are set correctly (i.e. $x[F_j] = b_j^*$ for $j \in \{1,\dots,m\}$) and $x[i] = 1$ and outputs $-e_i$,
  i.e. it outputs $-1$ at coordinate $i$ and $0$ elsewhere. The other neuron activates if all flags are set correctly and $x[i] = -1$ and outputs $e_i$. 
  By \cref{lem:singleneuronmlp} these $2|I|$ neurons can be implemented as single-neuron MLPs and combined with \cref{lem:mlpsum} into one MLP $f$ with $2|I|$ hidden neurons.  
\end{proof}

Next we need to be able to copy one representation block into another one, again conditional on certain flags. This works very similarly to the zeroing operation. Note that with residual connections, the first register is added onto the second one,
hence it is only a copying operation if the second register starts out empty. 
\begin{lemma}[Copying]\label{lem:copying}
    Let $d\in\N$ with registers $I_1,I_2$ with $|I_1| = |I_2|=d'$ and flags $F_1,\dots,F_m$ and values $b_1^*, \dots, b_m^* \in \{0,1\}$ where $I_1, I_2, F_1,\dots,F_m$ are pairwise disjoint. 
    Then there exists a ternary MLP $f \in \mathcal{F}(d, \dff)$ where $\dff = 2d'$ such that for all $x \in \{-1,0,1\}^d$ with $x[F_j] \in \{0,1\}$ for $j=1,\dots,m$,
    \[
      f(x)[I_2] = \begin{cases}
      x[I_1],& \text{ if } x[F_j] = b_j^* \text{ for } j\in \{1,\dots,m\}\\
      0,&\text{ otherwise }
      \end{cases}
    \]
    and other output components (indices not in $I_2$) being zero.
\end{lemma}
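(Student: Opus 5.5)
The plan mirrors the proof of \cref{lem:zeroing}: build the MLP coordinatewise from single gated neurons given by \cref{lem:singleneuronmlp}, then merge them with \cref{lem:mlpsum}. Write $I_1=(i_0,\dots,i_{d'-1})$ and $I_2=(o_0,\dots,o_{d'-1})$.

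For each $k\in\{0,\dots,d'-1\}$ we add two hidden neurons.
\begin{itemize}
\item The first fires exactly when all flags match, i.e.\ $x[F_j]=b_j^*$ for all $j$, and $x[i_k]=1$. Its output vector is $y=e_{o_k}$.
\item The second fires when all flags match and $x[i_k]=-1$. Its output vector is $y=-e_{o_k}$.
\end{itemize}
Each is an instance of \cref{lem:singleneuronmlp}, with the one-coordinate register $(i_k)$ as the only register, pattern $x^*=(\pm1)$, and the flag conditions $b_j^*$. The disjointness hypotheses of that lemma hold because $I_1,F_1,\dots,F_m$ are pairwise disjoint.

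Merging these $2d'$ single-neuron MLPs with \cref{lem:mlpsum} gives a ternary MLP $f\in\mathcal F(d,2d')$. The only coordinates $f$ writes to are the $o_k$, so every output component outside $I_2$ is zero.

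To verify the output on $I_2$, fix $k$ and consider the admissible inputs of \cref{lem:singleneuronmlp}.
\begin{itemize}
\item If the flags do not match, neither neuron for coordinate $k$ fires, so $f(x)[o_k]=0$.
\item If the flags match and $x[i_k]=1$, only the first neuron fires, giving $1=x[i_k]$.
\item If the flags match and $x[i_k]=-1$, only the second neuron fires, giving $-1=x[i_k]$.
\item If the flags match and $x[i_k]=0$, neither neuron fires, since each requires $x[i_k]=\pm1$. This gives $0=x[i_k]$, which handles empty source registers.
\end{itemize}
In every case $f(x)[I_2]$ is as claimed.

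The argument is routine and has no real obstacle. The only point needing care is that a zero entry in $I_1$ must not trigger either neuron. This holds by the strict threshold in the proof of \cref{lem:singleneuronmlp}: a mismatch in any pattern coordinate, including a $0$ where $\pm1$ is required, drops the pre-activation to at most $0$. Ternary weights and ternary hidden activations are inherited from the two lemmas, and the outputs lie in $\{-1,0,1\}$.
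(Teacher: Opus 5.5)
Your proposal is correct and follows the paper's proof exactly: two gated single neurons per coordinate (outputting $\pm e_{I_2[k]}$ when the flags match and $x[I_1[k]]=\pm1$), built with \cref{lem:singleneuronmlp} and merged with \cref{lem:mlpsum}. Your extra check that a zero entry in $I_1$ fires neither neuron is also right; it follows directly from the ``otherwise $0$'' case of \cref{lem:singleneuronmlp}.
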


\begin{proof}
  For each $j \in \{0,\dots,d'-1\}$ we add a neuron outputting $e_{I_2[j]}$ if all flags are set correctly and $x[I_1[j]] = 1$, 
  and one neuron outputting $-e_{I_2[j]}$ if all flags are set correctly and $x[I_1[j]]=-1$.
  By \cref{lem:singleneuronmlp} these $2d'$ neurons can be implemented as single-neuron MLPs and combined with \cref{lem:mlpsum} into one MLP.
\end{proof}

Next, we need an MLP to subtract $2^k$ from a number stored in $\pm1$-binary form and saturate at $0$. This is used to compute position offsets so that an attention head can attend $2^k$ positions back.

\begin{lemma}[Power of 2 Subtraction]\label{lem:subtract_power_of_two}
	    Let $d\in\N$, registers $I_1,I_2$ with $|I_1| = |I_2| = d' \in \N$, flags $F_1,\dots,F_m$ and target bits $b_1^*,\dots,b_m^* \in \{0,1\}$ where $I_1, I_2, F_1,\dots, F_m$ are pairwise disjoint, and let $k \in \{0,\dots,d'-1\}$. Then there exists a ternary MLP $f \in \mathcal{F}(d, \dff)$ where $\dff = 4d'$ such that for all $x \in \{-1,0,1\}^d$ with $x[F_j] \in \{0,1\}$ it holds that 
    \[
      f(x)[I_2] = 
      \begin{cases}
        \bin_{d'}\bigl(\max(0, p-2^k)\bigr),& \text{ if } x[I_1] = \bin_{d'}(p) \text{ and }x[F_j] = b_j^* \text{ for all } j \in \{1,\dots,m\} \\
        0,& \text{ otherwise},
      \end{cases}
    \]
    and other output components (indices not in $I_2$) being zero.
\end{lemma}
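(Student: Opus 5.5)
Following \cref{lem:zeroing} and \cref{lem:copying}, I would build $f$ from single neurons given by \cref{lem:singleneuronmlp} and add them together using \cref{lem:mlpsum}. The task is to express subtraction of $2^k$ with saturation in terms of local bit patterns. Let $p$ have $\pm1$-bits $x[I_1]=\bin_{d'}(p)$. Subtracting $2^k$ leaves bits $0,\dots,k-1$ unchanged. Bit $k$ is then the first bit affected by the borrow chain. If bit $s\ge k$ of $p$ is $1$ and bits $k,\dots,s-1$ are all $-1$ (i.e.\ $0$), then bits $k,\dots,s-1$ flip to $1$, bit $s$ flips to $-1$, and all higher bits stay the same. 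If bits $k,\dots,d'-1$ are all $-1$, then $p<2^k$ and the output is $\bin_{d'}(0)$, i.e.\ all $-1$.

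The construction then works as follows. First, the copy part. For each output index $j$ I use the two neurons of \cref{lem:copying}: they write $x[I_1[j]]$ into $I_2[j]$ whenever all gating flags match. This costs $2d'$ neurons and gives the "unchanged" output as a baseline. Second, the corrections. For each $s\in\{k,\dots,d'-1\}$ I add one neuron from \cref{lem:singleneuronmlp}. Its pattern is the flags equal to $b^*$, $x[I_1[s]]=1$, and $x[I_1[t]]=-1$ for $k\le t<s$. It outputs $y$ with $y_{I_2[s]}=-2$ and $y_{I_2[t]}=+2$ for $k\le t<s$. The lemma as stated requires $y\in\{-1,0,1\}^d$, so I would split each such neuron into two identical neurons, each outputting the ternary half of $y$. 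Each neuron can also be scaled by a $\pm1$ weight vector, which keeps everything ternary. The remark on raw MLP outputs of $\pm2$ allows the summed output. Third, the saturating case. One neuron, or two after splitting, fires when the flags match and bits $k,\dots,d'-1$ are all $-1$, and it outputs $-2$ on $I_2[t]$ for every $t<k$ such that... The problem is that this would depend on $x[I_1[t]]$. So instead I use, for each $t<k$, one neuron that fires when the flags match, bits $k..d'-1$ are $-1$, and $x[I_1[t]]=1$, and that outputs $-e_{I_2[t]}$ twice. That gives at most $2k$ neurons. Counting everything: $2d'$ for the copy, $2(d'-k)$ for the borrow cases, and $2k$ for saturation, for a total of $4d'$, as required.

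For correctness, at most one borrow pattern matches, since the patterns are mutually exclusive by the position of the lowest $1$ at or above $k$. The saturating neurons fire exactly when no borrow pattern matches. If the flags do not match, nothing fires and the output is $0$. All hidden activations are in $\{0,1\}$ on admissible inputs, weights are ternary, and outputs lie in $\{-2,\dots,2\}$, so \cref{lem:mlpsum} gives ternary activations. The main obstacle is the bookkeeping of the neuron budget exactly at $4d'$ and avoiding $\pm2$ weights. I would handle the latter by duplicating neurons rather than scaling $W_2$.
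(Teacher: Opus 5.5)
Your proposal is correct and matches the paper's proof essentially step for step. The paper also uses a $2d'$-neuron conditional copy of $I_1$ into $I_2$. It adds $2k$ neurons (two identical ones per $t<k$) that reset set low bits to $-1$ when bits $k,\dots,d'-1$ are all $-1$, and $2(d'-k)$ neurons (two identical ones per borrow position $s\ge k$) that flip bit $s$ to $-1$ and bits $k,\dots,s-1$ to $1$, for a total of $4d'$. Like you, the paper duplicates neurons rather than using $\pm2$ output weights, which keeps the weights ternary.
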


\begin{proof}
    Recalling \cref{def:binrep}, for inputs with $x[I_1] = \bin_{d'}(p)$ the coordinate $x[I_1[m]] \in \{-1,1\}$ is the $2^m$-bit of $p$ in $\{\pm1\}$ encoding.

    We now describe how to build $f$, using \cref{lem:mlpsum} and \cref{lem:singleneuronmlp} repeatedly.
    
    \paragraph{Step 1: Copy $I_1$ to $I_2$.}
    By \cref{lem:copying} we can add $2d'$ neurons to the MLP which copy $x[I_1]$ to the output dimensions $I_2$ conditioned on $x[F_j] = b_j^*$ for all $j$. In particular, for $x[I_1] = \bin_{d'}(p)$ and correctly set flags, this writes $\bin_{d'}(p)$ to $f(x)[I_2]$, while incorrect flags keep the output zero.

    \paragraph{Step 2: Saturating case $p < 2^k$.}
    The condition $p < 2^k$ is equivalent to
    \[
      x[I_1[t]] = -1 \quad \text{for all } t \in \{k,\dots,d'-1\}.
    \]
    In this case we want the output on $I_2$ to be $\bin_{d'}(0)$, i.e.\ all $I_2$-coordinates equal to $-1$. For $m \ge k$ this is already true. For each $m \in \{0,\dots,k-1\}$ we add two identical neurons which write $-1$ to output coordinate $I_2[m]$ if $x[F_j] = b_j^*$ for all $j$, $x[I_1[m]] = 1$ and $x[I_1[t]] = -1$ for all $t \in \{k,\dots,d'-1\}$. Together they flip the output at coordinate $I_2[m]$ from $1$ to $-1$ if it was $1$ before. This adds $2k$ neurons to the MLP.

    \paragraph{Step 3: Borrow case $p \ge 2^k$.}
    Next, if $p \geq 2^k$, we do standard subtraction. In particular, we need to flip the lowest bit $m \geq k$ which is $1$ to $-1$ and all bits in $\{k, \dots, m-1\}$ (which are hence $-1$) to $1$. Thus for each $m \in \{k, \dots, d'-1\}$ we add two identical neurons which activate precisely if $x[F_j] = b_j^*$ for all $j$, $x[I_1[m]] = 1$ and $x[I_1[s]] = -1$ for $s \in \{k, \dots, m-1\}$ and add $-1$ to position $I_2[m]$ of the output and $1$ to positions $I_2[s]$ for $s \in \{k,\dots,m-1\}$. This adds $2(d'-k)$ neurons to the MLP.

    Together these $4d'$ neurons implement the desired behavior. All building blocks are ternary MLPs (copying and single-neuron gadgets) conditioned on the flags, and repeated use of \cref{lem:mlpsum} preserves ternary weights and activations.
\end{proof}

We will also need an in-place variant (i.e. where the output is added to the same register we subtract from).
\begin{lemma}[In-place Power of 2 Subtraction]\label{lem:subtract_power_of_two_inplace}
    Let $d\in\N$, let $I$ be a register with $|I| = d' \in \N$, let flags $F_1,\dots,F_m$ and target bits $b_1^*,\dots,b_m^* \in \{0,1\}$ where $I, F_1,\dots,F_m$ are pairwise disjoint, and let $k \in \{0,\dots,d'-1\}$.
    Then there exists a ternary MLP $f \in \mathcal{F}(d, \dff)$ with $\dff = 2d'$ such that for all $x \in \{-1,0,1\}^d$ with $x[I] = \bin_{d'}(p)$ and $x[F_j] \in \{0,1\}$ it holds that
    \[
      (\operatorname{id}+f)(x)[I] =
      \begin{cases}
        \bin_{d'}\bigl(\max(0, p-2^k)\bigr),& \text{ if }x[F_j] = b_j^* \text{ for all } j \in \{1,\dots,m\}, \\
        x[I],&\text{ otherwise,}
      \end{cases}
    \]
    while other coordinates are always unmodified.
\end{lemma}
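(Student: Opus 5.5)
We build $f$ from single-neuron gadgets (\cref{lem:singleneuronmlp}) combined with \cref{lem:mlpsum}, mirroring Steps 2 and 3 of \cref{lem:subtract_power_of_two}. The difference is that no copy step is needed: the residual connection already keeps $x[I]$, so $f$ only has to output the difference $\bin_{d'}(\max(0,p-2^k)) - \bin_{d'}(p)$ on $I$ when the flags match, and $0$ otherwise. Every entry of this difference lies in $\{-2,0,2\}$, because flipping a $\pm1$ bit changes it by $\pm 2$. This is allowed for raw MLP outputs by \cref{def:ternary}, and it is why each firing neuron outputs a ternary vector but is duplicated to reach magnitude $2$.

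\textbf{Saturating case $p<2^k$.} Here the bits $x[I[t]]$ equal $-1$ for all $t\ge k$. The target $\bin_{d'}(0)$ is all $-1$, so every bit $m<k$ that equals $1$ must be flipped. For each $m\in\{0,\dots,k-1\}$ we use a single neuron that fires iff the flags equal $b_j^*$, $x[I[m]]=1$, and $x[I[t]]=-1$ for all $t\ge k$. Its output vector is $-e_{I[m]}$.

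\textbf{Borrow case $p\ge 2^k$.} Let $m\ge k$ be the lowest index with bit $1$; it is unique given $x$. Ordinary subtraction flips bit $m$ to $-1$ and bits $k,\dots,m-1$ from $-1$ to $1$. For each $m\in\{k,\dots,d'-1\}$ we use a single neuron that fires iff the flags match, $x[I[m]]=1$, and $x[I[s]]=-1$ for $s\in\{k,\dots,m-1\}$. Its output is $-e_{I[m]}+\sum_{s=k}^{m-1}e_{I[s]}$, which is ternary.

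\textbf{Doubling and correctness.} Each of these $d'$ neurons is duplicated, so each fires with total weight $2$. This uses $2d'$ neurons in all, matching $\dff=2d'$.

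The main point to check is exclusivity. For a given admissible input with matching flags, exactly the intended neurons fire and no others:
\begin{itemize}
\item In the saturating case, all borrow neurons are silent, since they need some bit $\ge k$ to equal $1$.
\item In the borrow case, all saturating neurons are silent, and exactly one borrow neuron (the one for the lowest $1$-bit at index $\ge k$) fires.
\end{itemize}
The single-neuron gadget needs the conditioning registers to be $\pm1$-valued. This holds because we assume $x[I]=\bin_{d'}(p)$, and the gadget can condition on individual coordinates of $I$, since these are themselves one-coordinate registers.

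Summing the firing neurons then gives exactly the required bit flips, and the result is $\bin_{d'}(\max(0,p-2^k))$. When the flags do not match, no neuron fires, so $I$ is unchanged. Coordinates outside $I$ are never written. Hidden activations are in $\{0,1\}$ and outputs are in $\{-2,\dots,2\}$, so $f$ has ternary weights and activations.
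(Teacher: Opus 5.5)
Your proposal is correct and follows the paper's proof. You reuse the saturating and borrow bit-flip neurons from Steps 2 and 3 of \cref{lem:subtract_power_of_two}, drop the copy step because the residual connection already keeps $x[I]$, and duplicate each of the $d'$ neurons to get the $\pm 2$ flips, which gives $\dff = 2d'$; your explicit exclusivity check between the two cases is a detail the paper leaves implicit.
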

\begin{proof}
    This is the same bit-flipping construction as in the proof of \cref{lem:subtract_power_of_two}, but omitting the initial copying step. Each bit flip is implemented by two identical neurons so that, when added to the residual stream, it changes a coordinate in $\{-1,1\}$ to the desired value. This uses $2d'$ neurons in total.
\end{proof}

We will also need to add $2^k$ to a number stored in $\pm1$-binary form. We use the overflow behavior of saturating at the maximum value $2^{d'}-1$, but this does not occur in our constructions and is only done for analogy to the previous lemma. The proof is then fully analogous to that of the previous lemma and hence omitted.

\begin{lemma}[Power of 2 Addition]\label{lem:add_power_of_two}
	  Let $d\in\N$, registers $I_1,I_2$ with $|I_1| = |I_2| = d' \in \N$, flags $F_1,\dots,F_m$ and target bits $b_1^*,\dots,b_m^* \in \{0,1\}$ where $I_1, I_2, F_1,\dots, F_m$ are pairwise disjoint, and let $k \in \{0,\dots,d'-1\}$. Then there exists a ternary MLP $f \in \mathcal{F}(d, \dff)$ where $\dff = 4d'$ such that for all $x \in \{-1,0,1\}^d$ with $x[F_j] \in \{0,1\}$ it holds that 
    \[
      f(x)[I_2] = 
      \begin{cases}
        \bin_{d'}\bigl(\min(2^{d'}-1, p+2^k)\bigr),& \text{ if } x[I_1] = \bin_{d'}(p) \text{ and }x[F_j] = b_j^* \text{ for all } j \in \{1,\dots,m\} \\
        0,& \text{ otherwise},
      \end{cases}
    \]
    and other output components (indices not in $I_2$) being zero.
\end{lemma}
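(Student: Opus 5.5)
The plan mirrors the proof of \cref{lem:subtract_power_of_two}, with subtraction replaced by addition and saturation at $0$ replaced by saturation at $2^{d'}-1$. We assemble $f$ from single-neuron gadgets (\cref{lem:singleneuronmlp}) combined via \cref{lem:mlpsum}. Every neuron is gated on $x[F_j]=b_j^*$ for all $j$, so when the flags do not match, every hidden activation is $0$ and the output on $I_2$ is $0$. Neurons depend only on coordinates of $I_1$ and on the flags, and they write only into $I_2$; all other output coordinates are therefore zero.

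\textbf{Step 1 (copy).} Using \cref{lem:copying} with $2d'$ neurons, we write $x[I_1]=\bin_{d'}(p)$ into $I_2$ under the flag condition. \textbf{Step 2 (saturating case $p+2^k\ge 2^{d'}$).} This case holds exactly when $x[I_1[t]]=1$ for all $t\in\{k,\dots,d'-1\}$. The target is then $\bin_{d'}(2^{d'}-1)$, i.e.\ all coordinates equal to $1$. For each $m\in\{0,\dots,k-1\}$ we add two identical neurons that fire when the flags match, $x[I_1[m]]=-1$, and $x[I_1[t]]=1$ for all $t\ge k$; each outputs $+e_{I_2[m]}$, so together they flip the copied $-1$ to $1$. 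This costs $2k$ neurons. \textbf{Step 3 (carry case $p+2^k<2^{d'}$).} Standard addition of $2^k$ flips the lowest bit $m\ge k$ that is $-1$ to $1$, and flips all bits in $\{k,\dots,m-1\}$, which are $1$, to $-1$. For each $m\in\{k,\dots,d'-1\}$ we add two identical neurons that fire when the flags match, $x[I_1[m]]=-1$, and $x[I_1[s]]=1$ for $s\in\{k,\dots,m-1\}$. They output $+1$ at $I_2[m]$ and $-1$ at $I_2[s]$ for those $s$. This costs $2(d'-k)$ neurons, for a total of $2d'+2k+2(d'-k)=4d'$.

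\textbf{Verification.} The activation patterns of Step 2 and Step 3 are mutually exclusive, and within Step 3 exactly one $m$ fires (namely the lowest $-1$ bit at or above position $k$), while no Step 3 neuron fires in the saturating case. Hence on each valid input at most one pair beyond the copy neurons is active. Each output coordinate therefore receives the copied value $\pm1$ plus at most one correction of $\pm2$ of the opposite sign, which yields exactly $\bin_{d'}(\min(2^{d'}-1,p+2^k))$. Ternary weights and hidden activations follow from \cref{lem:singleneuronmlp} and \cref{lem:mlpsum}. The final output entries lie in $\{-1,1\}$ on $I_2$ and in $\{0\}$ elsewhere, so the activations are ternary.

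The main point requiring care, though it is mild, is this exclusivity check. It ensures that no coordinate accumulates two corrections, so that the sum over neurons stays bounded by $2$ before the copy is added.
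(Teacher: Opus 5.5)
Your proof is correct and is exactly the mirror image of the paper's proof of \cref{lem:subtract_power_of_two}: a copy, then a saturation correction, then a carry correction, using $2d'+2k+2(d'-k)=4d'$ neurons, which is what the paper means by ``fully analogous'' when it omits this proof. One wording fix: in the saturating case several Step~2 pairs can fire at once (one for each low bit equal to $-1$), so ``at most one pair beyond the copy neurons is active'' should read ``each coordinate of $I_2$ receives at most one correction''; that per-coordinate statement is what your argument actually uses.
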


\subsection{Proof of \cref{thm:reglang}}\label{app:reglang}
We restate \cref{thm:reglang} in a more explicit way, briefly say how the statement of \cref{thm:reglang} follows from this restatement, and then provide a constructive proof using the techniques introduced in \cref{generalconstructionmethods}. We use $r$ bits to encode absolute positions in the token sequence, in particular for the positional embedding. Hence, the transformer can handle at most $2^r$ tokens and as we prepend a $\bos$ token, handles input words $w \in \Sigma^*$ of length at most $2^r-1$.

\begin{theorem}\label{thm:reglangapp}
Let $M = (Q, \Sigma, \delta, q_{\text{init}}, F)$ be a DFA and let $r \in \N$. Set $d_Q = \lceil \log_2 |Q| \rceil, d_\Sigma = \lceil \log_2 |\Sigma|\rceil$. Then there exists a hardmax transformer 
\[
  T^r_M\in \ts^{\hard}(d, \dff, d_k, d_v, H, L, \vocab)
\]
with vocabulary $\vocab = \Sigma \cup \{\bos, \true, \false\}$, depth $L = r+2$, $H=1$ head per layer using dimensions $d_k = r, d_v = |Q| d_Q, d=2r + 2|Q|d_Q + d_\Sigma + 1$ and $\dff = \max\{4|Q|d_Q + 2|Q|^2 d_Q,\; |\Sigma|+1\} + 6r$ and ternary weights
such that for all $w = (w_0,\dots,w_{n-1}) \in \Sigma^*$ with $n \leq 2^r - 1$,
\[
  T(\bos,w_0,\dots,w_{n-1}) =
  \begin{cases}
  \true, & w \in L(M),\\
  \false, & w \notin L(M).
  \end{cases}
\]
Moreover, $T$ has ternary activations on all such inputs.
\end{theorem}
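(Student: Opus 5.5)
The plan is the standard prefix-doubling (Hillis--Steele) aggregation of transition functions over positions, in the style of \cite{liu2023transformerslearnshortcutsautomata}. Positions are handled with the binary registers of \cref{generalconstructionmethods}. Each token $i$ carries a \emph{table register} $I_{\text{tab}}$ of size $|Q|d_Q$, consisting of $|Q|$ blocks of $d_Q$ bits; block $q$ holds $\enc_Q(f_i(q))$ for a map $f_i:Q\to Q$. A second register $I_{\text{tab}}'$ of the same size receives attention outputs. We also use the positional register $\ipos$ of size $r$, an \emph{offset register} of size $r$, the symbol encoding (size $d_\Sigma$) written at embedding, and one flag marking $\bos$. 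This gives the claimed $d=2r+2|Q|d_Q+d_\Sigma+1$.

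The invariant is as follows. After stage $k$, token $i$ holds $f_i^{(k)}=\delta_{w_{i-1}}\circ\dots\circ\delta_{w_{i-2^k}}$ restricted to the window $(i-2^k,i]$. Here token $i\ge1$ corresponds to $w_{i-1}$. The $\bos$ token at position $0$ holds the \emph{constant} map $q\mapsto\qinit$, so any window reaching back to $0$ yields a constant map whose value is the DFA state after the covered prefix. In the first layer, the MLP writes the table of $\delta(\cdot,w_{i-1})$ into $I_{\text{tab}}$, using one single-neuron gadget (\cref{lem:singleneuronmlp}) per symbol, or the constant table for $\bos$. This uses at most $|\Sigma|+1$ neurons. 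In parallel, it writes $\bin_r(\max(0,i-1))$ into the offset register via \cref{lem:subtract_power_of_two}, which costs $4r$ neurons.

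Layers $2,\dots,r+1$ implement stage $k=1,\dots,r$. In stage $k$, the single head uses the offset register as query and $\ipos$ as key, so $d_k=r$, and it copies $I_{\text{tab}}$ into $I_{\text{tab}}'$. By \cref{lem:bin_dotprod_gap} the matching position $j=\max(0,i-2^{k-1})$ is the unique maximizer, so the head returns $f_j^{(k-1)}$ exactly and stays ternary. The MLP then does three things. First, it replaces $I_{\text{tab}}$ by $f_i^{(k-1)}\circ f_j^{(k-1)}$. Second, it zeroes $I_{\text{tab}}'$ (\cref{lem:zeroing}). Third, it updates the offset in place by subtracting $2^{k}$ using \cref{lem:subtract_power_of_two_inplace}; the subtraction and zeroing account for the additional $O(r)$ neurons.

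Saturation is harmless. If $i<2^{k-1}$, the window of $f_i^{(k-1)}$ already contains position $0$, so $f_i^{(k-1)}$ is constant and composing it with anything leaves it unchanged. Otherwise, the windows $(i-2^{k-1},i]$ and $(i-2^k,i-2^{k-1}]$ tile $(i-2^k,i]$, with the second one clipped to start at $0$ when necessary. After $r$ stages, since $i<2^r$, token $n$ holds the constant map with value $q_n$.

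The final layer $r+2$ has a zero attention head. Its MLP reads block $0$ of $I_{\text{tab}}$ and sets one of two output coordinates using one single-neuron gadget per state. The unembeddings of $\true$ and $\false$ read these coordinates.

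The main obstacle is implementing the composition with a ternary MLP. I would attempt this first as follows. For each $q\in Q$, each value $p$ of block $q$ of $I_{\text{tab}}'$ (the map $g=f_j$), and each value $p'$ of block $p$ of $I_{\text{tab}}$ (the map $f=f_i$), a single neuron could fire and flip block $q$ of $I_{\text{tab}}$ to $\enc_Q(p')$. However, this has $|Q|^3$ neurons, which is too many.

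To reach the stated width $4|Q|d_Q+2|Q|^2d_Q$, I would instead split the work. One set of neurons zeroes $I_{\text{tab}}$ (\cref{lem:zeroing}, $2|Q|d_Q$ neurons). For each pair $(q,p)$, $2d_Q$ neurons are gated on "block $q$ of $g$ equals $p$" and each copies a bit of block $p$ of the \emph{old} $I_{\text{tab}}$ into block $q$, as in \cref{lem:copying}; this totals $2|Q|^2d_Q$. A gadget conditioned on both a pattern and an input bit is still a single neuron by \cref{lem:singleneuronmlp}. Because MLP neurons read the pre-MLP input and their outputs are simply summed (\cref{lem:mlpsum}), reading and overwriting $I_{\text{tab}}$ in the same MLP is consistent, and every output entry lies in $\{-2,\dots,2\}$. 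What needs checking is exactly this: that the outputs add up to the desired ternary values, and that the neuron count matches the stated $\dff$.
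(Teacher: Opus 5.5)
Your construction is essentially the paper's. Both use prefix doubling in which position $i$ attends, via a binary offset query against $\ipos$, to $\max(0,i-2^{k-1})$. The composition gadget you settle on is exactly \cref{lem:compositionencoding}, with the same neuron count: $2d_Q$ neurons per pair $(q,p)$, gated on block $q$ of $g$ equalling $\enc_Q(p)$ and copying block $p$ of the old table, plus zeroing of both table registers. The other differences are cosmetic:
\begin{itemize}
\item At $\bos$ you use the constant map $q\mapsto\qinit$, where the paper uses the identity and enumerates $Q$ with $\qinit$ first.
\item You decrement the offset in place ($2r$ neurons), where the paper zeros $\iposex$ and recomputes $\bin_r((i-2^k)^+)$ from $\ipos$ in each layer ($6r$ neurons).
\end{itemize}

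There is one slip you must fix. Stage $k$ queries $\max(0,i-2^{k-1})$ and stage $k+1$ needs $\max(0,i-2^{k})$, so the in-place update in stage $k$ must subtract $2^{k-1}$, not $2^k$. As written, stage $2$ would attend to $i-3$ instead of $i-2$. Position $i-2$ would then be covered by neither window, so the windows stop tiling. With $2^{k-1}$ the saturating subtractions compose correctly, since $\max(0,\max(0,i-2^{k-1})-2^{k-1})=\max(0,i-2^k)$.

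Also, your register list already uses up all of $d$, so the two output coordinates in the last layer must be reused ones. A natural choice is coordinates of $I_{\text{tab}}'$, which are zero after layer $r+1$. The paper instead overwrites a single coordinate with $\pm1$ and sets $\text{unemb}_{\true}=e_1$, $\text{unemb}_{\false}=-e_1$.
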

\begin{remark}
  To see how \cref{thm:reglang} follows from the above theorem statement, for a length bound $\hat n \in \N$ we simply set $r = \lceil \log_2 (\hat n + 1)\rceil$. Then we have $r = \bigo(\log \hat n)$ and hence also obtain the asymptotic scalings for depth and the dimensions claimed in \cref{thm:reglang}.
\end{remark}

\newcommand{\qfin}{q_{\text{fin}}}
The general idea for the proof of \cref{thm:reglangapp} is as follows. For a given input $(\bos, w_0, \dots, w_{n-1})$, the goal is to output at the last token (position $n$, which is $w_{n-1}$ if $n\ge 1$ and $\bos$ if $n=0$) whether or not the final state after reading $w$ is in $F$ or not. This final state can be written as 
\[
\qfin =
\begin{cases}
q_{\text{init}}, & n=0,\\
(\delta(\cdot, w_{n-1}) \circ \delta(\cdot, w_{n-2}) \circ \dots \circ \delta(\cdot, w_0)) (q_{\text{init}}), & n\ge 1.
\end{cases}
\]
We encode functions $f: Q\to Q$ in a register which is initialized to $\delta(\cdot, w_i)$ at token $w_i$. In the first step, each token extracts the encoding $\delta(\cdot, w_{i-1})$ from the previous token and computes the encoding of $\delta(\cdot, w_i) \circ \delta(\cdot, w_{i-1})$. In the next step, the encoding stored $2$ tokens back is extracted and the encoding of $\delta(\cdot, w_i) \circ \dots \circ \delta(\cdot, w_{i-3})$ is computed. This continues, in each step doubling the number of recent tokens included in the embedding, until after $r$ steps the full encoding $\delta(\cdot, w_i) \circ \dots \circ \delta(\cdot, w_0)$ is stored at position $i$ and we can easily compute $\qfin$ at the last token (position $n$).

To encode functions $f: Q \to Q$ we choose an enumeration $Q = \{q_0,\dots,q_{|Q|-1}\}$ and an encoding $\enc_Q: Q \to \{-1,1\}^{d_Q}$ where $d_Q = \lceil \log_2 |Q| \rceil$ as in the theorem statement. Note that as we already enumerate $Q$, we could use the corresponding encoding $\enc_Q(q_i) = \bin_{d_Q}(i)$, but any other encoding will do as well. Now for each function $f: Q \to Q$ define
\[
    \enc_{Q\to Q}(f) := (\enc_Q(f(q_0)),\dots,\enc_Q(f(q_{|Q|-1})))\;,
\]
which is a $d_Q |Q|$ dimensional encoding.
As we need to compose these encodings, i.e. compute the encoding of $f_1 \circ f_2$ from the encodings of $f_1$ and $f_2$, we state one lemma for achieving this with an MLP before we prove \cref{thm:reglangapp}. 

\begin{lemma}\label{lem:compositionencoding}
With the notation above, consider some $d \geq 2 d_Q |Q|$ and disjoint registers $I_1, I_2$. Then there exists an MLP $f \in \mathcal{F}(d, \dff)$ where $\dff = 2d_Q |Q|^2$ such that for an input 
    $x \in \R^d$ with $x[I_1] = \enc_{Q \to Q}(f_1)$ and $x[I_2] = \enc_{Q \to Q}(f_2)$ for some functions $f_1, f_2: Q \to Q$, it holds that $$f(x)[I_1] = \enc_{Q \to Q}(f_1 \circ f_2)\;.$$
\end{lemma}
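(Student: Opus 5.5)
We need $f(x)[I_1] = \enc(f_1\circ f_2)$, so the output must itself be the target encoding and cannot be residual-added. The output on $I_1$ is the MLP output itself, so we write the full value directly. (If it were instead added to the residual stream, we would use a difference of the form $\enc(\text{new}) - \enc(\text{old})$.) The neuron budget $2d_Q|Q|^2$ matches one pair of neurons for each triple $(i,j,b)$: a slot $i$ of $f_2$, a value $q_j$, and an output bit $b$.

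The block of $\enc_{Q\to Q}(f_1\circ f_2)$ at slot $i$ equals $\enc_Q(f_1(f_2(q_i)))$. If $f_2(q_i)=q_j$, this is the $j$-th block of $x[I_1]$. So for each slot $i\in\{0,\dots,|Q|-1\}$, each $j\in\{0,\dots,|Q|-1\}$ and each bit position $b\in\{0,\dots,d_Q-1\}$, I would add two neurons via \cref{lem:singleneuronmlp}, combined with \cref{lem:mlpsum}. Both neurons are gated on the $i$-th $d_Q$-block of $I_2$ being equal to $\enc_Q(q_j)$. In addition, one neuron requires coordinate $b$ of the $j$-th block of $I_1$ to be $1$, and the other requires it to be $-1$. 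They output $+e$ and $-e$ respectively, where $e$ is the unit vector at coordinate $b$ of the $i$-th block of $I_1$.

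For fixed $(i,b)$, exactly one $j$ matches the gate, and exactly one of that pair of neurons fires. Hence the output coordinate equals $\enc_Q(f_1(q_j))_b$, as required, and all other coordinates are zero. This gives $2d_Q|Q|^2$ neurons with ternary weights, and the inputs are admissible in the sense of \cref{lem:singleneuronmlp} because no flags are used.

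Checking the single-neuron conditions is routine: the gate conditions are exact $\pm1$ patterns on disjoint sub-registers of $I_1$ and $I_2$. The only subtlety, and the main obstacle, is the mismatch between the statement's $f(x)[I_1]$ and in-place residual use. If the intended usage is $x+f(x)$ with $x[I_1]$ replaced, I would instead make each firing neuron output the difference (new bit minus old bit) on that coordinate. This requires also gating on the current bit at slot $i$, position $b$ of $I_1$, and only firing when a flip is needed; the flip is realized by two identical neurons each contributing $\pm1$, as in \cref{lem:subtract_power_of_two_inplace}. That keeps the count at $2d_Q|Q|^2$. As written, however, the direct construction above suffices.
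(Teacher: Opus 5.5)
Your construction is exactly the paper's: one pair of neurons per triple $(i,j,b)$, gated on the $i$-th block of $I_2$ equalling $\enc_Q(q_j)$ and on the sign of bit $b$ of the $j$-th block of $I_1$, writing $\pm1$ to bit $b$ of the $i$-th block of $I_1$, so that for each $(i,b)$ exactly one neuron fires. On your side concern about in-place use: the paper handles it in the proof of \cref{thm:reglangapp} by merging this MLP with the zeroing gadget of \cref{lem:zeroing} via \cref{lem:mlpsum}, at a cost of only $2d_Q|Q|$ extra neurons; your flip variant would need two identical neurons for each flip direction, i.e.\ four per triple, so contrary to your claim it does not in general stay within $2d_Q|Q|^2$.
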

\begin{proof}
    We need to output the encodings $\enc_Q(f_1(f_2(q_i)))$ for $i=0,1,\dots,|Q|-1$. Note that
    \[
      \enc_Q(f_1(f_2(q_i))) = \sum_{j=0}^{|Q|-1} \mathbf{1}_{f_2(q_i) = q_j}\,\enc_Q(f_1(q_j))\;.
    \]
    For $i,j \in \{0,\dots,|Q|-1\}$ and $k \in \{0,\dots,d_Q-1\}$ we add a hidden neuron which activates precisely if $x[I_2[d_Q i : d_Q(i+1)]] = \enc_Q(q_j)$ and $x[I_1[d_Q j + k]] = 1$ and outputs $e_{I_1[d_Q i + k]}$.
    Analogously we add a neuron outputting $-e_{I_1[d_Q i + k]}$ if $x[I_2[d_Q i : d_Q(i+1)]] = \enc_Q(q_j)$ and $x[I_1[d_Q j + k]] = -1$.
    For fixed $i,j$, these $2d_Q$ neurons write $\enc_Q(f_1(q_j))$ to the slice $I_1[d_Q i : d_Q(i+1)]$ whenever $f_2(q_i)=q_j$, and for each $i$ exactly one $j$ contributes. Hence together these $2d_Q |Q|^2$ neurons produce the desired output.
\end{proof}

\begin{proof}[Proof of \cref{thm:reglangapp}]
    The representation dimensions are split into $\ipos$ ($r$ dimensions, encodes position), $\ienc$ and $\iencex$ (each $|Q| d_Q$ dimensions, roles will become clear in the proof), $\isym$ ($\lceil \log_2 |\Sigma| \rceil$ dimensions, used to initially encode the input symbol at each position), $\ibos$ (1 dimension, used to mark the $\bos$ symbol), $\iposex$ ($r$ dimensions, used to store the decremented position for attending backwards).

    We write $x_0, x_1, \dots, x_n \in \vocab$ for the input tokens, i.e. for an input word $w = (w_0,\dots,w_{n-1})$ we have $x_0 = \bos$ and $x_i = w_{i-1}$ for $i \in \{1,\dots,n\}$. We write $x^{(\ell)}_i$ for the hidden representation at position $i$ after layer $\ell$ and $x^{(\ell-0.5)}_i$ for the representation after the attention but before the MLP in layer $\ell$, following the notation introduced in \cref{sec:preliminaries}. 
    
    \paragraph{Embeddings.}
    We define the token embedding as 
    $$\operatorname{emb}(v)[\isym] = \begin{cases}
        \enc_\Sigma(v),& v \in \Sigma\\
        0,& \text{otherwise}
    \end{cases} \;\;\text{ and } \;\;\operatorname{emb}(v)[\ibos] = \begin{cases}
        1,& v = \bos\\
        0,& \text{otherwise}
    \end{cases}$$
    and the positional embedding as 
    $$\operatorname{posemb}(i)[\ipos] = \bin_r(i)$$
    with other registers initialized to zero.

    \paragraph{Initializing $\ienc$.} The attention in layer $1$ is not used, while the MLP in layer $1$ is used to write $\enc_{Q \to Q}(\delta(\cdot, x_i))$ into $x^{(1)}_i[\ienc]$. This is done by having one neuron for each $\sigma \in \Sigma$ which activates to $1$ precisely if $x_i^{(0.5)}[\isym] = \enc_\Sigma(\sigma)$ and $x_i^{(0.5)}[\ibos]=0$ and writes $\enc_{Q \to Q}(\delta(\cdot, x_i))$ to coordinates $\ienc$ (i.e. this encoding is hardcoded into the output weights of this neuron). For the $\bos$ symbol, we write $\enc_{Q \to Q}(\operatorname{id}_Q)$ into $\ienc$ instead.
    \paragraph{Binary Tree Accumulation.} For $k=0,1,\dots,r-1$ we do the following. The MLP in layer $k+1$ zeros the register $\iposex$ and writes $\bin_r((i - 2^k)^+)$ into it. This can be done in parallel (using \cref{lem:mlpsum}) and uses \cref{lem:subtract_power_of_two} for the subtraction. Hence, it holds that $x^{(k+1)}_i[\iposex] = \bin_r((i-2^k)^+)$. Next, in layer $k+2$ we add an attention head attending with queries $\iposex$, keys $\ipos$, values $\ienc$ and writing to $\iencex$. By \cref{lem:bin_dotprod_gap}, the dot product $\langle q_i, k_j\rangle$ is uniquely maximized at $j=(i-2^k)^+$ (with a gap of at least $2$ to all $j\neq (i-2^k)^+$), hence position $i$ attends precisely to position $(i-2^k)^+$.
    Let
    \[
      x^{(k+1.5)}_i[\ienc] =: \enc_{Q \to Q}(f_1)
      \quad\text{and}\quad
      x^{(k+1.5)}_i[\iencex] =: \enc_{Q \to Q}(f_2),
    \]
    where the fact that these two registers each encode a function on $Q$ follows inductively. Then the MLP in layer $k+2$ adds $\enc_{Q \to Q}(f_1 \circ f_2)$ to $\ienc$, while at the same time zeroing its previous contents and zeroing $\iencex$ as well, using \cref{lem:compositionencoding} for computing the composition, \cref{lem:zeroing} for zeroing and \cref{lem:mlpsum} for doing them in parallel (and in parallel to the computation of $\bin_r((i-2^{k+1})^+)$ for the next layer if $k < r-1$).

    After layer $1$, the register $\ienc$ contains $x^{(1)}_i[\ienc] = \enc_{Q \to Q}(\delta(\cdot, x_i))$.
    The attention in layer $2$ then extracts $\enc_{Q \to Q}(\delta(\cdot, x_{i-1}))$ and the MLP composes it with $\enc_{Q \to Q}(\delta(\cdot, x_i))$, hence obtaining 
    $$x^{(2)}_i[\ienc] = \enc_{Q \to Q}(\delta(\cdot, x_i) \circ \delta(\cdot, x_{i-1}))\;$$
    where the $\delta(\cdot, x_{i-1})$ is replaced with $\operatorname{id}_Q$ for $i=1$ as this token attends to $\bos$ instead. After one more layer, we obtain 
    $$x^{(3)}_i[\ienc] = \enc_{Q \to Q}(\delta(\cdot, x_i) \circ \dots \circ \delta(\cdot, x_{\max(1, i-3)}))\;.$$
    This scheme continues until after layer $r+1$ we have 
    $$x^{(r+1)}_i[\ienc] = \enc_{Q \to Q}(\delta(\cdot, x_i) \circ \dots \circ \delta(\cdot, x_{\max(1, i-(2^r-1))})) = \enc_{Q \to Q}(\delta(\cdot, x_i) \circ \dots \circ \delta(\cdot, x_1))$$
    where we used $i \leq n < 2^r$. In particular, by the definition of $\enc_{Q \to Q}$ and our choice $q_0=q_{\text{init}}$, the first $d_Q$ coordinates of this register, i.e.\ the slice $\ienc[0:d_Q]$, encode
    \[
      (\delta(\cdot, x_n) \circ \dots \circ \delta(\cdot, x_1))(q_{\text{init}}) =: q_{\text{fin}},
    \]
    i.e.\ the state of $M$ after reading the whole input.

    \paragraph{Answer Computation and Unembedding.} Finally we use the MLP in layer $r+2$ to map the encoding of the final state, $\enc_Q(q_{\text{fin}})$, to $1$ if $q_{\text{fin}} \in F$ and to $-1$ otherwise, which can simply be written into the first coordinate while at the same time zeroing its contents. At the last token (position $n$, which equals $\bos$ when $n=0$) this yields
    $$x^{(r+2)}_n[1] = \begin{cases}
        1,& q_{\text{fin}} \in F\\
        -1,& q_{\text{fin}} \not\in F
    \end{cases}\;.$$
    Lastly we set the unembedding vectors to $\text{unemb}_\true = e_1, \text{unemb}_\false=-e_1$ and zero for other tokens, hence correctly outputting True if $q_{\text{fin}} \in F$ and False otherwise.

\paragraph{Dimensions.} Note that the MLP in layer $k+2$ does several things at once:
	    \begin{itemize}
	        \item It zeros out $\ienc$ and $\iencex$, using $2d_Q |Q|$ neurons for each part.
	        \item It computes the composition $\enc_{Q \to Q}(f_1 \circ f_2)$, using $2 d_Q |Q|^2$ neurons.
	        \item It zeros out $\iposex$, requiring $2r$ neurons.
	        \item It computes $\bin_r((i-2^{k+1})^+)$ for the next layer (unless $k = r-1$), using $4r$ neurons.
	    \end{itemize}
	    Thus these MLPs use $4d_Q |Q| + 2 d_Q |Q|^2 + 6r$ neurons, while the initial MLP uses at most $|\Sigma|+1+6r$ neurons and the final MLP uses fewer, yielding the stated bound. Queries and keys are always position encodings using $r$ bits, hence $d_k = r$ suffices.
	    The extracted values are encodings of functions $Q \to Q$, hence we need $d_v = d_Q |Q|$. 
\end{proof}

\subsection{Proof of \cref{thm:cot}}\label{app:tms}
To prove \cref{thm:cot}, we first define the vocabulary the constructed transformer uses and the token sequence it will produce, restate the theorem in a more explicit way, see how \cref{thm:cot} follows from that restatement and lastly prove the restated theorem. The construction will depend on a constant $r \in \N$, the number of bits used to encode both absolute positions in the token sequence (e.g. the positional embedding) and tape indices in the Turing machine. While one could slightly improve constants for Turing machines with small space usage by using fewer bits to encode tape positions, this would not change the asymptotic scaling and would complicate the considerations.

\subsubsection{Vocabulary}
The vocabulary for the transformer is defined as follows:
\begin{definition}\label{def:vocabcot}
	  Let $M = (K, Q, \Sigma, \Gamma, \delta, \sqcup, \qinit, \qhalt)$ be a Turing machine. We define the vocabulary used by our CoT construction as 
	  $$\vocab = \vocab_M^{\CoT} = \vdelim \cup \vrun \cup \vpos \cup \Sigma$$ with \emph{delimiter tokens}  
	  $$\vdelim = \{\inp, \einp, \outp, \eoutp, \pos, \epos\}\;,$$
	  \emph{run tokens} 
	  $$\vrun = Q \times \Gamma^K \times \{L,S,R\}^K\;,$$
	  and \emph{position} tokens 
	  $$\vpos = \{-1,1\}^K$$
\end{definition}
It should be noted that in \cref{fig:cot_token_sequence_example}, the head positions are written with $0$/$1$ tokens instead of $-1$/$1$ used here for consistency with the actual encodings. Clearly this does not matter, as the tokens in a transformer's vocabulary can simply be renamed.

\subsubsection{Token Sequence}
We now define the token sequence the transformer will produce when running with CoT on some input $w$ and bound its length. 
\begin{definition}\label{def:cottokenseq}
  Consider a Turing machine $M$, a constant $r\in \N$ with $r \geq 2$ and an input $w = (w_0,\dots,w_{n-1}) \in \Sigma^*$ such that $f_M(w) = u = (u_0,\dots,u_{m-1}) \in \Sigma^*$. Define \emph{run tokens} for $t \in \{1,\dots,t_M(w)\}$ as
   $$\run_t = (q_t, (y_t^k)_{k=1}^K, (\Delta_t^k)_{k=1}^K)\;,$$
  using the notation from \cref{def:tm_run}, where $q_t \in Q$ encodes the state entered, $y_t^k \in \Gamma$ the symbol written at the old head position $n_{t-1}^k$ on tape $k$ and $\Delta_t^k \in \{L,S,R\}$ the head movement on tape $k$ in step $t$. Then we define $\text{toks}(M,w,r) \in \vocab^+$ with the vocabulary $\vocab$ from \cref{def:vocabcot} as the concatenation of three parts:
  \begin{itemize}
    \item The encoded input, i.e. the $n+2$ tokens 
    \[\inp, w_0,\dots, w_{n-1}, \einp\;.\]
    \item Alternation between the next $r$ run tokens and the encoded head positions afterwards, encoded as \[\pos, (\bin_r^0(n_t^k))_{k=1}^K, \dots, (\bin_r^{r-1}(n_t^k))_{k=1}^K, \epos\] where $t$ is the index of the last run token. This continues until a run token has a halting state, i.e. this part always ends with a run token.
    \item The encoded output, i.e. the $m+2$ tokens 
    \[\outp, u_0, \dots, u_{m-1}, \eoutp\;.\]
  \end{itemize}
\end{definition}
We now bound the length of this token sequence.
\begin{lemma}\label{lem:cotlengthbound}
  Consider the setting from \cref{def:cottokenseq}. Then
  \[|\text{toks}(M,w,r)| \leq 4 + 2|w| + 4t_M(w)\;.\]
\end{lemma}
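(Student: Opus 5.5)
The bound follows by counting the three parts of \cref{def:cottokenseq} separately and summing. Write $t = t_M(w)$, $n = |w|$, $m = |f_M(w)|$.

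\emph{Input and output blocks.} The input block has exactly $n+2$ tokens and the output block exactly $m+2$ tokens. For the output length, note that at time $t$ the first tape holds $f_M(w)$ in cells $0,\dots,m-1$. Each such cell carries a non-blank symbol, so it either held an input symbol originally or was written during the run. Any cell written during the run was visited by the head of tape $1$, and the head can only reach cell $j$ after at least $j$ steps. Hence every cell with index at least $n$ that holds a non-blank symbol has index at most $t$, which gives $m \le \max(n, t+1)$. To keep the constant clean I would use the cruder bound $m \le n + t$, which holds because cells beyond the input must have been written in some step, giving at most one new cell per step. Together the two blocks contribute at most $4 + 2n + t$ tokens.

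\emph{Middle part.} There are exactly $t$ run tokens (with $t\ge1$ since $\qhalt\neq\qinit$). After every complete chunk of $r$ run tokens that is not the final one, a position block of $r+2$ tokens is inserted. Since the middle part ends with a run token, the number of position blocks is at most $\lfloor (t-1)/r\rfloor \le t/r$. The middle part therefore has at most $t + (r+2)\,t/r = t(2 + 2/r)$ tokens, and this is at most $3t$ because $r\ge 2$.

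\emph{Summing.} The total is at most $4 + 2n + t + 3t = 4 + 2|w| + 4t_M(w)$. The only delicate step is bounding the output length $m$ in terms of $n$ and $t$. I would check it directly from \cref{def:tm_run}: a cell $i\ge n$ can differ from $\sqcup$ at time $t$ only if $i = n^1_{s-1}$ for some step $s\le t$, so the number of such cells is at most $t$, and the cells of $f_M(w)$ (a prefix of non-blank cells) therefore number at most $n+t$.
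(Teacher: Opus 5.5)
Your proof is correct and follows essentially the same route as the paper: split the sequence into input, trace and output blocks, bound the trace by $\frac{2r+2}{r}\,t_M(w)\le 3t_M(w)$ using $r\ge 2$, and bound the output by $|w|+t_M(w)$ because the tape-$1$ head writes at most one new cell per step. Your version is somewhat more careful than the paper's, since it counts the position blocks exactly as $\lfloor (t-1)/r\rfloor$ and derives the output bound directly from the definition of a run.
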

\begin{proof}
  Let $n := |w|$ and let the three parts be $\text{input}, \text{trace}, \text{output} \in \vocab^+$. For the trace, $r$ out of every $2r+2$ tokens and some additional tokens at the end are run tokens, and there are $t_M(w)$ such run tokens in total. Hence,
  $$|\text{trace}| \leq \frac{2r+2}{r} t_M(w) \leq 3 t_M(w)$$
  using that $r \geq 2$. Furthermore, as the head on tape $1$ starts at $0$ and moves by at most $1$ in each step, the output length is bounded by $n+t_M(w)$. Hence,
  \[
    |\text{toks}(M,w,r)|
    \leq (n+2) + 3t_M(w) + (n+t_M(w)+2)
    = 4 + 2|w| + 4t_M(w)\;.
  \]
\end{proof}
\subsubsection{Main Construction}
Now we restate \cref{thm:cot} in a more explicit way:
\begin{theorem}\label{thm:cotapp}
  Let $M = (K, Q, \Sigma, \Gamma, \delta, \sqcup, \qinit, \qhalt)$ be a Turing machine and let $r\in\N$ be even. Set $d_Q = \lceil \log_2 |Q| \rceil$ and $d_\Gamma = \lceil \log_2 |\Gamma|\rceil$. Then there exists a transformer 
    $T = T_M^r \in \ts^{\hard}(d, \dff, d_k, d_v, H, L, \vocab)$
    with depth $L = \frac{5r}{2} + 8$, $H = 3K$ heads per layer using dimensions $d_k = 4r-1$, $d_v = \max\{r, d_Q, d_\Gamma\}$,
    \begin{equation}\label{eq:dffcot}
        \dff = \max\{18r+2,\; 14Kr+2r,\; |Q||\Gamma|^K + 1\}
    \end{equation}
    and
    \begin{equation}\label{eq:dcot}
      d = 6Kr + 6r + 3d_Q + (3K+1)d_\Gamma + 10K + 21
    \end{equation}
    and the vocabulary defined in \cref{def:vocabcot}
    such that for all $w \in \Sigma^*$ the following holds. If $M$ halts on $w$ after $t_M(w)$ steps with output $f_M(w) \in \Sigma^*$, then running $T$ on input $w$ using CoT (see \cref{def:cot}) produces the token sequence $\text{toks}(M,w,r)$ defined in \cref{def:cottokenseq} if this token sequence has length at most $2^r$.
\end{theorem}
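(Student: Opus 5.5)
The construction is layer by layer, with the register/flag discipline of \cref{generalconstructionmethods}. Correctness is proved by induction over decoding steps: assuming the prefix equals the first $i+1$ tokens of $\text{toks}(M,w,r)$, the last position outputs token $i+1$. Since the next token depends only on the type of the current token and local context, each token type gets its own case, and every MLP operation is gated by type flags set at embedding ($\finp,\feinp,\fpos,\fepos$, run, position bits, etc.). The embedding writes $\bin_r(i)$ to $\ipos$, and each token's content goes to its own register: input symbols via $\enc_\Gamma$, run tokens via $\enc_Q$ plus $K$ written symbols and $K$ moves, position tokens via $K$ bits. Positions are all $<2^r$ by assumption, so \cref{lem:bin_dotprod_gap} gives every attention below a unique maximizer, or maximizers with identical values, with score gap $\geq 2/\sqrt{d_k}$.

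\textbf{Step 1 (local bookkeeping).} Each token finds the most recent $\pos$/$\epos$/$\einp$ delimiter and its own offset within the current run chunk or position block. I plan to do this by copying the delimiter's position through one attention head whose query is a constant flag and whose key combines the type flag with the position bits, so that it selects the latest such token. Subtraction (\cref{lem:subtract_power_of_two} applied bitwise, or a small ripple subtractor on $r$ bits) then gives the offset. From the offset, a $\pos$-block token knows which bit index $s$ to emit. \textbf{Step 2 (head positions at a run token).} At the preceding $\epos$, or at $\einp$ for the first chunk where all heads are at $0$, $r$ heads in parallel gather the $K$ bit tokens of the block into registers $\fheadk$. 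Each run token copies these registers. Because a chunk has at most $r$ run tokens, it then adds the moves of the earlier run tokens in its chunk sequentially, one per layer: the layer attends to the token $j$ steps back via \cref{lem:subtract_power_of_two} on $\ipos$, then increments or decrements with the saturating $\pm 2^0$ gadgets (\cref{lem:add_power_of_two}, \cref{lem:subtract_power_of_two_inplace}), gated on that token being a run token of the same chunk. Staggering this with the bit extraction is how I expect to reach depth about $r$ rather than $2r$. A $\pos$-block token instead takes the head positions after the chunk's last run token and selects bit $s$.

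\textbf{Step 3 (reading the tape, the main obstacle).} For tape $k$ at head position $p$, the symbol is the one written by the latest run token $j$ whose old head position $n^k_{j-1}$ equals $p$. If no such token exists, the symbol is $w_p$ for $k=1$ and $p<n$, and $\sqcup$ otherwise. Every run token stores its own old head position in a key register, so equality of positions is a dot product of $\pm1$ vectors. Taking the latest match is the difficulty, since hardmax averages over ties and unbounded recency biases are not available. I would use binary search over the absolute position bits of $j$, one layer per bit from most significant down: maintain a prefix $P$ of the target position, and ask with one head whether some token matching $p$ exists whose position lies in the high half-interval consistent with $P$. Existence is detected by putting a mismatch-tolerant key on a default $\bos$-like token, the $\inp$ token, with score strictly between match and nonmatch; the returned flag decides the next bit. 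This takes about $r$ layers plus a final extraction head, giving the dominant $\tfrac{r}{2}$-type terms in $L=\tfrac{5r}{2}+8$, possibly by resolving two bits per layer with $3K$ heads. The input fallback is a single head keyed on $\ipos$ shifted by one past $\inp$. Then a single MLP with $|Q||\Gamma|^K$ neurons from \cref{lem:singleneuronmlp} applies $\delta$.

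\textbf{Step 4 (control and output).} The MLP decides the next token type: a $\pos$ block after the $r$-th run token of a chunk, $\epos$ after $rK$ bits, $\outp$ after a $\qhalt$ run token, then output symbols. Output symbols are read with the same latest-write search at tape-1 position equal to the output offset, and $\eoutp$ is emitted on reaching $\sqcup$. The unembeddings read a one-hot next-token register. The final step is to tally the neurons and coordinates against \eqref{eq:dffcot} and \eqref{eq:dcot}, and to check ternarity of all activations, which follows because every gadget comes from \cref{lem:singleneuronmlp}.
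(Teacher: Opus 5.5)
\textbf{Step 1 has a genuine gap, and Steps 2 and 4 depend on it.} A single head whose query is a constant flag, and whose key is built from a type flag and the $\pm1$ bits of $\ipos$, cannot pick out the \emph{latest} $\epos$ (or $\pos$, $\einp$).
\begin{itemize}
\item Its score is an affine function of $\bin_r(j)$ with coefficients bounded by $d_k$. For the score to increase with $j$, the coefficient of bit $s$ must essentially dominate the sum of the lower ones, i.e.\ grow like $2^s$.
\item That breaks the ternary weights and activations you plan to verify, which \cref{lem:assumptions} and the softmax conversion need. You state this obstruction yourself in Step 3.
\item Without the latest delimiter, run tokens cannot copy head positions from the correct $\epos$. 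Tokens in a position block also do not know their offset, i.e.\ which bit to emit.
\item Replacing this with a second binary search costs another $r$ layers, and the depth then no longer fits $\frac{5r}{2}+8$.
\end{itemize}

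\textbf{Two further points conflict with the statement.}
\begin{itemize}
\item Collecting a block's bits with $r$ heads in parallel violates $H=3K$, which cannot grow with $r$.
\item If your move accumulation attends $j$ back in increasing $j$, moves are applied newest first. That is wrong because $L$ saturates at $0$: from cell $0$, the moves $L,R$ give $1$, but $R,L$ give $0$.
\end{itemize}
Your depth accounting is also off. The binary search costs $r$ layers, so it is not the source of the $\frac r2$ term.

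\textbf{How the paper avoids the problem.} It never locates a delimiter. All local bookkeeping uses fixed offsets, obtained by attending with decremented copies of $\ipos$.
\begin{itemize}
\item $\epos$ reads its $r$ predecessors with two heads per tape per layer over $\frac r2$ layers. This is where the $\frac r2$ term comes from.
\item Head positions then propagate through $r+1$ layers, each attending exactly one token back. Run tokens add their own move and $\pos$ tokens copy. This applies moves in chronological order and needs no chunk offset.
\item The token $p$ places after $\pos$ gets its bit because in layer $L_2+p$ every token attends $p$ back and reads $\iqposposk[p]$, which is nonzero only at $\pos$.
\item $\flastrun$ and $\fwriteepos$ come from attending $r-1$ and $r$ positions back and reading type flags.
\end{itemize}
Your Step 3 (binary search with $\inp$ as the default key) matches the paper. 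Together the budget is $(\frac r2+1)+(r+2)+(r+1)+4=\frac{5r}{2}+8$.

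\textbf{Smaller omissions.} Input and output $\Sigma$-tokens share embeddings, so you need a head detecting an earlier $\outp$ to tell them apart. The tally against \eqref{eq:dffcot} and \eqref{eq:dcot} cannot be deferred, because the statement fixes those values.
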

\begin{remark}
  To see how \cref{thm:cot} follows from \cref{thm:cotapp}, let $\hat t \in \N$. Set 
  \[r = 2 \lceil \frac{1}{2} \log_2(4 + 6 \hat t)\rceil\;.\]
  Then for an input $w \in \Sigma^*$ with $|w|, t_M(w) \leq \hat t$ it holds that 
  $$|\text{toks}(M,w,r)| \leq 4 + 2|w| + 4t_M(w) \leq 4 + 6 \hat t \leq 2^r$$
  and hence \cref{thm:cotapp} applies and bounds the length of the token sequence by 
  \[4 + 2|w| + 4t_M(w) = \bigo(t_M(w) + |w|)\;.\]
  Moreover, the transformer $T$ has ternary weights and ternary activations (in the sense of \cref{def:ternary}) on the full set of sequences arising during CoT generation; see \cref{lem:assumptions} (\cref{it:ternaryemb,it:ternaryact}).
\end{remark}
In the construction we will use the following lemma to add head movements (encoded with $2$ bits) onto a binary representation of a head position. If the head movement is $L$, the head position gets decremented while saturating at zero (corresponding to the Turing machine behavior; see \cref{def:tm_run}), while the head movement $R$ increments the position (here overflow will never occur). Other cases leave the head position unchanged.
\begin{lemma}\label{lem:addheadmovement}
    Let $d, r \in \N$, let $I_1, I_2$ be $r$-dimensional registers, let $I_3$ a $2$-dimensional register and let $F$ be a flag. Consider an encoding $\enc_\Delta : \{L,S,R\} \to \{-1,1\}^2$. Then there exists an MLP $f \in \mathcal{F}(d, \dff)$ with width $\dff = 6r$ such that for all $x \in \{-1,0,1\}^d$ where $x[I_1] = \bin_r(s)$ for some $s \in \{0,1,\dots,2^r-2\}$ and $x[F] = 1$, it holds that 
    $$f(x)[I_2] = \begin{cases}
          \bin_r(\max(0, s-1))\; &\text{ if } x[I_3] = \enc_\Delta(L)\\
          \bin_r(s+1)\; &\text{ if } x[I_3] = \enc_\Delta(R)\\
          \bin_r(s)\; &\text{ if } x[I_3] = \enc_\Delta(S)\\
    \end{cases}$$
    and $f(x)[I_2] = 0$ if $x[F] = 0$ or $x[I_1] = 0$, while other coordinates are always zero. Other cases do not occur when using this lemma.
\end{lemma}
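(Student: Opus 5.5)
The construction follows the template of \cref{lem:subtract_power_of_two}: I build $f$ as a sum of single-neuron gadgets from \cref{lem:singleneuronmlp}, combined via \cref{lem:mlpsum}. Each gadget is gated on $x[F]=1$ and on $x[I_3]$ equalling one of the three codes $\enc_\Delta(L),\enc_\Delta(S),\enc_\Delta(R)$. Since $\enc_\Delta$ is injective into $\{-1,1\}^2$, at most one movement case fires on any admissible input. Every neuron also requires some coordinates of $I_1$ to equal $\pm1$, so nothing fires when $x[I_1]=0$ or $x[F]=0$, and the output is then zero as required.

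\textbf{Case S.} Gate \cref{lem:copying} (its $2r$ neurons) on $F$ and $\enc_\Delta(S)$ by adding these conditions to each single neuron. This writes $\bin_r(s)$ to $I_2$.

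\textbf{Case L.} Reuse the structure of \cref{lem:subtract_power_of_two} with $k=0$, gated on $\enc_\Delta(L)$. It uses $2r$ copy neurons and $2r$ borrow neurons: for each $m$, two identical neurons fire if $x[I_1[m]]=1$ and all lower bits are $-1$. These neurons flip bit $m$ to $-1$ and bits $0,\dots,m-1$ to $1$ in the output. The saturating case $s=0$ needs no neurons when $k=0$, since copying already gives $\bin_r(0)$. This costs at most $4r$ neurons.

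\textbf{Case R.} Symmetric to Case L. Gated copy neurons plus, for each $m$, a doubled neuron firing if $x[I_1[m]]=-1$ and all lower bits are $1$, flipping bit $m$ to $1$ and lower bits to $-1$. The assumption $s\le 2^r-2$ guarantees some bit is $-1$, so no overflow occurs.

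\textbf{Neuron count.} The three copy blocks can be merged into one copy block gated only on $F$, using $2r$ neurons. Adding the $2r$ borrow neurons for L and the $2r$ carry neurons for R gives $6r$ total, matching $\dff=6r$.

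\textbf{Ternarity.} All gadgets have ternary weights. Each output coordinate receives the copy value $\pm1$ plus at most one doubled flip of $\mp2$. So outputs lie in $\{-1,0,1\}$ on $I_2$, since the flip is added to the copy within the same MLP. The only care needed is that the doubled flip neurons add exactly $\mp2$, which is allowed for raw MLP outputs by \cref{def:ternary}.

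The main obstacle is bookkeeping rather than anything conceptual. For each $m$, exactly one borrow or carry pattern fires: it is the unique lowest $1$ (for L) or lowest $-1$ (for R). This follows because the firing conditions for distinct $m$ are mutually exclusive. Correctness then reduces to the standard binary increment and decrement rules.
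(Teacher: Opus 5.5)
Your construction is correct and is essentially the paper's proof. The paper uses the same three pieces:
\begin{itemize}
  \item $2r$ copy neurons gated only on $F$;
  \item for each bit position, a pair of identical neurons gated on $F$ and $\enc_\Delta(L)$ that implement the borrow (decrement);
  \item for each bit position, a pair gated on $F$ and $\enc_\Delta(R)$ that implement the carry (increment).
\end{itemize}
This gives $6r$ neurons in total, and the $s=0$ saturation under $L$ and the no-overflow condition $s\le 2^r-2$ under $R$ work exactly as you describe. Your remark about raw outputs equal to $\pm2$ is unnecessary: the summed MLP output on $I_2$ is already $\pm1$, and the doubling only affects the contributions of individual neurons.
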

\begin{proof}
  This is very similar to \cref{lem:subtract_power_of_two}. In particular, the MLP uses 
  \begin{itemize}
    \item $2r$ neurons to copy $I_1$ to $I_2$ conditional on $F$ being set to $1$.
    \item For each $j \in \{0,\dots,r-1\}$ two identical neurons which activate to $1$ if $F$ is set to $1$, $I_3$ is set to $\enc_\Delta(L)$, $I_1[j]$ is set to $1$ and $I_1[j']$ is set to $-1$ for $j' < j$. These neurons then flip bits $j' < j$ of $I_2$ from  $-1$ to $1$ and flip bit $j$ from $1$ to $-1$, implementing the decrement.
    \item For each $j \in \{0,\dots,r-1\}$ two identical neurons which activate to $1$ if $F$ is set to $1$, $I_3$ is set to $\enc_\Delta(R)$, $I_1[j]$ is set to $-1$ and $I_1[j']$ is set to $1$ for $j' < j$. These neurons then flip bits $j' < j$ of $I_2$ from $1$ to $-1$ and flip bit $j$ from $-1$ to $1$, implementing the increment.
  \end{itemize}
  If $x[I_1] = 0$, none of these neurons activate.
\end{proof}
Furthermore we will need to subtract ($r$-bit binary representations of) numbers.
While there are several ways to achieve this with MLPs trading off depth and width differently, we will use a simple sequential method making use of \cref{lem:subtract_power_of_two_inplace}:

\begin{lemma}\label{lem:fullsubtraction}
  Let $d,r\in\N$, let $I_1, I_2$ be $r$-dimensional registers and let $F$ be a flag. Then there exist $r$ MLPs $f_0,f_1,\dots,f_{r-1} \in \mathcal{F}(d,\dff)$ where $\dff = 2r$ such that the following holds for 
  \[\bar f = (\operatorname{id}+f_{r-1}) \circ \dots \circ (\operatorname{id}+f_{0})\;.\]
  For any $x \in \{-1,0,1\}^d$ with $x[I_1] = \bin_r(s_1), x[I_2] = \bin_r(s_2)$ where $0 \leq s_1 \leq s_2 < 2^r-1$ and $x[F] = 1$,
  \[
  \bar f (x)[I_2] = \bin_r(s_2-s_1)
  \]
  and $\bar f(x)[I_2] = x[I_2]$ if $x[F] = 0$, while other coordinates always stay unmodified. Other cases again do not occur.
\end{lemma}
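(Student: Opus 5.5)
Subtraction is done bit by bit: the $r$ MLPs scan the bits of $s_1$ from most significant to least significant and subtract the corresponding power of two from $I_2$ in place, using \cref{lem:subtract_power_of_two_inplace}. Concretely, for $j \in \{0,\dots,r-1\}$ set $k = r-1-j$. We let $f_j$ be the MLP from \cref{lem:subtract_power_of_two_inplace} applied to register $I = I_2$ with exponent $k$. Its gating conditions are the flag $F$ with target value $1$ and the single coordinate $I_1[k]$ with target value $1$. The lemma is stated for flags with values in $\{0,1\}$, while $I_1[k]$ takes values in $\{-1,1\}$. This is harmless: in its proof the gating enters only through \cref{lem:singleneuronmlp}, which allows conditioning both on register coordinates equal to prescribed $\pm1$ values and on flags. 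So we may treat $I_1[k]$ as a one-dimensional register with target value $1$. Each $f_j$ then has width $2r$, ternary weights and ternary activations, matching $\dff = 2r$.

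Correctness follows by induction on $j$. Write $s_1 = \sum_{k} b_k 2^k$ with $b_k \in \{0,1\}$, where $\bin_r^k(s_1) = 1$ iff $b_k = 1$. Since no $f_j$ writes to $I_1$ or $F$, these registers keep their original values throughout the composition. The claim is that after applying $(\operatorname{id}+f_{j-1})\circ\dots\circ(\operatorname{id}+f_0)$, the register $I_2$ holds
\[
\bin_r\Bigl(s_2 - \sum_{k \ge r-j} b_k 2^k\Bigr).
\]
For the inductive step, the current value is $p = s_2 - \sum_{k > k_0} b_k 2^k$, where $k_0 = r-1-j$. If $b_{k_0} = 1$, then $p \ge s_1 - \sum_{k>k_0} b_k 2^k \ge 2^{k_0}$, so the saturating subtraction $\max(0,p-2^{k_0})$ equals $p - 2^{k_0}$ exactly. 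If $b_{k_0}=0$, the gate is off and $I_2$ is unchanged. After all $r$ steps $I_2$ holds $\bin_r(s_2-s_1)$. Also every intermediate value lies in $[0, 2^r-1)$, so each value is a valid $r$-bit representation and the in-place lemma applies at every step. If $x[F]=0$, every gate is off, so every $f_j$ outputs zero and $I_2$ is preserved. All other coordinates are untouched by the guarantee of \cref{lem:subtract_power_of_two_inplace}.

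The main obstacle is ensuring that saturation at $0$ never triggers. This is exactly why the bits are processed from the most significant one downward, and why the hypothesis $s_1 \le s_2$ is needed: the partial sum subtracted so far is always a prefix of $s_1$, so it never exceeds $s_2$. A second minor point is checking that gating on a $\pm1$ register coordinate rather than a $\{0,1\}$ flag keeps the neuron activations in $\{0,1\}$; this is immediate from the proof of \cref{lem:singleneuronmlp}.
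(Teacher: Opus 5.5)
Your proof is correct and is essentially the paper's argument: it applies the in-place power-of-two subtraction from \cref{lem:subtract_power_of_two_inplace} to $I_2$ repeatedly, gated on $F$ and on the $\pm1$-valued bit $I_1[k]$ via \cref{lem:singleneuronmlp}. The paper processes the bits least-significant first ($f_i$ subtracts $2^i$), and that order works just as well, so your remark that the most-significant-first order is what prevents saturation is inaccurate: in any order the current value is at least the sum of the not-yet-subtracted bits of $s_1$, and that sum includes the bit currently being subtracted.
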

\begin{proof}
  This is just a repeated application of the in-place subtraction gadget from \cref{lem:subtract_power_of_two_inplace}, conditioning on a $-1/1$-valued register bit instead of a $0/1$-valued flag.
  In particular, each $f_i$ (for $i \in \{0,\dots,r-1\}$) consists of $2$ identical neurons for each $j \in \{i,\dots,r-1\}$ flipping bit $j$ of $I_2$ from $1$ to $-1$ and bits $i \leq j' < j$ from $-1$ to $1$ if $I_2[j]$ is set to $1$, $I_2[j']$ is set to $-1$ for $i \leq j' < j$, $F$ is set to $1$ and $I_1[i]$ is set to $1$. Hence, $f_i$ decrements $I_2$ by $2^i$ if $I_1[i]$ is set to $1$ and $F$ is set to $1$, which together subtracts the contents of $I_1$ from  those of $I_2$.
\end{proof}

\begin{proof}[Proof of \cref{thm:cotapp}]
As the construction is quite a bit more complex than the previous one, we will not explicitly list all the registers here and instead define new ones during the construction. 
Unless specified otherwise, these registers are initialized to zero. Furthermore all registers are taken to be disjoint without explicit mentioning. 
Several $r$-dimensional registers are used to encode tape positions inside the Turing machine. As head positions satisfy $n_t^k \leq t$, the assumption in \cref{thm:cotapp} implies that 
\begin{align*}
  s_M(w) &= \max(\{|w|\} \cup \{1 + n_t^k \mid k \in \{1,\ldots,K\}, t \in \{1,\ldots,t_M(w)\}\}) \\
  &\leq \max(|w|, 1 + t_M(w)) \\ 
  &< |\text{toks}(M,w,r)| - 2 \\
  &\leq 2^r - 2
\end{align*}
which ensures that tape positions where symbols are written or a head moves are always representable with the $r$ bits used and can be incremented without overflow.

\paragraph{Embeddings.}
We use $r$-dimensional binary positional embeddings and encode the tokens with flags for the token types and binary representations of states and symbols, as detailed below.
The positional embeddings will again be $r$-bit binary representations of the position, i.e. 
$$\operatorname{posemb}(i)[\ipos] = \bin_{r}(i)$$
where $\ipos$ is an $r$-dimensional register and the positional embedding is zero for other coordinates.

The token embedding will consist of several parts. 
First, we add flags $\finp, \feinp, \foutp, \feoutp, \fpos, \fepos$ marking the delimiter tokens, e.g. 
$$\emb(v)[\finp] = 
\begin{cases}
    1,& v = \inp\\
    0,& \text{else}
\end{cases}$$
and analogously for the other flags. Furthermore, we add flags $\frun, \fsigma, \fposbits$ 
which are set to $1$ for tokens $x_i \in \vrun$, $x_i \in \Sigma$ and $x_i \in \vpos$ respectively.

Next, we need the token embeddings to encode the actual information in the tokens. For position tokens this is done by using $K$ $1$-dimensional registers $\ibitk$ which are initialized to 
$$\emb(v)[\ibitk] = 
\begin{cases}
    v_k,& v \in \vpos\\
    0,& \text{else}
\end{cases}\;.$$
As previously, we fix injective encodings $\enc_Q: Q \to \{-1,1\}^{\lceil\log_2 |Q|\rceil}$ and $\enc_\Gamma: \Gamma \to \{-1,1\}^{\lceil \log_2 |\Gamma|\rceil}$.
For tokens encoding a state, which are precisely the run tokens, we initialize $\istate$ as the encoding of the state. Also, we initialize the state to $\enc_Q(\qinit)$ for the $\einp$ token:
$$\emb(v)[\istate] = \begin{cases}
\enc_Q(q),& v = (q, \ast, \ast) \in \vrun\\
\enc_Q(\qinit),& v = \einp\\
0,& \text{else }
\end{cases}$$

For tokens encoding a symbol on tape $k$, i.e. run tokens and for $k=1$ additionally input-output tokens (i.e. those in $\Sigma$), we add a $\lceil \log_2 \Gamma \rceil$-dimensional register $\isymbolk$ per tape encoding the symbol:
$$\emb(v)[\isymbolk] = \begin{cases}
    \enc_\Gamma(x_k),& v = (\ast, (x_j)_{j=1}^K, \ast) \in \vrun\\
    \enc_\Gamma(x),& v = x \in \Sigma \text{ and } k = 1\\
    0,&\text{else}
\end{cases}$$

Next, we encode the head movements for run tokens with a 2-dimensional register $\imovek$ using some injective encoding 
$\enc_\Delta:\{L,S,R\}\to\{-1,1\}^2$, initializing it for each tape $k$ as 
$$\emb(v)[\imovek] = \begin{cases}
    \enc_\Delta(\Delta^k),& v = (\ast, \ast, (\Delta^j)_{j=1}^K) \in \vrun \\
    0,& \text{else}
\end{cases}$$

Next, we add a 1-dimensional register $\iconst$ which carries a constant $1$, i.e.
$$\emb(v)[\iconst] = 1 \;,\;\forall v \in \vocab$$

All registers introduced later are implicitly assumed to be zero-initialized unless specified otherwise.

\paragraph{Distinguishing Input and Output tokens.}
Tokens between $\inp$ and $\einp$ (from now on referred to as \emph{input tokens}) and those between $\outp$ and $\eoutp$ (referred to as
\emph{output tokens}) are not distinguished at embedding, as they both come from $\Sigma \subset \vocab$. As they need to be treated
differently, we want to set flags $\finput, \foutput$ marking them. This is done by adding an attention head to the first layer 
 which attends
with queries 
$\iconst$, keys and values $\foutp$ and writing the output to a flag $\fexistsoutp$. Hence, tokens after and including the $\outp$ token 
attend to $\outp$ and write a $1$ into this flag, while others attend uniformly over previous tokens (which are not $\outp$) and 
write $0$ to the flag. In particular, 
$$x_i^{(0.5)}[\fexistsoutp] = \begin{cases}
    1,& \exists_{j\leq i}: x_j = \outp\\
    0,& \text{otherwise}
\end{cases}$$
Next, we use the subsequent MLP to mark input tokens and output tokens by writing $1$ to $\foutput$ if $\fsigma$ and $\fexistsoutp$ 
are both $1$, and writing $1$ to $\finput$ if $\fsigma$ is $1$ and $\fexistsoutp$ is $0$. Both can be achieved with $1$ neuron each 
using \cref{lem:singleneuronmlp}.

\paragraph{Head and Symbol Positions 1: Input Tokens, $\einp$, $\outp$ and Output Tokens.}
Next, the transformer needs to reconstruct the head positions for all tokens which need to later extract a symbol from some tape position 
(run tokens, output tokens and $\einp, \epos$ and $\outp$) and the position of the symbol for tokens encoding a symbol which can be extracted (run tokens and for $k=1$ input tokens,
while output tokens are never attended to). 

We introduce $r$-dimensional registers $\iqposk$ and $\isposk$ for $k\in\{1,\dots,K\}$. The goal of the Head and Symbol position acquisition is 
then stated as follows:
\begin{itemize}
    \item For run tokens, $\einp, \pos$ and $\epos$ write the head positions $n_t^k$ (in $\pm 1$-binary format) into $\iqposk$ for all $k$, where head positions 
    for $\einp$ are $n_0^k = 0$ and for $\pos$ and $\epos$ they are the values encoded in binary between them.
    \item For $\outp$ and output tokens, we write the position on tape $1$ of the next output token we need to output. 
    Note that at $\outp$ we need to output $f(w)_0$, at $f(w)_0$ we need to output $f(w)_1$ etc. In general, for $\outp$ and output tokens we write the distance to the 
    $\outp$ token to $\iqposone$ to extract the correct symbol later on.
\end{itemize}

In this part, we will handle input tokens, $\einp$, $\outp$ and output tokens.

For $\einp$ and $\outp$, we can simply set the embeddings to the correct values:
$$\emb(v)[\iqposk] = \begin{cases}
    \bin_r(0),& v = \einp\\
    \bin_r(0),& v = \outp \text{ and } k=1\\
    0,& \text{otherwise}
\end{cases}\;.$$

An input token $w_i$ (which is written on position $i$ on tape $1$) has absolute position $i+1$ in our token sequence as $(x_0, x_1, x_2, \dots) = (\inp, w_0, w_1, \dots)$. 
Hence, we use the MLP in layer $1$ to write $\bin_r(i-1)$ to $\isposone$ for input tokens by applying \cref{lem:subtract_power_of_two} with $k=0$, $I_1 = \ipos$, $I_2 = \isposone$ obtaining
$$x_i^{(1)}[\isposone] = x_i^{(0.5)}[\isposone] + \begin{cases}
    \bin_r(i-1),& x_i^{(0.5)}[\fsigma] = 1 \text{ and } x_i^{(0.5)}[\fexistsoutp] = 0\\
    0,& \text{otherwise}
\end{cases}\;.$$
Next, we need to handle output tokens. First, we use the MLP in layer $1$ to copy $\ipos$ into a register $\iposoutp$
for the $\outp$ token. In layer $2$, we broadcast $\iposoutp$ to all later tokens using an attention head with queries $(\iconst,\foutp,\foutp)$, keys $(\foutp,\finp,\finp)$, values $\iposoutp$, writing to $\iposoutp$. At $\outp$ (i.e. at position $i_{\outp}$) this attends to $\inp$ since $\langle q_{i_{\outp}}, k_0\rangle = 2$ while $\langle q_{i_{\outp}}, k_{i_{\outp}}\rangle = 1$, extracting $0$, while later tokens $i>i_{\outp}$ attend to $\outp$ since $\langle q_i, k_{i_{\outp}}\rangle = 1$ but $\langle q_i, k_0\rangle = 0$, extracting the position. The MLP in layer $2$ copies $\ipos$ into $\iqposone$ for output tokens
(this cannot be done in layer $1$, as we first need to set $\foutput$) and the MLPs in layers $3$ to $r+2$ then subtract $\iposoutp$ from $\iqposone$ for output tokens using \cref{lem:fullsubtraction}, obtaining 
$$x_i^{(r+2)}[\iqposone] = \bin_r(i-i_{\outp}), \quad i > i_{\outp}$$
for output tokens. 

\paragraph{Head and Symbol Positions 2: $\epos$.}
Next, the $\epos$ token needs to acquire the head positions, which are encoded in binary format in the $r$ previous tokens. 
Hence it can simply use $r$ heads per tape (i.e. $Kr$ heads in total) to extract the head position bits from these $r$ tokens. These heads could in principle be 
split over layers in different ways, a relatively simple and efficient method uses $\frac{r}{2}$ layers and two registers
for the decremented positions and works as follows.

First, add two $r$-dimensional registers $\ibackpos, \ibackposs$ and in the first layer's MLP set them to $i-1$ and $i-2$:
    $$x_i^{(1)}[\ibackpos] = \bin_r(i-1)\;,\;x_i^{(1)}[\ibackposs] = \bin_r(i-2)\;$$
and add $2K$ 1-dimensional zero-initialized registers $\ibitsexk, \ibitsexxk$ for $k\in\{1,\dots,K\}$

Then, for $j = 1,\dots, \frac{r}{2}$:
\begin{itemize}
    \item For each $k\in\{1,\ldots,K\}$, add an attention head in layer $j+1$ attending with queries $\ibackpos$, keys $\ipos$, values $\ibitk$ and writing to $\ibitsexk$.
    \item Do the same with $\ibackposs, \ibitsexxk$ instead of $\ibackpos, \ibitsexk$.
    \item For the $\epos$ token, use the subsequent MLP to copy the contents of $\ibitsexk$ into the correct part of $\iqposk$. In particular,
    in step $j$ we extracted the bits at position $r-2j+1$, hence copy $\ibitsexk$ into $\iqposk[r-2j+1]$. Similarly copy $\ibitsexxk$ into $\iqposk[r-2j]$. In the same MLP,
    zero out $\ibitsexk$ and $\ibitsexxk$ for all $k$. Hence, we apply \cref{lem:copying} and \cref{lem:zeroing} to $2K$ dimensions in total.
    \item Decrement both $\ibackpos$ and $\ibackposs$ by $2$ in-place using \cref{lem:subtract_power_of_two_inplace} (with $k=1$). 
    Note that decrementing by $2$ is easier than decrementing e.g. by $3$, hence the choice of splitting this over $\frac{r}{2}$ layers and not fewer.
\end{itemize}
With the above operations, before the attention heads in layer $j+1$, $\ibackpos$ and $\ibackposs$ contain the binary encodings of 
$i-2j+1$ and $i-2j$ respectively. Hence, each token attends precisely $i-2j+1$ back with the first head and $i-2j$ back with the second one, extracting 
the positional bits from there. In total, for even $r$, the $\epos$ token attends to the previous $r$ tokens and extracts the positional bits into the head position registers,
hence correctly reconstructing the head positions after layer $\frac{r}{2}+1 =: L_1$.
Note that the layers $3$ to $L_1$ used above overlap with the output-token offset subtraction in layers $3$ to $r+2$; these operations act on disjoint token types (output tokens vs.\ $\epos$), so their pointwise MLP computations can be merged in the shared layers using \cref{lem:mlpsum} and don't interfere with each other.

\paragraph{Head and Symbol Positions 3: Propagation through Run Tokens.}
The next part of the construction will use $r+1$ layers to propagate the head positions from the $\epos$ and $\einp$ token
through the up to $r$ run tokens to the $\pos$ tokens, adding the head movements encoded by each run token. 
Introduce a new $r$-dimensional register $\iposback$ initialized to zero and set to the decremented positional encoding $\bin_r(i-1)$ for each position $i$ e.g. in layer $\ell=1$, using \cref{lem:subtract_power_of_two} with $k=0$ 
\[
x_i^{(1)}[\iposback] = \bin_r(i-1)\;.
\]
Furthermore, introduce $K$ $r$-dimensional registers $\iqposbackk$ used to extract the head positions from the previous token.

In particular, for $j \in \{1,\dots,r+1\}$:
\begin{itemize}
    \item For each $k \in \{1,\ldots,K\}$ add an attention head in layer $L_1 + j$ attending with queries $\iposback$, keys $\ipos$, values $\iqposk$ and writing to $\iqposbackk$.
    \item For run tokens $x_i = \run_t = (\ast, \ast, (\Delta_t^k)_{k=1}^K)$ whose predecessor's head positions have already been propagated, the extracted head positions are precisely $n_{t-1}^k$,
    i.e. the head positions before step $t$. The subsequent MLP is then used to add $\Delta_t^k$ (encoded in $\imovek$) onto $n_{t-1}^k$ using \cref{lem:addheadmovement},
    writing the result (i.e. the head positions $n_t^k$ after step $t$) to $\iqposk$, while at the same time zeroing the previous contents of $\iqposk$. Run tokens not yet reached by this propagation instead keep zero in $\iqposk$ by the zero-input case of \cref{lem:addheadmovement}.
    \item For $\pos$ tokens, $\iqposbackk$ already holds the correct head position. We copy it to $\iqposk$ while zeroing the previous contents.
    \item The same MLP also zeros $\iqposbackk$ for $j \leq r$ (i.e. not in the last iteration).
\end{itemize}
By induction on $j$, after $r+1$ such propagation steps (and hence after layer $L_1 + r+1$), the head position has been propagated through the $r$ run tokens to the next $\pos$ token. Run tokens and $\pos$ tokens
then hold the correct head positions in $\iqposk$, while run tokens $\run_t$ also hold the previous head positions $n^k_{t-1}$ in $\iqposbackk$
(as we didn't zero this register in the last iteration). Recall that for a run token $\run_t = (\ast, (y_t^k)_{k=1}^K, \ast)$, the symbols $y_t^k$ are written 
at positions $n_{t-1}^k$ (i.e. $y_t^k = (x_t^k)_{n_{t-1}^k}$), hence $\iqposbackk$ contains the correct symbol positions which we copy into $\isposk$ in layer $L_1 + r + 2$.

\paragraph{Symbol Extraction.}
After layer $$L_2 := L_1 + r + 2\;,$$
i.e. in the representations $x_i^{(L_2)}$,
every token which encodes a symbol being written to position $n$ on tape $k$ holds $\bin_r(n)$ in $\isposk$, and each token which needs
to extract the latest symbol written on tape $k$ at position $n$ holds $\bin_r(n)$ in $\iqposk$. 
Now for each token $x_i$ with an $\iqposk$ we need to extract the symbol encoded in $\isymbolk$ for the latest token $j$ whose symbol position is the 
same as the current token's position. 

In particular, setting 
$$J_i^k = \{j \leq i \mid x_i^{(L_2)}[\iqposk] = x_j^{(L_2)}[\isposk] \in \{-1,1\}^r\}$$
we want to extract $x_{\max J_i^k}^{(L_2)}[\isymbolk]$ into some new $d_\Gamma$-dimensional register $\isymexk$  for each $k$ and each $i$ where $J_i^k \neq \emptyset$, 
leaving $\isymexk$ at zero otherwise.

First, we set flags to mark tokens where $J_i^k \neq \emptyset$. In particular, for each $k$, an attention head in layer $L_2 + 1$ attends for token $x_i$ with queries 
$$q_i = (x_i^{(L_2)}[\iqposk], \underbrace{1,\dots,1}_{r-1})\;,$$ 
keys
$$k_j = x_j^{(L_2)}[\isposk, \underbrace{\finp, \dots, \finp}_{r-1}]\;,$$
values $$v_j = \fnotinp$$
where $\fnotinp$ is set at embedding to $0$ for $\inp$ and $1$ for all other tokens, and 
writes the output to a flag $\fextok$. If for a token $x_i$ and some $k$ the set $J_i^k$ is non-empty, then for every $j \in J_i^k$ we have $\langle q_i, k_j\rangle = r$, while for $\inp$ (at position $0$) we have $\langle q_i, k_0\rangle = r-1$. Hence these tokens attend to all $j \in J_i^k$ and extract a $1$ (as $0 \not\in J_i^k$). On the other hand, if the set $J_i^k$ is empty, then the dot product with $\inp$ is still $r-1$ and hence larger than the dot product with all other positions $j > 0$, where $\langle q_i, k_j\rangle \leq r-2$ (one wrong bit reduces the dot product by $2$; see \cref{lem:bin_dotprod_gap}).
Hence, these tokens extract $0$.
In total we get 
$$x_i^{(L_2 + 0.5)}[\fextok] = 
\begin{cases}
    1,& J_i^k \neq \emptyset,\\
    0,& \text{otherwise}
\end{cases}\;.$$

The next step is to compute, for tokens were $\fextok$ is set, the absolute position of the closest matching token, i.e. $\max J_i^k$ (again for all $k$).
We introduce $r$-dimensional registers $\ipossymk$ and for tokens encoding a symbol on tape $k$ (run tokens and for $k=1$ input tokens) copy $\ipos$ into $\ipossymk$.
This can be done in the second layer's MLP, hence doesn't increase depth. The goal is then to write $\bin_r(\max J_i^k)$ into new registers $\imaxposk$ for each token $i$ and tape index $k$
where $J_i^k \neq \emptyset$, i.e. where $\fextok$ is $1$. Then we can simply attend to this position and extract the symbol from there. 

To find $\max J_i^k$, we will use a binary search over positions, in each of $r$ steps (where one step is implemented by one transformer layer) determining one 
more bit of 
$$\bin_r(\max J_i^k) = (\bin_r^0(\max J_i^k), \dots, \bin_r^{r-1}(\max J_i^k)) =: (b_0,\dots,b_{r-1})$$
starting with the most significant bit $b_{r-1}$. 
In the first transformer layer, we use an attention head (for each $k$) which checks if a matching token (i.e. a $j \in J_i^k$) exists at a position 
$$(\ast, \dots, \ast, 1)\;,$$
i.e. with the most significant bit equal to $1$. If so, we know that $b_{r-1} = 1$, if not but $J_i^k \neq \emptyset$ we know that $b_{r-1} = -1$. In the next iteration, we check if a matching token exists at a position 
$$(\ast, \dots, \ast, 1, b_{r-1})$$
and again set the next bit $b_{r-2}$ to $1$ or $-1$ accordingly. This continues for $r$ layers, in each layer obtaining one more bit.
Note that if we e.g. have $\bin_r^{r-1}(i) = -1$, i.e. $i < 2^{r-1}$, the search for a bit at a position $(\ast, \dots, \ast, 1)$ will always fail
as the token at position $i$ can only attend to tokens before it, hence correctly setting $b_{r-1} = -1$.

The process is described formally as follows. First, we add an $r$-dimensional register $\imaxposk$ for each $k$. 

Then, for $j = 0, 1, \dots, r-1$:
\begin{itemize}
    \item Write $b := r-1-j$ for the bit index we set in this iteration.
    \item Use an attention head (for each $k$) in layer $L_2+j+1$ with queries and keys
    $$\begin{array}{@{}r@{\;}l@{\;}l@{\;}l@{}}
      q_i =& (x_i^{(L_2+j)}[\iqposk], &\underbrace{1, \dots, 1}_{r+j},&1, x_i^{(L_2+j)}[\imaxposk[b+1:r]])\\
      k_j =& (x_j^{(L_2+j)}[\isposk], &x_j^{(L_2+j)}\underbrace{[\finp,\dots,\finp]}_{r+j},&x_j^{(L_2+j)}[\ipossymk[b:r]])
    \end{array}$$
    and values $v_j = x_j^{(L_2+j)}[\fnotinp]$ extracted to $\fexistshighk$. Queries and keys consist of three parts, so the dot product $\langle q_i, k_j\rangle$ decomposes accordingly. The first part 
    (head position for queries, symbol position for keys) contributes $r$ to the dot product for matching tokens and at most $r-2$ for 
    non-matching tokens, and $0$ for attention to $\inp$. The second part contributes $r+j$ to the dot product with $\inp$.
    The third part contributes $j+1$ to the dot product for positions that match the bits determined in previous iterations 
    and also have bit $1$ at position $b$, and contributes at most $j-1$ for other positions and $0$ for $\inp$.
    In total, the dot product with $\inp$ will be $r+j$, the dot product with a matching token with bits higher than $b$ as determined before and bit $b$ being $1$
    is $r+j+1$, while dot products to other tokens are at most $r+j-1$. Hence, if a token with bits higher than $b$ matching the previously 
    determined values and bit $b$ being $1$ exists, we extract a $1$, otherwise a $0$ (as we then attend to $\inp$).
    \item Next, the MLP in the same layer $L_2+j+1$ is used to write a $1$ to $\imaxposk[b]$ if $\fexistshighk$ is $1$ and a $-1$ if 
    $\fexistshighk$ is $0$ but $\fextok$ is $1$. Furthermore, it zeros the contents of $\fexistshighk$ for the next iteration.
\end{itemize}
After layer $L_2+r$ we have $\bin_r(\max J_i^k)$ stored in $\imaxposk$ for all $k$ and $i$ where $J_i^k \neq \emptyset$. Hence, we can now simply use an attention head per $k$
in layer $L_2+r+1$ attending with queries $q_i = (x_i^{(L_2+r)}[\imaxposk], 1)$, 
keys $k_j = (x_j^{(L_2+r)}[\ipossymk], x_j^{(L_2+r)}[\finp])$ and values $x_j^{(L_2+r)}[\isymbolk]$. If $J_i^k$ is non-empty, then 
$\imaxposk$ has been set to $\bin_r(\max J_i^k)$, hence the dot product to this position dominates and we extract the correct symbol.
Otherwise, we attend to $\inp$ and extract nothing. 

Lastly, if no token was found we need to write a blank symbol to $\isymexk$. In particular, we use a single MLP neuron for each $k$, e.g. in layer $L_2+r+1$,
to write $\enc_\Gamma(\sqcup)$ to $\isymexk$ if $\fextok, \fpos, \fposbits$ and $\finput$ are all $0$, as for these token types we did not want to extract a symbol 
anyway. 

This process is finished after layer $$L_3 := L_2 + r + 1\;.$$

\paragraph{Positional Blocks.}
In parallel to the symbol extraction described previously, the positional bits and $\pos$ token need to be handled. 
After layer $L_2$, the $\pos$ tokens have the correct head positions stored in $\iqposk$ and they need to be written 
out in the next $r$ tokens. First, we copy the contents of $\iqposk$ into a new register $\iqposposk$ for each $k$. 
Next, we decrement $\ipos$ by $1$, writing the result into a new register $\iposbacktwo$. We introduce 
$K$ more $1$-dimensional registers $\inextposbitk$. Then, for $p=1,\dots,r-1$ we 
\begin{itemize}
    \item Use an attention head in layer $L_2 + p$ to attend with queries $\iposbacktwo$, keys $\ipos$ and extract the $p$-th bit of $\iqposposk$, i.e.\ $\iqposposk[p]$, into $\inextposbitk$ for all $k$.
    \item The subsequent MLP in layer $L_2 + p$ is used to decrement $\iposbacktwo$ by $1$ in-place using \cref{lem:subtract_power_of_two_inplace} (with $k=0$).
\end{itemize}
Note that the attention head will only write something to the token exactly $p$ positions after $\pos$ token (which needs to output the bits at position $p$, 
where the least significant bit corresponds to position 0), hence copying the bits from $\iqposposk$ into $\inextposbitk$ for the correct token.

Next, we attend $r$ positions back (using queries $\iposbacktwo$ and keys $\ipos$), extract $\fpos$ and write it to a new flag 
$\fwriteepos$. This marks the token precisely $r$ positions after a $\pos$ token, which needs to output $\epos$ instead of another position token 

Next, for the $\pos$ token we copy the contents of $\iqposk[0]$ into $\inextposbitk$ for each $k$ using \cref{lem:copying}.

Lastly, we decrement $\iposbacktwo$ one last time, but this time by $2$ in-place using \cref{lem:subtract_power_of_two_inplace} (with $k=1$), so it now points $r+2$ tokens back. We attend $r+2$ tokens back and extract the state from there to
a new register $\istateex$. For $\epos$ tokens this is precisely the state of the last run token, which we hence copy (only for $\epos$ tokens)
into $\istate$. 

\paragraph{Transition Function.}
Tokens which need to output a run token have the symbols at the current head position in $\isymexk$ and the state in $\istate$,
as the state was set at initialization for run tokens and $\einp$ and was set in the previous paragraph for $\epos$ tokens. 
We can now compute the new state, new symbols and new head movements by using an MLP to compute the transition function $\delta$ 
of the turing machine. In particular, for each state $q$ and symbols $(y_k)_{k=1}^K$, we add a single neuron to the MLP in layer $L_3+1$
which fires if $\istate$ is set to $\enc_Q(q)$ and $\isymexk$ is set to $\enc_\Gamma(y_k)$ for each $k$, and for
$$\delta(q, (y_k)_{k=1}^K) =: (q', (y_k')_{k=1}^K, (\Delta_k')_{k=1}^K)$$
writes $\enc_Q(q')$ into a new register $\inewstate$, $\enc_\Gamma(y_k')$ into a new register $\inewsymk$ for each $k$ and 
$\enc_\Delta(\Delta_k')$ into a new register $\inewmovek$ for each $k$.

\paragraph{Output.}
Finally we need to decide for each token what to output. 

First, we set a flag in $\fhalt$ at embedding for run tokens $(\qhalt, \ast, \ast)$. Furthermore, we attend $r-1$ positions back in layer $L_2+r-1$ when $\bin_r(i-(r-1))$ is stored in $\iposbacktwo$ anyway and extract $\frun$ into
some new flag $\flastrun$. For run tokens, this flag marks the $r$'th run token, which needs to output $\pos$ instead of another run token.
We then set another flag $\fwritepos$ to $1$ precisely if $\flastrun$ and $\frun$ are $1$ and $\fhalt$ is $0$ (for a run token with a halting state 
we output the $\outp$ token instead of another positional block). If either $\fwritepos$ or $\fhalt$ are set to $1$, we zero out 
the contents of $\inewstate, \inewsymk$ and $\inewmovek$ for all $k$. We set the unembeddings as 
$$\unemb(v)[\fhalt] = \begin{cases}
1,& v = \outp,\\
0,& \text{otherwise}
\end{cases}\;,$$
$$\unemb(v)[\fwritepos] = \begin{cases}
1,&v=\pos \\
0,&\text{otherwise}
\end{cases}\;,$$

$$\unemb(v)[\inewstate] = \begin{cases}
\enc_Q(q),& v = (q,\ast,\ast) \in \vrun,\\
0,&\text{otherwise}
\end{cases}$$
and analogously for $\inewsymk$ and $\inewmovek$ for all $k$.
Hence, for run tokens with a halting state we output $\outp$, for a run token which is the $r$'th run token in a sequence but does not have
a halting state we output $\pos$, and for other run tokens and $\einp$ and $\epos$ we output the correct next run token corresponding to 
the contents of $\inewstate, \inewsymk, \inewmovek$.

Next, for positional tokens we output $\epos$ if the flag $\fwriteepos$ is set:
$$(\unemb(v))[\fwriteepos] = \begin{cases}
1,& v=\epos,\\
0,& \text{otherwise}
\end{cases}\;.$$
For positional tokens without this flag set and for $\pos$ tokens we output the next position token by setting 
$$(\unemb(v))[\inextposbitk] = \begin{cases}
    v_k,& v = (v_j)_{j=1}^K \in \vpos \\
    0,&\text{otherwise}
\end{cases}\;.$$

	Lastly, we handle the generation of the output. We need to output $\eoutp$ precisely if we are at $\outp$ or an output token
	and the next symbol on tape $1$ is a blank symbol (as we do not include blank symbols in the output).
	Hence, we set a flag $\fexblank$ to $1$ if the contents of $\isymexone$ is $\enc_\Gamma(\sqcup)$ in layer $L_3+1$. Then, in layer $L_3+2$
	we set a flag $\fwriteeoutp$ to $1$ precisely if $\foutp$ or $\foutput$ is set and $\fexblank$ is set to $1$.
	We then set  
	$$\unemb(v)[\fwriteeoutp] = \begin{cases}
	1,& v = \eoutp\\
	0,&\text{otherwise}
	\end{cases}\;.$$
		Next, we set a flag $\fwritesigma$ in layer $L_3+3$ to $1$ if $\fexistsoutp$ is set (which is true for $\outp$ and output tokens)
		and $\fwriteeoutp$ is set to $0$. Then, in layer $L_3+4$ we copy $\isymexone$ to a new register $\inewsymsigma$ if $\fwritesigma$
		is set to $1$ and set the unembeddings to
		$$\unemb(v)[\inewsymsigma] = \begin{cases}
		\enc_\Gamma(v),& v \in \Sigma\\
		0,& \text{otherwise}
	\end{cases}$$
	hence outputting the correct output symbol for $\outp$ and for output tokens that don't need to output $\eoutp$.

\paragraph{Dimensions.}
The residual dimension $d$ is the sum of the sizes of all registers and flags introduced in the construction, hence $d = 6Kr + 6r + 3d_Q + (3K+1)d_\Gamma + 10K + 21$.
The largest key-query dimension occurs in the binary search for $\max J_i^k$ and is $d_k = 4r-1$, while values are at most $r$-bit positions or $d_Q/d_\Gamma$-dimensional encodings, hence $d_v = \max\{r, d_Q, d_\Gamma\}$.
The maximum number of heads used in a single layer is $3K$ (in layer $L_2+1$, which has $K$ heads for $\fextok$, $K$ for binary search and $K$ for positional bit extraction), and the depth is $L = L_3+4 = \frac{5r}{2} + 8$.
Finally, the required MLP width is $\dff = \max\{18r+2,\; 14Kr+2r,\; |Q||\Gamma|^K + 1\}$, as the widest MLP occurs either during the head position propagation (together with the output subtraction), in the initial layer, or when computing the transition function.

\end{proof}

\subsection{Proof of \cref{thm:scot}}\label{app:scot}
To prove \cref{thm:scot}, we first define the vocabulary used by the constructed transformer and define the segments of the token sequence that the transformer will produce when running with SCoT. We bound the length of each segment and the total length of all segments. 
Then we restate the theorem in more formal terms, see how \cref{thm:scot} follows from that restatement and finally prove the restated theorem by constructing a transformer generating the stated segments when being run with SCoT.

\subsubsection{Vocabulary}
The vocabulary for SCoT simply extends that for CoT by the summary delimiters and tokens used in the summaries to encode tape contents with head positions and the state:
\begin{definition}\label{def:vocabscot}
  Let $M = (K, Q, \Sigma, \Gamma, \delta, \sqcup, \qinit, \qhalt)$ be a Turing machine. We define the vocabulary used by our SCoT construction as 
		  $$\vocab = \vocab_M^{\SCoT} = \vdelim \cup \vrun \cup \vpos \cup \Sigma \cup \vtape \cup Q$$
	    with $\vrun, \vpos$ defined as in \cref{def:vocabcot} and 
		  $$\vdelim = \{\inp, \einp, \outp, \eoutp, \pos, \epos, \summ, \esumm\}\;,$$
	    and
		  $$\vtape = (\Gamma \cup \{\hat \gamma \mid \gamma \in \Gamma\})^K\;.$$
\end{definition}

\newcommand{\seg}{\text{seg}}
\newcommand{\summary}{\text{summary}}
\newcommand{\trace}{\text{trace}}
\subsubsection{Token Sequence}
In order to define the segments of the token sequence produced by the SCoT transformer construction for a Turing machine $M$ and input $w$, we first define how to encode a configuration $C_t$ of a Turing machine $M$ occurring at a time $t$ when running on input $w$ into a summary. The summary consists of the tape contents at time $t$ written out using the tokens $\vtape$ and the state at time $t$. The number of tokens used to write the tape contents is precisely the number of cells accessed on any tape up to time $t$.
\begin{definition}\label{def:configtosummary}
  Let $M$ be a $K$-tape Turing machine and let $w \in \Sigma^*$ be an input. Consider the sequence of configurations 
  $$C_t = (q_t, (x_t^k)_{k=1}^K, (n_t^k)_{k=1}^K),\qquad 0 \leq t \leq t_M(w)$$
  of $M$ on input $w$ defined in \cref{def:tm_run} with state $q_t$, tape contents $x_t^k$ and head positions $n_t^k$. Define the space usage at time $t$ inductively as 
  $$s_0 := |w|,\qquad s_t = \max(s_{t-1}, 1+n_t^1,\dots,1+n_t^K) \text{ for } t \in \{1,\dots,t_M(w)\}\;.$$
  Note that $s_{t_M(w)} = s_M(w)$ with the space usage $s_M(w)$ defined in \cref{def:tm_run}. We then define 
  \[
  \summary(M,w,t) := (\summ, ((\tilde x_t^k)_0)_{k=1}^K, ((\tilde x_t^k)_1)_{k=1}^K, \dots, ((\tilde x_t^k)_{s_t-1})_{k=1}^K, q_t, \esumm)
  \]
  where 
  $$(\tilde x_t^k)_i = 
  \begin{cases}
    (\hat x_t^k)_i,& \text{ if } n_t^k = i\\
    (x_t^k)_i,&\text{ if } n_t^k \neq i
  \end{cases},$$
  i.e. the head positions are indicated using the $\hat \gamma$ symbols from $\vtape$. Note that the length of each such summary is bounded by 
  \[
  |\summary(M,w,t)| = 3 + s_t \leq 3 + s_M(w)\;.
  \]
\end{definition}

\begin{definition}\label{def:scottokenseq}
  Consider a Turing machine $M$, a constant $r\in \N$ and an input $w = (w_0,\dots,w_{n-1}) \in \Sigma^*$ such that $f_M(w) = u = (u_0,\dots,u_{m-1}) \in \Sigma^*$. Let $(C_t)_{t=0}^{t_M(w)}$ be the configuration sequence of $M$ on $w$; see \cref{def:tm_run}, and let $\run_t$ ($t \in \{1,\dots,t_M(w)\}$) be the run token encoding step $t$ as in \cref{def:cottokenseq}.
  We define the SCoT token segments $\text{seg}_0, \text{seg}_1, \dots \in \vocab_M^{\SCoT}$ as follows, again dropping the dependence on $M,w$ and $r$. 

  We then inductively define the times $t_0,t_1,\dots \in \{0,\dots,t_M(w)\}$, summaries $\summary_0,\summary_1 \in \vocab^+$ (using the vocabulary defined in \cref{def:vocabscot}), traces $\trace_1,\trace_2,\dots$ and segments $\seg_1,\seg_2,\dots \in \vocab^+$ as follows. For $i = 0$, set $t_0 = 0$ and $\summary_0 = (\inp, w_0,\dots, w_{n-1}, \einp)$. Then for $i=1,2,\dots$ we set:
  \begin{itemize}
    \item $\trace_i$ alternates between the next $r$ run tokens, starting at $t_{i-1}+1$, and the head positions after the last of this run token. This continues until either a run token has a halting state or the length of this sequence is at least $3(|\summary_{i-1}|-1)$\footnote{The reason for this choice will become clear in the construction, although one could replace the $3$ with any number $2^m - 1$ with minimal changes to the construction.} and the last token is a run token. In particular, if this length cap is hit while writing the head positions, the first run token after the head positions finishes $\trace_i$.
    \item In both cases in the previous item, we set $t_i$ as the index of the run token finishing $\trace_i$, i.e. the last token in $\trace_i$ is $\run_{t_i}$. If the first case occurs, $\summary_i$ is the output $(\outp,u_0,\dots,u_{m-1},\eoutp)$ and $i_{\max} = i$, i.e. this is the last segment. If the latter case occurs, $\summary_i$ is $\summary(M,w,t_i)$ from \cref{def:configtosummary}
  \end{itemize}
  For $i\geq 1$ we define $\seg_i$ as the concatenation $\summary_{i-1}, \trace_i, \summary_i$.
\end{definition}

\begin{lemma}\label{lem:scotlengthbounds}
  Consider the setting from \cref{def:scottokenseq}. Assume $r \geq 2$. Then each segment has length bounded as 
  $$|\seg_i| \leq 8 (s_M(w) + 3)$$
  and the total length of all segments is bounded as 
  $$\sum_{i=1}^{i_{\max}} |\seg_i| \leq 8 t_M(w) + 2|w| + 4$$
\end{lemma}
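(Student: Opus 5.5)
The proof is pure bookkeeping on the three parts $\summary_{i-1},\trace_i,\summary_i$ of each segment. Write $s:=s_M(w)$, $t:=t_M(w)$. Two preliminary facts are needed.

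\emph{Summary lengths.} By \cref{def:configtosummary}, every intermediate summary has length $3+s_{t_i}\le 3+s$. The initial summary $\summary_0$ has length $|w|+2\le s+3$, since $s\ge|w|$. The final summary is the output block, of length $m+2$. The output $f_M(w)$ sits on cells of tape $1$ that are within the accessed region, so $m\le s$ and again the length is at most $s+3$.

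\emph{Ratio of run tokens in a trace.} A trace alternates blocks of $r$ run tokens with positional blocks of $r+2$ tokens, and it ends with a run token. Hence if $\trace_i$ contains $\rho_i$ run tokens, then $|\trace_i|\le \frac{2r+2}{r}\rho_i\le 3\rho_i$ for $r\ge2$, exactly as in \cref{lem:cotlengthbound}.

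\textbf{Per-segment bound.} The stopping rule caps the trace at roughly $3(|\summary_{i-1}|-1)$. Once the trace length reaches at least this cap, the trace stops at the next run token. That run token may come only after finishing a positional block, which adds at most $r+3$ tokens.

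This $r$-dependent overshoot is the main obstacle. If $r$ is large compared to $s$, it is not absorbed by $O(s)$ directly. I would bound it with the ratio above instead: before the cap is hit, the number of run tokens is at most about the cap, so the trace length is at most the cap plus one positional block plus one run token. A positional block is only inserted after $r$ run tokens, so its length $r+2$ is at most $\frac{r+2}{r}\cdot r$ times the run tokens already present. Since the block occurs after the first $r$ run tokens of the trace, those run tokens are themselves below the cap. This should give $|\trace_i|\le 2\cdot 3(|\summary_{i-1}|-1)+1$ or similar.

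Combining with the summary bounds gives
\[
|\seg_i|\le (s+3)+6(s+2)+1+(s+3)\le 8(s+3).
\]
The constants need care, but there is slack.

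\textbf{Total bound.} Every non-final trace has length at least $3(|\summary_{i-1}|-1)$. Using the ratio bound, $\rho_i\ge |\trace_i|/3\ge |\summary_{i-1}|-1$. Therefore $|\summary_{i-1}|\le \rho_i+1$, and in fact it should be charged against the trace: $|\summary_{i-1}|+|\trace_i|\le 4\rho_i+1$.

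Each intermediate summary appears in exactly two segments, as the end of $\seg_i$ and the start of $\seg_{i+1}$. The sum therefore counts each of them twice. I will charge the copy ending $\seg_i$ to the trace of $\seg_{i+1}$, which is a non-final trace or the final one; in the final-segment case I use the minimum-length condition on the previous trace.

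The segment $\seg_1$ contributes $\summary_0$ of length $|w|+2$. The final output block contributes $m+2\le |w|+t+2$, as in \cref{lem:cotlengthbound}. Summing $\sum_i\rho_i=t$ with the per-trace charge $|\trace_i|\le 3\rho_i$ plus twice-counted summaries each bounded by $\rho_i+1$ yields at most $(3+2+\ldots)t$. I expect the careful constant (getting exactly $8t+2|w|+4$) to require pairing $\summary_{i-1}$, of length at most $\rho_i+1$ when charged against trace $i$, with the $+1$ absorbed by the at least one run token per trace.
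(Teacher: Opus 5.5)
Your per-segment bound is correct and is essentially the paper's argument. An overshoot past the cap $C=3(|\summary_{i-1}|-1)$ requires a positional block to start before the cap. That forces $r<C$, so $|\trace_i|\le C+r+2\le 2C+1$. The paper phrases the same thing as a case split on $C\le r$.

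The total bound has a genuine gap. Write $q=i_{\max}$, $S_i=|\summary_i|$, $R_i=|\trace_i|$ and $\Delta_i=t_i-t_{i-1}$. Your charging bounds every twice-counted summary by the run-token count of the trace that follows it. That step uses the length cap, which only holds for non-final traces.

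\begin{itemize}
\item The last intermediate summary $\summary_{q-1}$ is counted twice, at the end of $\seg_{q-1}$ and the start of $\seg_q$. It is followed by the halting trace, which can be a single run token, so your bound ``$\le\rho_i+1$'' fails for it.
\item The minimum-length condition on the \emph{previous} trace does not rescue this: it bounds $S_{q-2}$, not $S_{q-1}$.
\item The missing ingredient is that summaries grow by at most one token per simulated step, because the accessed tape region grows by at most one cell per step. Concretely $S_i\le S_{i-1}+\Delta_i$, with an extra $+2$ for $i=1$. In particular $S_{q-1}\le S_0+t_{q-1}+2$.
\end{itemize}

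Even with that ingredient, the stated constant does not come out of a loose accounting.

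\begin{itemize}
\item If you bound $S_{q-1}$ locally via $S_{q-2}+\Delta_{q-1}\le 2\Delta_{q-1}$, you charge well over $8$ tokens per step of trace $q-1$.
\item Your trace bound $|\trace_i|\le 3\rho_i$ is too weak; you need the sharp form $R_i\le 3\Delta_i-2$. It holds because $r$ run tokens precede each positional block and the trace ends with a run token. The weak form loses an additive constant per segment, and since there can be order $t$ segments this already breaks the constant $8$.
\end{itemize}

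The paper closes the argument by bounding each non-final segment as a whole. Using $R_i\ge 3(S_{i-1}-1)$, $R_i\le 3\Delta_i-2$ and integrality, it gets $|\seg_i|\le 2S_{i-1}+R_i+\Delta_i+2\le \tfrac53R_i+\Delta_i+4\le 6\Delta_i$. The final segment is handled with the global growth bound: $|\seg_q|\le 2S_{q-1}+R_q+\Delta_q\le 2S_0+2t_{q-1}+4\Delta_q+2$. Summing gives $2S_0+8t=8t+2|w|+4$.
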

\begin{proof}
  Each summary (including the encoded input and output) uses at most $3 + s_M(w)$ tokens. Hence,
  \[|\seg_i| \leq 2(s_M(w) + 3) + |\trace_i|\;.\]
  Now consider two cases. If $3(|\summary_{i-1}|-1) \leq r$, then the length cap is hit before the first head position encoding, unless the trace terminates earlier, and hence $|\trace_i| \leq 3(|\summary_{i-1}|-1)$. If $3(|\summary_{i-1}|-1) > r$, then the trace can exceed the length cap by at most $r+2$ tokens. Hence,
  \[
    |\trace_i| \leq 3(|\summary_{i-1}|-1) + r + 2 < 6|\summary_{i-1}|\;.
  \]
  In both cases it holds that $|\trace_i| \leq 6 |\summary_{i-1}| \leq 6(s_M(w)+3)$ and thus
  \[|\seg_i| \leq 8(s_M(w) + 3)\;.\]

  For the second statement, write
  \[
    S_i := |\summary_i|,\qquad R_i := |\trace_i|,\qquad \Delta_i := t_i-t_{i-1}\;.
  \]
  Let $p_i$ denote the number of head position encodings in $\trace_i$. Each such encoding uses $r+2$ tokens and is preceded by $r$ run tokens. Since $\trace_i$ ends with a run token, it holds that $rp_i \leq \Delta_i-1$. Moreover, $2p_i \leq \Delta_i-1$ as $r\geq2$, and hence
  \[
    R_i = \Delta_i+(r+2)p_i \leq 3\Delta_i-2\;.
  \]
  From the definition of $s_t$ in \cref{def:configtosummary}, it holds that $s_t\leq |w|+t+1$, and after the first step the number of used cells increases by at most $1$ per step. Thus, for every $i<i_{\max}$,
  \[
    S_i \leq S_{i-1}+\Delta_i+2\;,
  \]
  where the additional $2$ is only needed for $i=1$ and accounts for the one-token difference between the input and summary formats and for the empty-input case. Furthermore, the length cap gives $R_i\geq3(S_{i-1}-1)$. Hence,
  \begin{align*}
    |\seg_i| &= S_{i-1}+R_i+S_i\\
    &\leq 2S_{i-1}+R_i+\Delta_i+2\\
    &\leq \frac{5}{3}R_i+\Delta_i+4\\
    &\leq \frac{5}{3}(3\Delta_i-2)+\Delta_i+4\\
    &= 6\Delta_i+\frac{2}{3}<6\Delta_i+1\;.
  \end{align*}
  Since $|\seg_i|$ is an integer, this implies $|\seg_i|\leq6\Delta_i$.

  For the final segment, let $q:=i_{\max}$. It holds that $S_q\leq S_{q-1}+\Delta_q$: for $q=1$, at most one non-blank tape symbol can be added per step, while for $q>1$ the stronger bound $S_q\leq S_{q-1}+\Delta_q-1$ follows from $|u|\leq s_{t_q}$. Moreover, $S_{q-1}\leq S_0+t_{q-1}+2$. Therefore,
  \begin{align*}
    |\seg_q| &= S_{q-1}+R_q+S_q\\
    &\leq 2S_{q-1}+R_q+\Delta_q\\
    &\leq 2S_0+2t_{q-1}+4\Delta_q+2\;.
  \end{align*}
  Hence,
  \begin{align*}
    \sum_{i=1}^{q} |\seg_i| &\leq 6t_{q-1}+2S_0+2t_{q-1}+4\Delta_q+2\\
    &\leq 2S_0+8t_q\\
    &= 8t_M(w)+2|w|+4\;.
  \end{align*}
  Here the second line uses $\Delta_q\geq1$.
\end{proof}

\subsubsection{Main Construction}
Now we restate \cref{thm:scot} in a more explicit way:
\begin{theorem}\label{thm:scotapp}
  Let $M = (K, Q, \Sigma, \Gamma, \delta, \sqcup, \qinit, \qhalt)$ be a Turing machine and let $r\in\N$ be even with $r \geq 4$. Set $d_Q = \lceil \log_2 |Q| \rceil$ and $d_\Gamma = \lceil \log_2 |\Gamma|\rceil$. Then there exists a transformer
    $T = T_M^r \in \ts^{\hard}(d, \dff, d_k, d_v, H, L, \vocab)$
    with depth $L = \frac{5r}{2} + 8$, $H = 3K + 2$ heads per layer using dimensions $d_k = 4r - 1$, $d_v = \max\{r, d_Q, d_\Gamma\}$,
  \[
      \dff = \max\{22r + 11, 18Kr + 2r + 1, |Q||\Gamma|^K + 4Kd_\Gamma + 4K + 1\}
  \]
    and
    \[
      d = 7Kr + 9r + 5d_Q + (4K+1)d_\Gamma + 13K + 31
    \]
    and the vocabulary defined in \cref{def:vocabscot}
    such that for all $w \in \Sigma^*$ the following holds. If $M$ halts on $w$ after $t_M(w)$ steps with output $f_M(w) \in \Sigma^*$ using $s_M(w)$ space, then running $T$ on input $w$ using SCoT (see \cref{def:scot}) generates the segments defined in \cref{def:scottokenseq} if each of these segments has length bounded by $2^r$. 
\end{theorem}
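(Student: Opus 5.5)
The plan is to reuse the construction from the proof of \cref{thm:cotapp} essentially verbatim for everything that happens inside a trace, and to add three new mechanisms: (i) parsing a $\summ\dots\esumm$ prompt so that the first trace token sees the same information as $\einp$ did in the CoT construction; (ii) deciding when the length cap of \cref{def:scottokenseq} has been reached; and (iii) writing the summary $\summary(M,w,t_i)$ token by token. The total budget for all three is $2$ extra heads per layer and no extra depth. The new vocabulary items are $\summ,\esumm$, tape tokens and state tokens. They get embedding flags $\fsumm,\fesumm$ together with registers for the $K$ tape symbols, for $K$ head-marker bits and for the state. A tape token at absolute position $p$ encodes cell $p-1$. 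We therefore write $\bin_r(p-1)$ into $\isposk$ for every tape token and every $k$, exactly as is already done for input tokens, so that tape tokens act as symbol sources for the binary search in the same way input tokens do.

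\textbf{Prompt parsing.} The token $\esumm$ plays the role of $\einp$. It needs the state, which it reads from the preceding state token by attending one position back. It also needs the head positions $n_{t}^k$. For each $k$ it obtains these with one head using queries $(\iconst)$ and keys given by the head-marker bit of tape $k$. The unique marked tape token is attended to, and its $\isposk$ is copied into $\iqposk$. Blanks beyond the summary are handled by the existing ``no match $\Rightarrow \sqcup$'' rule, which is consistent because every cell below $s_t$ is written out explicitly. After this step, the head-position propagation, binary search, transition MLP and positional blocks from the CoT proof run unchanged.

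\textbf{Length cap.} The prompt length $S=|\summary_{i-1}|$ equals the position of $\esumm$ (or $\einp$) plus one. Each trace token retrieves this position with a broadcast head of the same kind as the one used for $\iposoutp$, and then computes its offset within the trace by \cref{lem:fullsubtraction}. The offset must be compared with $3(S-1)$, and since $3(S-1) = 2(S-1)+(S-1)$, this reduces to a shift by one bit followed by an addition. An adder realized in $r$ layers in the style of \cref{lem:fullsubtraction} can be run in parallel with the propagation layers. A comparison ``offset $\geq$ cap'' in ternary form can be done bitwise, most significant bit first, inside the same $r$ layers. A run token that has the cap flag set (and is not halting) then emits $\summ$ instead of another run token or $\pos$. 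This matches the rule that a cap reached inside a positional block is resolved at the next run token.

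\textbf{Writing the summary.} This is the step I expect to be the main obstacle. Let $c$ be the position of the emitted $\summ$. A summary token at position $c+1+m$ must output the tape token for cell $m$ using the configuration at time $t_i$, or the state token once $m = s_{t_i}$. My approach is to treat each summary token exactly like an output token in the CoT construction, where $\iqposone$ holds the distance to $\outp$. The summary token broadcasts $c$, subtracts it to get $m$, and writes $m$ into $\iqposk$ for every $k$. The existing binary search then extracts the latest symbol written at cell $m$ on every tape, with run tokens and tape tokens of the prompt serving as sources. The head markers require the final head positions $n_{t_i}^k$. The $\summ$ token already holds these in $\iqposk$ because it is produced by the final run token, which holds them. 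They are broadcast together with $c$ using one additional head, and each $k$ then needs a bit-equality MLP comparing them with $m$; this is the source of the extra $4Kd_\Gamma+4K$ MLP neurons.

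Termination needs $s_{t_i}=\max(s_{\text{prev}},1+\max_k n^k_{t_i})$. The summary token should emit the state token when $m\ge s_{t_i}$, and $\esumm$ right after the state token. Computing this maximum directly looks awkward. Instead I would use a blank-based test: stop once all tapes read blank at cell $m$ and $m$ is greater than every head position. Since the equality $m=n^k$ is already available, it suffices to note that $m > n^k$ can be detected by a flag that is set at the token where equality occurred and then propagated. A simpler way to get this is to compute $m>n^k$ by the same MSB-first comparison used for the cap. One issue remains. Cells below $s_{\text{prev}}$ that are blank and lie beyond all heads must still be written, because $s_{\text{prev}}$ counts input length. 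So the summary token also compares $m$ against $S-3$, which is the previous summary's cell count, and it outputs the state only when $m\ge S-3$, all $m>n^k$, and all tapes are blank. With this last check, correctness follows by induction over segments, combining the invariants from the CoT proof with the requirement that every position be below $2^r$. The last step is to tally registers, heads and layers against the stated bounds, making sure the new operations overlap layers $3$ to $r+2$ and add no depth.
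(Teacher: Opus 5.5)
\textbf{Main gap: the stopping rule for the final summary is wrong.} You build it on the identity $s_{t_i}=\max(s_{\text{prev}},1+\max_k n^k_{t_i})$, which is false. By \cref{def:configtosummary}, $s_t$ is a running maximum of $1+n^k_{t'}$ over all $t'\le t$. It therefore also counts cells that a head visited earlier in the trace and has since left. Those cells may be blank at time $t_i$ while cells further right are not.

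Your test fires when every tape shows $\sqcup$ at $m$, $m>n^k_{t_i}$ for all $k$, and $m\ge S-3$. It therefore fires too early. Take $K=1$ and $i\ge 2$, and suppose that during trace $i$ the head leaves cells $s_{t_{i-1}}$ and $s_{t_{i-1}}+1$ blank, writes a non-blank symbol at $s_{t_{i-1}}+2$, and is back left of $s_{t_{i-1}}$ at time $t_i$. Then your test fires at $m=s_{t_{i-1}}=S-3$. The emitted summary is shorter than $\summary(M,w,t_i)$ and even drops a non-blank cell. Hence the following segments neither match \cref{def:scottokenseq} nor simulate $M$ correctly.

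The missing idea is to test whether cell $m$ was ever \emph{written}, not what it contains:
\begin{itemize}
\item Every cell visited during trace $i$ before time $t_i$ received a write, possibly of $\sqcup$, from a run token in context.
\item Every cell below $s_{t_{i-1}}$ has a prompt source: a tape token, or an input token on tape $1$.
\end{itemize}
So $\bigvee_k \fextok$ is $1$ exactly on touched cells. Together with the current head cells, these form the prefix $\{0,\dots,s_{t_i}-1\}$. This is why the paper emits another tape token iff $\bigvee_k(\fextok\vee\fnextheadk)$ holds at the current offset. Here $\fnextheadk$ needs only the equality $m=n^k_{t_i}$. The paper obtains it from one $\fextok$-style head per tape that uses $\ihposfinalk$ as key, where that register is kept only at the $\summ$ token. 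No $m>n^k$ comparator, no comparison with $S-3$, and no bit-equality MLP are needed.

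\textbf{Three further steps need repair.}

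First, the emitted $\summ$ token does not ``already hold'' $n^k_{t_i}$. In your scheme its $\iqposk$ holds the offset $0$. So the state and head positions must be copied from the preceding run token into separate registers ($\ifinalstate$ and $\ihposfinalk$ in the paper). This is only possible once head-position propagation has finished, around layer $L_2$. Hence they cannot be broadcast together with $c$, which is needed early for the $r$-layer offset subtraction. Moreover, with $d_v=\max\{r,d_Q,d_\Gamma\}$, broadcasting $Kr$ bits takes $K$ heads, not one.

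Second, segments prompted by a summary contain no $\inp$. Yet the $\fextok$ head, the binary search, the final symbol-extraction head and the broadcast heads all fall back to $\inp$. You must make the first $\summ$ the base token by setting $\finp$ and clearing $\fnotinp$ there. Otherwise a token with $J_i^k=\emptyset$ attends to non-matching tokens and gets $\fextok=1$.

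Third, your length-cap arithmetic (trace offset, shift-and-add for $3(S-1)$, comparator) must saturate rather than wrap when $3(S-1)\ge 2^r$. Together with your extra comparators, it also needs registers that do not fit the stated $d$. That $d$ is exactly the CoT dimension plus the paper's new registers. The paper avoids all this arithmetic:
\begin{itemize}
\item It notes that $i-j\ge 3j\iff i\ge 4j$.
\item It detects only the single position $i=4j$, by bitwise equality with a two-bit shift, requiring $j<2^{r-2}$ to exclude wrap-around.
\item It broadcasts that flag to all later tokens.
\end{itemize}
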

\begin{remark}
  To see how \cref{thm:scot} follows from \cref{thm:scotapp}, consider a space bound $\hat s \in \N$. Set 
  \[r = 2\lceil \frac{1}{2}\log_2 (8(\hat s + 3)) \rceil\;.\]
  Then $r$ is even and for each input $w$ with $s_M(w) \leq \hat s$, \cref{lem:scotlengthbounds} ensures that each segment has length bounded by $2^r$ and hence $T$ produces these segments when running with SCoT on $w$. Again using \cref{lem:scotlengthbounds}, each segment's length (and hence the context size) is bounded by $8 (s_M(w) + 3) = \bigo(s_M(w))$ and the total length of all segments is bounded by $8 t_M(w) + 4 + 2|w| = \bigo(t_M(w) + |w|)$.
  Moreover, the transformer $T$ has ternary weights and ternary activations (in the sense of \cref{def:ternary}) on the full set of sequences arising during SCoT generation; see \cref{lem:assumptions} (\cref{it:ternaryemb,it:ternaryact}).
\end{remark}
\begin{proof}[Proof of \cref{thm:scotapp}]
The construction is obtained by modifying the transformer from \cref{thm:cotapp}. We keep the entire CoT construction (in particular head position propagation, symbol extraction and transition computation) and only add the additional registers/flags needed to (i) continue from a summary, (ii) decide when to write a summary, and (iii) write the summary itself. Write $L_1,L_2,L_3$ for the layer indices from the proof of \cref{thm:cotapp}. All additional operations below can be scheduled in parallel in the stated layers, hence do not increase the depth beyond the CoT construction.

\paragraph{Embeddings.}
We extend the embeddings from \cref{thm:cotapp} as follows.
We add flags $\fsumm,\fesumm$ for $\summ,\esumm$, and extend $\fstate$ and $\ftape$ as type flags for $Q$-tokens and tape tokens.
For state tokens $q\in Q$ we embed $\enc_Q(q)$ into $\istate$ (analogous to run tokens), and for tape tokens $(\tilde \gamma^k)_{k=1}^K\in\vtape$ we embed $\enc_\Gamma(\gamma^k)$ into $\isymbolk$, where $\tilde\gamma^k\in\{\gamma^k,\hat\gamma^k\}$ and we set flags $\fheadk$ indicating $\tilde\gamma^k=\hat\gamma^k$.

\paragraph{Prompt Structure Flags (Layer 1).}
We need to distinguish whether the current segment starts from the input (ending in $\einp$) or from a summary (ending in $\esumm$), and we need to know whether we are currently in the prompt summary or in the final generated summary.
This is done analogously to $\fexistsoutp$ in \cref{thm:cotapp}:
\begin{itemize}
  \item We add two heads in layer $1$ with queries $\iconst$ and keys/values $\feinp$ and $\fesumm$, writing to flags $\fexistseinp$ and $\fexistsesumm$. Hence these flags indicate whether $\einp$ or $\esumm$ occurred in context.
  \item The MLP in layer $1$ sets $\ffinalsumm$ if $\fsumm=1$ and either $\fexistseinp=1$ or $\fexistsesumm=1$. This marks precisely the $\summ$ token initiating the final summary of the segment.
  \item For segments starting with a summary there is no $\inp$ token, but the CoT construction uses $\inp$ as a base token in several places (e.g. in the definition of $\fextok$). Hence, the MLP in layer $1$ sets $\finp$ for the \emph{first} $\summ$ token of the segment, i.e. when $\fsumm=1$ and $\fexistseinp=\fexistsesumm=0$. In the same case it writes $-1$ to $\fnotinp$ so that $\fnotinp$ becomes $0$ on this token (at embedding it was $1$).
  \item The MLP in layer $1$ sets $\ftapeinit$ for tape tokens with $\ftape=1$ and $\fexistseinp=\fexistsesumm=0$ (tape tokens in the prompt summary), and $\ftapefin$ for tape tokens with $\ftape=1$ and $(\fexistseinp=1 \text{ or } \fexistsesumm=1)$ (tape tokens in the final summary).
  \item Lastly, we store the absolute position of the prompt-ending token (either $\einp$ or $\esumm$) in a new register $\ipospromptend$. Concretely, in layer $1$ the MLP copies $\ipos$ to $\ipospromptend$ once when $\feinp=1$ and once when $\fesumm=1$. This is needed for the length cap condition.
\end{itemize}
\paragraph{Continuing from a Summary.}
Tape tokens in the prompt summary already carry their symbol encodings $\isymbolk$, but to make them accessible for later symbol extraction we need their symbol-position registers.
In layer $2$ we therefore:
\begin{itemize}
  \item Decrement $\ipos$ by $1$ and write the result to $\isposk$ whenever $\ftapeinit$ is set (for all $k$), i.e. the tape token at offset $p+1$ after $\summ$ gets $\bin_r(p)$. This is analogous to what is already being done for input tokens, but for all $k$ instead of just $k=1$.
  \item Copy $\ipos$ to $\ipossymk$ whenever $\ftapeinit$ is set (for all $k$), so that the prompt tape tokens can serve as symbol sources in the binary search of \cref{thm:cotapp}.
  \item Broadcast $\ipospromptend$ to all later tokens analogously to $\iposoutp$ in \cref{thm:cotapp}.
\end{itemize}

In layer $3$, the unique $\esumm$ token initializes the simulation state by extracting the head positions and state from the prompt summary:
for each tape $k$, an attention head at layer $3$ attends (only when $\fesumm=1$) to the unique tape token with $\fheadk=1$ and extracts its $\isposk$ into $\iqposk$, and a second head attends to the unique state token (with $\fstate=1$) and extracts its $\istate$.
From this point on, $\esumm$ is treated exactly like $\einp$ in the CoT construction: head positions are propagated through run tokens using $\imovek$, symbols are extracted as before, and run tokens are generated by the same transition MLP.

\paragraph{Initiating the Final Summary.}
In addition to outputting $\outp$ when $\fhalt=1$, a run token should output $\summ$ once the length cap from \cref{def:scottokenseq} is reached. Let $j$ be the position of the prompt-ending token (either $\einp$ or $\esumm$). Then $|\trace_i|\ge 3(|\summary_{i-1}|-1)$ is equivalent to being at an absolute position $i\ge 4j$.
In our $r$-bit positional encoding this is detected via the identity $i = 4j \iff \bin_r^s(i)=\bin_r^{s-2}(j)$ for $s\ge 2$ and $\bin_r^0(i)=\bin_r^1(i)=-1$, together with the additional condition $\bin_r^{r-2}(j)=\bin_r^{r-1}(j)=-1$ (equivalently $j < 2^{r-2}$), ruling out wrap-around.
Concretely:
\begin{itemize}
  \item In layer $2$, for each $s\in\{0,\dots,r-3\}$ we set a flag $\fbitequal^s$ to $1$ iff $\ipos[s+2]$ equals $\ipospromptend[s]$.
  \item In layer $3$ we set $\flengthcap$ to $1$ iff all $\fbitequal^s$ are $1$, $\ipos[0]=\ipos[1]=-1$ and $\ipospromptend[r-2]=\ipospromptend[r-1]=-1$.
  \item In layer $4$ an attention head propagates $\flengthcap$ to all later tokens analogously to $\iposoutp$ in \cref{thm:cotapp}.
  \item The MLP in layer $4$ sets $\fwritesumm$ precisely if $\frun=1$, $\flengthcap=1$ and $\fhalt=0$.
	  \item In the output logic of \cref{thm:cotapp} we add the case that $\fwritesumm$ triggers output $\summ$, and we suppress the other possible emissions: in the next layer we zero $\fwritepos$ when $\fwritesumm=1$, and in the layer where $\inewstate,\inewsymk,\inewmovek$ are already zeroed for $\fhalt$ and $\fwritepos$ we additionally zero them for $\fwritesumm$ and for $\fstate$.
\end{itemize}

\paragraph{Writing the Final Summary.}
Once the $\summ$ token is emitted (marked by $\ffinalsumm$), we generate the tape tokens and state token of $\summary(M,w,t_i)$.
The main idea is to treat the final summary like the output generation in \cref{thm:cotapp}, except that we write tape cells instead of output symbols and additionally mark head positions.
\begin{itemize}
  \item (Offset to the final-summary $\summ$ token.) In layer $2$ we copy $\ipos$ at the $\ffinalsumm$ token to a register $\ipossumm$ and broadcast it in layer $3$ analogously to $\iposoutp$ in \cref{thm:cotapp}. Then (for the $\ffinalsumm$ token and all tape tokens with $\ftapefin=1$) we compute $\bin_r(i-i_{\summ})$ by copying $\ipos$ into $\iqposk$ and subtracting $\ipossumm$ in-place using \cref{lem:fullsubtraction} starting at layer $4$. Thus these tokens store the tape offset $p=i-i_{\summ}$ in $\iqposk$ for all $k$.
  \item (Extract final state and head positions.) In layer $L_2$ the $\ffinalsumm$ token attends one position back (using $\iposback$) to extract the state $\istate$ and head positions $\iqposk$ of the preceding run token into registers $\ifinalstate$ and $\ihposfinalk$. In the next layer we zero these registers for all tokens except the $\ffinalsumm$ token. Lastly, we broadcast $\ifinalstate$ to all subsequent tokens in layer $L_2+2$ analogously to $\iposoutp$.
  \item (Marking heads.) In layer $L_2+2$ we set $\fnextheadk$ for each $k$ via an attention head analogous to $\fextok$: $\ihposfinalk$ remains non-zero only at the $\ffinalsumm$ token and serves as the key source, so a tape token attends to the $\ffinalsumm$ token iff its current offset $\iqposk$ equals $\ihposfinalk$, and otherwise attends to the base token and outputs $0$.
  \item (Whether to output another tape token.) In layer $L_2+3$ we set a flag $\fnexttape$ (for the $\ffinalsumm$ token and all tape tokens with $\ftapefin=1$) indicating whether we should output another tape token, namely the OR over all $\fextok$ and $\fnextheadk$ flags.
  \item (Emitting tape tokens vs.\ the state token.) If $\fnexttape=1$, we copy the extracted tape symbols $\isymexk$ to registers $\inextsymk$ and map $\fnextheadk$ to $\inextheadk \in \{-1,1\}$ in layer $L_3+1$ so that the unembedding outputs the next tape token from $\vtape$. If $\fnexttape=0$, we copy $\ifinalstate$ to $\ifinalstateout$ in layer $L_2+4$ so that the unembedding outputs the corresponding state token. Finally, a state token always outputs $\esumm$.
\end{itemize}

\paragraph{Dimensions.}
The depth $L = \frac{5r}{2} + 8$ and key/value dimensions $d_k = 4r - 1$, $d_v = \max\{r, d_Q, d_\Gamma\}$ are unchanged from the CoT construction.
The number of heads per layer increases from $H = 3K$ to $H = 3K + 2$ because layer $3$ already uses $2K$ heads for $\epos$ head-position extraction (from \cref{thm:cotapp}) and we add $K$ heads to extract head positions and $1$ head to extract the state from the prompt summary, plus $1$ head to broadcast $\ipossumm$.

For $\dff$, the first term increases from $18r + 2$ to $22r + 11$ due to the additional MLP operations in layers $1$--$4$ for segment structure detection and length cap computation (in particular, writing $\ipospromptend$ uses two $r$-bit conditional copies, one at $\einp$ and one at $\esumm$).
The second term increases from $14Kr + 2r$ to $18Kr + 2r + 1$ because the full subtraction operations for computing summary offsets (at layers $4$ to $4+r-1$) overlap with the head propagation layers (at layers $L_1 + 1$ to $L_2 - 1$), adding $4Kr$ neurons to the head propagation layers.
The third term gains $4Kd_\Gamma + 4K$ neurons for the tape token symbol extraction in the final summary.

The embedding dimension $d$ increases due to additional registers: $\ipospromptend$ and $\ipossumm$ ($r$ bits each), $\ifinalstate$ and $\ifinalstateout$ ($d_Q$ bits each), $\ihposfinalk$ ($Kr$ bits), $\inextsymk$ ($Kd_\Gamma$ bits), and $\inextheadk$ ($K$ bits). Additional flags are $\fsumm$, $\fesumm$, $\fexistseinp$, $\fexistsesumm$, $\ftape$, $\fstate$, $\ffinalsumm$, $\ftapeinit$, $\ftapefin$, $\flengthcap$, $\fwritesumm$, $\fnexttape$, $\fheadk$ ($K$ flags), $\fnextheadk$ ($K$ flags), and $\fbitequal^s$ for $s \in \{0,\ldots,r-3\}$.
\end{proof}

\subsection{Compiling Turing Machines into Transformers}\label{appsec:compilation}
\paragraph{Compilation.}
To validate the correctness of the Turing machine constructions, we implement them in python. In particular, our code release implements a simple framework for constructing transformers by specifying registers and flags and applying basic reusable operations to manipulate them, fully analogous to how the constructions above work, without the need to manually specify parameters once the basic operations are implemented. This is used to implement the CoT and SCoT construction, i.e. write functions that compile a turing machine (and a length bound) into a PyTorch transformer simulating it with either CoT or SCoT.

\paragraph{Validation.} For testing, we sample random Turing machines with a small number of states and tape symbols with all transitions chosen randomly, and corresponding short inputs. Each Turing machine is run on its input and, if it halts after a set maximum number of steps with an output in $\Sigma^*$, the corresponding transformer from \cref{thm:cotapp,thm:scotapp} is constructed with $r$ chosen randomly from $\{r_{\min}, r_{\min}+2, \dots, r_{\min}+8\}$ where $r_{\min}$ is the minimum even $r$ provided by the theorems. Finally, it is checked whether the transformer correctly generates the token sequences from \cref{def:cottokenseq,def:scottokenseq}. We ran this program for 10000 random Turing machine input pairs, where around $70-80\%$ are skipped due to not halting or halting with an output containing blank symbols. Details can be found in the provided code.

\section{Additional Material for \cref{sec:softmax}}\label{app:softmax}
The goal of this section is to analyze the conversion of hardmax transformers into $c$-scaled softmax transformers, where query and key projections have been scaled by a constant $c > 0$ bringing the softmax attention weights closer to the hardmax ones. Additionally, the softmax transformer employs different rounding scenarios, whose error introduction and propagation will be analyzed. The goal in the end is to choose $c$ and the rounding precisions sufficiently high so that the softmax transformer generates the same tokens as the hardmax version. It should be noted that hardmax to softmax attention conversion via such scaling has been analyzed before, e.g. in \cite{yang2025simulatinghardattentionusing} and \cite{liu2023transformerslearnshortcutsautomata}, but without the rounding analysis and the adaptation to the exact properties of our hardmax constructions.

\subsection{Tools}
\begin{definition}[Separation]\label{def:separation}
  Let $s_0,\dots,s_{n-1} \in \R$ and let $J \subset \{0,\ldots,n-1\}$ be non-empty. Then we define the \emph{separation with respect to $J$} of the $s_j$ as
  $$\sep_J(s_0,\dots,s_{n-1}) = \min_{j \in J} s_j - \max_{j\not\in J} s_j$$
  where $\max \emptyset = -\infty$, hence if $J = \{0,\ldots,n-1\}$ we have $\sep_J(s_0,\dots,s_{n-1}) = \infty$. Furthermore, let $s^* = \max s_j$ and $J^* = \{j \mid s_j = \max s_j\}$. Then we define the \emph{separation} of the $s_j$ as
  $$\sep(s_0,\dots,s_{n-1}) = \sep_{J^*}(s_0,\dots,s_{n-1})\;.$$
\end{definition}

\begin{lemma}\label{lem:offjmass}
  Let $s_0,\dots,s_{n-1} \in \R$, $(\alpha_j)_{j=0}^{n-1} = \softmax((s_j)_{j=0}^{n-1})$ and $J \subset \{0,\ldots,n-1\}$. Let $\beta = \sep_J(s_0,\dots,s_{n-1})$. Then
  \[\sum_{j\not\in J} \alpha_j \leq ne^{-\beta}\;.\]
\end{lemma}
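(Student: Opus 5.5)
Since $J$ is non-empty, I will bound each off-$J$ weight $\alpha_j$ by comparing $e^{s_j}$ with a single term of the denominator indexed by $J$. Fix some $j^\circ \in J$. If $J = \{0,\dots,n-1\}$, the left-hand side is an empty sum and the claim is trivial; indeed $\beta=\infty$ then, so the right-hand side is $0$ and the inequality reads $0\le 0$. Otherwise $\beta$ is finite.

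For any $j \notin J$ the definition of $\sep_J$ gives $s_{j^\circ} - s_j \ge \min_{i\in J} s_i - \max_{i\notin J} s_i = \beta$. Since all summands of the softmax denominator are positive, dropping all but the $j^\circ$ term gives
\[
\alpha_j = \frac{e^{s_j}}{\sum_{l=0}^{n-1} e^{s_l}} \le \frac{e^{s_j}}{e^{s_{j^\circ}}} = e^{-(s_{j^\circ}-s_j)} \le e^{-\beta}.
\]
Summing over the at most $n$ indices $j \notin J$ yields $\sum_{j\notin J}\alpha_j \le (n-|J|)e^{-\beta} \le n e^{-\beta}$.

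There is no real obstacle. The only points needing care are the degenerate case $J$ equal to the full index set, handled by the convention $\max\emptyset=-\infty$, and the fact that $\beta$ may be negative if $J$ is not the argmax set. In that case the bound $\alpha_j \le e^{-\beta}$ still holds by the same computation, and the conclusion is merely weak rather than false.
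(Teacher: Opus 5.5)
Your proposal is correct and matches the paper's proof: both fix a single index in $J$, bound each $\alpha_j$ for $j\notin J$ by $e^{s_j - s_{j^\circ}} \le e^{-\beta}$ by dropping all other terms of the denominator, and then sum to get $(n-|J|)e^{-\beta} \le n e^{-\beta}$. Your extra treatment of the cases $J=\{0,\dots,n-1\}$ (where $\beta=\infty$) and negative $\beta$ is accurate, though the paper's proof does not address them explicitly.
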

\begin{proof}
  Let $j^* \in J$ arbitrary and $j \not \in J$. Then 
  $$\alpha_j = \frac{e^{s_j}}{\sum_l e^{s_l}} \leq \frac{e^{s_j}}{e^{s_{j^*}}} = e^{s_j - s_{j^*}} \leq e^{-\beta}$$
  and hence 
  $$\sum_{j\not\in J} \alpha_j \leq (n-|J|) e^{-\beta} \leq n e^{-\beta}$$
\end{proof}

The following lemma gives a bound on the difference between hardmax and softmax, which could be applied in a very general setting to bound the difference between hardmax and softmax attention and derive a theorem like \cref{thm:hardtosoftdenoising}. Note however that in this section we only use the more specialized statement above to get tighter bounds.
\begin{lemma}\label{lem:hardvssoft}
  Let $s_0,\dots, s_{n-1} \in \R$ and let $\beta = \sep(s_0,\dots,s_{n-1})$. Then
  \[
  \|\hardmax((s_j)_{j=0}^{n-1}) - \softmax((s_j)_{j=0}^{n-1})\|_1 \leq 2ne^{-\beta}
  \]
  where $\|x\|_1 = \sum_j|x_j|$ is the usual $1$-norm.
\end{lemma}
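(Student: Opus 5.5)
I will split the $\ell_1$ difference according to the argmax set $J^* = \{j \mid s_j = s^*\}$, writing $h = \hardmax((s_j)_{j=0}^{n-1})$ and $\alpha = \softmax((s_j)_{j=0}^{n-1})$. If $J^* = \{0,\dots,n-1\}$, then all scores are equal, both vectors are uniform, and the bound is trivial; so assume $J^* \neq \{0,\dots,n-1\}$, in which case $\beta = \sep_{J^*}(s_0,\dots,s_{n-1})$ is finite. Then
\[
\|h-\alpha\|_1 = \sum_{j\in J^*} |h_j-\alpha_j| + \sum_{j\notin J^*} \alpha_j,
\]
since $h_j = 0$ off $J^*$. \cref{lem:offjmass} applied with $J = J^*$ directly bounds the second sum by $ne^{-\beta}$.

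For the first sum, the key observation is that all $j \in J^*$ share the same score $s^*$, so softmax assigns them equal weights $\alpha_j = \frac{1}{|J^*|}\bigl(1 - \sum_{l\notin J^*}\alpha_l\bigr)$, while $h_j = \frac{1}{|J^*|}$. Hence each term satisfies $|h_j - \alpha_j| = \frac{1}{|J^*|}\sum_{l\notin J^*}\alpha_l$. Summing over the $|J^*|$ indices gives exactly $\sum_{l\notin J^*}\alpha_l$, which is again at most $ne^{-\beta}$ by \cref{lem:offjmass}. Adding the two contributions yields the claimed $2ne^{-\beta}$.

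There is no serious obstacle. The only point needing care is the equal-weights observation on $J^*$: it is what makes the in-$J^*$ error equal to the off-$J^*$ mass, rather than relying on a cruder triangle inequality. Even without it, the total-mass argument suffices: both $h$ and $\alpha$ sum to $1$, so $\sum_{j\in J^*}(h_j-\alpha_j) = \sum_{j\notin J^*}\alpha_j$, and each difference $h_j - \alpha_j$ has the same sign.
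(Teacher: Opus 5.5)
Your proposal is correct and follows essentially the paper's proof. Both split the $\ell_1$ norm over $J^*$ and its complement, identify the in-$J^*$ error with the off-$J^*$ softmax mass, and bound that mass twice via \cref{lem:offjmass}. The paper removes the absolute values using $\alpha_j \le \frac{1}{|J|}$ for $j \in J$ and then applies the total-mass identity, which is exactly your fallback remark; your equal-weights observation reaches the same identity directly.
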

\begin{proof}
  Let $J = \{j \mid s_j = \max_l s_l\}$, i.e. $\beta = \sep_J(s_0,\dots,s_{n-1})$. Let $(\alpha_j)_{j=0}^{n-1} = \softmax((s_j)_{j=0}^{n-1})$ and $(\gamma_j)_{j=0}^{n-1} = \hardmax((s_j)_{j=0}^{n-1})$. Then 
  \begin{align*}
  \sum_j |\alpha_j - \gamma_j| &= \sum_{j\in J} |\alpha_j - \gamma_j| + \sum_{j \not\in J} |\alpha_j - \gamma_j|\\
  &= \sum_{j \in J} |\alpha_j - \frac{1}{|J|}| + \sum_{j\not\in J} \alpha_j \\
  &=  \sum_{j \in J} (\frac{1}{|J|} - \alpha_j) + \sum_{j\not\in J} \alpha_j\\
  &= 1 - (1 - \sum_{j\not\in J} \alpha_j) +  \sum_{j\not\in J} \alpha_j\\
  &\leq 2 ne^{-\beta}
  \end{align*}
  where in the third step we used that for $j \in J$
  $$\alpha_j \leq \frac{e^{s_j}}{\sum_{j' \in J} e^{s_{j'}}} = \frac{1}{|J|}\;,$$ in the fourth step we used that the $\alpha_j$ sum to $1$ and in the last step applied \cref{lem:offjmass}.
\end{proof}

\subsection{Error Propagation in a Single Transformer Layer}\label{appsubsec:softmaxlemmas}
We will first analyze the conversion of a single hardmax transformer layer into a softmax variant, where rounding potentially introduces multiple errors at multiple points and a propagated input error might be present. 

Here we set some notation for a single hardmax transformer layer acting on a single input, and its softmax variant with several (rounding) errors acting on a perturbed input.
Note that we will later view the hardmax transformers as already incorporating the $c$-scaling, which does not alter their behavior, hence there is no reference to $c$ for now. 

\begin{definition}[Hardmax to Softmax Layer Setting]\label{def:setting}
    Let $d, \dff, d_k, d_v, H \in \N$ and consider parameters of a single transformer layer:
    \[
      \theta = ((W_Q^h, W_K^h, W_V^h, W_O^h)_{h=1}^H, (W_1,b,W_2)) \in \Theta^{\operatorname{TL}}(d,\dff,d_k,d_v,H)
    \]
    using the notation from \cref{def:transformerlayer}. Let $x^{(\ell-1)} \in (\R^d)^n$ be an input\footnote{We only consider a single layer, hence $\ell$ has no value and is only used for notational clarity to distinguish activations before attention, after attention and after the MLP. The layer index is omitted in intermediate activations.} and $\tilde x^{(\ell-1)} \in (\R^d)^n$ a perturbed input. 

    We define the activations of the hardmax transformer layer with parameters $\theta$ as usual:
    \[
    q_i^h = W_Q^h x_i^{(\ell-1)},\qquad k_j^h = W_K^h x_j^{(\ell-1)},\qquad v_j^h = W_V^h x_j^{(\ell-1)},
    \]
    \[
    (s_{ij}^h)_{j=0}^i = (\frac{\langle q_i^h, k_j^h\rangle}{\sqrt{d_k}})_{j=0}^i,\qquad (\alpha_{ij}^h)_{j\le i} = \hardmax\bigl((s_{ij}^h)_{j\le i}\bigr),
    \]
    \[
    o_i^h = \sum_{j\le i} \alpha_{ij}^h v_j^h,\qquad y_i = \sum_{h=1}^H W_O^h o_i^h,\qquad x_i^{(\ell-0.5)} = x_i^{(\ell-1)} + y_i,
    \]
    \[
    a_i = \relu(W_1 x_i^{(\ell-0.5)} + b), \qquad z_i = W_2 a_i,\qquad x_i^{(\ell)} = x_i^{(\ell-0.5)} + z_i.
    \]

    Similarly we define the activations of the rounded softmax variant on the perturbed input. Let $\fact, \fatt$ be finite sets (e.g. floating point sets from \cref{def:floats}) or $\R$, where $[\cdot]_\float$ denotes (coordinate-wise) rounding to a set $\float$ (see \cref{def:floats}) and rounding to $\R$ does nothing. Exact arithmetic yields activations like e.g. $\bar q_i$, rounding then obtains $\tilde q_i$.
    \[
    \bar q_i^h = W_Q^h \tilde x_i^{(\ell-1)}, \tilde q_i^h = [\bar q_i^h]_{\fact},\qquad \bar k_j^h = W_K^h \tilde x_j^{(\ell-1)}, \tilde k_j^h = [\bar k_j^h]_{\fact},\qquad \bar v_j^h = W_V^h \tilde x_j^{(\ell-1)}, \tilde v_j^h = [\bar v_j^h]_{\fact},
    \]
    \[
    (\tilde s_{ij}^h)_{j=0}^i = (\tfrac{\langle \tilde q_i^h, \tilde k_j^h\rangle}{\sqrt{d_k}})_{j=0}^i,\qquad (\bar \alpha_{ij}^h)_{j\le i} = \softmax\bigl((\tilde s_{ij}^h)_{j\le i}\bigr), \tilde \alpha_{ij}^h = [\bar \alpha_{ij}^h]_{\fatt},
    \]
    \[
    \bar o_i^h = \sum_{j\le i} \tilde \alpha_{ij}^h \tilde v_j^h, \tilde o_i^h = [\bar o_i^h]_{\fact},\qquad \bar y_i = \sum_{h=1}^H W_O^h \tilde o_i^h, \tilde y_i = [\bar y_i]_{\fact},
    \]
    \[
    \bar x_i^{(\ell-0.5)} = \tilde x_i^{(\ell-1)} + \tilde y_i,\qquad \tilde x_i^{(\ell-0.5)} = [\bar x_i^{(\ell-0.5)}]_{\fact},
    \]
    \[
    \bar a_i = \relu(W_1 \tilde x_i^{(\ell-0.5)} + b), \tilde a_i = [\bar a_i]_{\fact},\qquad \bar z_i = W_2 \tilde a_i, \tilde z_i = [\bar z_i]_{\fact},\qquad \bar x_i^{(\ell)} = \tilde x_i^{(\ell-0.5)} + \tilde z_i,\qquad \tilde x_i^{(\ell)} = [\bar x_i^{(\ell)}]_{\fact}.
    \]

    Next, we define the maximum errors between $\bar\cdot$, $\tilde\cdot$ and their unperturbed counterparts. For the input we set
    \[
    \Delta_{x^{(\ell-1)}} = \max_i \|\tilde x_i^{(\ell-1)} - x_i^{(\ell-1)}\|_\infty\;.
    \]
    For any vector-valued activation $u \in \{q,k,v,o,y,x^{(\ell-0.5)},a,z,x^{(\ell)}\}$ we define
    \[
      \bar \Delta_u = \max_{\alpha} \|\bar u_\alpha - u_\alpha\|_\infty,\qquad \Delta_u = \max_{\alpha} \|\tilde u_\alpha - u_\alpha\|_\infty,
    \]
    where the maximum ranges over all indices of $u$ (heads and positions when present), e.g.
    \[
      \bar \Delta_q = \max_{h,i} \|\bar q_i^h - q_i^h\|_\infty\;.
    \]
    Moreover, we set
    \[
    \Delta_s = \max_{h,i,j\leq i} |\tilde s_{ij}^h - s_{ij}^h|,\qquad \bar \Delta_\alpha = \max_{h,i} \sum_{j\leq i} |\bar \alpha_{ij}^h - \alpha_{ij}^h|,\qquad \Delta_\alpha = \max_{h,i} \sum_{j\leq i} |\tilde \alpha_{ij}^h - \alpha_{ij}^h|
    \]
    and define the total attention-weight rounding error as
    \[
      \delta_\alpha^{\mathrm{round}} = \max_{h,i} \sum_{j\leq i}\bigl|[\bar \alpha_{ij}^h]_{\fatt} - \bar \alpha_{ij}^h\bigr|\;.
    \]
    
    Moreover we define
    $$M = \max_i \|x_i^{(\ell-1)}\|_\infty$$
    and 
    \[
    \begin{gathered}
    L_Q := \max_{h\in\{1,\ldots,H\}}\|W_Q^h\|_\infty,\qquad
    L_K := \max_{h\in\{1,\ldots,H\}}\|W_K^h\|_\infty,\\
    L_V := \max_{h\in\{1,\ldots,H\}}\|W_V^h\|_\infty,\qquad
    L_O := \max_{h\in\{1,\ldots,H\}}\|W_O^h\|_\infty\;.
    \end{gathered}
    \]
\end{definition}

\begin{remark}\label{rem:roundingpoints}
The setting above defines the points where rounding is applied in our analyses. In particular, queries, keys, values, hidden representations ($x^{(0)}$, $x^{(0.5)}$, $x^{(1)}$, $\dots$), the outputs of each attention head before projecting with $W_O^h$, the output of each multihead attention, the hidden activations inside each MLP and the output of each MLP are rounded to $\fact$ while the attention weights are rounded to $\fatt$. While there does not appear to be a single standard rounding scheme, running a transformer or other neural network at some low precision (say $16$ bit) usually means that large matrix multiplications inside attention and the MLP have $16$ bit inputs and $32$ bit accumulation and outputs \cite{micikevicius2017mixedprecision}. Applying rounding to all intermediate activations before being used in large matrix multiplications precisely yields the rounding points specified in \cref{def:setting}, which is the motivation for this choice. In this regime, the attention scores $s_{ij}$ are not rounded as they are only fed into the softmax, and this seems to be the case as well in fused attention implementations \cite{dao2022flashattentionfastmemoryefficientexact,dao2023flashattention2fasterattentionbetter}. Furthermore, we do not include errors from rounding to higher precision (e.g. to $32$-bit during accumulation) and errors made when applying softmax, as these are typically small compared to errors from rounding to $16$-bit precision or lower.
However, we note that such errors and in fact any general perturbations to intermediate activations could be analyzed and bounded with the same techniques used in this section leading to slightly different statements. 
\end{remark}

Next, we show upper bounds for the errors in \cref{def:setting}. First, we bound all the pre-rounding errors $\bar \Delta_{\dots}$ from the post-rounding errors $\Delta_{\dots}$ of their inputs, then consider different rounding scenarios to bound the rounding errors.
\paragraph{Pre-rounding Errors}
\begin{lemma}\label{lem:qkvprojerror}
    Consider the setting of \cref{def:setting}. Then 
    \[
      \bar \Delta_q \leq L_Q \Delta_{x^{(\ell-1)}},\qquad
      \bar \Delta_k \leq L_K \Delta_{x^{(\ell-1)}},\qquad
      \bar \Delta_v \leq L_V \Delta_{x^{(\ell-1)}}\;.
    \]
\end{lemma}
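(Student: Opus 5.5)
The argument is linearity of the projections together with the definition of $\|\cdot\|_\infty$ as the induced operator norm (maximum absolute row sum), applied entrywise in the setting of \cref{def:setting}. Fix a head $h$ and a position $i$. Since $q_i^h = W_Q^h x_i^{(\ell-1)}$ and $\bar q_i^h = W_Q^h \tilde x_i^{(\ell-1)}$ are computed in exact arithmetic from the (already rounded) perturbed input, their difference is
\[
\bar q_i^h - q_i^h = W_Q^h\bigl(\tilde x_i^{(\ell-1)} - x_i^{(\ell-1)}\bigr).
\]

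The next step is the standard bound $\|Av\|_\infty \le \|A\|_\infty \|v\|_\infty$, where $\|A\|_\infty$ is read as the induced matrix norm, i.e.\ the maximum over rows of $\sum_m |A_{lm}|$. This gives
\[
\|\bar q_i^h - q_i^h\|_\infty \le \|W_Q^h\|_\infty \,\|\tilde x_i^{(\ell-1)} - x_i^{(\ell-1)}\|_\infty \le L_Q\,\Delta_{x^{(\ell-1)}},
\]
using the definitions of $L_Q$ and $\Delta_{x^{(\ell-1)}}$ as maxima over heads and positions. Taking the maximum over $h$ and $i$ yields $\bar\Delta_q \le L_Q\Delta_{x^{(\ell-1)}}$. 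The bounds for keys (indexed by $j$) and values follow identically with $W_K^h, L_K$ and $W_V^h, L_V$.

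There is no genuine obstacle. The only point to check is the interpretation of $\|W\|_\infty$ in \cref{def:setting}: it must be the induced $\infty\to\infty$ operator norm rather than the entrywise maximum. If the paper instead intends the entrywise maximum, the bound would pick up a factor of $d$ (the number of columns). I would state the operator-norm reading explicitly at the start of the proof.
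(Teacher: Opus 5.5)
Your proposal is correct and is essentially the paper's proof: write $\bar q_i^h - q_i^h = W_Q^h(\tilde x_i^{(\ell-1)} - x_i^{(\ell-1)})$, apply $\|Av\|_\infty \le \|A\|_\infty\|v\|_\infty$, take the maximum over heads and positions, and treat keys and values analogously. Your operator-norm reading is the one the paper uses: it writes $\|A\|_\infty = \max_i \sum_j |A_{ij}|$ in the proof of \cref{lem:assumptions}, so no extra factor of $d$ arises.
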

\begin{proof}
    It holds that 
    \[
      \|\bar q_i^h - q_i^h \|_\infty
      = \|W_Q^h(\tilde x_i^{(\ell-1)} - x_i^{(\ell-1)})\|_\infty
      \leq \|W_Q^h\|_\infty \|\tilde x_i^{(\ell-1)} - x_i^{(\ell-1)}\|_\infty
      \leq L_Q \Delta_{x^{(\ell-1)}}\;.
    \]
    Taking the maximum over $h,i$ yields the statement for $q$. The bounds for $k$ and $v$ are analogous.
\end{proof}

\begin{lemma}\label{lem:scoreserror}
    Consider the setting of \cref{def:setting}. Then 
    \[
    \Delta_s \leq \sqrt{d_k}\Bigl(\Delta_q (L_K M + \Delta_k) + \Delta_k L_Q M \Bigr)\;.
    \]
\end{lemma}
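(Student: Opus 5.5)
We bound the score error by a direct algebraic decomposition of the bilinear form. Fix a head $h$ and positions $j\le i$. Since $\tilde s_{ij}^h-s_{ij}^h=\frac{1}{\sqrt{d_k}}\bigl(\langle\tilde q_i^h,\tilde k_j^h\rangle-\langle q_i^h,k_j^h\rangle\bigr)$, we write the difference of inner products via the identity
\[
\langle\tilde q,\tilde k\rangle-\langle q,k\rangle=\langle\tilde q-q,\,k\rangle+\langle\tilde q-q,\,\tilde k-k\rangle+\langle q,\,\tilde k-k\rangle,
\]
with $q=q_i^h$, $\tilde q=\tilde q_i^h$, $k=k_j^h$, $\tilde k=\tilde k_j^h$. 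We then bound each of the three terms separately using Hölder's inequality in the form $|\langle u,w\rangle|\le d_k\|u\|_\infty\|w\|_\infty$ for $u,w\in\R^{d_k}$.

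Next we need the $\infty$-norms of the unperturbed queries and keys. We use $\|q_i^h\|_\infty=\|W_Q^hx_i^{(\ell-1)}\|_\infty\le\|W_Q^h\|_\infty\|x_i^{(\ell-1)}\|_\infty\le L_QM$, where $\|\cdot\|_\infty$ on matrices is the induced operator norm, consistent with its use in \cref{lem:qkvprojerror}. Analogously, $\|k_j^h\|_\infty\le L_KM$. By definition we also have $\|\tilde q_i^h-q_i^h\|_\infty\le\Delta_q$ and $\|\tilde k_j^h-k_j^h\|_\infty\le\Delta_k$.

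Combining these estimates, the three terms are bounded by $d_k\Delta_qL_KM$, $d_k\Delta_q\Delta_k$, and $d_kL_QM\Delta_k$, respectively. Dividing by $\sqrt{d_k}$ gives
\[
|\tilde s_{ij}^h-s_{ij}^h|\le\sqrt{d_k}\bigl(\Delta_q(L_KM+\Delta_k)+\Delta_kL_QM\bigr).
\]
Taking the maximum over $h$, $i$, and $j\le i$ yields the claimed bound.

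The argument has no real obstacle. The only point needing care is that the factor $d_k$ from Hölder's inequality, combined with the $1/\sqrt{d_k}$ score normalization, produces exactly the prefactor $\sqrt{d_k}$ in the statement. One must also use the post-rounding errors $\Delta_q,\Delta_k$ rather than $\bar\Delta_q,\bar\Delta_k$, since the scores in \cref{def:setting} are computed from the rounded queries and keys.
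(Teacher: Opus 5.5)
Your proof is correct and is essentially the paper's argument. The paper uses the two-term split $\langle \tilde q_i^h-q_i^h,\tilde k_j^h\rangle+\langle q_i^h,\tilde k_j^h-k_j^h\rangle$ and then bounds $\|\tilde k_j^h\|_\infty\le L_KM+\Delta_k$ by the triangle inequality, which is your three-term decomposition with the first two terms merged, followed by the same H\"older estimate and operator-norm bound $\|q_i^h\|_\infty\le L_QM$.
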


\begin{proof}
  It holds that
  \[
  |\tilde s_{ij}^h - s_{ij}^h|
  = \frac{1}{\sqrt{d_k}}|\langle \tilde q_i^h, \tilde k_j^h\rangle - \langle q_i^h, k_j^h\rangle|
  \leq \frac{1}{\sqrt{d_k}}\Bigl(|\langle \tilde q_i^h - q_i^h, \tilde k_j^h \rangle| + |\langle q_i^h, \tilde k_j^h - k_j^h\rangle|\Bigr)
  \leq \sqrt{d_k}\Delta_q \|\tilde k_j^h\|_\infty + \sqrt{d_k}\|q_i^h\|_\infty \Delta_k\;.
  \]
  where we used $|\langle a,b\rangle| \leq d_k \|a\|_\infty \|b\|_\infty$ for $a,b \in \R^{d_k}$.
  Furthermore, 
  \[
  \|q_i^h\|_\infty \leq L_Q \|x_i^{(\ell-1)}\|_\infty \leq L_Q M
  \]
  and 
  \[
  \|\tilde k_j^h\|_\infty \leq \|k_j^h\|_\infty + \|\tilde k_j^h - k_j^h\|_\infty \leq L_K M + \Delta_k\;,
  \]
  taking the maximum over $h,i,j\leq i$ yields the assertion.
\end{proof}

\begin{lemma}\label{lem:headoutputerror}
    Consider the setting from \cref{def:setting} and let 
    $$\beta := \min_{h,i} \sep((s_{ij}^h)_{j=0}^i)$$
    be the minimal separation of the unperturbed scores (see \cref{def:separation}).
    Assume that values are \emph{tie-invariant}: for each $h,i$ let $$J_i^h := \{j \leq i \mid s_{ij}^h = \max_{l\le i} s_{il}^h\}$$
    be the set of indices that position $i$ attends to in the hardmax attention. Then we assume that $v^h_j = v^h_{j'}$ for all $j,j' \in J_i^h$, i.e. the values are the same for all positions being attended to. This is a slight generalization of the assumption of maxima being unique in the hardmax attention. Then 
  \[
  \bar \Delta_o \leq (1 + \delta_\alpha^{\mathrm{round}})\Delta_v  + \bigl(2 n e^{-(\beta - 2 \Delta_s)} + \delta_\alpha^{\mathrm{round}}\bigr)L_V M
  \]
\end{lemma}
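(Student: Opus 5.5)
Fix a head $h$ and a position $i$, write $J = J_i^h$, and let $v^* := v_j^h$ for $j \in J$; this is well defined by tie-invariance. Since $\sum_{j\le i}\alpha_{ij}^h = 1$ and $\alpha_{ij}^h = 0$ off $J$, the hardmax output is exactly $o_i^h = v^*$. The plan is to split the perturbed output as
\[
\bar o_i^h - o_i^h = \sum_{j\le i} \tilde\alpha_{ij}^h(\tilde v_j^h - v_j^h) + \sum_{j\le i}\tilde\alpha_{ij}^h v_j^h - v^*.
\]
Here $\tilde\alpha_{ij}^h \ge 0$, because rounding a nonnegative number to a symmetric float set stays nonnegative. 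We also have $\sum_j\tilde\alpha_{ij}^h \le 1+\delta_\alpha^{\mathrm{round}}$, since the $\bar\alpha$ sum to $1$ and the total rounding error is at most $\delta_\alpha^{\mathrm{round}}$. So the first term is bounded in $\infty$-norm by $(1+\delta_\alpha^{\mathrm{round}})\Delta_v$.

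For the second term I would use $\sum_j \bar\alpha_{ij}^h = 1$ to write $v^* = \sum_j \bar\alpha_{ij}^h v^*$. This gives
\[
\sum_{j}\tilde\alpha_{ij}^h v_j^h - v^* = \sum_j (\tilde\alpha_{ij}^h - \bar\alpha_{ij}^h) v_j^h + \sum_{j\notin J}\bar\alpha_{ij}^h (v_j^h - v^*),
\]
because the $j\in J$ contributions to the last sum vanish. The first piece is at most $\delta_\alpha^{\mathrm{round}} \max_j\|v_j^h\|_\infty \le \delta_\alpha^{\mathrm{round}} L_V M$. For the second piece I would bound $\|v_j^h - v^*\|_\infty$ by $2L_VM$ and the mass $\sum_{j\notin J}\bar\alpha_{ij}^h$ via \cref{lem:offjmass}. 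That produces $2L_VM\, n e^{-(\beta-2\Delta_s)}$, which is the claimed coefficient.

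To apply \cref{lem:offjmass}, I need a lower bound on the separation of the perturbed scores. Every score moves by at most $\Delta_s$, so $\sep_J((\tilde s_{ij}^h)_j) \ge \sep_J((s_{ij}^h)_j) - 2\Delta_s \ge \beta - 2\Delta_s$, using that $J$ is the argmax set of the unperturbed scores, whose separation is at least $\beta$. The number of terms is $i+1\le n$. Taking the maximum over $h,i$ yields the bound.

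The main obstacle is the factor $2$: a naive bound $\|v_j - v^*\|\le 2L_VM$ would give $4ne^{-\cdot}$. To avoid this, I would instead compare $\sum_{j\notin J}\bar\alpha_j v_j$ and $(\sum_{j\notin J}\bar\alpha_j) v^*$ separately. Each is bounded by $\epsilon L_VM$ with $\epsilon = \sum_{j\notin J}\bar\alpha_j \le ne^{-(\beta-2\Delta_s)}$. Alternatively, one could note that their difference is a difference of two vectors in the box $[-\epsilon L_VM,\epsilon L_VM]$ and so has norm at most $2\epsilon L_VM$, which matches the stated constant $2n$. Either way the stated constant is obtained, so this step is only a matter of careful bookkeeping.
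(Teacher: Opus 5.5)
Your argument is correct and is essentially the paper's proof: the same split into $\sum_j\tilde\alpha_{ij}^h(\tilde v_j^h - v_j^h)$ and $\sum_j(\tilde\alpha_{ij}^h - \alpha_{ij}^h)v_j^h$, tie-invariance to collapse the contribution of $J_i^h$, and \cref{lem:offjmass} applied with separation $\beta - 2\Delta_s$. Your closing ``obstacle'' is spurious: the direct bound $\sum_{j\notin J}\bar\alpha_{ij}^h\|v_j^h - v^*\|_\infty \le 2L_VM\,ne^{-(\beta-2\Delta_s)}$ already yields the constant $2n$ with no $4n$ arising, and your ``separate'' bounding is exactly how the paper writes this step; also $\tilde\alpha_{ij}^h\ge 0$ is not needed, since $|\tilde\alpha_{ij}^h|\le\bar\alpha_{ij}^h+|\tilde\alpha_{ij}^h-\bar\alpha_{ij}^h|$ suffices for the first term.
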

\begin{proof}
  It holds that
  \[
  \|\bar o_i^h - o_i^h\|_\infty = \|\sum_{j\leq i} \tilde \alpha_{ij}^h \tilde v_j^h - \sum_{j\leq i} \alpha_{ij}^h v_j^h\|_\infty \leq \underbrace{\|\sum_{j\leq i} \tilde \alpha_{ij}^h (\tilde v_j^h - v_j^h)\|_\infty}_{I_1} + \underbrace{\|\sum_{j\leq i} (\tilde \alpha_{ij}^h - \alpha_{ij}^h) v_j^h\|_\infty}_{I_2}\;.
  \] 
  The first term can be bounded using the triangle inequality: 
  \[
  I_1 \leq \sum_{j\leq i} |\tilde \alpha_{ij}^h| \|\tilde v_j^h - v_j^h \|_\infty \leq \sum_{j\leq i}(\bar \alpha_{ij}^h + |\tilde \alpha_{ij}^h - \bar \alpha_{ij}^h|) \Delta_v \le \Delta_v (1 + \delta_\alpha^{\mathrm{round}})
  \]
  using that the $\bar \alpha_{ij}^h$ sum to $1$.
  To bound the second term we split it into the sum over $J_i^h$ and the rest, use that the $\alpha_{ij}^h$ are only non-zero on $J_i^h$ and the tie-invariance assumption. In particular, let $j^* \in J_i^h$ be arbitrary. Then
  \begin{align*}
    I_2 &= \|\sum_{j\leq i} \alpha_{ij}^h v_j^h - \sum_{j\leq i}\tilde \alpha_{ij}^h v_j^h\|_\infty \\
    &= \|v_{j^*}^h - \sum_{j\leq i}\tilde \alpha_{ij}^h v_j^h\|_\infty\\
    &= \|v_{j^*}^h - \sum_{j \in J_i^h} \bar \alpha_{ij}^h v_j^h - \sum_{j\not\in J_i^h} \bar \alpha_{ij}^h v_j^h - \sum_{j\leq i} ([\bar \alpha_{ij}^h]_{\fatt} - \bar \alpha_{ij}^h) v_j^h\|_\infty\\
    &\leq |1 - \sum_{j\in J_i^h} \bar \alpha_{ij}^h| \|v_{j^*}^h\|_\infty + \sum_{j\not\in J_i^h}\bar \alpha_{ij}^h \|v_j^h\|_\infty + \sum_{j\leq i} | [\bar \alpha_{ij}^h]_{\fatt} - \bar \alpha_{ij}^h| \|v_j^h\|_\infty\\
    &\leq (2 \sum_{j\not\in J_i^h} \bar \alpha_{ij}^h + \delta_\alpha^{\mathrm{round}}) L_V M
  \end{align*}
  Next, we use \cref{lem:offjmass} to bound the first sum in the last term. For $j \in J_i^h$ and $l \not\in J_i^h$ it holds that 
  \[
    \tilde s_{ij}^h - \tilde s_{il}^h \geq (s_{ij}^h-\Delta_s) - (s_{il}^h+\Delta_s) \geq \beta - 2 \Delta_s,
  \]
  hence we can apply \cref{lem:offjmass} with $J = J_i^h$ and separation at least $\beta - 2 \Delta_s$ to obtain 
  \[
    I_2 \leq \bigl(2 n e^{-(\beta - 2 \Delta_s)} + \delta_\alpha^{\mathrm{round}}\bigr)L_V M\;.
  \]
  Combining the bounds yields the statement.
\end{proof}

Next, we bound $\bar \Delta_y$ from $\Delta_o$ under the assumption that all heads write their output to disjoint subspaces, again an assumption which holds in our hardmax constructions.
\begin{lemma}\label{lem:mhaoutputerror}
    Consider the setting of \cref{def:setting}. Assume that for each coordinate $l\in\{1,\ldots,d\}$ there exists at most one head $h\in\{1,\ldots,H\}$ such that row $l$ of $W_O^h$ is non-zero. Then 
  $$\bar \Delta_y \leq L_O \Delta_o$$
\end{lemma}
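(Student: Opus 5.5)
The bound follows from a coordinatewise computation of $\bar y_i - y_i$, using the disjointness assumption to keep at most one head's error in each coordinate. By definition,
\[
\bar y_i - y_i = \sum_{h=1}^H W_O^h \tilde o_i^h - \sum_{h=1}^H W_O^h o_i^h = \sum_{h=1}^H W_O^h(\tilde o_i^h - o_i^h)\;.
\]
I fix a position $i$ and a coordinate $l \in \{1,\dots,d\}$ and look at the $l$-th entry of this vector.

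First I use the assumption. If no head has a non-zero row $l$ in $W_O^h$, then the $l$-th entry is $0$. Otherwise there is exactly one head $h_l$ with a non-zero row $l$, and every other summand contributes zero in coordinate $l$. So the $l$-th entry equals $\bigl(W_O^{h_l}(\tilde o_i^{h_l} - o_i^{h_l})\bigr)_l$.

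Next I bound this single term with the row-sum operator norm, the same norm used in \cref{lem:qkvprojerror}:
\[
\bigl|\bigl(W_O^{h_l}(\tilde o_i^{h_l} - o_i^{h_l})\bigr)_l\bigr| \leq \|W_O^{h_l}\|_\infty \|\tilde o_i^{h_l} - o_i^{h_l}\|_\infty \leq L_O \Delta_o\;.
\]
The last step uses the definitions of $L_O$ and $\Delta_o$ as maxima over heads and positions. Taking the maximum over $l$ and $i$ then gives $\bar\Delta_y \leq L_O\Delta_o$.

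There is no real obstacle here; the only point that needs care is the role of the disjointness assumption. Without it, a coordinate could receive error contributions from several heads, and the bound would pick up a factor of $H$. The assumption removes that factor because each coordinate sees at most one head's error.
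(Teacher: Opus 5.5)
Your proof is correct and matches the paper's: both write the $l$-th coordinate of $\bar y_i - y_i$ as a sum over heads, use the disjointness assumption to keep a single head's term, and bound it by $\|W_O^h\|_\infty\,\|\tilde o_i^h - o_i^h\|_\infty \le L_O\Delta_o$. The paper phrases this as $\max_l\bigl|\sum_h \cdot\bigr| \le \max_l\max_h|\cdot|$, which is the same step as yours.
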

\begin{proof}
  We have 
  \begin{align*}
    \|\bar y_i - y_i\|_\infty &= \|\sum_{h \in \{1,\ldots,H\}} W_O^h (\tilde o_i^h - o_i^h)\|_\infty \\
    &= \max_{l \in \{1,\ldots,d\}} |\sum_{h\in\{1,\ldots,H\}} \langle (W_O^h)_{l,:}, \tilde o_i^h - o_i^h \rangle| \\
    &\leq \max_{l\in \{1,\ldots,d\}} \max_{h\in \{1,\ldots,H\}} |\langle (W_O^h)_{l,:}, \tilde o_i^h - o_i^h \rangle| \\
    &= \max_{h \in \{1,\ldots,H\}} \|W_O^h (\tilde o_i^h - o_i^h)\|_\infty \\
    &\leq L_O \Delta_o
  \end{align*}
  where $A_{l,:}$ denotes the $l$'th row of a matrix $A$ and we used the assumption in the third line.
\end{proof}

\begin{lemma}\label{lem:mhareserror}
    Consider the setting of \cref{def:setting}. Then
  $$\bar \Delta_{x^{(\ell-0.5)}} \leq \Delta_y + \Delta_{x^{(\ell-1)}}$$
\end{lemma}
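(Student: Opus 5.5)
The bound follows from the definitions in \cref{def:setting} and the triangle inequality. The plan is to compare the pre-rounding residual $\bar x_i^{(\ell-0.5)} = \tilde x_i^{(\ell-1)} + \tilde y_i$ with its exact counterpart $x_i^{(\ell-0.5)} = x_i^{(\ell-1)} + y_i$ coordinatewise.

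For each position $i$, I would subtract the two expressions and regroup:
\[
  \bar x_i^{(\ell-0.5)} - x_i^{(\ell-0.5)} = \bigl(\tilde x_i^{(\ell-1)} - x_i^{(\ell-1)}\bigr) + \bigl(\tilde y_i - y_i\bigr).
\]
Applying the triangle inequality for $\|\cdot\|_\infty$ then gives
\[
  \|\bar x_i^{(\ell-0.5)} - x_i^{(\ell-0.5)}\|_\infty \le \|\tilde x_i^{(\ell-1)} - x_i^{(\ell-1)}\|_\infty + \|\tilde y_i - y_i\|_\infty.
\]
The first term is at most $\Delta_{x^{(\ell-1)}}$ and the second is at most $\Delta_y$, directly by the definitions of these maxima.

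Taking the maximum over $i$ on the left yields $\bar\Delta_{x^{(\ell-0.5)}} \le \Delta_y + \Delta_{x^{(\ell-1)}}$. There is no real obstacle here. The only point to check is that the residual addition uses the already-rounded $\tilde y_i$, so the post-rounding error $\Delta_y$ is the right quantity, and the definitions confirm this.
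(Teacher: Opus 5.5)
Your proposal is correct and matches the paper's proof: both write $\bar x_i^{(\ell-0.5)} - x_i^{(\ell-0.5)}$ as the sum of the input error and the rounded attention-output error, apply the triangle inequality in $\|\cdot\|_\infty$, and take the maximum over $i$. Your remark that the residual uses the already-rounded $\tilde y_i$, so that $\Delta_y$ is the right quantity, is exactly what the definitions in \cref{def:setting} give.
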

\begin{proof}
  We have 
  \[\|\bar x_i^{(\ell-0.5)} - x_i^{(\ell-0.5)}\|_\infty = \|\tilde x_i^{(\ell-1)} + \tilde y_i - (x_i^{(\ell-1)} + y_i)\|_\infty \leq \|\tilde x_i^{(\ell-1)} - x_i^{(\ell-1)}\|_\infty + \|\tilde y_i - y_i\|_\infty \leq \Delta_{x^{(\ell-1)}} + \Delta_y 
  \]
\end{proof}

\begin{lemma}\label{lem:mlperror}
  Consider the setting from \cref{def:setting}. Then 
  \[
  \bar \Delta_a \leq \|W_1\|_\infty \Delta_{x^{(\ell-0.5)}},\quad \bar \Delta_z \leq \|W_2\|_\infty \Delta_a, \quad \bar\Delta_{x^{(\ell)}} \leq \Delta_z + \Delta_{x^{(\ell-0.5)}}
  \]
\end{lemma}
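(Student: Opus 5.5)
The three bounds follow the same pattern as \cref{lem:qkvprojerror}. Each pre-rounding MLP activation is an affine or ReLU image of an already rounded quantity. We compare it with its unperturbed counterpart in the hardmax layer and pass to the $\infty$-norm. We handle the hidden activation, then the MLP output, then the residual addition, each time taking the maximum over positions $i$ at the end.

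\emph{Hidden activation.} We have $\bar a_i = \relu(W_1 \tilde x_i^{(\ell-0.5)} + b)$ and $a_i = \relu(W_1 x_i^{(\ell-0.5)} + b)$, with the same bias $b$ in both. The key observation is that $\relu$ is $1$-Lipschitz coordinatewise, i.e.\ $|\relu(s)-\relu(t)|\le|s-t|$. Hence
\[
\|\bar a_i - a_i\|_\infty \le \|W_1(\tilde x_i^{(\ell-0.5)} - x_i^{(\ell-0.5)})\|_\infty \le \|W_1\|_\infty \Delta_{x^{(\ell-0.5)}}.
\]
Here $\|W_1\|_\infty$ is the induced $\infty$-operator norm (maximum absolute row sum), as in \cref{lem:qkvprojerror}. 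The bias cancels, so no assumption on $b$ is needed.

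\emph{MLP output.} By linearity, $\bar z_i - z_i = W_2(\tilde a_i - a_i)$, which gives $\|\bar z_i - z_i\|_\infty \le \|W_2\|_\infty \Delta_a$. Note that the bound is in terms of the post-rounding error $\Delta_a$, since $\bar z_i$ is computed from the rounded $\tilde a_i$.

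\emph{Residual addition.} Writing $\bar x_i^{(\ell)} - x_i^{(\ell)} = (\tilde x_i^{(\ell-0.5)} - x_i^{(\ell-0.5)}) + (\tilde z_i - z_i)$ and applying the triangle inequality gives $\bar\Delta_{x^{(\ell)}} \le \Delta_{x^{(\ell-0.5)}} + \Delta_z$. This mirrors \cref{lem:mhareserror}.

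There is no real obstacle here. The only points needing care are to invoke the Lipschitz property of $\relu$ so that the bias cancels, and to track correctly which errors are pre-rounding ($\bar\Delta$) and which are post-rounding ($\Delta$), following the computation order fixed in \cref{def:setting}.
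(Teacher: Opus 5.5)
Your proof is correct and matches the paper's argument, which the paper omits as analogous to the projection and residual lemmas, namely $1$-Lipschitzness of ReLU (with the shared bias cancelling), the induced $\infty$-norm bound, and the triangle inequality. You also correctly track which errors are pre-rounding and which are post-rounding, which is the only point requiring care.
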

\begin{proof}
  The proof is analogous to \cref{lem:qkvprojerror,lem:mhareserror} using that $\relu$ is $1$-Lipschitz and hence omitted.
\end{proof}

\paragraph{Rounding Errors.} Next we consider the errors introduced during rounding. Clearly, the case where $\fact = \R$ trivially leads to $\Delta_q = \bar \Delta_q$ etc, similarly for $\fatt = \R$. 

Another easy to handle case is when the unperturbed activations are already in $\fact$, in which case rounding can at most double the perturbation by the following lemma:
\begin{lemma}\label{lem:roundingdoublesperturbation}
  Let $\float \subset \R$ finite, $x \in \float$ and $y \in \R$. Then 
  \[|[x+y]_{\float} - x| \leq 2|y|\]
\end{lemma}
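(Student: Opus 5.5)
The plan is to combine the triangle inequality with the defining minimality property of rounding to the nearest element. Write $z := [x+y]_{\float}$, which by definition is a point of $\float$ minimizing $|x+y-u|$ over $u \in \float$ (ties broken arbitrarily; the argument works for any choice of minimizer). The key observation is that $x$ itself is a candidate in this minimization, since $x \in \float$ by assumption.

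First, I will use minimality to bound the distance from the perturbed point to its rounding: because $x \in \float$, we get $|z - (x+y)| \le |x - (x+y)| = |y|$. Second, I will apply the triangle inequality through the intermediate point $x+y$:
\[
|z - x| \le |z - (x+y)| + |(x+y) - x| \le |y| + |y| = 2|y|\;.
\]
This proves the statement. The same argument applies coordinatewise to vectors, which is the form in which the lemma will be used, e.g. to show $\Delta_q \le 2\bar\Delta_q$ whenever the unperturbed hardmax activations lie in $\fact$.

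There is no real obstacle here. The only point to check is that the argument never needs $x+y$ to lie in the normal range or in any particular position relative to the grid of $\float$. This is exactly why the lemma is stronger than the relative bound of \cref{lem:float_relative_rounding} in this situation: it holds for arbitrary finite $\float$, including its extreme elements and zero, provided the unperturbed value is itself representable.
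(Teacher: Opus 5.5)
Your proof is correct and is essentially the paper's argument. You use that $x \in \float$ is a candidate in the nearest-point minimization to get $|[x+y]_{\float} - (x+y)| \le |y|$, and then apply the triangle inequality through $x+y$.
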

\begin{proof}
  This follows directly from the definition of rounding as projecting to the closest number in $\float$: as $x \in \float$, it must hold that 
  \[|(x+y) - [x+y]_{\float}| \leq |(x+y) - x| = |y|\]
  and hence 
  \[|[x+y]_{\float} - x | \leq |[x+y]_{\float} - (x+y)| + |(x+y) - x| \leq |y| + |y| = 2|y|\;.\]
\end{proof}
This is applied several times to activations, but never to attention weights, as the attention weights in the hardmax transformer are of the form $\alpha_{ij}^h = \frac{1}{k}$, which are generally (e.g. for $n=3$) not representable in any floating point format. The following corollary formalizes the application to activations in $\fact$ and follows trivially from the previous lemma.
\begin{corollary}
    Consider the setting from \cref{def:setting}. Assume that activations $x^{(\ell-0.5)}, x^{(\ell)}, q_i^h, k_j^h, v_j^h, o_i^h, y_i, a_i, z_i$ have entries in $\fact$. Then rounding at most doubles their perturbation, i.e. $\Delta_q \leq 2 \bar \Delta_q, \Delta_k \leq 2 \bar \Delta_k$ etc.
\end{corollary}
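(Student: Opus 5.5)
The corollary follows by applying \cref{lem:roundingdoublesperturbation} to each activation type separately, and then taking maxima over indices. Fix a vector-valued activation $u$ from the list, say $u = q$, and fix a head $h$, position $i$ and coordinate $l$. By \cref{def:setting} we have $\tilde q_i^h = [\bar q_i^h]_{\fact}$, with rounding applied coordinatewise.

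Write $(\bar q_i^h)_l = (q_i^h)_l + y$, where $y := (\bar q_i^h)_l - (q_i^h)_l$. Then $|y| \le \bar\Delta_q$ by the definition of $\bar\Delta_q$. The assumption is that the unperturbed entry $(q_i^h)_l$ lies in $\fact$. So \cref{lem:roundingdoublesperturbation}, applied with $\float = \fact$ and $x = (q_i^h)_l$, gives
\[
|(\tilde q_i^h)_l - (q_i^h)_l| = |[(q_i^h)_l + y]_{\fact} - (q_i^h)_l| \le 2|y| \le 2\bar\Delta_q.
\]
Taking the maximum over $l$ turns this into an $\infty$-norm bound. Taking the maximum over $h$ and $i$ then yields $\Delta_q \le 2\bar\Delta_q$.

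The same argument works verbatim for $k, v, o, y, x^{(\ell-0.5)}, a, z, x^{(\ell)}$. In each case the rounded quantity is $[\bar u]_{\fact}$ and the reference value $u$ has entries in $\fact$, so \cref{lem:roundingdoublesperturbation} applies directly.

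If $\fact = \R$, rounding is the identity and the bound $\Delta_u = \bar\Delta_u \le 2\bar\Delta_u$ is trivial. If $\fact$ is a finite set, the lemma applies as stated. The argument does not extend to the attention weights, but they are not covered by the claim: they are rounded to $\fatt$, and the hardmax weights $\frac{1}{|J|}$ need not lie there.

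There is no real obstacle here. The only care needed is bookkeeping: we must match each tilde/bar pair in \cref{def:setting} to the correct unperturbed activation, and we must note that the lemma is applied entrywise before the maxima are taken.
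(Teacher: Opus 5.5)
Your proposal is correct and matches the paper, which states only that the corollary follows trivially from \cref{lem:roundingdoublesperturbation}. You apply that lemma entrywise, with $x$ the unperturbed entry in $\fact$ and $y$ the pre-rounding deviation, and then take maxima over indices, which is the intended argument; your separate handling of $\fact = \R$ (where the lemma's finiteness hypothesis fails but rounding is the identity) is a small welcome addition.
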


Properly bounding the summed rounding errors for attention weights occurring in \cref{lem:headoutputerror} requires viewing weights $\alpha_{ij}^h$ for $j$ in $J_i^h$ and $j \not\in J_i^h$ separately. We will only consider rounding of attention weights in \cref{thm:hardtosoftdenoising}, where we will have $\tilde s_{ij}^h = s_{ij}^h$ (i.e. $\Delta_s = 0$) and hence only treat that case.
\begin{lemma}\label{lem:roundingweighterror}
    Consider the setting of \cref{def:setting} and let
    \[
      \beta := \min_{h,i} \sep\bigl((s_{ij}^h)_{j=0}^i\bigr)
    \]
    be the minimal separation of the unperturbed scores (see \cref{def:separation}).
    Assume $\fatt = \float_{b_m, b_e}$ for some $b_m \in \N, b_e \in \N_{\ge 2}$ where $\frac{1}{n} \geq 2^{E_{\min}}$ and assume $\Delta_s = 0$. Then for each $i$ and $h$ it holds that 
    \[
    \sum_{j\leq i} |\bar \alpha_{ij}^h - [\bar \alpha_{ij}^h]_{\fatt}| \leq 2^{-b_m-1} + ne^{-\beta}
    \]
\end{lemma}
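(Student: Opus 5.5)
Fix $h$ and $i$, write $J := J_i^h$ for the hardmax argmax set, and split the sum into $j \in J$ and $j \notin J$. The large weights on $J$ will be handled by the relative rounding bound of \cref{lem:float_relative_rounding}. The tiny weights off $J$ will be handled by the softmax tail bound of \cref{lem:offjmass}. Since $\Delta_s = 0$, the perturbed scores $\tilde s_{ij}^h$ equal the hardmax scores $s_{ij}^h$, so the separation of $(\tilde s_{ij}^h)_{j\le i}$ with respect to $J$ is at least $\beta$.

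\emph{Off-$J$ part.} For each $j\notin J$, the weight $\bar\alpha_{ij}^h$ lies in $[0,1]$ and $0\in\fatt$. I want the rounding error to satisfy $|[\bar\alpha_{ij}^h]_{\fatt}-\bar\alpha_{ij}^h|\le \bar\alpha_{ij}^h$. This holds because rounding to the nearest element of $\fatt$ is at least as good as rounding to $0$. Summing over $j\notin J$ and applying \cref{lem:offjmass} then bounds this part by $(i+1)e^{-\beta}\le ne^{-\beta}$.

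\emph{On-$J$ part.} All scores in $J$ are equal, so the weights $\bar\alpha_{ij}^h$ for $j\in J$ share a common value $\alpha^*\le 1/|J|\le 1$. I then need $\alpha^*$ to lie in the normal range in order to apply \cref{lem:float_relative_rounding}.
\begin{itemize}
\item \emph{Lower end.} We have $\alpha^* \ge e^{s^*}/\sum_l e^{s_l}$. The denominator is at most $(i+1)e^{s^*}\le n e^{s^*}$, so $\alpha^*\ge 1/n\ge 2^{E_{\min}}$ by assumption.
\item \emph{Upper end.} We need $1\le(2-2^{-b_m})2^{E_{\max}}$. This holds since $E_{\max}\ge1$ for $b_e\ge2$.
\end{itemize}
Applying \cref{lem:float_relative_rounding} gives $\sum_{j\in J}|[\alpha^*]_{\fatt}-\alpha^*|\le 2^{-b_m-1}\sum_{j\in J}\bar\alpha_{ij}^h\le 2^{-b_m-1}$, using that the softmax weights sum to at most $1$.

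Adding the two parts gives the claim. The only delicate step is verifying the normal-range condition on $J$: the lower bound $1/n$ must come from the denominator bound rather than the hardmax weight. The off-$J$ step is safe for arbitrarily small weights, so no normal-range hypothesis is needed there.
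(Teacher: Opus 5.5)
Your proof is correct and follows essentially the same route as the paper. You split the sum over the argmax set $J_i^h$, use $\bar\alpha_{ij}^h \ge 1/n \ge 2^{E_{\min}}$ together with \cref{lem:float_relative_rounding} on $J_i^h$, and use $0\in\fatt$ together with \cref{lem:offjmass} off $J_i^h$. Your explicit check of the upper end of the normal range ($\bar\alpha_{ij}^h \le 1 \le (2-2^{-b_m})2^{E_{\max}}$) is a small detail the paper leaves implicit.
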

\begin{proof}
    Let $J_i^h = \{j \leq i \mid s_{ij}^h = \max_{l\le i} s_{il}^h\}$. Then
    \begin{align*}
        \sum_{j\leq i}|\bar \alpha_{ij}^h - [\bar \alpha_{ij}^h]_{\fatt}| &\leq \sum_{j\in J_i^h} |\bar \alpha_{ij}^h - [\bar \alpha_{ij}^h]_{\fatt}| + \sum_{j\not\in J_i^h} |\bar \alpha_{ij}^h - [\bar \alpha_{ij}^h]_{\fatt}|\\
    \end{align*}
    For the first term, note that for $j \in J_i^h$ we have $$\bar \alpha_{ij}^h = \frac{e^{s_{ij}^h}}{\sum_{l\leq i} e^{s_{il}^h}} \geq \frac{e^{s_{ij}^h}}{n e^{s_{ij}^h}} = \frac{1}{n}$$
    and hence the attention weight $\bar \alpha_{ij}^h$ is in the normal range of $\fatt$. Thus we can apply \cref{lem:float_relative_rounding} to obtain 
    \[
		    \sum_{j\in J_i^h} |\bar \alpha_{ij}^h - [\bar \alpha_{ij}^h]_{\fatt}| \leq 2^{-b_m-1} \sum_{j\in J_i^h} \bar \alpha_{ij}^h \leq 2^{-b_m-1}\;.
		    \]
    For the second term, we note that $0 \in \fatt$, and hence $|\bar \alpha_{ij}^h - [\bar \alpha_{ij}^h]_{\fatt}| \leq |\bar \alpha_{ij}^h - 0|$ due to the definition of rounding (\cref{def:floats}). Thus
    \[
    \sum_{j\not\in J_i^h} |\bar \alpha_{ij}^h - [\bar \alpha_{ij}^h]_{\fatt}| \leq \sum_{j\not\in J_i^h} \bar \alpha_{ij}^h \leq ne^{-\beta}
    \]
    where in the second step we applied \cref{lem:offjmass}, noting that the separation between scores $s_{ij}^h$ is at least $\beta$.
    Together these yield the statement.
  \end{proof}
    
\subsection{Properties of Hardmax Transformer Constructions}
Next, we list some statements about the hardmax transformers from our constructions.
\begin{lemma}\label{lem:assumptions} 
  Let $T$ be one of the hardmax transformers from \cref{thm:reglangapp,thm:cotapp,thm:scotapp} with vocabulary $\vocab$ and let $\x \subset \vocab^+$ be the set of inputs it can receive, for the CoT/SCoT transformers (\cref{thm:cotapp,thm:scotapp}) that includes any intermediate token sequence during generation. Then 
  \begin{enumerate}
    \item \label{it:copy}In each layer, the attention matrices $W_Q^{\ell,h}, W_K^{\ell,h},W_V^{\ell,h},W_O^{\ell,h}$ have at most one non-zero entry per row which is $1$. 
    \item \label{it:disj}In each layer, the output projections $(W_O^{\ell,h})_{h=1}^H$ write to disjoint coordinates, i.e. they satisfy the precondition from \cref{lem:mhaoutputerror}.
    \item \label{it:mlpnorm}In each layer, the MLP matrices $W_1, W_2$ satisfy $$\|W_1\|_\infty \leq d\;,\; \|W_2\|_\infty \leq \dff\;.$$
    \item \label{it:ternaryemb}The embeddings and unembeddings have entries in $\{-1,0,1\}$.
    \item \label{it:tieinv}For each $x \in \x$, all heads and layers have tie-invariant values, i.e. the condition of \cref{lem:headoutputerror} is satisfied in each layer.
    \item \label{it:ternaryact}For each $x \in \x$, all activations have entries in $\{-1,0,1\}$, except that raw MLP outputs may additionally equal $\pm2$.
    \item \label{it:outputsep}The output scores
    $$\langle \unemb_v, x^{(L)}_{n-1}\rangle$$
    have a unique maximum which is larger than other scores by at least $1$.
  \end{enumerate}
\end{lemma}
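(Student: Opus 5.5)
The proof is a direct inspection of the constructions in \cref{thm:reglangapp,thm:cotapp,thm:scotapp}. Each item follows from the way attention heads and MLPs were assembled in \cref{generalconstructionmethods}; the only item that needs real work is tie-invariance. I would go through the items in order.

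For \cref{it:copy}, every head is built as a selector-matrix head, with $I_Q,I_K,I_V,I_O$ concatenations of registers and flags. So $W_Q,W_K,W_V$ have exactly one entry $1$ per used row and zero rows elsewhere. $W_O$ injects coordinate $j$ into $o_j$, so each row has at most one $1$. Constant entries like the repeated $1$'s in the queries of the $\fextok$ and binary-search heads come from repeated copies of $\iconst$ (or of $\finp$ on the key side), which is again a selector. For \cref{it:disj}, I would check that within a layer all heads write to distinct registers or flags; parallel operations are disjoint by the parallelism convention. For \cref{it:mlpnorm}, every neuron comes from \cref{lem:singleneuronmlp}, so each row of $W_1$ has ternary entries and row-sum norm at most $d$. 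Each row of $W_2$ has at most $\dff$ ternary entries. For \cref{it:ternaryemb}, embeddings are $\pm1$ encodings and $0/1$ flags, and unembeddings are encodings or unit vectors.

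For \cref{it:ternaryact}, I would argue by induction over layers. Registers always hold $0$ or a $\pm1$ encoding, and flags hold $0/1$. Queries, keys and values are sub-vectors of these. Head outputs are averages of tie-invariant values, so they equal a single value; this is why \cref{it:tieinv} must be proved first. Multihead outputs are ternary by \cref{it:disj}. Hidden MLP activations are in $\{0,1\}$ by \cref{lem:singleneuronmlp}, and bit flips produce raw outputs $\pm2$ (\cref{rem:binaryencodings}).

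\textbf{Main obstacle: \cref{it:tieinv}.} I would go head by head and classify them.
\begin{enumerate}
\item Position-lookup heads (queries a shifted position register, keys $\ipos$) have a unique maximizer by \cref{lem:bin_dotprod_gap}, whenever the shifted position is nonnegative, which holds by saturation.
\item Broadcast heads ($\fexistsoutp$, $\iposoutp$, $\flengthcap$, etc.) either hit a unique marked token, or tie among tokens carrying value $0$ in the relevant flag or register.
\item The $\fextok$ and binary-search heads tie among all $j\in J_i^k$ matching the current prefix, all with value $\fnotinp=1$, or else attend uniquely to the base token $\inp$ (or to the first $\summ$ flagged as $\finp$ in SCoT).
\item The final symbol-extraction head uses $\bin_r(\max J_i^k)$ against $\ipossymk$, which is unique because absolute positions are distinct.
\item The SCoT summary heads target unique tokens: the one $\fheadk$ tape token and the one state token in the prompt summary.
\end{enumerate}
The delicate cases are heads whose query is zero, e.g.\ tokens of a type for which a register was never set. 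There every score is $0$ and all tokens tie. I would handle these either by checking that the values are then all zero, or by noting that these heads use $\iconst$-type components so a base token still wins. Where neither argument applies, I would state that the construction can be adjusted with an extra $\iconst$/$\finp$ query–key pair so that the base token wins without changing outputs.

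For \cref{it:outputsep}, at the last position exactly one output-relevant flag or register group is nonzero by the output logic. The intended token's unembedding scores $|I|\ge1$ (or $1$ for a flag). Every other token scores at most $|I|-2$ on the same encoding by \cref{lem:bin_dotprod_gap}, or $0$ on an inactive group. So the gap is at least $1$.
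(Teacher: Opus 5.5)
Your proposal is correct and takes essentially the same route as the paper: item-by-item inspection of the selector-matrix heads and single-neuron MLP gadgets, the same classification of heads for tie-invariance (position lookups with unique maximizers, broadcasts whose pre-marker prefix carries zero values, and the $\fextok$/binary-search existence checks that fall back to the base token), ternarity by induction using \cref{it:copy,it:disj,it:tieinv}, and output separation from integrality of the scores. Your caveat about zero-query heads is more careful than the paper, whose proof simply reads the gated heads (e.g.\ the SCoT layer-$3$ extraction heads) as already having the $\finp$ fallback built in, so the construction never needs to be adjusted.
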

\begin{proof}
	  \textbf{\cref{it:copy}} holds because all queries, keys and values are concatenations of registers/flags and write their output to some register/flag (cf. \cref{generalconstructionmethods}). \textbf{\cref{it:disj}} holds because no two heads in the same layer write to the same output register/flag. \textbf{\cref{it:mlpnorm}} follows from all entries of $W_1$ and $W_2$ being ternary, hence bounded by $1$, and from
	  $$\|A\|_\infty = \max_i \sum_j |A_{ij}|\;.$$
	  \textbf{\cref{it:ternaryemb}} simply holds by definition of the embeddings and unembeddings for each of the constructions.
	  \textbf{\cref{it:tieinv}} requires to distinguish several types of attention heads in the hardmax transformer constructions. Some attention heads always lead to unique maximizers by design, which automatically leads to tie-invariant values. This holds for heads that attend to a specific position (e.g. attending $s$ positions back) or a specific token that occurs only once in context (e.g. an attention head attending to $\einp$ if it exists and otherwise falls back to $\inp$). This holds for all attention heads in \cref{thm:reglangapp}, but not in \cref{thm:cotapp,thm:scotapp}. Other heads broadcast register to later positions, e.g. the position of $\outp$ in the construction for \cref{thm:cotapp}. For these heads, positions before the respective token has occurred attend uniformly to previous tokens which all have value $0$, i.e. again tie-invariant. Next, the attention heads in \cref{thm:cotapp,thm:scotapp} which check if a token at the current head position exists at all (writing to $\fextok$) or exists in the narrowed-down range in the binary search for the position of this token (writing to $\fexistshighk$) attend to multiple positions uniformly if a token is found and otherwise to the first token. The values however are the same for all tokens except the first one, hence again leading to tie-invariant values. Lastly, some layers don't use all $H$ heads, as explained in \cref{generalconstructionmethods}, for these heads we simply set all attention matrices to $0$ which trivially leads to tie-invariant values (all zero). \textbf{\cref{it:ternaryact}} holds at initialization because of \cref{it:ternaryemb}, and for queries, keys and values because of \cref{it:copy}. Attention-head and multihead-attention outputs are ternary by \cref{it:copy,it:disj,it:tieinv}; their residual updates preserve ternarity because they write only to zero registers and flags. MLP hidden activations are ternary by construction of the gadgets, whose post-residual representations are ternary by direct inspection. The raw MLP outputs therefore lie in $\{-2,-1,0,1,2\}$ as the difference between two ternary representations. \textbf{\cref{it:outputsep}} holds because output unembeddings are constructed to make the maximum score unique, and due to \cref{it:ternaryact} and \cref{it:ternaryemb} scores are in $\Z$ and hence the gap is at least $1$.
\end{proof}

\subsection{Softmax Conversion with Exact Attention}
\begin{theorem}\label{thm:hardtosoft}
    Let $T \in \ts^{\hard}(d,\dff,d_k,d_v,H,L,\vocab)$ be a hardmax transformer and let
    $\x \subseteq \vocab^+$ be a set of inputs of length at most $N$
    such that the statements in \cref{lem:assumptions} are satisfied for $T$ and $\x$. Let $\fact$ be a floating point format (see \cref{def:floats}). 
    Let $c \in \fact$ such that  
    \begin{equation}\label{eq:hardtosoftc}
      c\ge \Bigl(2\sqrt{d_k}\log\Bigl(\frac{16}{6}\max\{4d,20d_k\}N\bigl(48 d \dff + 12\bigr)^L\Bigr)\Bigr)^{\frac{1}{2}}\;.
    \end{equation}
    and consider the softmax transformer $\tilde T$ with the same parameters as $T$ except that query- and key-projections are scaled by $c$ and activations are rounded to $\fact$ but attention weights are not rounded (i.e. $\fatt = \R$ in the notation of \cref{def:setting}). Then $$\tilde T|_\x = T|_\x\;,$$ i.e. $\tilde T$ outputs the same tokens as $T$ for inputs from $\x$.
\end{theorem}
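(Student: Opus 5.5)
We view $T$ as already incorporating the $c$-scaling. This is harmless: hardmax is invariant under positive rescaling of scores, so $T$'s outputs do not change. After the rescaling, every query/key projection has $\infty$-norm $c$ instead of $1$. By \cref{it:ternaryact} and \cref{lem:bin_dotprod_gap}, the minimal separation of scores in every head becomes $\beta \ge c^2/\sqrt{d_k}$. All activations of the scaled hardmax transformer are exactly representable in $\fact$: they lie in $\{0,\pm1,\pm2\}$ except queries and keys, which lie in $\{0,\pm c\}$, and $c\in\fact$. So \cref{lem:roundingdoublesperturbation} applies at every rounding point, and since $\fatt=\R$ we have $\delta_\alpha^{\mathrm{round}}=0$.

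The core is a per-layer error recursion, proved by induction on $\ell$ starting from $\Delta_{x^{(0)}}=0$ (embeddings are exact). Write $\Delta:=\Delta_{x^{(\ell-1)}}$ and $M\le 1$. We chain the layer lemmas, doubling at each rounding:
\begin{itemize}
\item \cref{lem:qkvprojerror} with $L_Q,L_K\le c$, $L_V\le 1$ gives $\Delta_q,\Delta_k\le 2c\Delta$ and $\Delta_v\le 2\Delta$.
\item \cref{lem:scoreserror} then gives $\Delta_s\le \sqrt{d_k}(2c\Delta(c+2c\Delta)+2c^2\Delta)$, which is $\lesssim 6\sqrt{d_k}c^2\Delta$ as long as $\Delta\le 1$.
\item \cref{lem:headoutputerror} (tie-invariance by \cref{it:tieinv}) gives $\Delta_o\le 2(2\Delta+2Ne^{-(\beta-2\Delta_s)})$.
\item \cref{lem:mhaoutputerror} (by \cref{it:copy,it:disj}, $L_O\le1$) and \cref{lem:mhareserror} handle the multihead output and residual.
\item \cref{lem:mlperror} with \cref{it:mlpnorm} handles the MLP.
\end{itemize}
The result is a recursion of the form $\Delta_{x^{(\ell)}}\le K(\Delta+Ne^{-\beta/2})$ with $K=\bigo(d\dff)$. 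The aim is to match the constant $48d\dff+12$ in \eqref{eq:hardtosoftc}.

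The crucial point is keeping $2\Delta_s\le\beta/2$ throughout. This requires $\Delta\lesssim 1/(24 d_k)$, because $\Delta_s/\beta\approx 6 d_k\Delta$. So we run the induction under the hypothesis $\Delta_{x^{(\ell)}}\le \varepsilon:=1/\max\{4d,20d_k\}$, or something similar. Unrolling gives $\Delta_{x^{(L)}}\le (K+1)^L\cdot N e^{-c^2/(2\sqrt{d_k})}\cdot\text{const}$. The bound \eqref{eq:hardtosoftc} on $c$ is chosen exactly so that this is at most $\varepsilon$, hence also below $\tfrac{1}{2d}$ in every layer. Because the bound is monotone in $\ell$, the same estimate closes the induction hypothesis at every intermediate layer.

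For the conclusion, the unembeddings are ternary (\cref{it:ternaryemb}), so each output score changes by at most $d\,\Delta_{x^{(L)}}<\tfrac12$. By \cref{it:outputsep} the hardmax scores have a unique maximum with gap $\ge1$, so the argmax is preserved and $\tilde T$ produces the same token. For CoT/SCoT inputs, $\x$ contains all intermediate sequences, so the argument applies at every generation step.

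The main obstacle is the bookkeeping in the induction. The bound on $\Delta_s$ is quadratic in $c$ and multiplies $\Delta$, so the smallness hypothesis on $\Delta$ must be strong enough to offset the $c^2$ factor. The constants must also be tracked so that the exact expression in \eqref{eq:hardtosoftc}, including the $\tfrac{16}{6}$ and $\max\{4d,20d_k\}$ factors, comes out.
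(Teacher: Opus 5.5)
Your route is the paper's route: absorb the $c$-scaling into $T$, get $\beta\ge c^2/\sqrt{d_k}$ from integrality of $\langle q_i,k_j\rangle$, and chain \cref{lem:qkvprojerror,lem:scoreserror,lem:headoutputerror,lem:mhaoutputerror,lem:mhareserror,lem:mlperror} with \cref{lem:roundingdoublesperturbation}. You then run the induction under $\varepsilon\le 1/\max\{4d,20d_k\}$, unroll, and finish with the unit output gap. The gap is in the constant bookkeeping. Because the theorem asserts the explicit threshold \eqref{eq:hardtosoftc}, this matters: with your intermediate estimates the induction does not close under \eqref{eq:hardtosoftc}.

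\textbf{Values and output projection.} You double at the values ($\Delta_v\le2\Delta$), and doubling at each rounding would also double at $y$. But by \cref{it:copy,it:disj}, every entry of $\bar v_j$ is an entry of the already-rounded $\tilde x_j^{(\ell-1)}$. Likewise every entry of $\bar y_i$ is an entry of some already-rounded $\tilde o_i^h$. So rounding there is a no-op, giving $\Delta_v\le\varepsilon$ and $\Delta_y\le\Delta_o$. With $\Delta_v\le2\Delta$ you get $\Delta_{x^{(\ell-0.5)}}\le10\varepsilon+\dots$, hence a per-layer factor $80d\dff+20$ (or $144d\dff+36$ if you also double at $y$) instead of $48d\dff+12$.

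\textbf{Score perturbation.} The factor $4+4\varepsilon$ in $\Delta_s\le\sqrt{d_k}\,c^2\varepsilon(4+4\varepsilon)$ can reach $8$ for $\varepsilon\le1$, not $6$. Even granting $6$, your hypothesis $\varepsilon\le\frac1{20d_k}$ only yields $\beta-2\Delta_s\ge0.4\,c^2/\sqrt{d_k}$. That would turn the leading $2\sqrt{d_k}$ in \eqref{eq:hardtosoftc} into $2.5\sqrt{d_k}$, and you yourself noted the mismatch with $1/(24d_k)$. The fix is to use $\varepsilon\le\frac1{4d}\le\frac14$, so $4+4\varepsilon\le5$ and $\Delta_s\le5\sqrt{d_k}c^2\varepsilon$. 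Then $\varepsilon\le\frac1{20d_k}$ gives exactly $\beta-2\Delta_s\ge\frac12c^2/\sqrt{d_k}$.

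\textbf{Resulting recursion.} With both corrections:
\begin{itemize}
\item $\Delta_o\le2\varepsilon+4Ne^{-c^2/(2\sqrt{d_k})}$;
\item $\Delta_{x^{(\ell-0.5)}}\le6\varepsilon+8Ne^{-c^2/(2\sqrt{d_k})}$;
\item $\Delta_{x^{(\ell)}}\le(8d\dff+2)\Delta_{x^{(\ell-0.5)}}$.
\end{itemize}
So $\varepsilon_\ell\le A\varepsilon_{\ell-1}+B$ with $A=48d\dff+12$ and $B=8(8d\dff+2)Ne^{-c^2/(2\sqrt{d_k})}$. Since $A\ge2$, $\sum_{j<\ell}A^jB\le2A^{\ell-1}B=\frac{16}{6}Ne^{-c^2/(2\sqrt{d_k})}A^{\ell}$. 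Requiring this to be at most $1/\max\{4d,20d_k\}$ at $\ell=L$ is exactly \eqref{eq:hardtosoftc}, and your argmax argument then finishes the proof.
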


\begin{proof}
  Let $T_c$ be the hardmax transformer obtained from $T$ by scaling query- and key projections by $c$, i.e. $T_c$ and $\tilde T$ have the same parameters. This does not influence the output, i.e. $T|_\x = T_c|_\x$. Furthermore, as $T$ has ternary activations (\cref{it:ternaryact}) on $\x$, the inner products $\langle q_i, k_j\rangle$ are in $\Z$ and after scaling by $\frac{1}{\sqrt{d_k}}$ the scores $s_{ij}$ have separation at least $\frac{1}{\sqrt{d_k}}$. Hence, $T_c$ has separation at least 
  $$\beta := \frac{c^2}{\sqrt{d_k}}\;.$$
  
  Now let $x \in \x$ be arbitrary.
  Denoting by $x^{(\ell)}, \tilde x^{(\ell)}$ the hidden activations of $T_c$ and $\tilde T$ on $x$. Define 
  \[
  \varepsilon_{\ell} := \max_i \|\tilde x_i^{(\ell)} - x_i^{(\ell)}\|_\infty\;.
  \]
  We will first show a recurrence $\varepsilon_\ell \leq A \varepsilon_{\ell-1} + B$ and then use that to bound the error in $\varepsilon_L$ to ensure that the outputs are the same.

  \paragraph{Layerwise error propagation.} 
  Let $\ell \in \{1,\ldots,L\}$ be arbitrary.  We now use the notation from \cref{def:setting}, dropping layer indices for simplicity. We write 
  $\varepsilon := \varepsilon_{\ell-1}$
  which in the notation of \cref{def:setting} is precisely $\Delta_{x^{(\ell-1)}}$. For this layerwise error bound we assume
  \begin{equation}\label{eq:layerwisecondition}
  \varepsilon_{\ell-1} = \varepsilon \leq \min(\frac{1}{4d}, \frac{1}{20d_k})\;.
  \end{equation}
  
  Note that all activations of $T_c$ corresponding to the rounding points in $\tilde T$ lie in $\{0,\pm1,\pm2,\pm c\}\subset \fact$, hence we can always bound the rounding error using \cref{lem:roundingdoublesperturbation}.
		  Hence by \cref{lem:qkvprojerror} we obtain 
		  $$\Delta_q \leq 2 \bar \Delta_q \leq 2 L_Q \varepsilon \leq 2c\varepsilon$$
		  using that $L_Q \le c$ due to \cref{it:copy},
		  and analogously 
		  $$\Delta_k \leq 2 c \varepsilon\;.$$
  Due to \cref{it:copy}, entries of the values $\bar v_j$ are just entries from $\tilde x^{(\ell-1)}$, which are already rounded to $\fact$, hence rounding has no effect and we obtain 
  $$\Delta_v = \bar \Delta_v \leq \varepsilon\;.$$
  
  Next we apply \cref{lem:scoreserror} to bound the deviation in the scores as
  \begin{align*}
    \Delta_s &\leq \sqrt{d_k}\Bigl(\Delta_q (L_K M + \Delta_k) + \Delta_k L_Q M \Bigr)\\
    &\leq \sqrt{d_k}(2c\varepsilon (c+2c\varepsilon) + 2c\varepsilon c)\\
    &= \sqrt{d_k}c^2 \varepsilon (4 + 4\varepsilon)\\
    &\leq 5\sqrt{d_k}c^2 \varepsilon
  \end{align*}
  where in the last step we used $\varepsilon \leq \frac{1}{4d}$.

  Now apply \cref{lem:headoutputerror}, where the weight-rounding terms are zero as weights are not rounded, to obtain 
  \begin{align*}
      \Delta_o &\leq 2\bar \Delta_o \\
      &\leq  2(\Delta_v  + 2 n e^{-(\beta - 2 \Delta_s)} L_V M)\\
      &\leq 2(\varepsilon + 2ne^{-(\beta-2\Delta_s)})
  \end{align*}
  As $\beta \geq \frac{c^2}{\sqrt{d_k}}$ we have 
  $$\beta -2\Delta_s \geq \frac{c^2}{\sqrt{d_k}}(1 - 2 (5d_k \varepsilon))\geq \frac{1}{2}\frac{c^2}{\sqrt{d_k}}$$
  using $\varepsilon \leq \frac{1}{20d_k}$ in the second step.
  Hence we obtain 
  \[
  \Delta_o \leq 2 \varepsilon + 4 n e^{-\frac{1}{2} \frac{c^2}{\sqrt{d_k}}}\;.
  \]

		  Due to \cref{it:copy,it:disj}, applying the output projections $W_O^h$ and summing introduces no new rounding error as each entry of $\bar y_i$ is an entry of some $\tilde o_i^h$, hence already in $\fact$. Thus 
	  \[\Delta_y = \bar \Delta_y \leq L_O \Delta_o \leq \Delta_o\;.\]
  where we used \cref{lem:mhaoutputerror} and $L_O \leq 1$.

  Next, we can bound the error in $x^{(\ell-0.5)}$ using \cref{lem:mhareserror} as 
  \[\Delta_{x^{(\ell-0.5)}} \leq 2 \bar \Delta_{x^{(\ell-0.5)}} \leq 2 (\Delta_{x^{(\ell-1)}} + \Delta_y) \leq 2(\varepsilon + 2 \varepsilon + 4 n e^{-\frac{1}{2} \frac{c^2}{\sqrt{d_k}}}) = 6\varepsilon + 8 n e^{-\frac{1}{2} \frac{c^2}{\sqrt{d_k}}}\]

  \newcommand{\lff}{L_{\text{ff}}}
  Lastly we apply \cref{lem:mlperror} and \cref{lem:roundingdoublesperturbation} to obtain 
  \begin{align*}
	    \Delta_{x^{(\ell)}} &\leq 2 \bar \Delta_{x^{(\ell)}} \\
	    &\leq 2 (\Delta_z + \Delta_{x^{(\ell-0.5)}}) \\
	    &\leq 2\bigl(2\|W_2\|_\infty (2\|W_1\|_\infty \Delta_{x^{(\ell-0.5)}}) + \Delta_{x^{(\ell-0.5)}}\bigr) \\
	    &= (8 \|W_1\|_\infty \|W_2\|_\infty + 2) \Delta_{x^{(\ell-0.5)}}\\
	    &\leq \underbrace{(8 d \dff + 2)}_{=: \lff}(6\varepsilon + 8 n e^{-\frac{1}{2} \frac{c^2}{\sqrt{d_k}}})
	  \end{align*}
  where in the last step we applied \cref{it:mlpnorm}.

  \paragraph{End-to-end propagation.}
  We have shown that for each $\ell \in \{1,\ldots,L\}$ where 
  $\varepsilon_{\ell-1} \leq \min(\frac{1}{4d}, \frac{1}{20d_k})$ it holds that
  $$\varepsilon_{\ell} \leq A \varepsilon_{\ell-1} + B$$
  with 
  $$A = 6\lff, B = 8 \lff n e^{-\frac{1}{2} \frac{c^2}{\sqrt{d_k}}}\;.$$

  We show inductively that the condition holds for all $\ell$ and at the same time bound $\varepsilon_L$. Since $T_c$ and $\tilde T$ share the same embeddings, which are ternary by \cref{it:ternaryemb} and hence exactly representable in $\fact$ (hence no rounding error occurs there), we have $\varepsilon_0 = 0$. Now assume condition \eqref{eq:layerwisecondition} holds for all layers $\ell' \leq \ell$. Then iterating the recurrence
  $\varepsilon_{\ell'} \le A\varepsilon_{{\ell'}-1}+B$ for $\ell' = \ell, \ell-1,\dots, 1$ yields
\[
  \varepsilon_{\ell} \le \sum_{j=0}^{\ell-1}A^j B.
\]
Since $A\ge 2$, we have $\sum_{j=0}^{\ell-1}A^j \le 2A^{\ell-1}$ and hence
\[
  \varepsilon_\ell \le 2BA^{\ell-1}
  = 16 \lff n e^{-\frac{1}{2} \frac{c^2}{\sqrt{d_k}}} (6\lff)^{\ell-1} \leq 16 \lff n e^{-\frac{1}{2} \frac{c^2}{\sqrt{d_k}}} (6\lff)^{L-1}
\]
Now if we choose $c$ sufficiently large such that
\begin{equation}\label{eq:conditionforc}
  16 \lff n e^{-\frac{1}{2} \frac{c^2}{\sqrt{d_k}}} (6\lff)^{L-1}
  \le \min(\frac{1}{4d}, \frac{1}{20d_k}).
\end{equation}
then $\varepsilon_\ell \le \min(\frac{1}{4d}, \frac{1}{20d_k})$. Hence, if $c$ satisfies \eqref{eq:conditionforc}, then condition \eqref{eq:layerwisecondition} holds for all $\ell \leq L$ and hence we obtain 

\[
\varepsilon_{L} \leq 16 \lff n e^{-\frac{1}{2} \frac{c^2}{\sqrt{d_k}}} (6\lff)^{L-1} \leq \frac{1}{4d}\;.
\]

Finally, let $v^*=\arg\max_{v\in\vocab}\langle x_{n-1}^{(L)},\unemb_v\rangle$ be the output token of $T_c$ (equivalently $T$) on $x$.
For any $v\in\vocab\setminus\{v^*\}$,
\begin{align*}
  \langle \tilde x_{n-1}^{(L)},\unemb_{v^*}\rangle - \langle \tilde x_{n-1}^{(L)},\unemb_v\rangle
  &= \langle \tilde x_{n-1}^{(L)},\unemb_{v^*}-\unemb_v\rangle\\
  &= \langle x_{n-1}^{(L)},\unemb_{v^*}-\unemb_v\rangle
     + \langle \tilde x_{n-1}^{(L)}-x_{n-1}^{(L)},\unemb_{v^*}-\unemb_v\rangle\\
  &\ge 1 - \|\tilde x_{n-1}^{(L)}-x_{n-1}^{(L)}\|_\infty\,\|\unemb_{v^*}-\unemb_v\|_1\\
  &\ge 1 - 2d\,\varepsilon_L \;\ge\; \frac{1}{2} \;>\; 0,
\end{align*}
where we used \cref{it:outputsep} and $\|\unemb_{v^*}-\unemb_v\|_1\le 2d$ from
\cref{it:ternaryemb}. Hence $\arg\max_{v\in\vocab}\langle \tilde x_{n-1}^{(L)},\unemb_v\rangle=v^*$,
so $\tilde T(x)=T(x)$. As $x\in\x$ was arbitrary, this shows $\tilde T|_{\x}=T|_{\x}$.

The condition \eqref{eq:conditionforc} is equivalent to 
\begin{equation}\label{eq:c2bound}
  c^2 \ge 2\sqrt{d_k}\log\!\Bigl(\frac{16}{6}\max\{4d,20d_k\}N\bigl(6\lff\bigr)^L\Bigr)\;.
\end{equation}
showing the statement.
\end{proof}

\subsubsection{Proof of \cref{thm:softmaxthm1}}\label{appsubsec:softmaxthm1proof}
We obtain \cref{thm:softmaxthm1} from \cref{thm:hardtosoft} by choosing $N$ as a uniform upper bound on the length of any context that can occur in the corresponding hardmax construction.
\begin{proof}[Proof of \cref{thm:softmaxthm1}]
Fix one of the hardmax constructions $T$ from \cref{thm:reglang,thm:cot,thm:scot} for the corresponding bound parameter $\hat l\in\{\hat n,\hat t,\hat s\}$.
Let $\x$ be the set of all contexts that can occur when running $T$ in the respective setting (including intermediate prefixes for CoT/SCoT, as in \cref{lem:assumptions}).
By \cref{lem:cotlengthbound,lem:scotlengthbounds}, we may choose a uniform length bound $N$ for $\x$, namely $N=\hat n+1$ for \cref{thm:reglang}, $N=4+6\hat t$ for \cref{thm:cot}, and $N=8(\hat s+3)$ for \cref{thm:scot}; in particular $N=\bigo(\hat l)$.
By \cref{lem:assumptions}, $T$ and $\x$ satisfy the assumptions of \cref{thm:hardtosoft}.

Define $c_0(\hat l)$ as the right-hand side of \eqref{eq:hardtosoftc} for this choice of $N$.
Then for all $c\ge c_0(\hat l)$, \cref{thm:hardtosoft} yields that the corresponding $c$-scaled softmax transformer $\tilde T_c$ agrees with $T$ on all contexts in $\x$, hence generates the same tokens as $T$.
Finally, using $d,\dff=\bigo_M(\log \hat l)$, $d_k,L=\bigo(\log \hat l)$ and $N=\bigo(\hat l)$ in \eqref{eq:hardtosoftc} yields
\[
c_0(\hat l)=\bigo_M\bigl((\log \hat l)^{\frac{3}{4}}(\log\log \hat l)^{\frac{1}{2}}\bigr),
\]
as claimed. To ensure $c\in\fact=\float_{b_m^{\text{act}},b_e^{\text{act}}}$ it suffices that $c\le (2-2^{-b_m^{\text{act}}})2^{E_{\max}}$ with $E_{\max}=2^{b_e^{\text{act}}-1}-1$ (\cref{def:floats}).
Equivalently, $b_e^{\text{act}}=\bigo(\log\log c)=\bigo(\log\log\log \hat l)$ exponent bits suffice.
\end{proof}

\subsection{Attention Weight Rounding} \label{appsubsec:denoising}

\begin{theorem}[Approximating Hardmax transformer with Softmax transformer using Denoising]\label{thm:hardtosoftdenoising}
	    Let $T \in \ts^{\hard}(d,\dff,d_k,d_v,H,L,\vocab)$ be a hardmax transformer and let
	    $\x \subseteq \vocab^+$ be a set of inputs of length at most $N$
	    such that the statements in \cref{lem:assumptions} hold for $T$ and $\x$.
	    Let 
	    \[
	      c \ge d_k^{\frac{1}{4}}(\log(96N))^{\frac{1}{2}}.
	    \]
		    Consider an \emph{activation precision} $\fact = \float_{b_m^\text{act}, b_e^{\text{act}}}$ and an \emph{attention weight precision} $\fatt = \float_{b_m^{\text{att}}, b_e^{\text{att}}}$ satisfying
		    \[
		      b_m^\text{att} \ge 4,\qquad \frac{1}{N} \ge 2^{E_{\min}} \quad (\text{for }\fatt), \qquad b_m^{\text{act}} \geq 1,\qquad b_e^{\text{act}} \geq 3,\qquad c \in \fact
		    \]
			    Then there exists a softmax transformer $\tilde T$ of depth $2L$ and feedforward dimension $\max\{\dff,6d\}$ whose weight matrices have entries in $\{-c,-2,-1,0,1,2,c\}$ such that, when evaluating $\tilde T$ with rounding to $\fact$ and $\fatt$ as in \cref{def:setting}, it holds that $\tilde T|_{\x} = T|_{\x}$.
\end{theorem}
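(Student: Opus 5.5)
We construct $\tilde T$ layer by layer from $T$. Each hardmax layer $\ell$ of $T$ becomes two layers of $\tilde T$. Layer $2\ell-1$ contains the attention heads of layer $\ell$, with $W_Q,W_K$ scaled by $c$, followed by a \emph{denoising MLP}. This MLP acts coordinatewise with $6$ neurons per coordinate (hence width $6d$) and implements the function of \cref{fig:denoising_mlp}, so that $x\mapsto x+f(x)$ sends $[-\frac54,-\frac34]$, $[-\frac14,\frac14]$ and $[\frac34,\frac54]$ to $-1$, $0$ and $1$. It is built from ReLU ramps with slopes $\pm2$ and biases in $\{\pm\frac14,\pm\frac34\}$, and its correctness under rounding to $\fact$ (at least $1$ mantissa bit, at least $3$ exponent bits) is exactly \cref{lem:denoising_mlp}, which I take as given. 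Layer $2\ell$ has zeroed attention heads and the original MLP of layer $\ell$. All weights therefore lie in $\{0,\pm1,\pm2,\pm c\}$.

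The proof is an induction on $\ell$ with the invariant that the rounded residual stream of $\tilde T$ after layer $2\ell$ equals the ternary stream $x^{(\ell)}$ of $T$ exactly. The base case holds because the embeddings are ternary (\cref{it:ternaryemb}) and hence exact in $\fact$. For the step, suppose the input to layer $2\ell-1$ is exact.

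\textbf{Attention step.} Since the input is exact, queries and keys are exact: they are entries of $x$ times $c\in\fact$ by \cref{it:copy}. So $\Delta_q=\Delta_k=0$, and hence $\Delta_s=0$. Values are exact, so $\Delta_v=0$. The separation is at least $\beta=c^2/\sqrt{d_k}\ge\log(96N)$, which gives $Ne^{-\beta}\le\frac1{96}$.

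Applying \cref{lem:roundingweighterror} (using $\frac1N\ge 2^{E_{\min}}$) gives $\delta_\alpha^{\mathrm{round}}\le 2^{-5}+\frac1{96}$. Then \cref{lem:headoutputerror}, with $L_VM\le1$ and tie-invariance from \cref{it:tieinv}, gives
\[
\bar\Delta_o\le 2Ne^{-\beta}+\delta_\alpha^{\mathrm{round}}\le \tfrac{2}{96}+\tfrac1{32}+\tfrac1{96}=\tfrac{1}{16}.
\]
Here I must argue that rounding $\bar o$, $\bar y$ and $\bar x^{(\ell-0.5)}$ to $\fact$ keeps each coordinate within $\frac14$ of its ternary target. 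The targets are in $\fact$, so \cref{lem:roundingdoublesperturbation} doubles the error at most at each rounding point.
- Rounding $\bar o$ doubles the error to at most $\frac18$.
- $W_O$ only copies entries into disjoint coordinates (\cref{it:disj}), so rounding $\bar y$ adds nothing.
- The residual sum adds this to an exact entry. In the worst case rounding doubles again to $\frac14$.

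The margin is tight. I would try to be a bit careful here: with one mantissa bit the grid near $0$ and $\pm1$ is fine enough that a value within $\frac18$ of $0$ or $\pm1$ rounds to something within $\frac14$. This is the step I expect to be the main obstacle, namely checking that the constants ($b_m^{\mathrm{att}}\ge4$, the factor $96$) leave enough room after the doublings. If the margin fails, I would shave it using the fact that perturbations of $\bar o$ at a coordinate whose target lies in $\{0,\pm1\}$ land on a float grid with spacing at most $\frac12$ near $1$ and very fine near $0$.

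\textbf{MLP steps.} The denoising MLP then restores the exact ternary values under rounding, by \cref{lem:denoising_mlp}. The second layer has zeroed attention heads, which contribute exactly $0$, so its residual equals the exact ternary input. Its original MLP then runs on exact ternary inputs, with weights in $\{-1,0,1\}$ and hidden and output activations in $\{0,\pm1,\pm2\}\subset\fact$. No rounding error occurs there, and the result is $x^{(\ell)}$ exactly (\cref{it:ternaryact}). This closes the induction.

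\textbf{Conclusion.} The final representations equal those of $T$ exactly, so the unembedding argmax coincides (\cref{it:outputsep}) and $\tilde T|_\x=T|_\x$.
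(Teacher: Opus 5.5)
Your proposal is correct and follows the paper's proof essentially step for step. It uses the same two-layer replacement and the same induction on exact equality $\tilde x^{(2\ell)} = x^{(\ell)}$. The error chain is also the same: $\bar\Delta_o \le \frac1{16}$, doubled once when rounding $\bar o$ and once at the residual sum to give $\frac14$, which is precisely the paper's bound $12Ne^{-\beta} + 2^{-b_m^{\text{att}}+1} \le \frac14$. The tight margin you flag needs no shaving, because the intervals in \cref{lem:denoising_mlp} are closed, so an error of at most $\frac14$ already suffices.
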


The proof will work as follows. For each transformer layer in $T$ we add two transformer layers to $\tilde T$. The first one uses the attention to approximate the attention in the transformer layer of $T$ and uses the MLP to remove the errors by mapping all values back to $\{-1,0,1\}$, while the second one simply consists of an empty attention layer and the MLP from $T$.

The denoising uses the following MLP:
\begin{lemma}[Denoising with MLP]\label{lem:denoising_mlp}
    Let $d\in\N$. Then there exists an MLP $f \in \mathcal{F}(d, 6d)$ such that for inputs $x \in \R^d$, writing $y = f(x) + x$, it holds that 
    \begin{equation}\label{eq:denoisingcases}
        y_i = \begin{cases}
            -1,& x_i \in [-5/4, -3/4],\\
            0,& x_i \in [-1/4, 1/4],\\
            1,& x_i \in [3/4, 5/4]
            \end{cases}\;.
    \end{equation}
    Furthermore, this still holds for inputs $x \in \float_{b_m,b_e}^d$ when rounding hidden activations, the output $f(x)$ and $f(x) + x$ to $\float_{b_m,b_e}^d$ whenever $b_e\geq 3$ and $b_m \geq 1$ (so that $\pm\frac14,\pm\frac34 \in \float_{b_m,b_e}$).\footnote{One can also work with $b_e = 2$ and $b_m\ge 2$, in which case $\pm\frac14,\pm\frac34$ are subnormals.}
\end{lemma}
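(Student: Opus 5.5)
The plan is to build $f$ coordinatewise: $6$ hidden neurons per coordinate give $6d$ in total, and \cref{lem:mlpsum} glues the per-coordinate pieces together. For a scalar input $x$ the target map $h(x)=x+g(x)$ is the continuous piecewise-linear ``soft staircase''. It equals $-1$ on $[-\tfrac54,-\tfrac34]$, $0$ on $[-\tfrac14,\tfrac14]$ and $1$ on $[\tfrac34,\tfrac54]$, and it is linear (slope $2$) on the two gaps in between. The correction $g=h-x$ then has slopes alternating $-1,+1,-1,+1,-1$ on the five intervals, with breakpoints $-\tfrac34,-\tfrac14,\tfrac14,\tfrac34$ and values oscillating between $\pm\tfrac14$. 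On $x\ge-\tfrac54$ this suggests writing
\[
g(x) = \tfrac14 - \relu\bigl(x+\tfrac54\bigr) + 2\relu\bigl(x+\tfrac34\bigr) - 2\relu\bigl(x+\tfrac14\bigr) + 2\relu\bigl(x-\tfrac14\bigr) - 2\relu\bigl(x-\tfrac34\bigr),
\]
where the constant $\tfrac14$ is the sixth neuron $\relu(0\cdot x+\tfrac14)$. This gives output weights in $\{\pm1,\pm2\}$, biases in $\tfrac14\Z$, and exactly $6$ neurons per coordinate. For the real-arithmetic statement \eqref{eq:denoisingcases} I would simply check this identity on each of the three plateau intervals.

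For the rounded statement I would argue plateau by plateau, using a representation in which, on each plateau, the active neurons compute only quantities that are exactly representable. For $x\in\float_{b_m,b_e}$ in a plateau, each active preactivation has the form $x+a$ with $a\in\tfrac14\Z$ and $|x+a|\le\tfrac12$, or something similar. If needed I would replace the neurons by mirrored ones $\relu(-x+a)$ so that on each plateau the nonzero hidden activations are $|x-x_0|$ with $x_0\in\{0,\pm1\}$ the plateau centre, possibly plus fixed constants. Sterbenz-type exactness (for $x$ and $x_0$ within a factor $2$, the difference $x-x_0$ is exact) then makes the subtractions error-free, and the $\pm\tfrac14,\pm\tfrac34$ values are representable because $b_e\ge3$ and $b_m\ge1$. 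The output $f(x)$ on a plateau then collapses to $x_0-x$ plus exactly cancelling terms. Once $f(x)=x_0-x$ is computed exactly, the final sum $x+f(x)=x_0$ is exact, and rounding fixes $x_0\in\{-1,0,1\}$.

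The main obstacle is this exactness of intermediate rounding. In the naive representation above, several neurons are active at once on the right plateaus, and rounding $x+\tfrac54$ with only one mantissa bit can be very lossy. Such partial cancellations would leave $f(x)+x$ off by up to $\tfrac14$, and that error would not be removed afterwards. So I would first try to redesign the six neurons so that on every plateau at most one or two neurons are active and each computes $\pm(x-x_0)$, with $|x-x_0|\le\tfrac14$. The remaining contributions would be constants from inactive-region neurons, which are exactly zero. I would then verify case by case that every rounded hidden value and every partial output equals its exact value. If a clean redesign fails, the fallback is to bound the error on each plateau and show that rounding $x+[f(x)]$ snaps back to $x_0$. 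The nearest floats to $0$ and $\pm1$ leave enough room only if the error stays below half the local spacing, and with one mantissa bit that spacing is quite tight, so this route is delicate.
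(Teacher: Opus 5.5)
Your real-arithmetic check is fine: your $g$ does give $x+g(x)\in\{-1,0,1\}$ on the three plateaus, using six units per coordinate. The gap is the rounded statement. It is the substantive part of the lemma, since it is what \cref{thm:hardtosoftdenoising} needs, and you never exhibit a network for which it holds.

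Your explicit $g$ fails, as you suspected. All its ramps have positive slope, so on $[\frac34,\frac54]$ every one of them fires, and the hidden values $x+\frac54$, $x+\frac34$, $x+\frac14$ are generally not floats. For instance, with $b_m=1$ and $x=1$ they are $\frac94,\frac74,\frac54$. Under round-half-to-even these become $2,2,1$, the weighted sum is $\frac54$, which rounds to $1$, and so $y=2$.

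The redesign you sketch also cannot be carried out as stated, because a ReLU unit fires on a whole half-line. The mirrored unit $\relu(1-x)$ you would need to produce $|x-1|$ on $[\frac34,1]$ also fires on all of $[-\frac14,\frac14]$. There its hidden value is $1-x$, which is not exact: with $b_m=2$, $b_e=4$, $x=2^{-5}$ you get $\frac{31}{32}$, which rounds to $1$. Finally, floats are dense near $0$, so getting $y=0$ on the middle plateau requires $f(x)=-x$ to be computed exactly. Your error-bounding fallback therefore gives nothing there.

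The missing idea is to put the kinks at $0,\pm\frac14,\pm\frac34$ and let every unit point away from the origin, as in the paper:
\[
f(x)=\relu(-x)-\relu(x)+2\relu\bigl(x-\tfrac14\bigr)-2\relu\bigl(x-\tfrac34\bigr)-2\relu\bigl(-x-\tfrac14\bigr)+2\relu\bigl(-x-\tfrac34\bigr).
\]
On $[-\frac14,\frac14]$ only $\relu(\pm x)$ fire, with value $|x|$. This is exact because $x$ itself is the (representable) input, so $f(x)=-x$.

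On $[\frac34,\frac54]$ only the units with hidden values $x$, $x-\frac14$, $x-\frac34$ fire. A short case analysis on the exponent of $x$ (it is $-1$ or $0$; check $b_m=1$ by hand, and note $x-\frac34$ may be subnormal) shows that these values and the output $1-x$ are all exactly representable. Hence no rounding step changes anything and $y=1$. The left plateau follows by symmetry, and no Sterbenz-type argument or error budget is needed.
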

\begin{proof}
We consider the case $d=1$, the general case works by simply combining the MLPs for each dimension (as everything will be coordinate-wise). The following function is the output of a $6$ hidden neuron MLP with weights in $\{0, \pm 1, \pm 2\}$ and biases in $\{0, \pm \frac{1}{4}, \pm \frac{3}{4}\}$ (denoting $z^+ = \max(0,z) = \relu(z)$):
\begin{equation}\label{eq:mlpdenoisingfunction}
    f(x) = (-x)^+ - (x)^+ + 2 (x-\frac{1}{4})^+ - 2 (x - \frac{3}{4})^+ - 2 (-x - \frac{1}{4})^+ + 2 (-x - \frac{3}{4})^+\;.
\end{equation}
Adding this function to the residual identity yields the desired values on the relevant ranges:
\begin{itemize}
    \item If $x \in [-\frac14,\frac14]$, then only the first two terms in \eqref{eq:mlpdenoisingfunction} can be non-zero and one checks that $f(x)=-x$, hence $f(x)+x=0$.
    \item If $x \in [\frac34,\frac54]$, then only the terms involving $x$, $x-\frac14$ and $x-\frac34$ in \eqref{eq:mlpdenoisingfunction} are non-zero, so
    \[
      f(x) = -x + 2\Bigl(x-\frac14\Bigr) - 2\Bigl(x-\frac34\Bigr) = 1-x,
    \]
    and therefore $f(x)+x=1$.
    \item If $x \in [-\frac54,-\frac34]$, the claim follows by symmetry (or by the analogous computation), yielding $f(x)+x=-1$.
\end{itemize}

Now consider an input $x \in \float_{b_m,b_e}$ with $b_e\ge 3$ and $b_m\ge 1$. We show that \eqref{eq:denoisingcases} still holds under rounding to $\float_{b_m,b_e}$ by simply checking that all $6$ terms in \eqref{eq:mlpdenoisingfunction}, the output $f(x)$ and $x + f(x)$ are also in $\float_{b_m,b_e}$. 

\textbf{Case} $x\in[-1/4, 1/4]$: In this case, only the first two terms in \eqref{eq:mlpdenoisingfunction} have non-zero contribution. As $x, -x \in \float_{b_m,b_e}$, no rounding occurs and the output is as in the non-rounding case.

\textbf{Case} $x \in [3/4, 5/4]$: In this case, only the terms involving $x$, $x-\frac14$ and $x-\frac34$ appear. Since $x \in \float_{b_m,b_e}$ and $\pm\frac14,\pm\frac34 \in \float_{b_m,b_e}$, it suffices to show that $x-\frac14, x-\frac34 \in \float_{b_m,b_e}$ as well, so no rounding occurs anywhere in the computation.

If $b_m=1$, then $\float_{b_m,b_e} \cap [\frac34,\frac54] = \{\frac34,1\}$ and one checks directly that in both cases $x-\frac14,x-\frac34 \in \float_{b_m,b_e}$.
Otherwise assume $b_m\ge 2$ and write $x = (1+t2^{-b_m})2^\kappa$ with $\kappa \in \{-1,0\}$.
\begin{itemize}
    \item If $\kappa=0$, then $x = 1+t2^{-b_m}$ with $t \le 2^{b_m-2}$. If $t = 2^{b_m-2}$ (i.e. $x=\frac54$), then $x-\frac14 = 1$ and $x-\frac34 = \frac12$, both of which are exactly representable in $\float_{b_m,b_e}$. Otherwise $t \le 2^{b_m-2}-1$ and
    \[
      x-\frac14 = \frac34 + t2^{-b_m}
      = \Bigl(1 + (2^{b_m-1}+2t)2^{-b_m}\Bigr)2^{-1} \in \float_{b_m,b_e},
    \]
    and
    \[
      x-\frac34 = \frac14 + t2^{-b_m}
      = \Bigl(1 + (4t)2^{-b_m}\Bigr)2^{-2} \in \float_{b_m,b_e}.
    \]
    \item If $\kappa=-1$, then $x \in [\frac34,1)$ and $x-\frac14 \in [\frac12,\frac34)$. Writing $x = (1+t2^{-b_m})2^{-1}$, we have
    \[
      x-\frac14 = \Bigl(1 + (t-2^{b_m-1})2^{-b_m}\Bigr)2^{-1} \in \float_{b_m,b_e},
    \]
    and
    \[
      x-\frac34 \in [0,\tfrac14)
    \]
    is of the form $s2^{-(b_m+1)}$ for some integer $s \in \{0,\dots,2^{b_m-1}-1\}$. If $x-\frac34 < 2^{E_{\min}}$, then setting $t' := s2^{-(1+E_{\min})}$ (an integer since $E_{\min}\le -2$) yields
    \[
      x-\frac34 = (t'2^{-b_m})2^{E_{\min}},
    \]
    i.e. a (possibly zero) subnormal. Otherwise $2^{E_{\min}} \le x-\frac34 < \frac14$, so $x-\frac34$ is a normal number with exponent in $\{E_{\min},\dots,-2\}$.
\end{itemize}
Therefore, the computation of \eqref{eq:mlpdenoisingfunction} is identical to the real-arithmetic computation, and we again obtain $f(x)+x=1$.
Note that in each of the above cases we have $f(x)\in\{-x,1-x,-1-x\}\subset \float_{b_m,b_e}$ and $x+f(x)\in\{-1,0,1\}\subset\float_{b_m,b_e}$, so rounding $f(x)$ and $x+f(x)$ has no effect.

The case $x \in [-\frac54,-\frac34]$ is analogous by symmetry.
\end{proof}

\begin{proof}[Proof of \cref{thm:hardtosoftdenoising}]
		  As in the proof of \cref{thm:hardtosoft} we replace $T$ by the equivalent $T_c$.
		  As in \cref{thm:hardtosoft}, $T_c$ has separation at least
		  \[
		    \beta := \frac{c^2}{\sqrt{d_k}}\;.
		  \]
	  We construct $\tilde T$ from $T_c$ by replacing each layer $\ell\in\{1,\dots,L\}$ with two layers corresponding to indices $2\ell-1, 2\ell$. 

  Layer $2\ell-1$ of $\tilde T$ replaces the $c$-scaled hardmax multihead attention from layer $\ell$ of $T_c$ by the softmax variant with rounding to $\fact$ and $\fatt$ as in \cref{def:setting}, and uses the denoising MLP from \cref{lem:denoising_mlp} (which uses $b_m^{\text{act}} \geq 1, b_e^{\text{act}} \geq 3$) to remove the error introduced by this replacement.
  Layer $2\ell$ of $\tilde T$ sets all attention projections to zero (hence the multihead attention in layer $2\ell$ of $\tilde T$ does nothing), and uses the original MLP of layer $\ell$ in $T_c$. 
  Let $x \in \x$ be some input, let $x^{(\ell)}$ ($\ell \in \{0,\ldots,L\}$) $\tilde x^{(\ell)}$ ($\ell \in \{0,\ldots,2L\}$) be the hidden representations of $T_c$ and $\tilde T$ on $x$. Then we show that $T_c(x) = \tilde T(x)$ by showing that $x^{(\ell)} = \tilde x^{(2\ell)}$ for all $\ell \in \{0,\dots,L\}$ by induction. For $\ell=0$ this holds as $T_c$ and $\tilde T$ use the same embeddings, which are ternary (\cref{it:ternaryemb}) and hence no rounding error occurs.
  
  Assume $x^{(\ell)} = \tilde x^{(2\ell)}$ for some $\ell \in \{0,\dots,L-1\}$. We will show that the error introduced by replacing the hardmax with $c$-scaled softmax multihead attention is bounded as 
  \begin{equation}\label{eq:errorsmallerquarter}
    \|x^{(\ell+0.5)} - \tilde x^{(2\ell + 0.5)}\|_\infty \leq \frac{1}{4}\;.
  \end{equation}
	  Then \cref{lem:denoising_mlp} removes this error to obtain 
	  \[
	  \tilde x^{(2\ell + 1)} = x^{(\ell+0.5)}\;.
	  \]
	  Since the attention in layer $2\ell+2$ of $\tilde T$ does not change the representations, we have 
	  \[
	  \tilde x^{(2\ell+1.5)} = \tilde x^{(2\ell+1)} = x^{(\ell+0.5)}\;.
	  \]
	  As the hidden activations in the MLP in layer $\ell+1$ of $T_c$ are ternary and its raw outputs lie in $\{0,\pm1,\pm2\}$, rounding has no effect there and hence applying the original MLP yields
	  \[
	  \tilde x^{(2\ell+2)} = x^{(\ell+1)}\;.
	  \]
  By construction, all weight matrices of $\tilde T$ have entries in $\{-c,-2,-1,0,1,2,c\}$, where $\pm c$ occur in the query- and key-projections and $\pm 2$ in the denoising MLPs.

  It thus remains to show that the assumptions of the theorem ensure \eqref{eq:errorsmallerquarter}. We apply \cref{def:setting} and the statements after it to layer $\ell+1$ of $T_c$ and $2\ell + 1$ of $\tilde T$.  Using that the input error is zero (by the induction hypothesis) and that inputs are ternary, \cref{it:copy} ensures that queries, keys and values of $T_c$ have entries in $\{0,\pm 1, \pm c\} \subset \fact$, hence rounding has no effect on them and we obtain 
$\Delta_q = \Delta_k = \Delta_v = 0$ and hence also $\Delta_s = 0$. We apply \cref{lem:headoutputerror} to obtain 
  \[
		  \bar \Delta_o \leq (1 + \delta_\alpha^{\mathrm{round}})\Delta_v  + \bigl(2 n e^{-(\beta-2\Delta_s) } + \delta_\alpha^{\mathrm{round}}\bigr)L_V M = 2 n e^{-\beta } + \delta_\alpha^{\mathrm{round}}
		  \]
		  where we also used $L_V \leq 1$ and $M \leq 1$.
		  We bound $\delta_\alpha^{\mathrm{round}}$ using \cref{lem:roundingweighterror} to obtain 
		  \[
		  \bar \Delta_o \leq  2 n e^{-\beta } + 2^{-b_m^{\text{att}}-1} + ne^{-\beta}
		  \]
  and hence 
  \[
  \Delta_o \leq 2 \bar \Delta_o \leq 6 n e^{-\beta} + 2^{-b_m^{\text{att}}}
  \]
  using \cref{lem:roundingdoublesperturbation}.
  As in the proof of \cref{thm:hardtosoft}, \cref{it:copy,it:disj} ensure that rounding after the output projections and summing has no effect, hence 
  $$\Delta_y = \bar \Delta_y \leq L_O \Delta_o = \Delta_o\;.$$
  Adding to the residual stream can again at most double the rounding error, hence we obtain 
  \[
  \max_i \|x^{(\ell+0.5)} - \tilde x^{(2\ell + 0.5)}\|_\infty \leq 12 ne^{-\beta} + 2^{-b_m^{\text{att}}+1}
  \]
  This term is $\leq \frac{1}{4}$ if both summands are $\leq \frac{1}{8}$. For the first summand this is precisely the condition 
  \[c^2 \geq \log(96 N) \sqrt{d_k}\]
  while for the second one it is the condition 
  $b_m^{\text{att}} \geq 4$\;.
\end{proof}

\subsubsection{Proof of \cref{thm:softmaxthm2}}\label{appsubsec:softmaxthm2proof}
We obtain \cref{thm:softmaxthm2} from \cref{thm:hardtosoftdenoising} by choosing $N$ as a uniform upper bound on the length of any context that can occur in the corresponding hardmax construction.
\begin{proof}[Proof of \cref{thm:softmaxthm2}]
Fix one of the hardmax constructions $T$ from \cref{thm:reglang,thm:cot,thm:scot} for the corresponding bound parameter $\hat l\in\{\hat n,\hat t,\hat s\}$, and let $\x$ be the set of all contexts that can occur when running $T$ in the respective setting.
As in the proof of \cref{thm:softmaxthm1}, we may choose a uniform length bound $N$ for $\x$, namely $N=\hat n+1$ for \cref{thm:reglang}, $N=4+6\hat t$ for \cref{thm:cot}, and $N=8(\hat s+3)$ for \cref{thm:scot}; in particular $N=\bigo(\hat l)$.
By \cref{lem:assumptions}, $T$ and $\x$ satisfy the assumptions of \cref{thm:hardtosoftdenoising}.

Define $c_0(\hat l):= d_k^{\frac{1}{4}}(\log(96N))^{\frac{1}{2}}$.
Then for all $c\ge c_0(\hat l)$, \cref{thm:hardtosoftdenoising} yields a depth-$2L$ softmax transformer $\tilde T_c$ with $\{0,\pm1,\pm2,\pm c\}$-valued weights that agrees with $T$ on all contexts in $\x$ when evaluated with rounding to $\fact$ and $\fatt$ as in \cref{def:setting}, and hence generates the same tokens as $T$.
Since $d_k=\bigo(\log \hat l)$ and $\log(96N)=\bigo(\log \hat l)$, we have
\[
c_0(\hat l)=d_k^{\frac{1}{4}}(\log(96N))^{\frac{1}{2}}=\bigo\bigl((\log \hat l)^{\frac{3}{4}}\bigr).
\]
Moreover, the attention-weight condition $\frac{1}{N}\ge 2^{E_{\min}}$ from \cref{thm:hardtosoftdenoising} holds whenever $b_e^{\text{att}}\ge 1+\log_2(\log_2 N + 2)=\bigo(\log\log \hat l)$, since $E_{\min}=2-2^{b_e^{\text{att}}-1}$ for $\fatt$ (\cref{def:floats}) and $N=\bigo(\hat l)$.
To ensure $c\in\fact=\float_{b_m^{\text{act}},b_e^{\text{act}}}$ it suffices that $c\le (2-2^{-b_m^{\text{act}}})2^{E_{\max}}$ with $E_{\max}=2^{b_e^{\text{act}}-1}-1$ (\cref{def:floats}), i.e.\ $b_e^{\text{act}}=\bigo(\log\log c)=\bigo(\log\log\log \hat l)$.
\end{proof}

\section{Additional Material for \cref{sec:sudoku}}\label{app:sudoku}
\subsection{Datasets and Tokenization}\label{appsec:sudoku_datasets_tokenization}
\noindent
We use the sudoku-extreme dataset from \cite{wang2025hierarchicalreasoningmodel} throughout.
It contains $3{,}831{,}994$ training puzzles and $422{,}786$ test puzzles.
Each puzzle is a standard $9\times 9$ Sudoku with digits in $\{1,\dots,9\}$ and blanks.
\paragraph{Solver}
We generate traces with a deterministic depth-first backtracking solver.
At each step the solver selects an unfilled cell with minimum remaining values (MRV), where remaining values are the digits consistent with the current partial assignment.
Ties are broken lexicographically by the cell coordinates, which is equivalent to row-major order.
The solver then tries the remaining values in increasing order and recurses.
If a contradiction is reached, it backtracks to the most recent decision point that still has a remaining value to try.

\paragraph{CoT Tokenization}
Each puzzle yields one token sequence of the form
\[
\inp\;\text{(givens)}\;\einp\;\text{(solver trace)}\;\outp\;\text{(solution)}\;\eoutp\;.
\]
The vocabulary contains $8$ special tokens \texttt{<pad>}, \inp, \einp, \summ, \esumm, \outp, \eoutp, and \texttt{<none>}, as well as $81$ coordinate tokens $(r,c)$ for $r,c\in\{1,\dots,9\}$ and $9$ digit tokens $1,\dots,9$, for a total vocabulary size of $98$.
The input block lists all givens as coordinate-value pairs.
The trace is a concatenation of decision lines, where each line is a coordinate token followed by the remaining candidate digits for that cell.
If the solver reaches a contradiction, it emits an empty candidate list, encoded by the token \texttt{<none>}.
The output block lists the full solved grid as $81$ coordinate-value pairs in row-major order. An example excerpt is shown in \cref{fig:sudoku_trace_example}.

We define the solver CoT length of a puzzle as the token length of this full sequence.
We use it as a difficulty measure for this specific solver.
For the CoT experiments we train only on puzzles with solver CoT length at most $N$ for $N\in\{2^{10},2^{11},2^{12},2^{13},2^{14}\}$.
Dataset statistics for these cutoffs are given in \cref{tab:sudoku_dataset_sizes}.

\begin{figure}[t]
    \centering
    \begin{minipage}{0.96\linewidth}
        {\footnotesize\ttfamily\raggedright
        \inp\ (1,2)\ 9\ (1,3)\ 4\ (1,8)\ 7\ (1,9)\ 1\ \dots\ \einp\
        (7,3)\ 6\ (9,3)\ 9\ (8,3)\ 2\ (3,3)\ 3\ (4,2)\ 4\ 6\ (5,2)\ 1\ 6\ (5,1)\ 3\ 6\ (4,1)\ 6\ 9\ (6,1)\ 9\ (5,5)\ 2\ 6\ (1,5)\ 3\ 6\ (4,5)\ 7\ (7,5)\ 1\ (1,6)\ 6\ 8\ (2,5)\ 9\ (3,5)\ <none>\ (1,6)\ 8\ (1,1)\ 2\ 5\ (1,4)\ 6\ (1,7)\ 5\ (2,5)\ 9\ (3,5)\ <none>\ (1,1)\ 5\ (1,7)\ 2\ (1,4)\ 6\ (2,5)\ 9\ (3,5)\ <none>\ (1,5)\ 6\ (1,6)\ 3\ 8\ \dots\ \outp\ (1,1)\ 6\ (1,2)\ 9\ (1,3)\ 4\ (1,4)\ 2\ \dots\ \eoutp
        }
    \end{minipage}
    \caption{Example excerpt from a deterministic solver trace in our tokenization. Tokens are separated by spaces, with coordinate pairs and digits as separate tokens.}
    \label{fig:sudoku_trace_example}
\end{figure}

\paragraph{SCoT Tokenization}
For SCoT we split the full CoT token sequence into segments and insert summary blocks.
We insert a summary after approximately $512$ non-summary trace tokens.
Each summary has the form \summ\ $\cdots$\ \esumm\ and encodes the solver state as the concatenation of the givens and the current depth-first decision stack.
The stack is represented by the most recent trace line for each cell on the active search path, using the same coordinate and candidate-list format as in the trace.
\begin{table}[t]
    \centering
    \small
    \begin{tabular}{c|r|r|r|r}
        $N$ & \# puzzles & CoT tokens & \# SCoT segments & SCoT tokens \\
        \hline
        \rule{0pt}{2.5ex}$2^{10}$ & 1{,}003{,}062 & 549{,}886{,}480 & 1{,}216{,}586 & 597{,}636{,}422 \\
        $2^{11}$ & 1{,}366{,}621 & 1{,}081{,}910{,}259 & 2{,}275{,}332 & 1{,}278{,}330{,}673 \\
        $2^{12}$ & 1{,}744{,}525 & 2{,}215{,}629{,}831 & 4{,}514{,}399 & 2{,}812{,}386{,}367 \\
        $2^{13}$ & 2{,}288{,}481 & 5{,}501{,}694{,}458 & 10{,}966{,}828 & 7{,}382{,}325{,}156 \\
        $2^{14}$ & 2{,}978{,}536 & 13{,}676{,}729{,}210 & 26{,}968{,}686 & 18{,}916{,}908{,}506 \\
    \end{tabular}
    \caption{Training set sizes after filtering by solver CoT length $\leq N$.}
    \label{tab:sudoku_dataset_sizes}
\end{table}

For a given cutoff $N$, the CoT training set contains all puzzles with solver CoT length at most $N$, while the SCoT training set contains all segments generated from these puzzles.
\cref{tab:sudoku_dataset_sizes} summarizes the resulting training set sizes.
SCoT contains more total tokens because each segment repeats a summary of the current solver state.

\subsection{Models and Training}\label{appsec:modelsandtraining}
\noindent
We use standard decoder-only transformers with RoPE positional encodings.
All models use $d_{\mathrm{ff}} = 4d$, no dropout, and a vocabulary of size $98$.
\cref{tab:sudoku_model_sizes} summarizes the model sizes used in \cref{sec:sudoku}.

\begin{table}[t]
    \centering
    \small
    \begin{tabular}{c|c|c|r}
        $d$ & $H$ & $L$ & parameters \\
        \hline
        \rule{0pt}{2.5ex}512 & 8 & 4 & 12{,}710{,}912 \\
        512 & 8 & 5 & 15{,}863{,}296 \\
        512 & 8 & 6 & 19{,}015{,}680 \\
        512 & 8 & 7 & 22{,}168{,}064 \\
        512 & 8 & 8 & 25{,}320{,}448 \\
        768 & 12 & 12 & 85{,}206{,}528 \\
    \end{tabular}
    \caption{Model sizes used in the Sudoku experiments.}
    \label{tab:sudoku_model_sizes}
\end{table}

\noindent
Training uses AdamW with weight decay $0.01$ and a peak learning rate of $5\cdot 10^{-4}$ with $2\%$ of updates for warmup and cosine decay to $10^{-5}$.
Each model is trained for $2\cdot 10^{10}$ non-padding tokens.
Batches are chosen to target roughly $10^5$ non-padding tokens per optimizer update, which corresponds to about $2\cdot 10^5$ update steps for all runs.
Since the trace format is deterministic given the input puzzle, training loss typically converges close to $0$.

\subsection{Evaluation}
\label{appsec:sudoku_evaluation}
\paragraph{Generation}
All evaluations use greedy decoding (temperature $0$).
For CoT, sequence lengths vary widely, which makes efficient batched generation difficult.
We therefore evaluate CoT models using vLLM.
For SCoT, the segments have similar lengths and we generate segment-wise with large batch sizes.

\paragraph{Accuracy Estimates}
We evaluate only whether the final output block matches the correct solution, which is unique for these puzzles.
For each model, we estimate test accuracy on a random subsample of $10{,}000$ test puzzles.
This subsample is fixed across models by using a fixed random seed.

To estimate accuracy as a function of solver CoT length, we use target-based sampling.
For a target length $n$, we select the $1000$ test puzzles whose solver CoT length is closest to $n$ and compute accuracy on this set.
The test set is large enough that these sampled lengths concentrate tightly around the target.
For example, for $n=2048$ the selected puzzles have solver CoT length between $2028$ and $2068$.

For the hardest $100$ subset, we take the $100$ test puzzles with largest solver CoT length, which ranges from $1{,}088{,}320$ to $9{,}477{,}350$ tokens in our test set.
For CoT evaluation we cap generation at $32{,}512$ new tokens for computational feasibility.
If a CoT model does not solve the puzzle within this budget, it typically does not recover with longer generation, as generations become incoherent significantly beyond the context lengths seen in training.

For SCoT evaluation we cap the total generation length at $5$ million new tokens for the standard test subset and at $15$ million new tokens for the hardest $100$ subset, across all segments.
This corresponds to more than $10{,}000$ generated segments for the hardest puzzles.

\subsection{Additional Experiments}\label{appsec:additionalexperiments}
While we do not repeat all training runs multiple times, we feel that it is important to make sure that the SCoT model performing well at depth $L=6$ and the CoT model failing at $N=2^{14}$ for the same depth were not just by chance. We hence repeat the SCoT run 2 more times with different random seeds, and additionally compare the $L=6$ CoT failure at $N=2^{14}$ to several interventions.
\paragraph{Success of SCoT at $L=6$.}\label{par:sudoku_scot_success_l6} The SCoT models in \cref{subsec:sudoku_small_models} were trained on the data obtained from Sudokus with CoT length at most $N=2^{14}$. The accuracies on the 10k-subsample of the test set for the original and 2 additional runs are as follows, with the first run being the one shown in the main part:
$$\begin{array}{c|ccc}
\text{Run} & 1 & 2 & 3\\
\hline
\text{Acc}(\%) & 99.48 & 99.70 & 99.38
\end{array}$$
While there is some variance in the performance, all runs perform well and it appears safe to conclude that the performance of SCoT at this model size was not an outlier.
\paragraph{Interventions on the $L=6$ failure at $N=2^{14}$.} \cref{fig:cot_n16k_comparison} compares the $L=6$ CoT failure at $N=2^{14}$ to three interventions: training the same $L=6$ model for 50 billion tokens instead of 20 billion, increasing the depth to $L=8$ at the standard 20 billion tokens, and retraining $L=6$ with full fp32 activations (TF32 disabled). Longer training substantially improves accuracy at short context lengths but degrades sharply for longer ones, suggesting only a partial mitigation of the failure. Switching to fp32 activations does not improve performance. In contrast, $L=8$ at the same 20 billion training tokens achieves near-perfect accuracy up to the training context boundary, supporting our conclusion that increasing model size is the more effective lever. Nonetheless we recognize that longer runs with different hyperparameters are required to actually conclude that depth $6$ is insufficient to learn this algorithm for $N=2^{14}$.

\begin{figure}[t]
    \centering
    \includegraphics[width=0.6\linewidth]{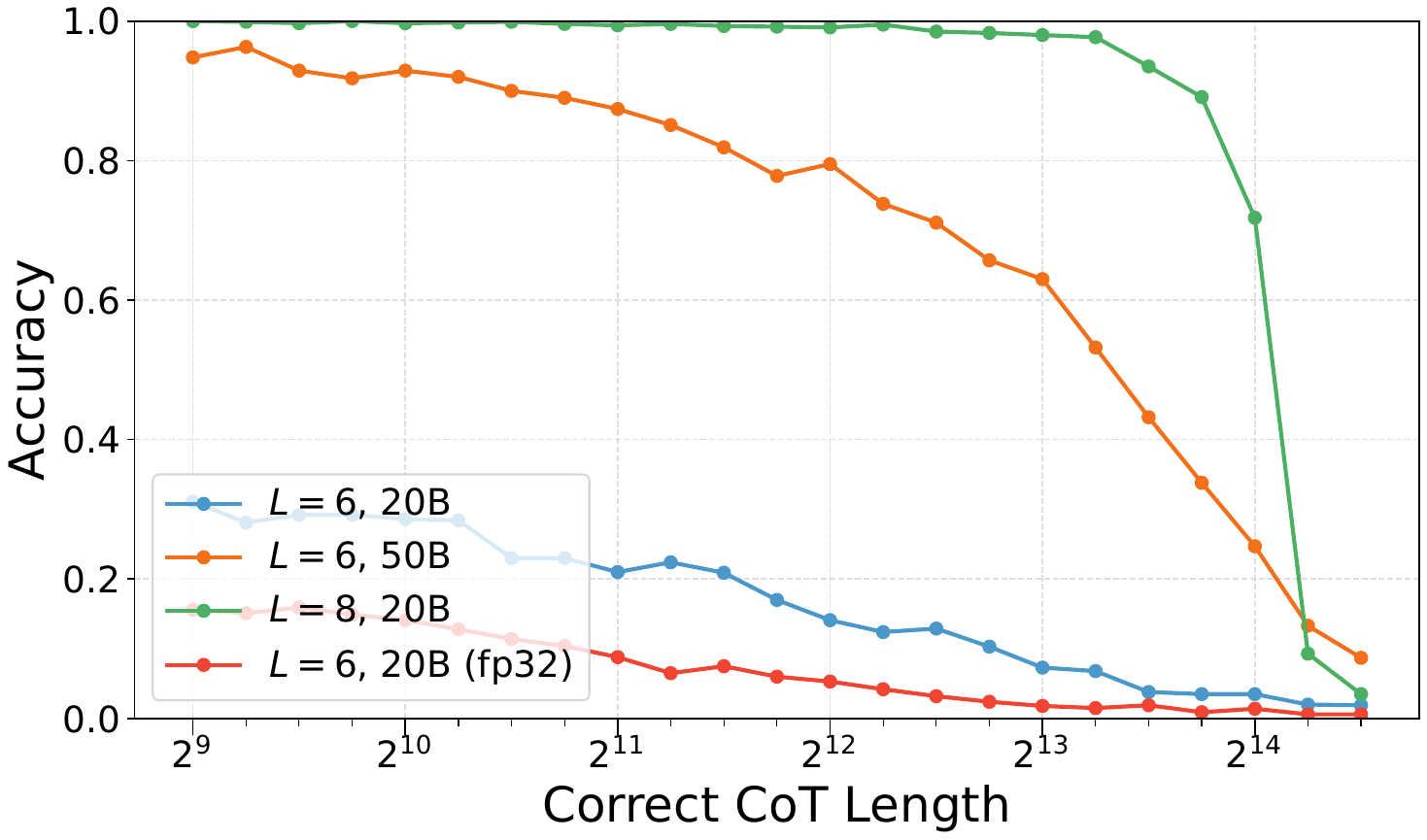}
    \caption{Accuracy vs.\ correct CoT length at $N=2^{14}$ for the original $L=6$ bfloat16 run, the same run with 50 billion training tokens, an $L=8$ bfloat16 run with the standard 20 billion training tokens, and an $L=6$ fp32 run with 20 billion training tokens.}
    \label{fig:cot_n16k_comparison}
\end{figure}

\subsection{Constant-Space Generalization}\label{appsec:scot_generalization}
A natural further question is whether the segment-length-only scaling of \cref{thm:scot} transfers to learnability: once the summary update has been learned, can the model be rolled out to much longer reasoning chains than seen in training? We test this with a larger SCoT model $(d=768,L=12,H=12)$ trained on all segments obtained from Sudokus with CoT length at most $N \in \{2^{11}, 2^{12},2^{13},2^{14}\}$. All 4 models are evaluated on a random subsample of 10000 test Sudokus to see the general performance of SCoT on Sudoku, and again on the 100 hardest test puzzles (by solver CoT length) to test the generalization capabilities. These hard Sudokus have solver CoT lengths between 1 and 9 million tokens, i.e., for $N=2^{12}$ longer by a factor of at least $250$ than the ones seen in training. The results are shown in \cref{tab:scot_generalization}. With this algorithm, Sudokus of different total reasoning length require similar space and can hence use segments of similar length, which is what allows this \emph{constant-space generalization}.

\begin{table}[t]
    \centering
    \begin{tabular}{c|cc}
        Training $N$ & Test Acc (\%) & Hardest 100 Acc (\%) \\
        \hline
        \rule{0pt}{2.5ex}$2^{11}$ & 99.34 & 69\\
        $2^{12}$ & 99.96 & 92 \\
        $2^{13}$ & 99.97 & 91 \\
        $2^{14}$ & 99.95 & 94
    \end{tabular}
    \caption{Performance of the larger SCoT models on random Sudokus and the 100 hardest ones from the test set.}
    \label{tab:scot_generalization}
\end{table}

\section{Additional Considerations}\label{app:additionalconsiderations}
\subsection{Concrete Instantiation}\label{app:explicitexamples}
Here we instantiate the explicit constants from our results at realistic model sizes and precisions (as announced in the introduction), and also see how some prior constructions fail at moderate context size when using standard precision assumptions. The main goal here is to argue that the logarithmic growth of width and depth is much more compatible with transformers used in practice than the logarithmic precision growth in prior constructions. In particular, we analyze the capability predicted by the results for a GPT-3 sized model, i.e. $L=96,H=96,d=12288,d_h=128, \dff=4d$ \cite{brown2020languagemodelsfewshotlearners}. It should be noted that these are loose bounds that could be improved by compressing our constructions: in particular, in the hardmax constructions, one could reduce depth with more heads per layer allowing for better parallelism, while for the denoising MLP technique, the denoising could often be scheduled without doubling a layer (e.g. if the output of an attention layer is not used in the subsequent MLP). 

\subsubsection{Hardmax Constructions}
First, we consider the hardmax constructions from \cref{thm:cotapp,thm:scotapp}. As $L = 8 + \frac{5}{2}r$ in both constructions, the largest $r$ fitting depth $96$ is $r=34$. Similarly, as $d_k = 4r-1$ in both constructions, the head dimension of $128$ yields a very similar restriction bounding $r$ to $32$. Hence, a sufficiently small Turing machine (small enough for $d$ and $\dff$ to suffice as discussed shortly) can be simulated as long as the generated token sequence or segments have at most $2^{32}$ tokens; by \cref{lem:cotlengthbound,lem:scotlengthbounds} this allows choosing $\hat t$ or $\hat s$ at roughly half a billion.

While the model dimension $d=12288$ allows the simulated Turing machine to be quite large (see \eqref{eq:dcot}), the feedforward dimension quickly becomes the bottleneck as it grows with $|Q||\Gamma|^K$ (see \eqref{eq:dffcot}). For example, a machine with $K=3$ tapes, $|\Gamma|=10$ tape symbols, and $|Q|=49$ states barely fits the feedforward dimension. However, the growth of the feedforward dimension in the constructions from \cref{thm:cot,thm:scot} comes from a single MLP which uses one neuron for each transition of the Turing machine to compute its transition function $\delta$. It suffices to allocate such a neuron only for each transition that can actually occur when running the Turing machine on admissible inputs. Moreover, since among these transition-neurons exactly one fires for any admissible configuration, the transition MLP effectively acts as a lookup table. For very large Turing machines with many transitions (loosely corresponding to stored information), a sparsely-routed Mixture-of-Experts layer \cite{moe,switchtransformer} could exploit this by routing (e.g.\ on the current state $q$) to an expert that implements only the relevant part of $\delta$, reducing the compute per token while allowing more total transition parameters.

\subsubsection{Softmax and Rounding}
Next, we consider the softmax results from \cref{app:softmax} and the standard bfloat16 floating point format, which uses $b_m = 7$ mantissa and $b_e = 8$ exponent bits. Without attention weight rounding (\cref{thm:hardtosoft}), the only condition on the activation precision is that a sufficiently large $c$ is representable in the activation precision. Plugging the GPT-3 model sizes into the upper bound on the required magnitude of $c$ in \eqref{eq:hardtosoftc} shows that bfloat16 easily suffices.

With attention weight rounding (\cref{thm:hardtosoftdenoising}), the depth for a given $r$ doubles and hence depth $96$ only suffices for $r=16$, hence context sizes up to $65536$ and $\hat t$ or $\hat s$ at least around $8000$. This theorem also assumes that $\frac{1}{N}$ is in the normal range of the attention precision in order to control the error when rounding attention weights. For bfloat16, the smallest positive normal number is $2^{-126}$ and hence context lengths up to $N = 2^{126} \approx 8.5 \cdot 10^{37}$ can be handled before rounding of attention weights becomes a problem. Hence, it is fair to say that bfloat16 suffices as activation and attention weight precision for all practical context sizes in these constructions.

Lastly, for this example where $d_k=128$ and $N=2^{16}=65536$, the bound from \cref{thm:hardtosoftdenoising} yields $c \ge d_k^{\frac{1}{4}}\sqrt{\log(96N)} \approx 13.3$.

\subsubsection{Precision Bottlenecks in Prior Constructions}
In contrast, prior constructions \cite{perez_attention_is_turing_complete,merrill2024expressivepowertransformerschain,yang2025pencil} use fixed (small) depth and width, so precision quickly becomes the bottleneck when doing similar considerations for them. Considering \cite{merrill2024expressivepowertransformerschain} in particular, they use vectors $\phi_i = \frac{(i,1,-i,-1)}{\|(i,1,-i,-1)\|_2}$ to encode head and symbol positions and extract tape symbols in place of our binary representations. Their construction fundamentally relies on the fact that $\langle \phi_i, \phi_j\rangle$ is maximal for $i=j$. This holds true at infinite precision (see their Lemma 2), yielding an elegant fixed-width position encoding which is used for keys and queries. However, this is not generally the case anymore when rounding $\phi_i$ to some finite precision. In particular, using the rounding notation from \cref{def:floats} with the standard precision formats $\text{bf16} = \float_{7,8}, \text{fp16} = \float_{10,5}, \text{fp32}=\float_{23,8}, \text{fp64}=\float_{52,11}$ \cite{ieee7542019}, it holds that 
$$\langle [\phi_7]_{\text{bf16}}, [\phi_6]_{\text{bf16}}\rangle > \langle [\phi_7]_{\text{bf16}}, [\phi_7]_{\text{bf16}}\rangle.$$
Hence, when rounding activations to bf16, the symbol retrieval for head position $7$ incorrectly extracts the symbol at position $6$ instead of $7$, potentially leading to errors when running the transformer with such activation rounding. The same problem appears for fp16, fp32 and fp64 at head position $8$ (confused with $7$), $61$ (with $60$) and $7875$ (with $7874$) respectively.

\subsection{Rotary Positional Encodings (RoPE)}\label{app:rope}
Our constructions use binary positional encodings, which are not a standard positional encoding. They could be viewed as either a loose discrete analogue to absolute sinusoidal positional embeddings (as each bit of the binary representation of $n$ is periodic, with frequencies decreasing exponentially for higher bits) or simply as an instantiation of learned absolute positional embeddings. However, absolute positional embeddings are rarely used anymore in modern LLMs and raise the number of parameters to be at least linear in the context length, while the most used positional encodings appear to be relative \emph{rotary positional encodings} (RoPE) \cite{su2023roformerenhancedtransformerrotary}.

In this section we argue that the statements in the introduction still hold when using RoPE as positional encodings, albeit at the cost of requiring $\bigo(\log\log N)$ \emph{mantissa} bits in the activation precision. In particular, we show that RoPE can be used to extract the binary positional encodings. This proceeds by first constructing a stack of hardmax transformer layers with RoPE which extract the binary positional encodings (and could hence be added in front of the other constructions) and have good score separation, and then arguing how a slight extension of the denoising conversion argument from \cref{thm:hardtosoftdenoising} could be used to convert to softmax attention. Note that this is likely not what RoPE does in real transformers and many parts of our constructions (like attending $k$ positions back) can be implemented directly with RoPE without resorting to binary arithmetic, but this path is simpler than adapting all constructions to use RoPE directly.

\subsubsection{Defining RoPE}
We define RoPE attention heads, transformer layers, stacks and transformers by rotating the components of queries and keys at different frequencies before taking their inner products, while removing the positional embeddings added to the token embeddings. 
\begin{definition}[Rotary Positional Encodings (RoPE)]
    Consider a single attention head (hardmax or softmax) $A \in \mathcal{A}^{\hard}(d,d_k,d_v)$ or $A \in \mathcal{A}^{\soft}(d,d_k,d_v)$ parameterized by $W_Q, W_K, W_V, W_O$. Let $\omega \in (0,\infty)^l$ for some $l \in\N$ where $2l \leq d_k$. The $\omega_i$ are referred to as \emph{frequencies} and $2l$ as the number of rotated dimensions of each head. We define the map $\rot_\omega: \R^{d_k} \times \N_0 \to \R^{d_k}$ as follows:
    $$\rot_\omega(x, n) = \begin{pmatrix}
    \cos(n\omega_1) x_1 + \sin(n\omega_1) x_2\\
    -\sin (n\omega_1) x_1 + \cos (n\omega_1) x_2\\
    \cos(n\omega_2) x_3 + \sin(n\omega_2) x_4\\
    -\sin (n\omega_2) x_3 + \cos (n\omega_2) x_4\\
    \dots\\
    \cos(n\omega_l) x_{2l-1} + \sin(n\omega_l) x_{2l}\\
    -\sin (n\omega_l) x_{2l-1} + \cos (n\omega_l) x_{2l}\\
    x_{2l+1}\\
    \dots\\
    x_{d_k}
    \end{pmatrix} $$
    In words, $\rot_\omega$ rotates the first two entries of $x$ by $n \omega_1$, the next two by $n \omega_2$ and so on.
    The rotary attention head $A_{\text{RoPE}(\omega)}$ is then defined exactly as the standard attention head (\cref{appdef:atthead}), except that the queries and keys are rotated before computing the scores $s_{ij}$:
    $$s_{ij} = \frac{1}{\sqrt{d_k}}\langle \rot_\omega(W_Qx_i, i), \rot_\omega(W_Kx_j, j)\rangle$$
    Transformer layers and stacks are analogously defined, where one can apply the same or different frequencies to different attention heads. Transformers using RoPE are then defined without the added positional embeddings, but with RoPE in (some of) the attention heads.
\end{definition}

\newcommand{\ffirstpos}{F_{\text{firstpos}}}
\newcommand{\fex}{F_{\text{ex}}}
\newcommand{\ires}{I_{\text{res}}}
\newcommand{\iunrot}{I_{\text{unrot}}}
\newcommand{\irot}{I_{\text{rot}}}
\newcommand{\fmod}{F_{\text{mod}}}

\subsubsection{Extracting Binary Positional Encodings with RoPE and Hardmax}
The following lemma says that RoPE can be used to extract the binary positional encodings.
Note that the rotation frequencies decline exponentially, which is how they are typically used in practice \cite{su2023roformerenhancedtransformerrotary}.
\begin{lemma}\label{lem:ropebin}
    Let $r \in \N$ and $n_{\max} := 2^r$.
    Then there exists a hardmax RoPE transformer stack
    \[
      TS \in \mathcal{TS}^{\hard}(d,\dff, d_k,d_v,H,L)
    \]
    with $d = 2r+3$, $d_k = 2(r+2)+2$, $d_v = 1$, $H=2$, $L=r+1$, some $\dff = \bigo(1)$, and angular frequencies
    \[
      \omega = \Bigl(\frac{2\pi}{2}, \frac{2\pi}{4}, \dots, \frac{2\pi}{2^{r+2}}\Bigr)
    \]
    such that for every $n \le n_{\max}$ and every input $x=(x_0,\dots,x_{n-1}) \in (\R^d)^n$ with
    \[
      x_i[\iconst] = 1,\qquad x_i[\ffirstpos] = \mathbf{1}_{i=0},
    \]
    and all other coordinates zero, it holds for all $i \in \{0,\dots,n-1\}$ that
    \[
      TS(x)_i[\ires] = \bin_r(i)\;.
    \]
    Moreover, every attention head used below has score separation at least $\frac{1}{2\sqrt{d_k}}$ for the treated inputs.
\end{lemma}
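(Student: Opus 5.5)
The plan is to build $\ires$ bit by bit, from least significant to most significant. Layer $k+1$ computes $\bin_r^k(i)$ for all tokens in parallel, using the bits $\bin_r^0(i),\dots,\bin_r^{k-1}(i)$ already written to $\ires[0:k]$ in earlier layers. All unused coordinates stay zero and all MLPs are assembled from \cref{lem:singleneuronmlp}, \cref{lem:zeroing} and \cref{lem:copying}. This keeps weights and activations ternary, and $\dff=\bigo(1)$ because each layer only touches one new bit plus a constant number of flags.

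The basic primitive is a RoPE head whose query and key at frequency $\omega_{k+1}=2\pi/2^{k+1}$ are both $(1,0)$ on one rotated pair. The contribution of that pair to $\langle q_i,k_j\rangle$ is then $\cos\bigl(2\pi(i-j)/2^{k+1}\bigr)$. On tokens with $i-j\equiv 0 \pmod{2^k}$ this equals $+1$ or $-1$ according to whether $\bin_r^k(i)=\bin_r^k(j)$, so it is $\pm1$-valued and well separated. This is the only regime in which I use the rotated part. To stay in it, I restrict attention to tokens $j$ whose lower bits agree with those of $i$. To do this, I put $\ires[0:k]$ into unrotated query and key coordinates with weight $2$, say by duplicating the register. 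A mismatch in any lower bit then costs at least $4$ by \cref{lem:bin_dotprod_gap}, which dominates the rotated contribution of absolute value at most $1$. The candidates $j$ are therefore exactly $j\equiv i \pmod{2^k}$, and among them the rotated term prefers $j\equiv i\pmod{2^{k+1}}$. For the base case $k=0$ no lower bits exist: frequency $\pi$ gives $\cos(\pi(i-j))=\pm1$ directly. Together with the key $\ffirstpos$ and the query $\iconst$, it tells token $i$ whether $i\equiv 0\pmod 2$, and so gives $\bin_r^0(i)$.

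The remaining step, which I expect to be the main obstacle, is turning ``the maximizing $j$ agree with $i$ in bit $k$'' into the actual value of $\bin_r^k(i)$. For this I will use the second head together with a distinguished reference token. Among the tokens $j\equiv i \pmod{2^k}$, the smallest is $b:=i \bmod 2^k$, and $b$ has $\bin_r^k(b)=-1$. So $\bin_r^k(i)=1$ exactly when the reference $b$ is disfavoured by the rotated term. To let the head single out $b$, I will maintain a flag $\fmod$ marking tokens $j<2^{k}$. I can maintain it inductively by ANDing the previous flag with $\bin_r^{k-1}(j)=-1$ via one neuron. The resulting $\fmod$ (for the next layer) is added as an extra unrotated key coordinate with weight $1$, i.e.\ a half-step bonus relative to the $\pm1$ rotated gap $2$. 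The first head then attends to the maximizers of score $=\cos(\cdot)+\fmod_j$ and extracts the value $\fmod_j$. If $\bin_r^k(i)=-1$, token $b$ is the unique candidate with score $2$ and the head returns $1$. If $\bin_r^k(i)=1$, the candidates with cos $+1$ (score $1$, among them $i$ itself) tie with $b$ (score $-1+1=0$)? In that case $b$ loses, the head returns $0$, and the MLP writes $\pm1$ into $\ires[k]$ accordingly. If the tie or gap accounting fails here, the fallback is to give $\fmod$ weight $1$ and the cosine weight $2$ by duplicating the rotated pair, which $d_k=2(r+2)+2$ permits.

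Finally, I will verify the separation claim by enumerating score values. All query and key entries are in $\{0,\pm1,\pm2\}$, and in the regime actually used every rotated term is $\pm1$ or $0$. Hence inner products are integers, or half-integers after the weighting fallback, which gives separation at least $\frac{1}{2\sqrt{d_k}}$. The top frequencies $2\pi/2^{r+1}$ and $2\pi/2^{r+2}$ are spare slack, so the highest bit never needs angles that are not multiples of $\pi$.
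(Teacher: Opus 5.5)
\paragraph{The gap: the flag that singles out the reference token.} This is the step that fails. You obtain $[j<2^k]$ by ANDing $[j<2^{k-1}]$ with $\bin_r^{k-1}(j)=-1$. But $j<2^{k-1}$ already forces $\bin_r^{k-1}(j)=-1$, so the AND returns $[j<2^{k-1}]$ unchanged. Starting from $[j<1]=\ffirstpos$, your flag therefore equals $\ffirstpos$ at every stage.

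That AND recursion is the right one for divisibility, since $[2^k\mid j]=[2^{k-1}\mid j]\wedge[\bin_r^{k-1}(j)=-1]$. However, divisibility by $2^k$ is constant on the class $i+2^k\mathbb{Z}$, so it cannot single out $b=i \bmod 2^k$.

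With the bonus sitting only on position $0$, which lies in the class of $i$ only if $2^k\mid i$, the head cannot read off bit $k$. For example, take $k=1$. Both $i=1$ and $i=3$ attend uniquely to $j=i$ with dot product $3$ and extract $0$. Yet $\bin_r^1(1)=-1\neq 1=\bin_r^1(3)$.

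The underlying obstruction is that $j<2^k$ depends on the bits of $j$ \emph{above} position $k-1$. Your least-significant-first schedule has not computed these when the bit-$k$ head runs. Marking $b$ therefore needs an additional attention step, e.g.\ asking whether a positive multiple of $2^k$ occurs at or before $j$. Your proposal neither provides this step nor budgets for it in $H$ and $L$.

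\paragraph{How the paper avoids this.} The paper uses no reference token at all. It builds divisibility flags $\fmod^k$ by attending to position $0$ at frequency $\omega_k$, gated by $\fmod^{k-1}$. It then reads $\fmod^{k+1}$ at the most recent multiple $j_0=2^k\lfloor i/2^k\rfloor$, which shares bit $k$ with $i$. Locating $j_0$ with constant separation is exactly what the low frequencies $\omega_{k+2},\dots,\omega_{r+2}$ are used for, via the cosine-sum estimates. So in that construction $\omega_{r+1},\omega_{r+2}$ are not slack.

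\paragraph{Two smaller issues.}
\begin{itemize}
\item \emph{Unrotated coordinates.} With the prescribed $\omega$ of length $r+2$ and $d_k=2(r+2)+2$, RoPE rotates all but two key--query coordinates of each head. So the $2k+1$ unrotated coordinates you need, for the duplicated $\ires[0:k]$ plus the flag, are not available. There is also no second pair at frequency $\omega_{k+1}$ to duplicate. You could instead test the class with the rotated pairs $\omega_1,\dots,\omega_k$: $\sum_{s=1}^{k}\cos(2\pi(i-j)/2^s)$ equals $k$ if $2^k\mid i-j$ and is at most $k-2$ otherwise.
\item \emph{Integer scores.} Your separation argument should not claim integer inner products. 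For $j$ outside the class, the rotated term $\cos(2\pi(i-j)/2^{k+1})$ is generally not an integer. The bound survives only because these scores lie well below the maximum.
\end{itemize}
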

\begin{proof}
    We use the register/flag notation from \cref{generalconstructionmethods}. The representation space is split into:
    \begin{itemize}
        \item a 1-dimensional register $\iconst$ carrying the constant value $1$,
        \item a flag $\ffirstpos$ marking the first token,
        \item an $r$-dimensional output register $\ires$ (initialized to $0$),
        \item flags $\fmod^1,\dots,\fmod^r$ (initialized to $0$),
        \item and a scratch flag $\fex$ (initialized to $0$).
		    \end{itemize}
		    For the key-query space of each head, we use a 2-dimensional \emph{unrotated} block $\iunrot$ and, for each $s\in\{1,\dots,r+2\}$,
		    a 2-dimensional \emph{rotated} block $\irot^s$ which is rotated with angular frequency $\omega_s = \frac{2\pi}{2^s}$.
		    Recall that the attention scores are obtained by scaling the dot products $\langle \rot_\omega(q_i,i), \rot_\omega(k_j,j)\rangle$ by $\frac{1}{\sqrt{d_k}}$ (see \cref{appdef:atthead} and the definition above). Since hardmax is invariant under scaling by a positive constant, it suffices to compare these dot products, and the score separations follow by dividing by $\sqrt{d_k}$.

		    \paragraph{Cosine dot-product contribution.}
		    We will repeatedly use the following observation: if, for some rotated block $\irot^s$ with frequency $\omega_s$, we choose
		    \[
		      q_i[\irot^s] = (1,0),\qquad k_j[\irot^s] = (1,0),
		    \]
		    then after applying RoPE the contribution of this block to the dot product is
		    \[
		      \langle \rot_{\omega_s}(q_i[\irot^s],i), \rot_{\omega_s}(k_j[\irot^s],j)\rangle
		      = \cos\!\bigl(\omega_s(i-j)\bigr).
		    \]
	    If we instead choose $q_i[\irot^s] = (b,0)$ and $k_j[\irot^s] = (c,0)$ with $b,c\in\{0,1\}$, then this contribution is $bc\cos(\omega_s(i-j))$.
	    In particular, for $\omega_1=\pi$ we have $\cos(\pi(i-j))\in\{1,-1\}$ depending on whether $i-j$ is even or odd.

	    \paragraph{Marking positions divisible by $2, 4, 8, \dots$.} To achieve the goal, we first introduce flags $\fmod^k$ for $k=1,\dots,r$, initialized to zero. $\fmod^k$ will be set to $1$ for positions $j$ which satisfy $j \equiv 0 \mod 2^k$. This is done as follows. First, a head in layer $1$ attends with queries $q_i[\irot^1] = (1,0), q_i[\iunrot] = (1,0) = (x_i[\iconst], 0)$
	    and the rest is filled with zeros, which is from now on implicitly meant, and keys $k_j[\irot^1] = (1,0), k_j[\iunrot] = (x_j[\ffirstpos],0)$.
		    Using the cosine dot-product contribution above, this head has dot products
		    \[
		      \langle \rot_\omega(q_i,i), \rot_\omega(k_j,j)\rangle = \cos(\pi(i-j)) + x_j[\ffirstpos].
		    \]
		    For even query positions $i$, the dot product to $j=0$ equals $2$, every $j>0$ with even offset has dot product $1$, and every $j$ with odd offset has dot product $-1$.
		    Hence $j=0$ is the unique maximizer with dot-product gap at least $1$.
		    For odd query positions $i$, the dot product to $j=0$ equals $0$, while every odd $j\in\{1,\dots,i\}$ has dot product $1$.
		    Hence the maximizing indices are exactly the odd positions $j\le i$; importantly, all these have $x_j[\ffirstpos]=0$.
		    We set the values to $v_j := x_j[\ffirstpos]$ and write the attention output to the flag $\fmod^1$. Therefore, $\fmod^1(i)=1$ for even $i$ and $\fmod^1(i)=0$ for odd $i$.

    For $k=2,\dots,r$, we proceed analogously, but gate the query by the previously computed flag $\fmod^{k-1}$:
    in layer $k$ we add a head with frequency $\omega_k$ which uses
    \[
      q_i[\irot^k] = (x_i[\fmod^{k-1}],0),\qquad k_j[\irot^k] = (x_j[\ffirstpos],0),
    \]
    together with unrotated queries and keys
		    \[
		      q_i[\iunrot] = (x_i[\fmod^{k-1}],1),\qquad k_j[\iunrot] = (x_j[\ffirstpos],1-x_j[\ffirstpos]),
		    \]
		    and values $v_j := x_j[\ffirstpos]$, writing the output to $\fmod^k$.
		    The resulting dot products are $1$ for all $j\in\{1,\dots,i\}$ and
		    \[
		      \langle \rot_\omega(q_i,i), \rot_\omega(k_0,0)\rangle = x_i[\fmod^{k-1}]\bigl(1+\cos(\omega_k i)\bigr).
		    \]
		    If $x_i[\fmod^{k-1}]=0$, then the dot product to $j=0$ is $0<1$, so $i$ extracts $0$.
		    If $x_i[\fmod^{k-1}]=1$, then $i$ is divisible by $2^{k-1}$, hence $\cos(\omega_k i)\in\{1,-1\}$ and the dot product to $j=0$ lies in $\{2,0\}$, with value $2$ if and only if $i$ is divisible by $2^k$.
		    Therefore, $i$ attends to $j=0$ if and only if $i$ is divisible by $2^k$, extracting $1$ in this case and $0$ otherwise (with dot-product gap at least $1$ in all cases).

	    \paragraph{Acquiring binary positional encodings.} Next, we need to compute the bits of $\bin_r(i)$ at each position $i$. Note that the $k$'th bit of the binary representation (corresponding to $2^k$) is $1$ for positions that are $2^k, 2^k+1,\dots,2^{k+1}-1$ modulo $2^{k+1}$ and $-1$ (i.e. 0 in the standard binary representation, $-1$ in our $\pm 1$ encoding) for positions that are $0,\dots,2^k-1$ modulo $2^{k+1}$. In the latter case, the closest position divisible by $2^k$ is hence also divisible by $2^{k+1}$, while in the first case it is not. Hence, for each $k$ we add an attention head which attends to the closest position where $\fmod^k$ is set and extracts $\fmod^{k+1}$. The subsequent MLP can then set the $k$'th bit of $\ires$ to $1$ if the extracted value is $0$ and to $-1$ otherwise. This can be done in layer $k+2$, since $\fmod^k$ and $\fmod^{k+1}$ are computed in layers $k$ and $k+1$. The difficult part here is to attend, for each position $i$, to the closest position $j$ which is divisible by $2^k$ with constant separation. This is done as follows.
    
    For $k=0$, the least significant bit is directly determined by $\fmod^1$: since $\bin_r^0(i) = -1$ on even $i$ and $1$ on odd $i$, the MLP in layer $2$ writes $\ires[0] := 1 - 2\fmod^1$.

    Now fix $k \in \{1,\dots,r-1\}$. We describe the head in layer $k+2$ that finds the latest index divisible by $2^k$ and extracts $\fmod^{k+1}$ from it.
    We use the following pre-rotation queries and keys:
    \[
      q_i[\iunrot] = (1,1),\qquad q_i[\irot^s] = (1,0)\text{ for } s \in \{k+2,\dots,r+2\},
    \]
    and
    \[
      k_j[\iunrot] = (x_j[\fmod^k],x_j[\fmod^k]),\qquad k_j[\irot^s] = (1,0)\text{ for } s \in \{k+2,\dots,r+2\}.
    \]
    We set the value to $v_j := x_j[\fmod^{k+1}]$ and write the attention output to the scratch flag $\fex$.

	    We claim that, for every query position $i$, this head attends to
		    \[
		      j_0 := \max\{j \le i \mid j \equiv 0 \!\!\pmod{2^k}\}
		    \]
		    with dot-product separation at least $\frac12$ (hence score separation at least $\frac{1}{2\sqrt{d_k}}$).

		    Let $L:=r-k+1$ be the number of rotated frequencies used by this head, and write $d_0:=i-j_0 \in \{0,\dots,2^k-1\}$.
		    Since $\fmod^k(j_0)=1$, the dot product to $j_0$ is
		    \[
		      \langle \rot_\omega(q_i,i), \rot_\omega(k_{j_0},j_0)\rangle
		      = 2 + \sum_{l=k+2}^{r+2} \cos\Bigl(\frac{2\pi}{2^l}(i-j_0)\Bigr)
		      = 2 + L - \sum_{l=k+2}^{r+2} \Bigl(1-\cos\Bigl(\frac{2\pi}{2^l}d_0\Bigr)\Bigr).
		    \]
    We upper bound the loss term. For $l=k+2$ we have $0 \le \frac{2\pi}{2^l}d_0 < \frac{\pi}{2}$, hence $1-\cos(\cdot)\le 1$.
    For $l=k+3$ we have $0 \le \frac{2\pi}{2^l}d_0 < \frac{\pi}{4}$, hence $1-\cos(\cdot)\le 1-\cos(\pi/4)=1-\frac{\sqrt{2}}{2}$.
    For $l\ge k+4$ we use $1-\cos x \le \frac{x^2}{2}$ and $d_0 \le 2^k$ to get
    \[
      \sum_{l=k+4}^{\infty}\Bigl(1-\cos\Bigl(\frac{2\pi}{2^l}d_0\Bigr)\Bigr)
      \le \frac12 \sum_{l=k+4}^{\infty} \Bigl(\frac{2\pi}{2^l} 2^k\Bigr)^2
      = \frac{\pi^2}{96}\;.
	    \]
		    Therefore
		    \[
		      \langle \rot_\omega(q_i,i), \rot_\omega(k_{j_0},j_0)\rangle \ge 2 + L - \Bigl(1 + \Bigl(1-\frac{\sqrt{2}}{2}\Bigr) + \frac{\pi^2}{96}\Bigr).
		    \]
		    Using $\frac{\sqrt{2}}{2} > \frac{7}{10}$ and $\pi^2 < 10$, the bracket is strictly smaller than $\frac32$, hence
		    \[
		      \langle \rot_\omega(q_i,i), \rot_\omega(k_{j_0},j_0)\rangle > L + \frac12\;.
		    \]
	    Next, we upper bound all other dot products. If $\fmod^k(j)=0$, then the unrotated contribution vanishes and the dot product to $j$ is at most $L$ since every cosine is at most $1$.
	    If $\fmod^k(j)=1$ but $j \le j_0-2^{k+1}$, then $D:=i-j \ge 2^{k+1}$. Choose $l^\star$ such that $2^{l^\star-1} \le D < 2^{l^\star}$.
	    Then $l^\star \in \{k+2,\dots,r\}$ and both indices $l^\star$ and $l^\star+1$ lie in $\{k+2,\dots,r+2\}$.
    Writing $\alpha := \frac{2\pi}{2^{l^\star}}D \in [\pi,2\pi)$, we obtain
    \[
      \cos\alpha + \cos(\alpha/2) \le 0,
	    \]
	    hence two consecutive cosine terms contribute at most $0$ in total and all remaining $L-2$ terms contribute at most $1$.
	    Therefore $\sum_{l=k+2}^{r+2} \cos(\frac{2\pi}{2^l}D)\le L-2$ and thus the dot product to $j$ is at most $2+(L-2)=L$.
    Finally, if $j_1:=j_0-2^k \ge 0$, then $\fmod^k(j_1)=1$ and we compare $j_0$ to $j_1$.
    For every $l\in\{k+2,\dots,r+2\}$ we have
    \[
      0 \le \frac{2\pi}{2^l}d_0 \le \frac{\pi}{2}\qquad\text{and}\qquad
      \frac{2\pi}{2^l}(d_0+2^k) = \frac{2\pi}{2^l}d_0 + \frac{2\pi}{2^{l-k}} \le \pi,
    \]
    so $\cos$ is decreasing on this interval and each summand in the cosine sum for $j_0$ is at least the corresponding summand for $j_1$.
    Moreover, for $l=k+2$ the shift is exactly $\frac{\pi}{2}$, and with $\beta := \frac{2\pi}{2^{k+2}}d_0 \in [0,\frac{\pi}{2}]$,
	    \[
		      \cos\beta - \cos(\beta+\pi/2) = \cos\beta+\sin\beta \ge 1\;.
		    \]
		    Hence the dot product to $j_0$ exceeds the dot product to $j_1$ by at least $1$.

		    In total, $j_0$ is the unique maximizer of the dot products and the dot-product gap to the second largest dot product is at least $\frac12$, hence this head has score separation at least $\frac{1}{2\sqrt{d_k}}$.

    The MLP in layer $k+2$ then writes $\ires[k]=1$ if $\fex=0$ and $\ires[k]=-1$ if $\fex=1$, and resets $\fex$ to $0$ (again using \cref{lem:singleneuronmlp}).
    By the discussion at the beginning of this paragraph, this yields exactly the $k$-th bit $\bin_r^k(i)$.
\end{proof}

\subsubsection{Introducing Rounding}
We briefly sketch how the hardmax construction above could be converted to softmax with rounding using the techniques from \cref{app:softmax}.
The end-to-end softmax conversion with activation rounding in \cref{thm:hardtosoft} relied on the fact that all relevant activations (in particular queries and keys) are ternary and hence exactly representable in the activation precision $\fact$, so that one can repeatedly apply \cref{lem:roundingdoublesperturbation}.
When using RoPE, queries and keys should however be rounded after rotating them as well, as the rotated queries and keys are used in the computationally expensive attention score computation (see \cref{rem:roundingpoints}). The rotated queries and keys are not exactly representable anymore in the above construction, so \cref{lem:roundingdoublesperturbation} does not apply to them and we need to control the induced score perturbation.

A natural way to do this is to adapt the denoising approach from \cref{thm:hardtosoftdenoising}. In the notation of \cref{def:setting}, the softmax error bounds depend on the effective separation $\beta - 2\Delta_s$ (see \cref{lem:headoutputerror}), where $\beta$ is the separation of the unperturbed (scaled) scores and $\Delta_s$ the maximal perturbation of the scores.
	Scaling query and key projections by $c$ as in \cref{app:softmax} scales $\beta$ by $c^2$, and in our construction rotated queries and keys have entries bounded by $\bigo(c)$.
	With $b_m$ mantissa bits, rounding such entries incurs coordinatewise error $\bigo(c\,2^{-b_m})$ (by \cref{lem:float_relative_rounding} in the normal range), hence the induced perturbation of the (normalized) attention scores scales as $\Delta_s = \bigo(c^2\sqrt{d_k}\,2^{-b_m})$ (cf.\ \cref{lem:scoreserror}).
On the other hand, by \cref{lem:ropebin} the unscaled RoPE heads have score separation at least $\frac{1}{2\sqrt{d_k}}$, so after $c$-scaling we have $\beta \geq \frac{c^2}{2\sqrt{d_k}}$.
Therefore $\Delta_s/\beta = \bigo(d_k\,2^{-b_m})$, and choosing $b_m = \bigo(\log d_k)=\bigo(\log\log N)$ makes $\Delta_s$ a small constant fraction of $\beta$, so that e.g. $\beta-2\Delta_s \geq \frac{1}{2} \beta$ can be ensured.
With this modification, the denoising proof from \cref{thm:hardtosoftdenoising} can be adapted to this RoPE prefix (using the effective separation $\beta-2\Delta_s$ from \cref{lem:headoutputerror} in place of $\beta$) and yields a rounded softmax RoPE position-extraction prefix that behaves like the hardmax one on contexts of length at most $N$. We omit the details and explicit constants.

It is noteworthy that problems with RoPE and low precision rounding have been observed empirically \cite{wang2024precisionmeetspositionbfloat16}.

\subsubsection{Inclusion into Constructions}
Each of the hardmax constructions could now be modified to use RoPE instead of the binary positional embeddings by prepending the layers from \cref{lem:ropebin} to extract the binary positional encodings into $\ipos$ before the actual construction. This assumes that attention heads for the prepended block have enough rotated and $2$ unrotated dimensions as required by \cref{lem:ropebin}, while later attention heads have enough unrotated dimensions as the hardmax constructions do not make use of the rotated dimensions.

\subsection{Computational Efficiency and Attention Sparsity}
  \label{app:sparsity}
  Running a CoT transformer with depth $L$, model width $d$ and $d_k = d_v = \frac{d}{H}$ (as is common) for $n$ steps generally
  uses $\bigo(n L d)$ memory and $\bigo(n^2 L d)$ compute operations\footnote{Really the compute is $\bigo(n^2 L d + n L d^2)$, but the first term dominates
  in the considerations here.}.
  Applied to the CoT Turing machine construction, simulating a Turing machine using $t$ steps and $s$ space costs $\bigo(t (\log N)^2)$ memory and $\bigo(t^2
  (\log N)^2)$ compute.
  SCoT brings that down to $\bigo(s (\log N)^2)$ memory and $\bigo(ts (\log N)^2)$ compute as the context size is bounded by $\bigo(s)$.
  Ignoring the $\log N$ factors, the memory usage of the transformer corresponds to that of the Turing machine, but the compute is still larger by a factor
  $s$.

  The remaining factor $s$ comes entirely from the attention: for each generated token, the query has to be compared to all $\bigo(s)$ keys in context to
  identify the maximizing score.
  At the same time, in our constructions the heads are \emph{effectively $1$-sparse} in the sense discussed in \cref{sec:softmax}: either the maximizer is
  unique, or all maximizers carry the same value $v_j$, so the head output is unchanged if we only retrieve the value from a single maximizer.
  Computing an attention head can hence be viewed as a \emph{maximum inner product search} (MIPS) / nearest-neighbor problem.

  For MIPS, efficient approximate methods exist, e.g.\ locality-sensitive hashing and graph-based indices such as HNSW \cite{indyk1998lsh,hnsw}, which are
  often reported to have roughly logarithmic scaling in practice and could drop the extra factor $s$ to a polylogarithmic (in $N$) overhead in our SCoT
  setting.
  Unfortunately, these methods are inherently approximate: they are not guaranteed to return the true maximizer, and even their runtime can be poor in worst
  cases \cite{worstcaseperformanceofann}.
  Nonetheless, it has been observed empirically that transformers can be run well when approximating attention via such retrieval-style sparsification
  \cite{liu2024retrievalattention,kitaev2020reformerefficienttransformer}, offering practical benefits despite being difficult to parallelize as effectively
  as standard attention.
  Our results hence give theoretical credibility to these observations, showing that even $1$ retrieved token per query is enough for Turing completeness in
  our setting.

  Sparse attention has also been shown to retain strong expressivity in settings where the sparsity pattern is fixed by design \cite{bigbird,li2025efficientturingmachinesimulation}.
  Conceptually, these results differ from retrieval-based sparse attention approximations (such as
  \cite{liu2024retrievalattention,kitaev2020reformerefficienttransformer}), which aim to approximate the standard dot-product attention of a full transformer by dynamically retrieving a small set of tokens.

\end{document}